\documentclass{article} 
\usepackage{iclr2027_conference,times}

\usepackage{amsmath,amsfonts,bm}

\def\eqref#1{equation~\ref{#1}}

\def\1{\bm{1}}

\DeclareMathAlphabet{\mathsfit}{\encodingdefault}{\sfdefault}{m}{sl}
\SetMathAlphabet{\mathsfit}{bold}{\encodingdefault}{\sfdefault}{bx}{n}

\DeclareMathOperator*{\argmin}{arg\,min}

\usepackage{hyperref}
\usepackage{url}

\usepackage{graphicx}
\usepackage{natbib}
\usepackage{caption}
\usepackage{algorithm}
\usepackage{algorithmic}
\usepackage{subcaption}  
\usepackage{adjustbox}  
\usepackage{booktabs} 
\usepackage{amssymb}
\usepackage{bm}
\usepackage{enumitem}
\usepackage{newfloat}
\usepackage{listings}
\usepackage{amsmath}
\usepackage{amsthm}
\usepackage{color}
\usepackage{amsthm,amsmath,amssymb}
\newtheorem{thm}{Theorem}
\newtheorem{lem}{Lemma}
\newtheorem{cor}{Corollary}
\newtheorem{remark}{Remark}
\newtheorem{assumption}{Assumption}

\title{Domain-Adapted Diffusion Models for Conditional Independence Testing}

\author{Yanfeng Yang$^{12}$\thanks{Corresponding Authors: Yanfeng Yang (\texttt{yanfengyang0316@gmail.com}), Ziqi Chen (\texttt{zqchen@fem.ecnu.edu.cn}), Shuai Li (\texttt{2582005851@qq.com}), Kenji Fukumizu (\texttt{fukumizu@ism.ac.jp})}, \ Junda Zhao$^{3}$, Yijie Gao$^{3}$, Jiaqi Yang$^{3}$, Xinyu Shi$^{3}$, Ziqi Chen$^{3*}$, \\
\textbf{Shunyu Zhao}$^{12}$, \textbf{Shuai Li}$^{4*}$, \textbf{Wei Huang}$^5$, \textbf{Eshant English}$^{67}$, \textbf{Kenji Fukumizu}$^{2*}$ \\
${}^1$The Graduate University for Advanced Studies, SOKENDAI, Tokyo, Japan \\
${}^2$The Institute of Statistical Mathematics, Tokyo, Japan \\
${}^3$East China Normal University, Shanghai, China \\
${}^4$Hunan Normal University, Hunan, China \\
${}^5$Riken AIP, Tokyo, Japan \\
${}^6$The University of Tokyo, Tokyo, Japan \\
${}^7$Hasso Plattner Institute, Brandenburg, Germany
}

\iclrfinalcopy 

\begin{document}

\maketitle

\begin{abstract}
Conditional independence (CI) is a fundamental concept in statistics and machine learning. Recent advances in conditional generative modeling provide flexible tools for generative-model-based CI tests, which rely on an estimated conditional distribution to generate randomized samples. However, errors in estimating this distribution accumulate in existing Type I error bounds, and consistency of the generative estimator alone does not guarantee asymptotic Type I error control.
To address this limitation, we formulate conditional generative modeling as a domain adaptation problem and leverage auxiliary data from multiple source domains to improve estimation in the target CI testing domain.
We propose Domain-Adapted Diffusion (DA-Diff), a multi-source domain adaptation framework for conditional diffusion models based on weighted empirical risk minimization over both target and source domains.
We establish the convergence rate of DA-Diff and show how transferable source data can improve target-domain estimation through an increased effective sample size while controlling transfer bias.
Building on DA-Diff, we further propose Domain-Adapted Conditional Independence Testing (DA-CIT) and show that its Type I error satisfies $P(p \leq \alpha) \leq \alpha + o(1)$.
Experiments demonstrate that DA-Diff improved conditional generation quality compared with transfer-learning diffusion baselines, while DA-CIT provides strong Type I error control and competitive power.
\end{abstract}

\section{Introduction}
\label{sec_intro}
Conditional independence (CI) is a fundamental concept in statistics and machine learning, with broad applications to variable selection, causal structure learning, and sufficient dimension reduction \citep{pearl2009causality,spirtes2000causation,cook2026,yang_2025_cdcit,he_2025_on_hardness,wang_2025_practical_kernel,wang_2026_zeroflow,Zhang_2026_jrssb_cit}. Let $X_0\in\mathcal{X}\subseteq\mathbb{R}^{d_x}$, $Y_0\in\mathcal{Y}\subseteq\mathbb{R}^{d_y}$, and $Z_0\in\mathcal{Z}\subseteq\mathbb{R}^{d_z}$ be random variables. CI testing considers the hypotheses
\begin{equation}
\label{eq_cit_def}
    H_0: X_0 \perp \! \! \! \perp Y_0 | Z_0 \quad \text{versus} \quad H_1: X_0 \not \! \perp \! \! \! \perp Y_0 |Z_0 .
\end{equation}
A broad range of CI tests has been developed, including kernel-based,
regression-based, and metric-based approaches \citep{fukumizu2007kernel,zhang2011kernel,shah2020hardness,
Azadkia_2021_Chatterjee,Banerjee_2026_ball_div}. More recently, the conditional randomization test (CRT) \citep{candes2018panning} has become an important framework for CI testing
\citep{bellot2019conditional,
liu2022fast,Shuangning_2023_maxway,
li2024k,yang_2025_cdcit,
ren_2025_sgmcit,zhao_2026_fmcit}. A key advantage of the CRT is that, given a correctly specified conditional distribution
$P_0(\cdot| Z_0)$, it yields valid inference while allowing flexible choices of test statistics \citep{berrett2020conditional}.
When $P_0(\cdot| Z_0)$ is estimated from data, however, the validity of the resulting test depends critically on its estimation accuracy. 

Recent advances in generative modeling provide flexible tools for estimating
$P_0(\cdot| Z_0)$ \citep{goodfellow2014generative,ho2020denoising,song2021score,karras_2022_edm,Fu2024UnveilCD}, leading to several generative-model-based CI tests \citep{bellot2019conditional,shi2021double,ren_2025_sgmcit,zhao_2026_fmcit}. Among generative models, diffusion models are particularly attractive due to their strong generation quality and have also been shown to perform well for CRT \citep{yang_2025_cdcit}.  Despite these advantages, a central difficulty is that errors in conditional distribution estimation directly affect the validity of the resulting CRT. To make this dependence explicit, suppose that the target dataset contains $2n_0$ independent observations, with $n_0$ observations used for the CRT and the remaining $n_0$ observations used to estimate $P_0(\cdot| Z_0)$. Let $\widehat{P}_0(\cdot| Z_0)$ denote the estimated conditional distribution and let $p$ be the resulting CRT $p$-value. By Theorem~4 of \citet{berrett2020conditional}, under $H_0$, 
\begin{equation}
\label{eq_pval_of_CRT}
    \mathbb{P}(p \leq \alpha)
    \leq
    \alpha
    +
    \mathcal{O}\left(
        n_0 \cdot
        \mathbb{E} 
        \left[
            d_{\mathrm{TV}}
            \left(
                P_0(\cdot | Z_0),
                \widehat{P}_0(\cdot | Z_0)
            \right)
        \right]
    \right).
\end{equation}
Here, $d_{\mathrm{TV}}\!\left(
P_0(\cdot | Z_0),
\widehat{P}_0(\cdot | Z_0)
\right)$ denotes the total variation (TV) distance between the true and estimated conditional distributions of $X_0$ given $Z_0$. The second term in (\ref{eq_pval_of_CRT}) quantifies the excess Type~I error caused by conditional distribution estimation. In particular, (\ref{eq_pval_of_CRT}) guarantees asymptotic Type~I error control when $\mathbb{E} 
    \left[
        d_{\mathrm{TV}}
        \left(
            P_0(\cdot | Z_0),
            \widehat{P}_0(\cdot | Z_0)
        \right)
    \right]
    =
    o(n_0^{-1})$. Note that this is only a sufficient condition, rather than a necessary condition
for asymptotic validity.
Nevertheless, this sufficient condition is particularly stringent.
For perspective, even regular parametric estimators typically achieve only  $\mathcal{O}(n_0^{-1/2})$ \citep{Vaart_1998_linear_005}, while flexible nonparametric conditional generative models generally converge more slowly and their rates depend on $d_x$ and $d_z$. 
Thus, when $\widehat P_0(\cdot| Z_0)$ is learned only from the $n_0$ target observations reserved for conditional distribution estimation, existing convergence rates may be insufficient to make the excess term in (\ref{eq_pval_of_CRT}) vanish.

Therefore, a natural strategy is to increase the amount of data available for conditional distribution estimation. Existing theory for conditional diffusion models shows that estimation accuracy improves with the growth of sample size \citep{Fu2024UnveilCD,yang_2025_cdcit}, motivating the use of auxiliary data to improve estimation of $P_0(\cdot| Z_0)$. This argument is true when the auxiliary observations and the target observations follow the same distribution. However, in practice, auxiliary data often come from heterogeneous source domains rather than from the target distribution itself. Naively pooling such observations can introduce transfer bias that offsets the benefit of the additional samples. This raises the central question of how to selectively leverage multiple heterogeneous source domains to improve estimation of $P_0(\cdot| Z_0)$ while controlling the bias induced by source--target distribution shifts.

To solve this question, we propose the \textbf{Domain-Adapted Diffusion Model (DA-Diff)}, a multi-source domain adaptation framework for conditional diffusion models. DA-Diff learns a target-oriented conditional score model by minimizing a weighted empirical score-matching loss over the target domain and other source domains, where the source-domain weights determine how much information is transferred from each source. Our theoretical analysis reveals an explicit estimation--transfer trade-off: incorporating informative source observations increases the effective sample
size and reduces estimation error, whereas discrepancies between source and target conditional scores induce transfer bias. This decomposition provides the theoretical basis for our source-weighting criterion, which favors the statistical benefit of informative sources while penalizing source--target discrepancy. We further translate the target-domain score-matching risk bound into a TV error bound for the learned conditional distribution. Building on this result, we develop the \textbf{Domain-Adapted Conditional Independence Test (DA-CIT)}, which uses DA-Diff to generate the randomized samples required by the CRT. Under suitable source--target transferability conditions, we establish asymptotic Type I error control for DA-CIT.


Our contributions are summarized as follows:
\begin{itemize}
    \item[1.] We propose the \textbf{Domain-Adapted Diffusion Model (DA-Diff)}, the first transfer-learning framework for diffusion models based on weighted ERM under a MSDA setting. We establish its theoretical convergence rate in TV distance, providing convergence guarantees for diffusion models trained by jointly leveraging multiple heterogeneous source domains.

    \item[2.] Building on DA-Diff, we develop the \textbf{Domain-Adapted Conditional Independence Test (DA-CIT)}, which first incorporates heterogeneous external data into generative model-based CIT. We further establish theoretical conditions under which the excess Type~I error of DA-CIT vanishes asymptotically, i.e., is controlled at the rate $o(1)$.

    \item[3.] We conduct synthetic experiments to evaluate both components of the proposed framework. The DA-Diff experiments assess conditional-generation accuracy under varying numbers of heterogeneous source domains and compare against various baselines. The DA-CIT experiments evaluate both Type~I error control and test power under nonlinear data-generating mechanisms. We further evaluate DA-CIT on the multi-source Sachs dataset to demonstrate its performance on real-world data.
\end{itemize}

\section{Preliminary}

\subsection{Conditional randomization tests (CRT)}
\label{sec_crt}

Conditional randomization tests (CRT) is a general framework for testing CI \citep{candes2018panning}. Let
$D_0=\{(x_{0,i},y_{0,i},z_{0,i})\}_{i=1}^{2n_0}$
denote the target-domain dataset. We reserve the first $n_0$ observations for testing and use the remaining $n_0$ observations exclusively for conditional distribution estimation. 
Suppose that an estimator $\widehat{P}_0(\cdot| Z_0)$ of the conditional distribution $P_0(\cdot| Z_0)$ is available.
Conditioning on $\mathbf{Z}_0
:=
(z_{0,1},\ldots,z_{0,n_0})^\top
\in\mathbb{R}^{n_0\times d_z}$, we independently generate $B$ pseudo-samples from $\prod_{i=1}^{n_0}
\widehat{P}_0(\cdot| z_{0,i})$, yielding $\mathbf{X}_0^{(b)}
\in\mathbb{R}^{n_0\times d_x}, \ 
b=1,\ldots,B$. Let $\mathbb T(\cdot,\cdot,\cdot):
\mathbb{R}^{n_0\times d_x}
\times
\mathbb{R}^{n_0\times d_y}
\times
\mathbb{R}^{n_0\times d_z}
\to\mathbb{R}$ be a pre-specified test statistic, with larger values indicating stronger evidence against $H_0$. Unless otherwise specified, we use distance correlation (dCor) \citep{Maria_2008_dcor} as $\mathbb T$ in the proposed DA-CIT.  
Define $\mathbf{X}_0
=
(x_{0,1},\ldots,x_{0,n_0})^\top
\in\mathbb{R}^{n_0\times d_x}$ and $
\mathbf{Y}_0
=
(y_{0,1},\ldots,y_{0,n_0})^\top
\in\mathbb{R}^{n_0\times d_y}$. The test statistic computed from the observed data is $\mathbb T^{(0)}
:=
\mathbb T(\mathbf{X}_0,\mathbf{Y}_0,\mathbf{Z}_0)$, while for each pseudo-sample we compute $\mathbb T^{(b)}
:=
\mathbb T(\mathbf{X}_0^{(b)},\mathbf{Y}_0,\mathbf{Z}_0), \ 
b=1,\ldots,B$. The CRT compares the observed statistic $\mathbb T^{(0)}$ with the randomized statistics
$\{\mathbb T^{(b)}\}_{b=1}^B$ and computes the $p$-value as
\begin{equation}
\label{eq_crt_pval}
p
=
\frac{
1+\sum_{b=1}^B
\mathbf{1}\!\left\{
\mathbb T^{(b)}\geq \mathbb T^{(0)}
\right\}
}{
B+1
}.
\end{equation}
Under $H_0$ in (\ref{eq_cit_def}), if $\widehat P_0(\cdot| Z_0)=P_0(\cdot| Z_0)$, the observed statistic $\mathbb T^{(0)}$ and the randomized statistics $\{\mathbb T^{(b)}\}_{b=1}^B$ are exchangeable, and the CRT provides finite sample Type I error control, $\mathbb P(p\leq\alpha)\leq\alpha$ \citep{berrett2020conditional}. When $\widehat P_0(\cdot| Z_0)$ is an approximation to $P_0(\cdot| Z_0)$, the Type I error is no longer guaranteed to be bounded exactly by $\alpha$; instead, the excess Type I error can be controlled by the estimation error of the conditional distribution, as shown in~(\ref{eq_pval_of_CRT}). Under $H_1$, $\mathbb T^{(0)}$ tends to be larger than its randomized counterparts, leading to smaller $p$-values and evidence against $H_0$. 

\subsection{Diffusion model}
\label{sec_diff_model}
Diffusion models are a class of generative models that can flexibly approximate complex conditional distributions such as $P_0(\cdot| z)$
\citep{song2021score,ho2020denoising,Fu2024UnveilCD}.
A diffusion model consists of a forward diffusion process and a reverse process.
Conditioned on $Z_0=z$, let
$X_0(0)\sim P_0(\cdot| z)$.
The forward process is defined by the stochastic differential equation (SDE)
\begin{equation}
\label{eq_diff_forward}
    dX_0(t)
    =
    -\frac{1}{2}X_0(t)\,dt
    +
    dB(t),
    \qquad
    t\in[0,T],
\end{equation}
where $B(t)$ is a standard Brownian motion and $T>0$ denotes the terminal time.
By the properties of the Ornstein--Uhlenbeck (OU) process, $X_0(t)| X_0(0)
    \sim
    \mathcal{N}
    \left(
        a_t X_0(0),
        \sigma_t^2 I_{d_x}
    \right)$, where
$a_t=\exp(-t/2)$ and
$\sigma_t=\sqrt{1-\exp(-t)}$.
As $T$ becomes sufficiently large, the marginal distribution of $X_0(T)$ approaches $\mathcal{N}(0,I_{d_x})$.

The reverse process transforms samples from $N(0,I_{d_x})$ back to the data distribution $P_0(\cdot| z)$.
Specifically, it is characterized by the reverse-time SDE
\begin{equation}
\label{eq_diff_reverse}
    dX_0(t)
    =
    \left[
        -\frac{1}{2}X_0(t)
        -
        \nabla_x
        \log p_{0,t}
        \bigl(
            X_0(t)| z
        \bigr)
    \right]dt
    +
    d\overline{B}(t), \qquad  X_0(T) \sim N(0,I_{d_x}),
\end{equation}
which is simulated backward in time from $t=T$ to an early stopping time $t_0>0$.
Here,
$p_{0,t}(\cdot| z)$ denotes the conditional density of $X_0(t)$ given $Z_0=z$, and
$\overline{B}(t)$ is a Brownian motion associated with the reverse process. The conditional score function
$\nabla_x\log p_{0,t}(\cdot| z)$
appearing in (\ref{eq_diff_reverse}) is generally unknown and must therefore be estimated from data.
We estimate it using denoising score matching
\citep{Vincent_2011_score_mat,song2021score}.
Let $s:
    \mathbb{R}^{d_x}
    \times
    \mathbb{R}^{d_z}
    \times
    \mathbb{R}
    \to
    \mathbb{R}^{d_x}$ be a score network belonging to a ReLU network class $\mathcal{F}$ (defined in (\ref{eq_relu_network_class})).
For a single observation $(x,z)\in\mathcal{X}\times\mathcal{Z}$, define the denoising score-matching loss as
\begin{align}
\label{eq_diffusion_loss_single}
    \ell(x,z;s)
    =
    \frac{1}{T-t_0}
    \int_{t_0}^{T}
    \mathbb{E}_{X_t
    \sim
    \mathcal{N}(a_t x,\sigma_t^2 I_{d_x})}
    \left[
        \left\|
            s(X_t,z,t)
            +
            \left({X_t-a_t x}\right)/{\sigma_t^2}
        \right\|_2^2
    \right]
    dt.
\end{align}
We use the last $n_0$ samples in $D_0$, i.e.
$\{(x_{0,i},y_{0,i},z_{0,i})\}_{i=n_0+1}^{2n_0}$,
to estimate the target conditional distribution. The empirical
score-matching loss is
\begin{equation}
\label{eq_diffusion_loss_empirical}
    \widehat{\mathcal{L}}_0(s)
    =
    \frac{1}{n_0}
    \sum_{i=n_0+1}^{2n_0}
    \ell(x_{0,i},z_{0,i};s).
\end{equation}
The score network is then estimated by empirical risk minimization (ERM):
\begin{equation}
\label{eq_erm_s0}
    \widehat{s}_0
    =
    \underset{s\in\mathcal{F}}{\arg\min}
    \,
    \widehat{\mathcal{L}}_0(s).
\end{equation}
Replacing the unknown $\nabla_x\log p_{0,t}(X_0(t)| z)$
in (\ref{eq_diff_reverse}) with
$\widehat{s}_0(X_0(t),z,t)$
yields a learned reverse SDE.
Starting from
$X_0(T)\sim\mathcal{N}(0,I_{d_x})$
and solving this SDE backward from $T$ to $t_0$ produces conditional samples from the learned distribution, which approximates
$P_0(\cdot| Z_0)$
when $T$ is sufficiently large and $t_0$ is sufficiently small.

\subsection{Domain adaptation through weighted empirical risk minimization (ERM)}
\label{sec_da_werm}

We regard the dataset $D_0$ used for the CRT in Section~\ref{sec_crt} as the target-domain dataset.
In addition, we assume access to $K$ source domains, with datasets $D_k=\{(x_{k,i},z_{k,i})\}_{i=1}^{n_k},
 k=1,\ldots,K$, of sample sizes $n_1,\ldots,n_K$, respectively.
Compared with training a model solely on the limited target-domain data, MSDA can exploit informative samples from multiple source domains to improve target-domain performance and potentially accelerate statistical convergence
\citep{David_2006_da,David_2010_theory_da,Farahani_2021_da_Review}.
Among existing domain adaptation (DA) approaches, weighted ERM provides a simple and widely used framework for balancing information from heterogeneous domains
\citep{Konstantinov_2019_Robust_Learning,vogel_2020_werm,zhang_2026_uowq}.

For each domain $k\in\{0,1,\ldots,K\}$, let
$(X_k,Z_k)\overset{\text{i.i.d.}}{\sim} P_k$ denote random variables, where each $P_k$ is a joint distribution on
$\mathcal{X}\times\mathcal{Z}$ with density $p_k(x,z)$ and factorization: $P_k(x,z)=P_k(x| z)P_k(z)$ and $ p_k(x,z)=p_k(x| z)p_k(z)$. Here, $P_k(\cdot| z)$ and $p_k(\cdot| z)$ denote the conditional distribution and density of $X_k$ given $Z_k=z$, respectively, while $P_k(z)$ and $p_k(z)$ denote the marginal distribution and density of $Z_k$.
In particular, $P_0(\cdot| z)$ coincides with the target conditional distribution introduced in Section~\ref{sec_diff_model}.  Given $D_k$, we define the empirical diffusion loss on the $k$-th domain as
\begin{equation}
\label{eq_empirical_source_loss}
    \widehat{\mathcal{L}}_k(s)
    :=
    \frac{1}{n_k}
    \sum_{i=1}^{n_k}
    \ell(x_{k,i},z_{k,i};s),\quad k = 1,\ldots, K.
\end{equation}
Inspired by \citet{zhang_2026_uowq}, we consider the weighted ERM objective
\begin{equation}
\label{eq_weighted_erm_objective}
    \widehat{\mathcal{L}}_w(s)
    :=
    \frac{
        n_0\widehat{\mathcal{L}}_0(s)
        +
        \sum_{k=1}^{K}w_k n_k\widehat{\mathcal{L}}_k(s)
    }{
        n_0+\sum_{k=1}^{K}w_k n_k
    },
\end{equation}
where $w_k\geq 0$ denotes the weight assigned to the $k$-th source domain.
The factors $n_k$ account explicitly for the quantities of information contributed by each domain, while the weights $w_k$ control the extent to which each source domain is incorporated into training.
For a given weight vector
$w=(w_1,\ldots,w_K)^\top$,
the resulting weighted-ERM estimator is
\begin{equation}
\label{eq_weighted_erm_sw}
    \widehat{s}_w
    =
    \underset{s\in\mathcal{F}}{\arg\min}
    \widehat{\mathcal{L}}_w(s).
\end{equation}
To characterize the target-domain performance of $\widehat{s}_w$, we further define the population loss on domain $k$ as
\begin{equation}
\label{eq_population_domain_loss}
    \mathcal{L}_k(s)
    :=
    \mathbb{E}_{(X_k,Z_k)\sim P_k}
    \big[
        \ell(X_k,Z_k;s)
    \big], \quad k = 0,\ldots, K.
\end{equation}
Our goal is to choose the source-domain weights so that the resulting estimator achieves the smallest expected population loss on the target domain.
Accordingly, we define the optimal weights by
\begin{align}
\label{eq_optimal_weights_sample_sizes}
    (w_1^*,\ldots,w_K^*)^\top
    =
    \underset{\{w_k\geq 0\}_{k=1}^K}{\arg\min}
    \;
    \mathbb{E}_{D_0,\ldots,D_K}
    \left[
        \mathcal{L}_0(\widehat{s}_w)
    \right].
\end{align}
Let
$w^*:=(w_1^*,\ldots,w_K^*)^\top$.
Substituting $w^*$ into (\ref{eq_weighted_erm_objective}) yields the final weighted-ERM estimator
\begin{equation}
\label{eq_weighted_erm_sw_star}
    \widehat{s}_{w^*}
    =
    \underset{s\in\mathcal{F}}{\arg\min}
    \widehat{\mathcal{L}}_{w^*}(s).
\end{equation}
The role of the weights is to balance the statistical benefit of incorporating additional source samples against the bias induced by discrepancies between the source and target distributions.
As shown in the theoretical analysis in Section \ref{sec_thm}, under suitable conditions, informative source domains can improve the target-domain convergence rate relative to training solely on the target-domain data.  The detailed optimization method of (\ref{eq_optimal_weights_sample_sizes}) can be found in Appendix \ref{sec_optimal_weights}.

\section{Theoretical results}
\label{sec_thm}
In this section, we establish theoretical guarantees for DA-Diff and DA-CIT. Theorem~\ref{thm_score_risk_bound} bounds the target-domain score-matching risk of DA-Diff, Theorem~\ref{thm_tv_target_bound} translates this result into a TV error bound for the learned conditional distribution, and Theorem~\ref{thm_da_cit_type1} further establishes asymptotic Type~I error control for DA-CIT. Together, these results connect domain-adapted score matching to distributional approximation and, ultimately, valid CI testing. Complete proofs are provided in Appendix~\ref{sec_proof}. The complete algorithms of training DA-Diff and DA-CIT can be found in Algorithm \ref{algo_train_da_diff} and \ref{algo_da_cit}.

The proof is organized as follows. We first introduce the H\"older
function class and the ReLU network class at the beginning of
Appendix~\ref{sec_proof}. We then characterize the global population
minimizer of the weighted score-matching risk in
Appendix~\ref{app_global_minimizer}, followed by the excess-risk
decomposition in Appendix~\ref{app_excess_decomp} and the properties of the mixture
density $p_{w,t}$ in Appendix~\ref{app_property_of_pw}. The transfer bias and estimation
error are bounded separately in Appendix~\ref{app_transfer_bias} and
Appendix~\ref{app_bound_esti_error}, respectively, and are combined to prove
Theorem~\ref{thm_score_risk_bound} in Appendix~\ref{app_proof_of_th1}. The TV bound in
Theorem~\ref{thm_tv_target_bound} is proved in Appendix~\ref{app_proof_of_tv}, and the
Type~I error result in Theorem~\ref{thm_da_cit_type1} is proved in
Appendix~\ref{subsec_da_cit_type1}. The main new ingredient is the
extension of diffusion-model convergence analysis to weighted ERM over
multiple domains, which requires characterizing the mixture
distribution and separately controlling the source-induced transfer
bias and the weighted estimation error.

\subsection{Domain-adapted diffusion model (DA-Diff)}
In this section, we provide a bound on the score matching risk of DA-Diff trained via weighted ERM, together with the corresponding bound on the TV distance to the target domain.

For each domain $k\in\{0,1,\ldots,K\}$, we consider the same forward process as in~(\ref{eq_diff_forward}), initialized by $X_k(0)| Z_k=z \sim P_k(\cdot| z)$. Let $p_{k,t}(\cdot| z)$ denote the conditional density of $X_k(t)$ given $Z_k=z$, and define the (intractable) conditional score function as:
\begin{equation}
\label{eq_domain_score_def}
 s_k^*(x,z,t) := \nabla_x \log p_{k,t}(x| z), \qquad k=0,1,\ldots,K.
\end{equation}
Define the joint density of $(X_k(t),Z_k)$ by $p_{k,t}(x,z) := p_{k,t}(x|z)\,p_k(z)$. To distinguish from the denoising score matching loss in~(\ref{eq_diffusion_loss_single}), we define the score matching risk on the $k$-th domain as:
\begin{align}
\label{eq_population_score_matching_risk}
\mathcal{R}_k(s) := \frac{1}{T-t_0}\int_{t_0}^{T}  \mathbb{E}_{(X_t,Z) \sim p_{k,t}(\cdot,\cdot)}\Big[\,\| s(X_t,Z,t) - s_k^*(X_t,Z,t) \|_2^2\,\Big] \, dt.
\end{align}
According to~\cite{Vincent_2011_score_mat}, $\mathcal{R}_k(s)$ can be expressed as:
\begin{equation}
\label{eq_score_matching_risk_decomposition}
\mathcal{R}_k(s) = \mathcal{L}_k(s) + C_{\mathrm{DSM},k}.
\end{equation}
Therefore, the optimization objective in~(\ref{eq_optimal_weights_sample_sizes}) can be equivalently expressed as
$\mathbb{E}_{D_0,\ldots,D_K}\big[\mathcal{R}_0(\widehat{s}_w)\big]$.
We next establish an upper bound of the target score matching risk under the following assumptions.

\begin{assumption}[Transferability conditions]
\label{assump_transf}
Let
    $W_N := \sum_{k=1}^K w_k n_k$,
and define the weighted source marginal density of $Z$ as
    $p_{w\setminus 0}(z)
    :=
    \sum_{k=1}^K
    \frac{w_k n_k}{W_N} p_k(z)$.
Assume that there exist constants
$C_{\mathrm{dr}}\in(0,1]$ and
$C_{\mathrm{DR}}\geq 1$, such that
\begin{equation}
\label{eq_assump_ratio}
    C_{\mathrm{dr}} \cdot
    \max_{1\leq k\leq K} p_k(z)
    \leq
    p_0(z)
    \leq
    C_{\mathrm{DR}} \cdot
    p_{w\setminus 0}(z),
    \qquad
    p_0\text{-almost everywhere}.
\end{equation}
Let $\gamma>0$ be a convergence rate. For $k=1,\ldots,K$, assume that
\begin{equation}
\label{eq_assump_loss}
    \frac{1}{T-t_0}
    \int_{t_0}^{T}
    \mathbb{E}_{(X_t,Z)\sim p_{0,t}}
    \left[
        \left\|
            s_0^*(X_t,Z,t)
            -
            s_k^*(X_t,Z,t)
        \right\|_2^2
    \right]
    dt
    =
    \mathcal{O}\left(n_0^{-2\gamma}\right).
\end{equation}
\end{assumption}

\begin{assumption}[Smoothness and tail conditions]
\label{assump_smoothness_tail}
Let $\beta>0$ be the smoothness parameter.
For each $k=0,\ldots,K$, assume that there exists a function
\begin{equation}
\label{eq_assump_holder}
    f_k
    \in
    \mathcal{H}^{\beta}
    \left(
        \mathbb{R}^{d_x}\times\mathbb{R}^{d_z},
        B
    \right),
\end{equation}
where the H\"older ball
$\mathcal{H}^{\beta}(\cdot,B)$ is defined in
(\ref{eq_holder_ball}), such that the conditional density $p_k(x| z)$ admits
the representation
\begin{equation}
    p_k(x| z)
    =
    \exp\left(
        -{C_x\|x\|_2^2}/{2}
    \right)
    f_k(x,z),
    \qquad
    (x,z)\in
    \mathbb{R}^{d_x}\times\mathbb{R}^{d_z},
    \quad
    k=0,\ldots,K.
    \label{eq_joint_density_tail}
\end{equation}
Here, $B>0$ and $C_x>0$ are constants independent of $k$. 
Moreover, there exist constants
$0<C_f<\infty$, independent of $k$, such that
\begin{equation}
    \inf_{k=0,\ldots,K}
    \inf_{(x,z)\in\mathbb{R}^{d_x}\times\mathbb{R}^{d_z}}
    f_k(x,z)
    \geq
    C_f.
    \label{eq_fk_lower_bound}
\end{equation}
In addition, we assume that each marginal density $p_k(z)$ has a sub-Gaussian tail, i.e., $p_k(z)
    \lesssim
    \exp\left(
        -{C_z\|z\|_2^2}/{2}
    \right)$. 
We also assume that there exist nonnegative  functions
$q_k$ for $ k=1,\ldots,K$, s.t.
\begin{equation}
\label{eq_assump_smooth_ratio} 
    q_k \in\mathcal H^\beta(\mathbb R^{d_z},L_q) \quad \text{and} \quad p_k(z)
    =
    p_0(z)q_k(z) \quad  \text{for Lebesgue-almost every }z\in\mathbb R^{d_z}, 
\end{equation}
where $L_q>0$ is a constant. We also define $q_0(z)\equiv 1$. More discussion is in Appendix \ref{sec_transferability_interpretation}. 
\end{assumption}

\begin{thm}[Score matching risk on the target domain of DA-Diff]
\label{thm_score_risk_bound}
Define the effective sample size
\begin{equation}
\label{eq_neff_def}
    N_{\mathrm{eff}}
    :=
    \frac{(n_0+W_N)^2}
    {n_0+\sum_{k=1}^K w_k^2 n_k}.
\end{equation}
Suppose that $W_N>0$, and define the normalized source weight vector $\overline w
    := 
    \bigl(
        \frac{w_1n_1}{W_N},\ldots,\frac{w_Kn_K}{W_N}
    \bigr)^\top
    \in\mathbb{R}^K$. Further, define the source discrepancy matrix
$G\in\mathbb{R}^{K\times K}$ with entries
\begin{equation}
\label{eq_G_def}
\begin{aligned}
    G_{ij}
    :=
    \frac{1}{T-t_0}
    \int_{t_0}^{T}
    \mathbb{E}_{(X_t,Z)\sim p_{0,t}}
    \Big[
        \big\langle
        s_0^*(X_t,Z,t)-s_i^*(X_t,Z,t),
        s_0^*(X_t,Z,t)-s_j^*(X_t,Z,t)
        \big\rangle
    \Big]
    \,dt .
\end{aligned}
\end{equation}
Suppose that there is at least one $w_k >0$ and the weights satisfy
\begin{equation}
\label{eq_weight_regularity_asymptotic}
    \max\{1,w_1,\ldots,w_K\}
    \lesssim \frac{n_0+\sum_{k=1}^K w_k^2 n_k}
    {n_0+W_N}.
\end{equation}
Then, for any weight vector
$w=(w_1,\ldots,w_K)^\top$ satisfying
$w_k\geq 0$ for all $k$,
$W_N>0$, and
(\ref{eq_weight_regularity_asymptotic}),
the score-matching risk on the target domain can be bounded as
\begin{align}
    \mathbb{E}_{D_0,\ldots,D_K}
    \big[
        \mathcal{R}_0(\widehat{s}_w)
    \big]
    \lesssim\;& \widetilde{\mathcal{O}} \left(
    N_{\mathrm{eff}}^{-\frac{2\beta}{d_x+d_z+2\beta}} \right)
    \label{eq_target_risk_bound_erm}
    \\
    &+
    \frac{W_N^2}{(n_0+W_N)^2} \left[ 
    \overline{w}^\top G \overline{w} + \operatorname{diag}(G)^\top
        \overline w \right] ,
    \label{eq_target_risk_bound_bias}
\end{align}
where $\widetilde{\mathcal{O}}(\cdot)$ suppresses polylogarithmic factors in $N_{\mathrm{eff}}$. The $t_0,T$ and the parameter of $\widehat{s}_w$ is specified in Appendix \ref{sec_proof}.
\end{thm}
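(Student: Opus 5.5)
The argument treats the weighted objective as ordinary score matching for a single mixture distribution, following the roadmap announced at the start of Section~\ref{sec_thm}.

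\textbf{Step 1: the population minimizer.} Write $\lambda_0:=n_0/(n_0+W_N)$ and $\lambda_k:=w_kn_k/(n_0+W_N)$ for $k\ge1$. Then $\mathbb{E}[\widehat{\mathcal L}_w(s)]=\sum_{k}\lambda_k\mathcal L_k(s)$. By the denoising identity (\ref{eq_score_matching_risk_decomposition}), this equals, up to a constant, the score-matching risk of the mixture
\[
p_{w,t}(x,z):=\sum_k\lambda_k\,p_{k,t}(x|z)\,p_k(z).
\]
Its minimizer over all measurable $s$ is the mixture score
\[
s_w^*=\sum_k\pi_k\,s_k^*,\qquad \pi_k(x,z,t)=\frac{\lambda_k p_{k,t}(x,z)}{p_{w,t}(x,z)}.
\]
By (\ref{eq_assump_smooth_ratio}), $p_k(z)=p_0(z)q_k(z)$, so $p_{w,t}(\cdot|z)$ is again a Gaussian-tailed density of the form (\ref{eq_joint_density_tail}). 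Its H\"older part is $\sum_k\lambda_kq_kf_k/\sum_k\lambda_kq_k$. Using the lower bounds (\ref{eq_fk_lower_bound}) and (\ref{eq_assump_ratio}), I would show that this is still H\"older-$\beta$ and bounded below. Consequently the approximation machinery of \citet{Fu2024UnveilCD} applies to $s_w^*$ with constants independent of $w$.

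\textbf{Step 2: the decomposition.} Use
\[
\mathcal R_0(\widehat s_w)\le 2\,\mathbb{E}_{p_{0,t}}\|\widehat s_w-s_w^*\|^2+2\,\mathbb{E}_{p_{0,t}}\|s_w^*-s_0^*\|^2 .
\]

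\emph{Transfer-bias term.} Since $\sum_k\pi_k=1$, we have $s_w^*-s_0^*=\sum_{k\ge1}\pi_k(s_k^*-s_0^*)$. The key point is that $\pi_k$ should be replaced by its "typical" value $\lambda_k$. Squaring gives the cross terms $\sum_{i,j}\lambda_i\lambda_j\langle\cdot,\cdot\rangle$. Since $\sum_{k\ge1}\lambda_k=W_N/(n_0+W_N)$, these equal $\frac{W_N^2}{(n_0+W_N)^2}\overline w^\top G\overline w$. The fluctuation of $\pi_k$ around $\lambda_k$ is controlled by Cauchy--Schwarz and the density ratio bounds (\ref{eq_assump_ratio}), i.e.\ $\pi_k\lesssim\lambda_k p_{k,t}/p_{0,t}$ together with $p_{0,t}\gtrsim$ the mixture. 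This produces the linear term $\frac{W_N^2}{(n_0+W_N)^2}\operatorname{diag}(G)^\top\overline w$. This step I expect to be the main obstacle: the posterior weights depend on $x$ and $t$, so making the bias come out exactly in terms of $G$ requires carefully relating the conditional likelihood ratios to the constants in Assumption~\ref{assump_transf}. If needed, I would absorb the discrepancy into $\operatorname{diag}(G)$ via $\|\sum\pi_kv_k\|^2\le\sum\pi_k\|v_k\|^2$ applied on the part where $\pi$ deviates.

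\emph{Estimation term.} The change of measure from $p_{w,t}$ to $p_{0,t}$ is handled by the left inequality in (\ref{eq_assump_ratio}), which gives $p_{0,t}\lesssim p_{w,t}$ up to constants. It therefore suffices to bound the excess weighted risk $\sum_k\lambda_k\big(\mathcal R_k$-type risk of $\widehat s_w\big)$ relative to $s_w^*$.

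\textbf{Step 3: the weighted estimation error.} Follow the standard ERM argument for diffusion models (\citet{Fu2024UnveilCD}; \citet{yang_2025_cdcit}):
\begin{itemize}
\item truncate the domain at radius $\sqrt{\log N_{\mathrm{eff}}}$ using the sub-Gaussian tails;
\item approximate $s_w^*$ by a ReLU network in $\mathcal F$ with error $N_{\mathrm{eff}}^{-2\beta/(d_x+d_z+2\beta)}$;
\item control the generalization gap of the weighted empirical process by a Bernstein inequality plus a covering-number bound.
\end{itemize}
The summands $\lambda_k\ell(x_{k,i},z_{k,i};s)/n_k$ are independent but not identically distributed. Their variance scales as $\sum_k\lambda_k^2/n_k=(n_0+\sum w_k^2n_k)/(n_0+W_N)^2=1/N_{\mathrm{eff}}$. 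The sup-norm scales as $\max_k\lambda_k/n_k=\max\{1,w_k\}/(n_0+W_N)$, and condition (\ref{eq_weight_regularity_asymptotic}) makes this $\lesssim 1/N_{\mathrm{eff}}$. Bernstein therefore yields a deviation of order $\log\mathcal N/N_{\mathrm{eff}}$, i.e.\ sample size $N_{\mathrm{eff}}$ throughout.

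\textbf{Step 4: conclusion.} Balancing the network size against $N_{\mathrm{eff}}$, with $t_0,T$ chosen polynomially/logarithmically in $N_{\mathrm{eff}}$, gives (\ref{eq_target_risk_bound_erm}). Adding the bias from Step~2 gives (\ref{eq_target_risk_bound_bias}).
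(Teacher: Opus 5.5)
Your route is the paper's. You identify the weighted-ERM minimizer as the conditional score of the mixture $p_{w,t}$ and split $\mathcal R_0(\widehat s_w)$ into an estimation term and a transfer bias. You then change measure from $p_{0,t}$ to $p_{w,t}$, run truncation, ReLU approximation and a covering/Bernstein bound with $\max_k\lambda_k/n_k\lesssim N_{\mathrm{eff}}^{-1}$ from (\ref{eq_weight_regularity_asymptotic}), and balance $N\asymp N_{\mathrm{eff}}^{(d_x+d_z)/(d_x+d_z+2\beta)}$. Your bias argument replaces the posterior weight $\pi_k$ by $\lambda_k$ and controls $|\pi_k-\lambda_k|$ by Cauchy--Schwarz. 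That is equivalent to the paper's version, which lower-bounds the denominator of $s_w^*-s_0^*$ by a multiple of $n_0+W_N$ and splits $R_{k,t}=p_{k,t}/p_{0,t}$ as $1+(R_{k,t}-1)$.

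\textbf{The change of measure, as written, fails.} The left inequality of (\ref{eq_assump_ratio}), $C_{\mathrm{dr}}\max_k p_k\le p_0$, gives $p_k\lesssim p_0$ and hence $p_{w,t}\lesssim p_{0,t}$, which is the opposite of what you need. The trivial bound $p_{w,t}\ge\lambda_0p_{0,t}$ does not rescue it, because $\lambda_0=n_0/(n_0+W_N)$ may vanish exactly where transfer helps. The bound $p_{0,t}\lesssim p_{w,t}$ comes from the \emph{right} inequality $p_0\le C_{\mathrm{DR}}p_{w\setminus0}$, combined with the conditional-ratio bounds $C_f/B\le p_{k,t}(x|z)/p_{0,t}(x|z)\le B/C_f$. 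Those ratio bounds follow from the common Gaussian factor in (\ref{eq_joint_density_tail}) and $C_f\le f_k\le B$, and survive convolution with the forward kernel. Together they give $p_{w,t}\ge \frac{C_f}{C_{\mathrm{DR}}B}\,p_{0,t}$.

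\textbf{What you call the main obstacle is not one.} The same conditional-ratio bound controls the $x$- and $t$-dependence of $\pi_k$; this comes from Assumption~\ref{assump_smoothness_tail}, not Assumption~\ref{assump_transf}. Using both halves of (\ref{eq_assump_ratio}) you get, pointwise, $\pi_k\le \frac{B^2C_{\mathrm{DR}}}{C_f^2C_{\mathrm{dr}}}\lambda_k$. Your Cauchy--Schwarz fallback then gives exactly $\frac{W_N^2}{(n_0+W_N)^2}\operatorname{diag}(G)^\top\overline w$.

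\textbf{Step~1.} The lower bound $\sum_k\lambda_kq_k\ge C_{\mathrm{DR}}^{-1}$ also comes from the right inequality, and it holds only $p_0$-a.e. You therefore need a smooth cutoff extension of the quotient off the support (as the paper does) to get a global H\"older bound.

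\textbf{Step~3.} The variance factor $\sum_k\lambda_k^2/n_k$ on its own gives only a $\sqrt{\log\mathcal N/N_{\mathrm{eff}}}$ deviation. The fast rate needs the Bernstein condition $\mathbb E_{P_w}h^2\lesssim M_\ell\,\mathbb E_{P_w}h$ for the excess loss $h=\ell(\cdot;s)-\ell(\cdot;s_w^*)$, where $P_w=\sum_k\lambda_kP_k$ and $M_\ell\asymp\log N$. That condition holds because the cross term vanishes under the mixture, since $s_w^*$ is its score. You also need a self-normalized step that absorbs half of the excess risk. State both explicitly.
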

\begin{remark}
The upper bound in Theorem~\ref{thm_score_risk_bound} consists of two main components. 
The first term (\ref{eq_target_risk_bound_erm}) is the estimation error arising from weighted ERM and decreases as the effective sample size $N_{\mathrm{eff}}$ increases. 
The second term (\ref{eq_target_risk_bound_bias}) captures the transfer bias induced by discrepancies between the source and target domains. 
\end{remark}
\begin{remark} \label{rem_estimate_G} The matrix $G$ in~(\ref{eq_G_def}) involves the unknown scores $s_0^*,\ldots,s_K^*$ and therefore cannot be evaluated directly. For each source domain $k=1,\ldots,K$, we replace $s_k^*$ with a pretrained score network $\widehat{s}_k$ obtained from the corresponding source dataset. For the target domain, we use the weighted ERM estimator $\widehat{s}_w$ as a surrogate for $s_0^*$, since $\widehat{s}_w$ is trained to achieve a small score matching risk on the target domain. Moreover, the expectation $\mathbb E_{(X_t,Z)\sim P_{0,t}}$ (\ref{eq_G_def}) can be written as  $\mathbb E_{(X_0,Z)\sim P_0} \mathbb E_{X_t| X_0}$, where the distribution of $X_t| X_0$ is defined in Section~\ref{sec_diff_model}. The outer expectation is approximated using an independent holdout sample from the target distribution. The resulting empirical estimator of $G$ is given in (\ref{eq_G_empirical_estimator}), and its estimation consistency is established in Theorem~\ref{thm_G_estimation_consistency}. \end{remark}

Building on Theorem~\ref{thm_score_risk_bound}, we further establish a convergence bound
for the learned conditional distribution in TV distance.
Specifically, let $\widehat{P}_w(\cdot| z)$ denote the conditional
distribution induced by the reverse process (\ref{eq_diff_reverse}) with $\widehat{s}_w$.
The following theorem gives the expected TV distance between
$\widehat{P}_w(\cdot| z)$ and the target conditional distribution
$P_0(\cdot| z)$.

\begin{thm}[Target-domain TV error bound of DA-Diff]
\label{thm_tv_target_bound}
Suppose that the conditions of Theorem~\ref{thm_score_risk_bound} hold. Then,
\begin{align}
\label{eq_tv_target_bound}
&\mathbb{E}_{D_0,\ldots,D_K}
    \mathbb{E}_{Z_0}
    \left[
        d_{\mathrm{TV}}
        \left(
            P_0(\cdot| Z_0),
            \widehat{P}_w(\cdot| Z_0)
        \right)
    \right] \notag \\
& \quad =
\widetilde{\mathcal{O}}
\left(
    N_{\mathrm{eff}}^{-\frac{\beta}{d_x+d_z+2\beta}}
    +
    \frac{W_N}{n_0+W_N}
    \sqrt{  \overline{w}^\top G \overline{w} + \operatorname{diag}(G)^\top \overline w }
\right).
\end{align}
\end{thm}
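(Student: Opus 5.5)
The plan is to convert the score-matching risk bound of Theorem~\ref{thm_score_risk_bound} into a TV bound via the standard Girsanov/KL argument for reverse SDEs, as in \citet{Fu2024UnveilCD} and \citet{yang_2025_cdcit}, and then to pass from KL to TV with Pinsker's inequality and Jensen's inequality.

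For fixed $z$, let $\widetilde P_{t_0}(\cdot\mid z)$ be the law of $X_0(t_0)$ under the true forward process, i.e.\ $p_{0,t_0}(\cdot\mid z)$. We split
\begin{align*}
d_{\mathrm{TV}}\bigl(P_0(\cdot\mid z),\widehat P_w(\cdot\mid z)\bigr)
\le{}& d_{\mathrm{TV}}\bigl(P_0(\cdot\mid z),P_{0,t_0}(\cdot\mid z)\bigr)\\
&+ d_{\mathrm{TV}}\bigl(P_{0,t_0}(\cdot\mid z),\widehat P_w(\cdot\mid z)\bigr).
\end{align*}
\textbf{Early-stopping term.} Under Assumption~\ref{assump_smoothness_tail}, the Gaussian-tail and Hölder representation gives $d_{\mathrm{TV}}(P_0(\cdot\mid z),P_{0,t_0}(\cdot\mid z))=\widetilde{\mathcal O}(\sqrt{t_0})$, uniformly in $z$. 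The choice $t_0=N_{\mathrm{eff}}^{-c}$ with large $c$, made in Appendix~\ref{sec_proof}, makes this negligible.

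\textbf{Reverse-SDE term.} We compare the ideal reverse SDE (\ref{eq_diff_reverse}) started at $p_{0,T}(\cdot\mid z)$ with the learned SDE started at $\mathcal N(0,I_{d_x})$ using the score $\widehat s_w$. The data-processing inequality and the chain rule for KL, together with Girsanov's theorem, give
\begin{align*}
\mathrm{KL}\bigl(P_{0,t_0}(\cdot\mid z)\,\|\,\widehat P_w(\cdot\mid z)\bigr)
\le{}& \mathrm{KL}\bigl(p_{0,T}(\cdot\mid z)\,\|\,\mathcal N(0,I)\bigr)\\
&+\tfrac12\int_{t_0}^T \mathbb E\|\widehat s_w-s_0^*\|_2^2\,dt .
\end{align*}
The first term decays like $e^{-T}$, using the second moment bound from the sub-Gaussian tail, and $T\asymp\log N_{\mathrm{eff}}$ makes it negligible. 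To use Girsanov rigorously I would, as in the cited works, rely on the network class being clipped or bounded so that Novikov's condition holds, or use the approximation argument of Chen et al.\ that avoids it.

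\textbf{Averaging over $Z_0$ and the data.} Taking $\mathbb E_{Z_0}$, applying Pinsker's inequality and then Jensen's inequality (concavity of the square root) twice, once over $Z_0$ and once over $D_0,\dots,D_K$, gives
\begin{align*}
\mathbb E_{D}\mathbb E_{Z_0}\, d_{\mathrm{TV}}
\lesssim{}& \sqrt{t_0}+e^{-T}\\
&+\sqrt{(T-t_0)\,\mathbb E_D\,\mathcal R_0(\widehat s_w)} .
\end{align*}
Since $\mathcal R_0$ is defined with the average $\tfrac{1}{T-t_0}\int$, the Girsanov integral equals $(T-t_0)\mathcal R_0$. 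The factor $T-t_0\lesssim \log N_{\mathrm{eff}}$ is absorbed into $\widetilde{\mathcal O}$.

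\textbf{Conclusion.} Plugging in Theorem~\ref{thm_score_risk_bound} and using $\sqrt{a+b}\le\sqrt a+\sqrt b$ splits the square root into $N_{\mathrm{eff}}^{-\beta/(d_x+d_z+2\beta)}$ and $\frac{W_N}{n_0+W_N}\sqrt{\overline w^\top G\overline w+\operatorname{diag}(G)^\top\overline w}$, which is (\ref{eq_tv_target_bound}).

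The main obstacle is the Girsanov step: the KL identity requires controlling the conditional path measures uniformly in $z$, handling the singular behaviour of $s_0^*$ near $t=0$ (hence the early stopping), and ensuring that the expectation in $\mathcal R_0$ matches the joint law $p_{0,t}(x,z)=p_{0,t}(x\mid z)p_0(z)$. This is exactly what $\mathbb E_{Z_0}$ of the conditional KL produces, so the averaging is consistent. I would follow the conditional version in \citet{Fu2024UnveilCD}, checking that its assumptions are implied by Assumption~\ref{assump_smoothness_tail}.
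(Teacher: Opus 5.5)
Your proposal is correct and follows essentially the same route as the paper: the paper invokes the conditional Girsanov--Pinsker bound of \citet{Fu2024UnveilCD} (Appendix D.3), $\mathbb{E}_D\mathbb{E}_{Z}\, d_{\mathrm{TV}} \lesssim \sqrt{t_0}(\log(1/t_0))^{(d_x+1)/2} + e^{-T} + \sqrt{T\,\mathbb{E}_D\mathcal{R}_0(\widehat s_w)}$, which you rederive. It then plugs in Theorem~\ref{thm_score_risk_bound} with $T=\frac{2\beta}{d_x+d_z+2\beta}\log N_{\mathrm{eff}}$ and $t_0 = N_{\mathrm{eff}}^{-4\beta/(d_x+d_z+2\beta)-1}$, and splits the square root exactly as you do.

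One small correction: Pinsker applied to $\mathrm{KL}(p_{0,T}\|\mathcal N(0,I))\lesssim e^{-T}$ gives $e^{-T/2}$ in TV, not $e^{-T}$. With the paper's choice of $T$ this term is still $N_{\mathrm{eff}}^{-\beta/(d_x+d_z+2\beta)}$, so your conclusion is unaffected.
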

The two terms in~(\ref{eq_tv_target_bound}) correspond to the estimation error and the transfer bias, respectively. To characterize a sufficient regime for consistency, suppose that the
polylogarithmic factor hidden in
$\widetilde{\mathcal O}(\cdot)$ is of order
$(\log N_{\mathrm{eff}})^q$ for some fixed $q>0$, and let $\Gamma := \frac{\beta}{d_x+d_z+2\beta}$. Under Assumption~\ref{assump_transf}, $\sqrt{
        \overline w^\top G\overline w
        +
        \operatorname{diag}(G)^\top\overline w
    } = \mathcal O(n_0^{-\gamma})$. Hence, the TV error is controlled by $(\log N_{\mathrm{eff}})^q \cdot 
    \left(
        N_{\mathrm{eff}}^{-\Gamma}
        + n_0^{-\gamma}
    \right)$. The estimation error term converges to zero whenever
$N_{\mathrm{eff}}\to\infty$.
For the transfer bias term, a sufficient condition is
$(\log N_{\mathrm{eff}})^q=o(n_0^\gamma)$.
Therefore, a simple sufficient growth regime for the consistency of
DA-Diff is
\begin{equation}
\label{eq_neff_tv_growth_condition}
    1
    \ll
    N_{\mathrm{eff}}
    \ll
    \exp(n_0^c),
    \qquad
    0<c<\frac{\gamma}{q}.
\end{equation}
This growth regime (\ref{eq_neff_tv_growth_condition}) is rather mild: it only requires $\gamma>0$ in
(\ref{eq_assump_loss}), meaning that the source-to-target score
discrepancy may decay at an arbitrarily slow polynomial rate.

\subsection{Domain-adapted conditional independence test (DA-CIT)}
\label{sec_da_cit}

\begin{thm}[Type~I error of DA-CIT]
\label{thm_da_cit_type1}
Suppose that the conditions of Theorem~\ref{thm_score_risk_bound} hold.
Let $p_w$ denote the $p$-value returned by DA-CIT.
Then, under the null hypothesis $H_0$ in~(\ref{eq_cit_def}), for any significance level $\alpha\in(0,1)$,
\begin{align}
\label{eq_da_cit_type1}
\mathbb{P}\bigl(p_w\leq\alpha\bigr)
\leq
\alpha
+
n_0\,
\widetilde{\mathcal{O}}
\left(
N_{\mathrm{eff}}^{-\frac{\beta}{d_x+d_z+2\beta}}
+
\frac{W_N}{n_0+W_N}
\sqrt{  \overline{w}^\top G \overline{w} + \operatorname{diag}(G)^\top \overline w }
\right).
\end{align}
\end{thm}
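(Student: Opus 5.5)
The proof combines the CRT robustness bound of \citet{berrett2020conditional} with the TV bound of Theorem~\ref{thm_tv_target_bound}, conditioning on the training data.

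\textbf{Step 1: condition on the training data.} Let $\mathcal D := (\{(x_{0,i},y_{0,i},z_{0,i})\}_{i=n_0+1}^{2n_0}, D_1,\ldots,D_K)$ denote the data used to fit $\widehat s_w$. Conditional on $\mathcal D$, the sampler $\widehat P_w(\cdot\mid z)$ is a fixed Markov kernel. It is independent of the testing half $\{(x_{0,i},y_{0,i},z_{0,i})\}_{i=1}^{n_0}$, which is i.i.d.\ from $P_0$ by the sample split.

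\textbf{Step 2: conditional Type~I error.} Under $H_0$, let $\widetilde{\mathbf X}_0$ be drawn from $\prod_i P_0(\cdot\mid z_{0,i})$ independently of $\mathbf Y_0$ given $\mathbf Z_0$. Then $(\widetilde{\mathbf X}_0,\mathbf X_0^{(1)},\ldots,\mathbf X_0^{(B)})$ would be exchangeable if each pseudo-sample were drawn from the true conditional. The argument of Theorem~4 in \citet{berrett2020conditional} then gives
\begin{equation*}
\mathbb P(p_w\le\alpha\mid\mathcal D)\le \alpha + d_{\mathrm{TV}}\Bigl(\textstyle\prod_{i=1}^{n_0}P_0(\cdot\mid z_{0,i}),\prod_{i=1}^{n_0}\widehat P_w(\cdot\mid z_{0,i})\Bigr),
\end{equation*}
averaged over $\mathbf Z_0$. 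The key point is that the observed statistic $\mathbb T^{(0)}$ would be distributed as the pseudo-statistics if $\mathbf X_0$ were replaced by a draw from $\widehat P_w$. By the coupling characterization of TV, the event $\{p_w\le\alpha\}$ changes probability by at most the product-TV between the law of $\mathbf X_0\mid(\mathbf Y_0,\mathbf Z_0)$, which equals $\prod_i P_0(\cdot\mid z_{0,i})$ under $H_0$, and $\prod_i\widehat P_w(\cdot\mid z_{0,i})$.

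\textbf{Step 3: tensorize and integrate.} Subadditivity of TV over product measures bounds the product TV by $\sum_{i=1}^{n_0} d_{\mathrm{TV}}(P_0(\cdot\mid z_{0,i}),\widehat P_w(\cdot\mid z_{0,i}))$. Take expectations over $\mathbf Z_0$; the $z_{0,i}$ are i.i.d.\ from $p_0$ and independent of $\mathcal D$. This gives
\begin{equation*}
\mathbb P(p_w\le\alpha\mid\mathcal D)\le\alpha+n_0\,\mathbb E_{Z_0}\bigl[d_{\mathrm{TV}}(P_0(\cdot\mid Z_0),\widehat P_w(\cdot\mid Z_0))\bigr].
\end{equation*}
Taking the outer expectation over $\mathcal D$ and invoking Theorem~\ref{thm_tv_target_bound}, whose hypotheses are exactly those of Theorem~\ref{thm_score_risk_bound}, yields~(\ref{eq_da_cit_type1}).

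\textbf{Main obstacle.} The delicate point is Step 2, namely establishing the conditional robustness bound rigorously. One must make sure $\widehat P_w$ is genuinely independent of the testing half. It must also be independent of the $B$ Monte Carlo draws beyond their dependence on $\mathbf Z_0$. If $G$ or $w$ is estimated (Remark~\ref{rem_estimate_G}), the holdout sample used for this must likewise be disjoint from the testing half, so that $w$ may be treated as fixed given $\mathcal D$.

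I would also check that the discretized reverse SDE actually used for sampling is the $\widehat P_w$ covered by Theorem~\ref{thm_tv_target_bound}. If it is not, the discretization error would need to be absorbed into the $\widetilde{\mathcal O}$ term. A further subtlety is that $p_w$ uses $\mathbb T^{(0)}$ computed on real data. The exchangeability swap is applied only to the $X$-coordinate, holding $(\mathbf Y_0,\mathbf Z_0)$ fixed, which is valid precisely because $X_0\perp\!\!\!\perp Y_0\mid Z_0$ under $H_0$.
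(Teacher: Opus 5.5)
Your proposal is correct and follows essentially the same route as the paper. The paper invokes Theorem~4 of \citet{berrett2020conditional} (its Lemma~\ref{lem_da_cit_type1}, with the factor $n_0$ already built in), relies on the testing half being independent of the data used to fit $\widehat s_w$, and then substitutes the TV bound of Theorem~\ref{thm_tv_target_bound}. Your explicit conditioning on the training data, the swap of $\mathbf X_0$ (rather than the $B$ pseudo-samples) to obtain exchangeability, and the tensorization of the product TV just make visible where the factor $n_0$ comes from, which the paper leaves inside the cited lemma.
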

The conditions for asymptotic Type~I error control are stronger than
those required for the TV consistency of DA-Diff in
(\ref{eq_neff_tv_growth_condition}) because the TV error is accumulated
by $n_0$ times in~(\ref{eq_da_cit_type1}). 
Using the same polylogarithmic factor $(\log N_{\mathrm{eff}})^q$ as
in (\ref{eq_neff_tv_growth_condition}), the excess Type~I error is controlled by
    $n_0 \cdot (\log N_{\mathrm{eff}})^q \cdot 
    \left(
        N_{\mathrm{eff}}^{-\Gamma}
        + n_0^{-\gamma}
    \right)$. For the estimation term, since
$(\log N_{\mathrm{eff}})^q
=O(N_{\mathrm{eff}}^\zeta)$
for any sufficiently small $\zeta>0$, a sufficient condition is
    $n_0^{1/(\Gamma-\zeta)} \ll N_{\mathrm{eff}} ,
    \ 
    0<\zeta<\Gamma$. For the transfer-bias term, the additional factor $n_0$ strengthens
the upper-growth condition in
(\ref{eq_neff_tv_growth_condition}) to
    $(\log N_{\mathrm{eff}})^q
    =
    o(n_0^{\gamma-1})$. Thus, when $\gamma>1$, a simple sufficient growth regime for
asymptotic Type~I error control is
\begin{equation}
\label{eq_neff_sandwich_condition}
    n_0^{1/(\Gamma-\zeta)}
    \ll
    N_{\mathrm{eff}}
    \ll
    \exp(n_0^c),
    \qquad
    0<\zeta<\Gamma,
    \quad
    0<c<\frac{\gamma-1}{q}.
\end{equation}
Under~(\ref{eq_neff_sandwich_condition}), the excess Type~I error
vanishes, and hence
$\mathbb{P}(p_w\leq\alpha)\leq\alpha+o(1)$.
The lower bound on $N_{\mathrm{eff}}$ ensures that the estimation error
decays sufficiently fast to offset the factor $n_0$, whereas the upper
bound ensures that the polylogarithmic factor grows sufficiently slowly
so that
$n_0^{1-\gamma}(\log N_{\mathrm{eff}})^q\to0$.

By the decomposition in~(\ref{eq_score_matching_risk_decomposition}),
minimizing the objective in~(\ref{eq_optimal_weights_sample_sizes}) with
respect to the weights $w$ is equivalent to minimizing
$\mathbb{E}_{D_0,\ldots,D_K}
[\mathcal{R}_0(\widehat{s}_w)]$.
In Appendix~\ref{sec_optimal_weights}, we reformulate this weight
optimization problem into a majorization--minimization quadratic program (MM-QP) procedure and solve it accordingly.

\section{Synthetic Experiments}
\label{sec_simulation}
Section~\ref{sec_gaussian} examines the conditional distribution estimation performance of DA-Diff under heterogeneous source domains, while Section~\ref{sec_post_non_linear} evaluates the Type I error  and  power of DA-CIT.

\subsection{Gaussian mixture example}
\label{sec_gaussian}
We evaluate DA-Diff on a controlled Gaussian mixture setting \citep{Hagemann_2022_gmm,Jannis_2025_condw,yang_2026_dmdg}. The detailed data-generating mechanism is provided in Appendix~\ref{app_gmm}. We set $n_0=n_1=\cdots=n_K=500$ and study the effect of varying the number of source domains $K$.  We use the Wasserstein-2 ($W_2$) distance to evaluate the quality of the estimated conditional distribution. For each $Z_0$ in the test set, we draw 1,000 samples from the true conditional distribution $P_0(\cdot | Z_0)$ and 1,000 samples from $\widehat{P}_w(\cdot | Z_0)$ estimated by DA-Diff. We then compute the $W_2$ distance between these two empirical distributions. We repeat this procedure over 200 test values of $Z_0$ and report the average $W_2$ distance. Additional ablation studies are provided in Appendix~\ref{app_ablation}.

Our baselines include a diffusion model trained only on the target dataset (Target-only), a diffusion model trained on the concatenation of all target and source datasets (Cat-all), and a diffusion model first trained on all available target and source data and then fine-tuned on the target dataset (Finetune). We further compare with diffusion models for transfer learning, namely Domain Guidance (DoG) \citep{zhong_2025_dog} and the Transfer Guided Diffusion Process (TGDP) \citep{ouyang_2024_tgdp}. Weighted ERM methods are also included, such as Unified Optimization of Weights and Quantities (UOWQ) \citep{zhang_2026_uowq} and Robust Learning from Untrusted Sources (Robust) \citep{Konstantinov_2019_Robust_Learning}. The results are reported in Table \ref{tab_gaussian_baselines_w2}.

\begin{table}[ht]
\centering
\caption{Performance of proposed DA-Diff, Target-only, Cat-all, Finetune, DoG, TGDP, UOWQ, and Robust under the data-generating mechanism in Appendix~\ref{app_gmm}. Experiments are repeated over 30 seeds; we report the $W_2$ distance along with one standard deviation (std). The best, second-best, and third-best results are highlighted in bold,
underlined, and $\dagger$, respectively.
}
\label{tab_gaussian_baselines_w2}
\begin{tabular}{lcccc}
\toprule
Method & $W_2$ ($K=2$) & $W_2$ ($K=5$) & $W_2$ ($K=10$) & $W_2$ ($K=20$) \\
\midrule
DA-Diff (Ours) & \textbf{0.194}(0.030) & \textbf{0.190}(0.052) & \underline{0.216}(0.060) & \underline{0.196}(0.033)\\
Target-only & 0.343(0.079) & 0.343(0.079) & 0.343(0.079) & 0.343(0.079) \\
Cat-all & 0.644(0.134) & 0.495(0.114) & 0.372(0.109) & 0.330(0.079) \\
Finetune & 0.384(0.063) & 0.283(0.056) & \textbf{0.193}(0.043) & \textbf{0.168}(0.036) \\
DoG \citeyearpar{zhong_2025_dog} & {0.311}$^\dagger$(0.057) & \underline{0.240}(0.068) & {0.218}$^\dagger$(0.062) & 0.222$^\dagger$(0.069) \\
TGDP \citeyearpar{ouyang_2024_tgdp} & 0.663(0.151) & 0.466(0.110) & 0.372(0.089) & 0.331(0.077) \\
UOWQ \citeyearpar{zhang_2026_uowq} & 0.484(0.108) & 0.571(0.284) & 0.613(0.182) & 0.742(0.237)\\
Robust \citeyearpar{Konstantinov_2019_Robust_Learning} & \underline{0.230}(0.056) & {0.282}$^\dagger$(0.093) & 0.260(0.078) & 0.227(0.066)\\
\bottomrule
\end{tabular}

\end{table}

Table~\ref{tab_gaussian_baselines_w2} compares the methods under different $K$. DA-Diff achieves the lowest mean $W_2$ error for $K=2,5$, and the second-lowest error for $K=10,20$, where Finetune performs best. Across all four settings, DA-Diff substantially reduces the error relative to Target-only. In contrast, Cat-all performs worse than Target-only for $K=2,5,10$ and yields only a modest improvement at $K=20$, illustrating that simply pooling heterogeneous source data can lead to negative transfer. Compared with DoG and TGDP, DA-Diff attains lower $W_2$ error for every $K$. TGDP is also worse than Target-only for $K=2,5,10$. UOWQ degrades as $K$ increases, whereas Robust remains relatively stable but is consistently worse than DA-Diff. Additional experiments in Table~\ref{tab_gaussian_baselines_w2_appendix} show a similar pattern at larger sample sizes: when the per-domain sample size is $1000$ or $1500$, DA-Diff outperforms Target-only, Cat-all, and Finetune for all considered values of $K$. 

\subsection{Post-nonlinear example}
\label{sec_post_non_linear}
We evaluate the performance of DA-CIT on the post-nonlinear model \citep{yang_2025_cdcit,scetbon2022asymptotic}, a commonly used benchmark for nonlinear conditional independence testing. The detailed data-generating procedure is provided in Appendix~\ref{app_post_nonlinear}. 

We fix the significance level at $\alpha=0.05$ and repeat each experiment 100 times, reporting both Type I error and testing power. We consider one target domain and 10 source domains, with $n_0=\cdots=n_{10}=500$, and vary the conditioning dimension $d_z$ from 5 to 100. To further assess calibration and its variability, Appendix~\ref{app_ablation} compares the empirical distribution of the $p$-values over the 100 repetitions with the $U(0,1)$ distribution and reports the corresponding Kolmogorov--Smirnov (KS) statistic. Additional ablation studies are also provided in Appendix~\ref{app_ablation}.

We compare DA-CIT with several representative CI tests. KCIT \citep{zhang2011kernel} performs CI testing in a reproducing kernel Hilbert space. CDCIT \citep{yang_2025_cdcit} estimates the conditional distribution using half of the target data and applies the CRT to the remaining half. SGMCIT \citep{ren_2025_sgmcit} estimates the conditional distribution via sliced score matching, while RBPT \citep{polo_2023_rbpt} is designed to improve robustness to regression-model misspecification. ECCIT \citep{pan_2026_eccit} calibrates the $p$-values of the Generalized Covariance Measure (GCM; \citealp{shah2020hardness}) to mitigate testing miscalibration. CRT$^*$ \citep{zhang_2026_crt_star} leverage heterogeneous external and unlabeled data to improve the CRT; however, their approach relies on a sparse linear model for the conditional distribution, whereas our diffusion-based framework accommodates substantially more general nonlinear and non-Gaussian conditional distributions.


\begin{figure}[htb]
    \centering
    \includegraphics[width=0.80\linewidth]{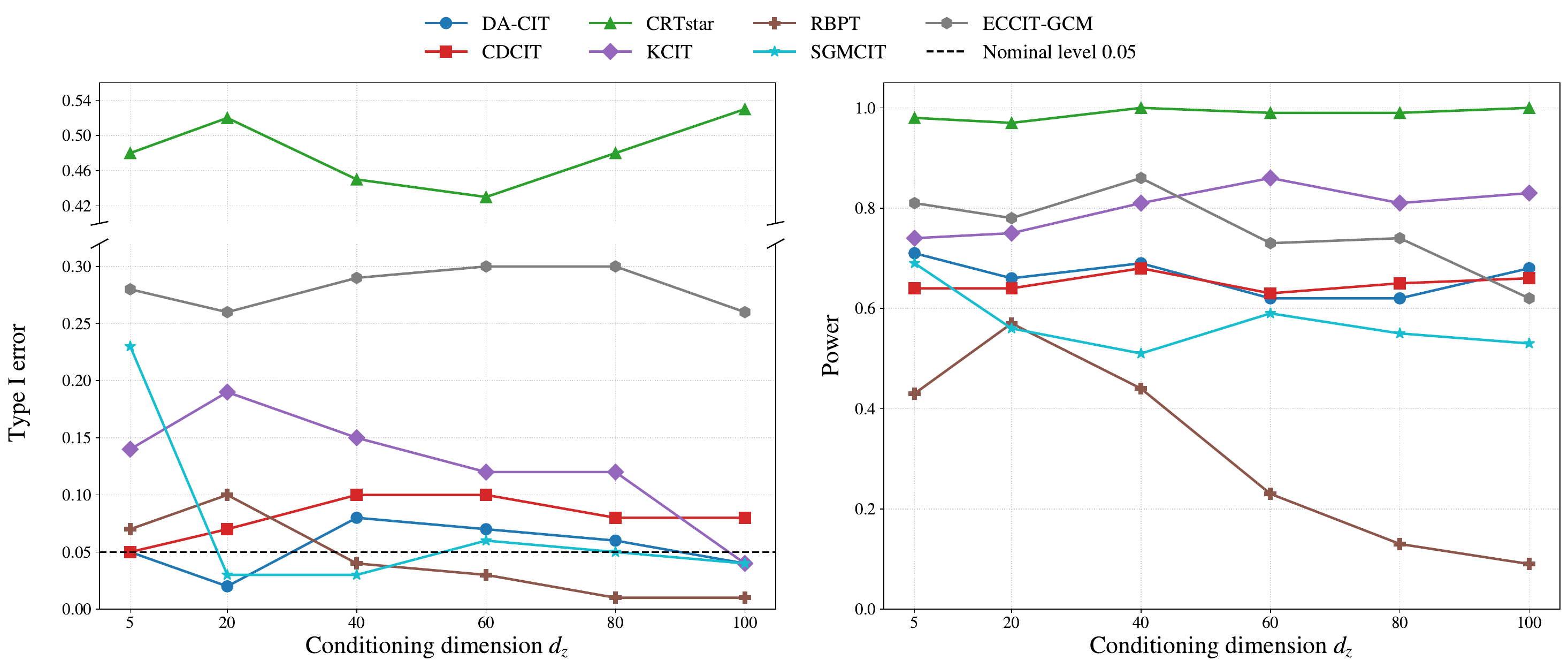}
    \caption{Type I error and power of each CI testing under different $d_z$. }
    \label{fig_cit_results_main}
\end{figure}

Figure~\ref{fig_cit_results_main} reports the Type I error and power of different
CI testing methods with varying $d_z$. Across all considered dimensions, DA-CIT consistently achieves a Type I error no larger than that of CDCIT and generally maintains it close to the nominal level, while maintaining comparable power. SGMCIT and RBPT also maintain reasonable Type I error control in higher dimensions, but with lower power than DA-CIT and CDCIT. KCIT maintains relatively high power, but its Type I error is clearly inflated in most settings. ECCIT-GCM and CRT$^*$ achieve relatively high power, but both exhibit substantial Type I error inflation across the considered dimensions. In particular, despite incorporating external data, CRT$^*$ achieves high power at the cost of severe Type I error inflation. 

\section{Real data analysis}
\label{app_real_data_sachs}

We evaluate our method on the Sachs flow-cytometry dataset \citep{sachs2005causal}, a widely used real-world benchmark for causal discovery. The dataset contains simultaneous single-cell measurements of 11 phosphorylated proteins and phospholipids involved in human T-cell signaling: Raf, Mek, Plc$\gamma$, PIP2, PIP3, Erk, Akt, PKA, PKC, P38, and Jnk.

The Sachs dataset consists of 14 sub-datasets collected under different
experimental conditions involving cellular stimulation and molecular
interventions. Each sub-dataset contains the same 11 variables and 707--927 single-cell observations. We treat each experimental condition as a separate domain rather than as an independent replicate from a common distribution, yielding a natural multi-source domain adaptation setting with heterogeneous distributions across domains. 

Specifically, we use the \texttt{cd3cd28} condition as the target domain. This condition corresponds to T-cell activation through CD3 and CD28 stimulation without an additional inhibitor or ICAM-2 stimulation. The remaining 13 conditions are treated as source domains. These source domains include: (i) CD3/CD28 stimulation combined with one additional molecular perturbation; (ii) CD3/CD28 stimulation together with ICAM-2 stimulation; (iii) PMA or $\beta_2$cAMP stimulation; and (iv) additional CD3/CD28+ICAM-2 conditions combined with molecular perturbations. Table~\ref{tab_sachs_domains} summarizes the target and source domains used in our experiments.

\begin{table*}[htb!]
\centering
\caption{The 14 domains used in the Sachs experiment. The \texttt{cd3cd28} condition is used as the target domain, while the remaining 13 conditions are treated as source domains. The domain labels are consistent with those used in our implementation and visualization.}
\label{tab_sachs_domains}
\resizebox{\textwidth}{!}{
\begin{tabular}{cclll}
\toprule
Label & Role & Dataset & Stimulation / intervention condition & sample size \\
\midrule
0  & Target & \texttt{cd3cd28}
   & CD3 + CD28 & 853 \\

\midrule
1  & Source & \texttt{b2camp}
   & $\beta_2$cAMP & 707 \\

2  & Source & \texttt{cd3cd28\_aktinhib}
   & CD3 + CD28 + Akt inhibitor & 911 \\

3  & Source & \texttt{cd3cd28\_g0076}
   & CD3 + CD28 + G\"o6976 & 723 \\

4  & Source & \texttt{cd3cd28\_ly}
   & CD3 + CD28 + LY294002 & 848 \\

5  & Source & \texttt{cd3cd28\_psitect}
   & CD3 + CD28 + Psitectorigenin & 810 \\

6  & Source & \texttt{cd3cd28\_u0126}
   & CD3 + CD28 + U0126 & 799 \\

7  & Source & \texttt{cd3cd28icam2}
   & CD3 + CD28 + ICAM-2 & 902 \\

8  & Source & \texttt{cd3cd28icam2\_aktinhib}
   & CD3 + CD28 + ICAM-2 + Akt inhibitor & 899 \\

9  & Source & \texttt{cd3cd28icam2\_g0076}
   & CD3 + CD28 + ICAM-2 + G\"o6976 & 753 \\

10 & Source & \texttt{cd3cd28icam2\_ly}
   & CD3 + CD28 + ICAM-2 + LY294002 & 927 \\

11 & Source & \texttt{cd3cd28icam2\_psit}
   & CD3 + CD28 + ICAM-2 + Psitectorigenin & 868 \\

12 & Source & \texttt{cd3cd28icam2\_u0126}
   & CD3 + CD28 + ICAM-2 + U0126 & 759 \\

13 & Source & \texttt{pma}
   & PMA & 913 \\
\bottomrule
\end{tabular}
}
\end{table*}

To visualize the distributional heterogeneity across domains, we project samples from all domains into two dimensions using t-SNE. Figure~\ref{fig_sachs_tsne} shows the resulting representation, where each color corresponds to one experimental domain and label 0 denotes the target domain. The target samples are broadly distributed over the main high-density region of the embedding and substantially overlap with samples from several source domains, suggesting that a subset of the source domains shares similar distributional characteristics with the target. At the same time, several source domains exhibit visibly shifted structures from the main sample population. This motivates selectively exploiting source information rather than naively pooling all source observations.

Following \citet{li2024k}, we select 50 CI and 40 non-CI $(X_0,Y_0,Z_0)$ triples from the reference causal graph of the Sachs dataset \citep{sachs2005causal}. In all triples, $X$ and $Y$ are one-dimensional, while the dimension of $Z$ ranges from 1 to 9. Treating CI as the positive class, we evaluate each method using precision, recall, and F1 score, defined as $\mathrm{TP}/(\mathrm{TP}+\mathrm{FP})$, $\mathrm{TP}/(\mathrm{TP}+\mathrm{FN})$, and $2\,\mathrm{Precision}\,\mathrm{Recall}/(\mathrm{Precision}+\mathrm{Recall})$, respectively.

We compare DA-CIT with the six baselines introduced in Section~\ref{sec_post_non_linear} on the Sachs dataset. Except for DA-CIT and CRT$^{*}$, which are able to leverage external source data directly, we evaluate each baseline under both the Target-only and Cat-all settings. To improve the robustness of the results, we repeat the experiment five times with different random seeds for each CI or non-CI triple, and determine the final prediction by majority vote over the five runs.

\begin{figure}[htb!]
\centering
\includegraphics[
width=\linewidth
]{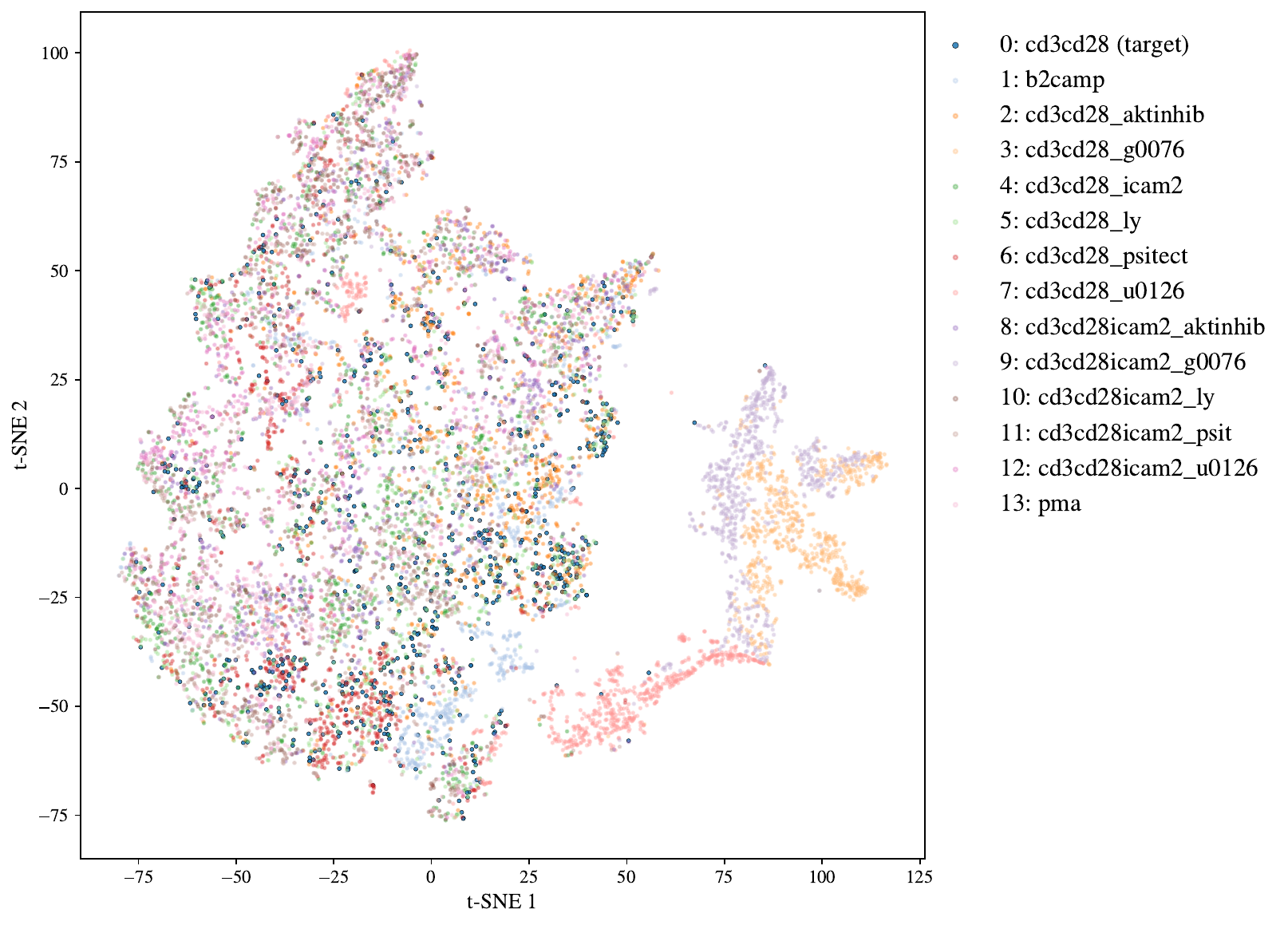}
\caption{
t-SNE visualization of the 14 domains in the Sachs dataset.
Each color represents a different stimulation or intervention
condition. The \texttt{cd3cd28} dataset is used as
the target domain, and the remaining 13 datasets are used as
source domains.
}
\label{fig_sachs_tsne}
\end{figure}

\begin{table}[htb!]
\centering
\caption{Performance of different CI testing methods on the Sachs dataset. Each entry reports \textit{Target-only / Cat-all}. DA-CIT and CRT$^{*}$ report a single result because both methods directly leverage external source data, making the Target-only/Cat-all distinction unnecessary. The best, second-best, and third-best results are highlighted in bold,
underlined, and $\dagger$, respectively.}
\label{tab_sachs_cit_results}

\begin{tabular}{lccc}
\toprule
Method & Precision & Recall & F-score \\
\midrule
DA-CIT (Ours)
& 0.738$^\dagger$
& \underline{0.960}
& \textbf{0.834} \\

CRT$^{*}$ \citeyearpar{zhang_2026_crt_star}
& \textbf{0.764}
& 0.260
& 0.388 \\

CDCIT \citeyearpar{yang_2025_cdcit}
& 0.671 / 0.692
& \textbf{0.980} / 0.540
& 0.796$^\dagger$ / 0.606 \\

KCIT \citeyearpar{zhang2011kernel}
& 0.725 / \underline{0.750}
& 0.900$^\dagger$ / 0.060
& \underline{0.803} / 0.111 \\

RBPT \citeyearpar{polo_2023_rbpt}
& 0.620 / 0.625
& \textbf{0.980 }/ 0.600
& 0.759 / 0.612 \\

ECCIT-GCM \citeyearpar{pan_2026_eccit}
& \textbf{0.764} / 0.733
& 0.520 / 0.220
& 0.619 / 0.338 \\

SGMCIT \citeyearpar{ren_2025_sgmcit}
& 0.721 / 0.285
& 0.880 / 0.040
& 0.792 / 0.070 \\
\bottomrule
\end{tabular} 
\end{table}

Table~\ref{tab_sachs_cit_results} summarizes the results. DA-CIT achieves the highest F1 score among the compared methods, while maintaining both high precision and high recall. 
In comparison, CRT$^{*}$ achieves high precision but suffers from
substantially lower recall, while methods such as CDCIT and RBPT attain
high recall at the cost of lower precision. KCIT also performs competitively
under the Target-only setting, but its performance deteriorates markedly
when all domains are concatenated.

For most competing methods, Cat-all performs worse than the corresponding Target-only setting, particularly in terms of recall and F1 score. This observation
is consistent with our motivation that naively pooling heterogeneous
source and target data may introduce substantial transfer bias. In
contrast, DA-CIT selectively exploits external source information and
achieves the best F1 score while maintaining both high precision and high
recall.

\section{Conclusion}
\label{sec_conclusion}

In this work, we address the fundamental difficulty of conditional distribution estimation in the CRT when target-domain data are limited and heterogeneous source-domain data are available.
To address this problem, we
propose DA-Diff, to the best of our knowledge, the first 
transfer learning method for diffusion models built on weighted ERM. We establish a
target-domain TV error bound for DA-Diff, which explicitly characterizes
the trade-off between estimation error and transfer bias. Building on
DA-Diff, we further propose DA-CIT for CI testing.
Under suitable assumptions, we show that the TV error of DA-Diff
converges to zero and that the excess Type~I error of DA-CIT also
vanishes asymptotically. We evaluate DA-Diff through simulation studies on conditional
distribution estimation, and assess DA-CIT on both simulated and
real-world data. The empirical results show that DA-Diff can effectively
leverage heterogeneous source domains to improve conditional
distribution estimation, while DA-CIT provides reliable Type~I error control and maintains competitive power. A current limitation is
the relatively high computational cost, mainly due to source-model
pretraining and repeated weight optimization during weighted ERM
training. Improving the computational efficiency of DA-Diff and DA-CIT
is therefore an important direction for future work.







\bibliography{iclr2027_conference}
\bibliographystyle{iclr2027_conference}

\clearpage

\appendix

\section{Related work}
We have provided a detailed review of the literature on CI tests in Section~\ref{sec_intro}. In this section, we focus on related work on diffusion models for transfer learning, domain adaptation, and weighted ERM.

\subsection{Diffusion models for transfer learning}

Diffusion models are a class of generative models that have achieved remarkable success in various fields, including image generation \citep{ho2020denoising,song2021score}, scientific discovery \citep{Hoogeboom_2022_e3diff,xu_2022_geodiff,yang_2026_dmdg}, engineering problem \citep{shysheya_2024_diffpde,YANG_2025_mixoil}, time-series forecasting \citep{ye_2025_nsdiff,yang_2026_cwgen}, and language modeling \citep{sahoo_2024_MDLM,Nie_2025_llada}. 

These studies demonstrate the broad applicability of diffusion models across a wide range of fields. However, most existing diffusion models are trained on data from a single domain or from pooled datasets, and relatively limited attention has been paid to how heterogeneous source domains can be selectively leveraged to improve generation in a target domain. This motivates our study of multi-source domain adaptation for diffusion models.

Several works have studied transfer learning specifically for diffusion
models. \citet{zhong_2025_dog} propose Domain Guidance (DoG), which
combines a pre-trained source model and a target-domain fine-tuned model
by classifier-free guidance \citep{ho_2021_cfg}. Although effective, DoG requires both models to be
evaluated at each sampling step, increasing the inference cost, and is
primarily designed for transferring a single pre-trained source model
rather than selectively exploiting multiple heterogeneous source domains.

\citet{ouyang_2024_tgdp} propose the Transfer Guided Diffusion Process
(TGDP), which expresses the target score as the pre-trained source score
augmented by a guidance term estimated from the source--target density
ratio. In practice, this density ratio is estimated through a domain
classifier. However, reliable density-ratio estimation can be challenging
when target data are limited or when the source and target distributions
differ substantially, and estimation errors can directly propagate to the
guidance term. TGDP therefore introduces additional regularization to
improve the training and calibration of the guidance network.
Moreover, similar to DoG, TGDP mainly considers transfer from a single
source model and does not address source selection and weighting in the
multi-source setting.

In contrast, DA-Diff directly incorporates multiple source datasets into
diffusion-model training through weighted ERM. It does not require explicit
density-ratio estimation and automatically balances the statistical benefit
of additional source samples against source--target distribution shifts.

Unlike DoG and TGDP, which mainly focus on transferring knowledge from a
single pre-trained source model, \citet{cheng_2025_thm_finetune} explicitly
consider transfer learning from multiple source domains and provide
theoretical guarantees for diffusion-model finetuning. Their approach,
however, relies on a different transfer mechanism based on learning a shared
representation across domains. Their analysis decomposes the target error
into the error of target-domain fine-tuning, the error of learning the shared
representation from source domains, and a task-mismatch term. In particular,
source data mainly improve the estimation of the shared representation, while
the target score model is still fine-tuned using target-domain samples.

In comparison, DA-Diff directly incorporates source samples into the training
of the target model through weighted ERM. Consequently, the estimation term
in our bound depends on the effective sample size $N_{\mathrm{eff}}$, so that
transferable source samples can directly improve the convergence rate of the
target conditional distribution estimator. At the same time, our bound in Theorem \ref{thm_tv_target_bound}
explicitly characterizes the transfer bias through the source weights, the
total source contribution $W_N/(n_0+W_N)$, and the source discrepancy matrix
$G$, providing a quantitative trade-off between exploiting additional source
samples and avoiding negative transfer. Moreover, our analysis does not
require the existence of a shared dimension-reducing representation across
domains.

\subsection{Domain adaptation and Weighted ERM}

Weighted ERM is a classical approach to domain adaptation (DA) that learns a model by minimizing a weighted combination of
empirical losses across domains \citep{vogel_2020_werm}. Rather than treating all datasets
equally, it adjusts their contributions to exploit useful source
information while mitigating negative transfer caused by distribution
shift.

Two settings closely related to our approach are target-as-mixture
adaptation and learning from potentially untrusted sources
\citep{Mansour_2008_ms,
Konstantinov_2019_Robust_Learning}.
In the former, the target distribution is represented or approximated by
a convex combination of source distributions.
In the latter, a trusted target reference dataset is used to assess
source relevance and downweight irrelevant or corrupted sources.
Our method is related to both perspectives: it constructs a weighted
training mixture and adaptively controls source contributions, without
requiring the target distribution to be exactly a mixture of the source
distributions.

Foundational theoretical contributions to DA include
\citet{David_2006_da}, who establish target-risk bounds
highlighting the trade-off between source prediction error and
distributional discrepancy in the learned representation.
For the multi-source setting, \citet{Mansour_2008_ms}
provide guarantees for distribution-weighted combinations of source
predictors when the target input distribution is a mixture of source
distributions.
Building on this framework, \citet{Mansour_2009_renyi} introduce
R\'enyi divergence into the analysis and extend the guarantees to
arbitrary target distributions by quantifying their discrepancy from
suitable mixtures of the source distributions.

A particularly relevant approach is Robust Learning from Untrusted Sources
\citep{Konstantinov_2019_Robust_Learning}. The method uses a small trusted target dataset to evaluate the reliability of multiple source domains, and then assigns larger weights to sources that are more similar to the target domain. The resulting weights are used in a weighted ERM objective to reduce the effect of unreliable or negatively transferable sources. However, its theoretical analysis assumes a uniformly bounded loss, which is not directly satisfied by the denoising score-matching loss in (\ref{eq_diffusion_loss_single}). Therefore, extending this framework to diffusion models requires additional treatment of the unbounded loss and a separate analysis of the generated distribution.

\citet{Turrisi_2022_ot_msda} propose multi-source domain adaptation via
weighted joint distributions optimal transport (MSDA-WJDOT).
Their method jointly learns source weights and a target predictor by
minimizing an optimal-transport discrepancy between a weighted mixture
of source joint distributions and a target joint distribution constructed
using predicted labels, thereby accounting for discrepancies in both
feature and label spaces.

More recently, \citet{zhang_2026_uowq} propose Unified Optimization of
Weights and Quantities (UOWQ), which jointly optimizes source weights and
transfer quantities under weighted maximum likelihood estimation.
UOWQ is developed in a parametric setting, where source--target discrepancies
are characterized through differences between the parameters of the
corresponding distributions, rather than directly comparing general
nonparametric distributions.

Our work is inspired by the risk-driven weighting strategy of UOWQ
\citep{zhang_2026_uowq}, but differs substantially in both its formulation
and applicability. UOWQ is developed under a parametric model and weighted maximum likelihood estimation. It represents each domain by a parameter vector and assumes that source and target domains are close in parameter space. While the basic transferability assumption is expressed through the $\ell_2$ norm between source and target parameters, the theoretical risk bound and source-weight optimization further use a Fisher-information-weighted version of this $\ell_2$ norm. Therefore, UOWQ fundamentally relies on a common parametric representation across domains, rather than directly comparing the underlying nonparametric distributions or learned functions. This parameter-space formulation requires
the domain-specific models to share a common and identifiable
parameterization, which can be restrictive for modern neural networks.
In particular, neural-network parameters are generally non-identifiable:
for example, the two ReLU networks
$a\,\mathrm{ReLU}(bx+c)+d$ and
$\mathrm{ReLU}(abx+ac)+d$ represent exactly the same function for $a>0$,
but have different parameter vectors $(a,b,c,d)$ and $(1,ab,ac,d)$. Consequently, parameter discrepancies
may reflect differences in parameterization rather than genuine differences
between the learned distributions, and such symmetries may also lead to
singular Fisher information. Moreover, the parametric formulation of UOWQ
typically requires the models trained on different domains to have the same
parameter structures, making it difficult to directly compare models with
different neural-network architectures. These assumptions are particularly
restrictive for diffusion models, which are commonly trained by denoising
score matching rather than regular maximum likelihood estimation.
In contrast, DA-Diff defines source discrepancies directly in the function space
through differences between the output of the score functions. Therefore, its
discrepancy measure is invariant to parameter reparameterization and does not
require aligned parameter vectors or identical network architectures across
domains; it only requires the models to produce score outputs of the same
dimension. This enables DA-Diff to accommodate substantially more flexible
and potentially nonparametric generative models while directly measuring the
quantities relevant to diffusion-model training.

\section{Interpretation of the Assumption \ref{assump_transf} and \ref{assump_smoothness_tail}}
\label{sec_transferability_interpretation}

\paragraph{Transferability assumptions} Assumption~\ref{assump_transf} characterizes the transferability of the source domains from two complementary perspectives: the \textbf{overlap} of the marginal densities of the $Z$ and the \textbf{similarity} of the conditional scores of $X$ given $Z$.

First, (\ref{eq_assump_ratio}) requires sufficient overlap between the target and source distributions of $Z$. The lower bound
$$
C_{\mathrm{dr}}\max_{1\leq k\leq K}p_k(z)\leq p_0(z)
$$
prevents any individual source domain from assigning arbitrarily large probability mass to regions that are rarely represented in the target domain. Equivalently, it uniformly controls the source-to-target density ratios,
$$
\frac{p_k(z)}{p_0(z)}
\leq
C_{\mathrm{dr}}^{-1},
\qquad k=1,\ldots,K.
$$
The upper bound
$$
p_0(z)\leq C_{\mathrm{DR}}p_{w\setminus 0}(z)
$$
requires the weighted collection of source domains to sufficiently cover the target distribution. In particular, it rules out target regions with substantial probability mass but essentially no source observations. Since
$p_{w\setminus 0}(z)\leq \max_k p_k(z)$, these two inequalities together also imply
$$
C_{\mathrm{dr}}p_{w\setminus 0}(z)
\leq
p_0(z)
\leq
C_{\mathrm{DR}}p_{w\setminus 0}(z),
$$
so the target and weighted source marginals of $Z$ are comparable up to constants. This overlap condition allows information learned from the source domains to be transferred to the target domain without introducing instability caused by extreme covariate shift. An illustration showing cases that satisfy or violate the conditions can be found in Figure~\ref{fig_transferability}.

\begin{figure}[t]
    \centering

    \begin{subfigure}[t]{0.48\linewidth}
        \centering
        \includegraphics[width=\linewidth]{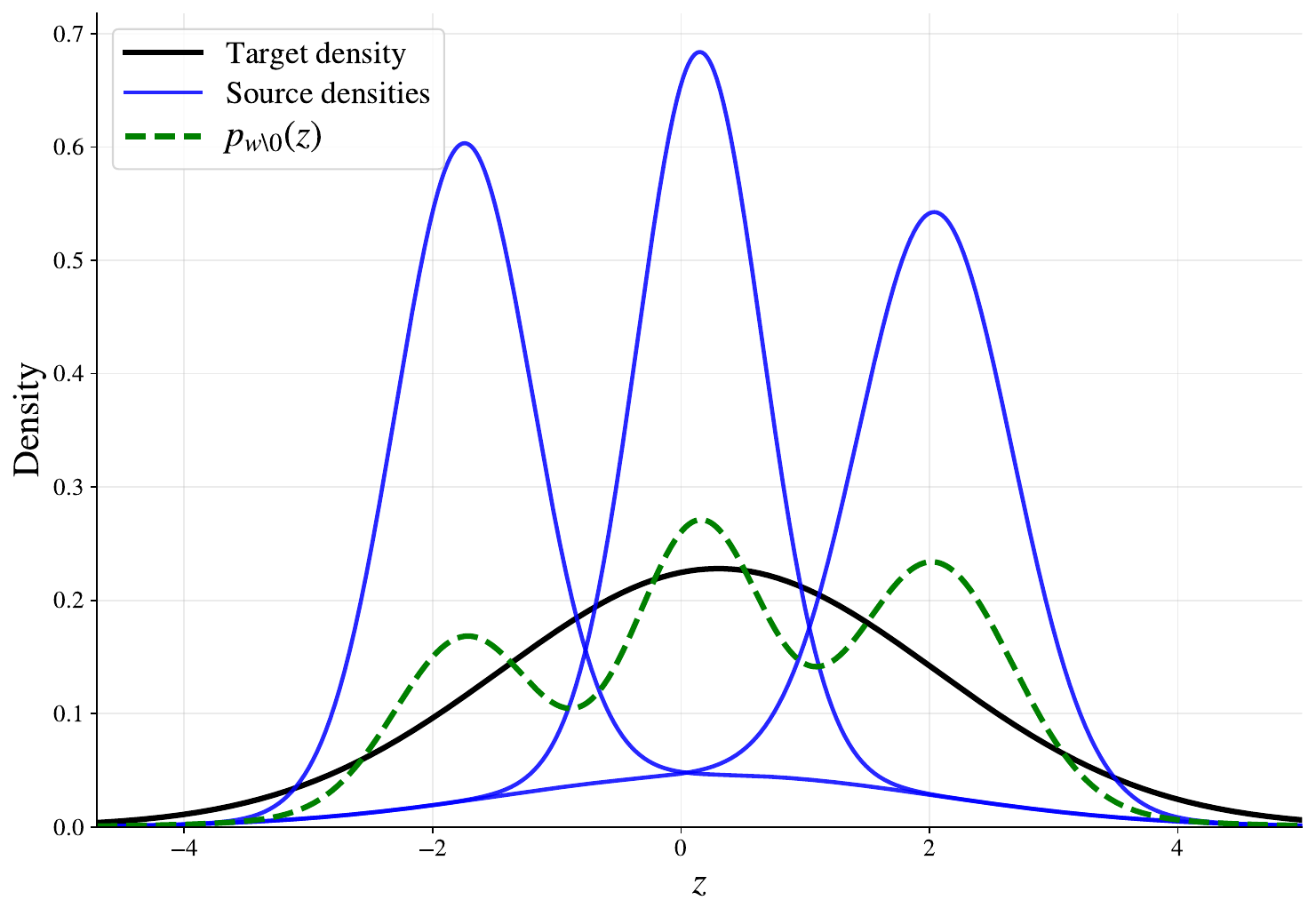}
    \end{subfigure}
    \hfill
    \begin{subfigure}[t]{0.48\linewidth}
        \centering
        \includegraphics[width=\linewidth]{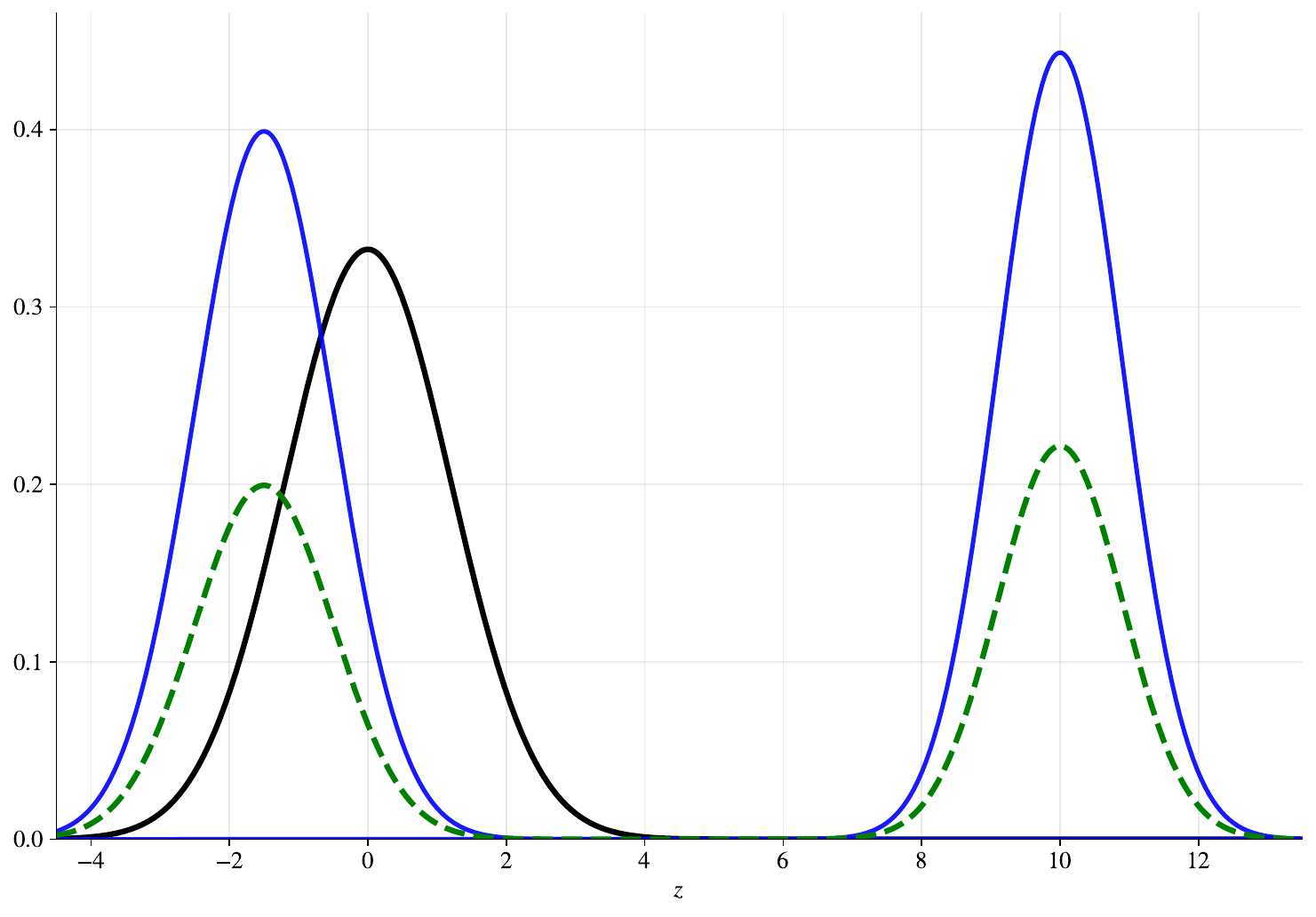}
    \end{subfigure}

    \caption{\textbf{Left panel:} The source densities are sufficiently covered by the
    target density $\left(
C_{\mathrm{dr}}\max_{1\leq k\leq K}p_k(z)\leq p_0(z)\right)
$, while their weighted mixture
    $p_{w\setminus 0}$ also covers the target density $\left(
p_0(z)\leq C_{\mathrm{DR}}p_{w\setminus 0}(z)
\right)$; hence both
    density-ratio conditions can hold uniformly.
    \textbf{Right panel:} One source is located far from the target, while the
    remaining source does not sufficiently cover the target
    distribution. Consequently, the weighted source density fails to
    uniformly dominate $p_0$, violating the transferability condition.}
    \label{fig_transferability}
\end{figure}

Second, (\ref{eq_assump_loss}) controls the discrepancy between the target and source conditional distributions through their diffusion scores. Specifically, (\ref{eq_assump_loss}) requires each transferable source domain to have a conditional score that is sufficiently close to the target score under the target distribution. The parameter $\gamma$ quantifies the degree of transferability: a larger $\gamma$ corresponds to a faster decay of the source--target discrepancy and hence a smaller transfer bias.

\paragraph{Smoothness and tail assumptions} Assumption~\ref{assump_smoothness_tail} imposes standard smoothness and tail regularity conditions on the target and source distributions. It does not assume a specific parametric form for $p_k(X| Z)$ and allows flexible nonlinear and non-Gaussian relationships between $X$ and $Z$.

Assumption (\ref{eq_joint_density_tail})
requires only Gaussian-type tail decay in $x$. The uniform bounds
$C_f\leq f_k(x,z)\leq B$ prevent the density and its logarithmic derivatives from becoming degenerate. Similarly, the sub-Gaussian tail condition on $p_k(z)$ controls the probability of extreme values of $Z$.

\paragraph{Density-ratio regularity.}
Condition~(\ref{eq_assump_smooth_ratio}) complements the overlap
condition~(\ref{eq_assump_ratio}). While
(\ref{eq_assump_ratio}) controls the magnitude of the source-to-target
density ratios, it does not control their regularity as functions of
$z$. The H\"older regularity of $q_k$ is used to establish the
H\"older regularity of the mixture conditional-density factor in
$(x,z)$ in Lemma~\ref{thm_mixture_distribution_properties}, which is
needed for the subsequent score-approximation analysis.

The factorization $p_k(z)=p_0(z)q_k(z)$ avoids explicitly dividing by
$p_0(z)$ and therefore allows $p_0(z)$ to vanish. It also requires the
source marginal to be absolutely continuous with respect to the target
marginal. Together, (\ref{eq_assump_ratio}) and
(\ref{eq_assump_smooth_ratio}) control, respectively, the overlap and
the smoothness of the marginal density ratios.

Overall, these smoothness and density-ratio regularity conditions are
mainly technical assumptions for approximation and truncation arguments.
They exclude highly irregular or extremely heavy-tailed distributions,
while still covering a broad class of smooth nonparametric conditional
distributions.

\section{Optimization of the Weighted-ERM Weights}
\label{sec_optimal_weights}

The optimal weighted-ERM weights defined in
(\ref{eq_optimal_weights_sample_sizes}) do not admit a closed-form
solution. We therefore use the target score-matching risk bound in
Theorem~\ref{thm_score_risk_bound} to construct a surrogate weight
selection criterion. 
Define the normalized domain weights
\begin{equation}
\label{eq_define_pi}
    \pi_0
    :=
    \frac{n_0}{n_0+W_N},
    \qquad
    \pi_k
    :=
    \frac{w_kn_k}{n_0+W_N},
    \quad k=1,\ldots,K.
\end{equation}
Let
    $\boldsymbol{\pi}
    := (\pi_0,\pi_1,\ldots,\pi_K)^\top
    \in\Delta^K$,
where
$\Delta^K
    :=
    \left\{
        \boldsymbol{\pi}\in\mathbb{R}^{K+1}:
        \boldsymbol{1}^\top\boldsymbol{\pi}=1,\;
        \boldsymbol{\pi}\geq0
    \right\}$.
We further denote
    $\boldsymbol{\pi}_S
    :=
    (\pi_1,\ldots,\pi_K)^\top$. The original weighted-ERM weights and $W_N$ can be recovered from
$\boldsymbol{\pi}$ as
\begin{equation}
\label{eq_recover_w_from_pi}
    w_k
    =
    \frac{n_0\pi_k}{n_k\pi_0}, \quad \overline w_k
    =
    \frac{\pi_k}{1-\pi_0}, \quad k=1,\ldots,K, \quad \text{ and } \quad W_N
    =
    \frac{n_0(1-\pi_0)}{\pi_0}
    .
\end{equation}

\subsection{Surrogate objective of (\ref{eq_target_risk_bound_erm}) and the weight regularity constraint (\ref{eq_weight_regularity_asymptotic})}
\label{sec_surrogate_obj}
\paragraph{Reparameterization from $w$ to $ \boldsymbol \pi$} Define
    $D_N
    :=
    \operatorname{diag}
    \left(
        n_0^{-1},
        n_1^{-1},
        \ldots,
        n_K^{-1}
    \right)$.
Then the inverse effective sample size can be written as
\begin{align}
\label{eq_Neff_pi}
    N_{\mathrm{eff}}^{-1}
    &=
    \frac{
        n_0+\sum_{k=1}^K w_k^2n_k
    }{
        (n_0+W_N)^2
    }
    =
    \boldsymbol{\pi}^\top
    D_N
    \boldsymbol{\pi}.
\end{align}
For notational convenience, define 
    $h(\boldsymbol{\pi})
    :=
    \boldsymbol{\pi}^\top
    D_N
    \boldsymbol{\pi}$. 
The theoretical estimation term in
Theorem~\ref{thm_score_risk_bound} involves
$N_{\mathrm{eff}}^{-2\Gamma}$, where $\Gamma=\frac{\beta}{d_x+d_z+2\beta}$. 
Since the H\"older smoothness parameter $\beta$ is unknown in practice,
we replace $2\Gamma$ by a tuning exponent $\rho \in(0,1)$. The selection of $\rho$ is discussed in
Section~\ref{subsec_select_Gamma}.

Next, let
    $g
    :=
    \operatorname{diag}(G)
    =
    (G_{11},\ldots,G_{KK})^\top $.
The two transfer-bias terms in~(\ref{eq_target_risk_bound_bias}) can be
rewritten as
\begin{equation}
\label{eq_G_transfer_pi}
    \frac{W_N^2}{(n_0+W_N)^2}
    \overline w^\top G\overline w
    =
    \boldsymbol{\pi}_S^\top
    G
    \boldsymbol{\pi}_S \quad \text{and} \quad 
    \frac{W_N^2}{(n_0+W_N)^2}
    g^\top\overline w
    =
    \frac{1}{2}
    \boldsymbol{\pi}_S^\top
    \left(
        \boldsymbol{1}g^\top
        +
        g\boldsymbol{1}^\top
    \right)
    \boldsymbol{\pi}_S .
\end{equation}
\paragraph{Calibration of the Estimation--Bias trade off} The bound in~(\ref{eq_target_risk_bound_erm})--(\ref{eq_target_risk_bound_bias})
is specified only up to multiplicative constants and polylogarithmic factors.
Hence, the relative scales of the estimation term and the two transfer-bias
terms are not determined by the theoretical bound. We therefore introduce
three positive calibration parameters
$\boldsymbol{\lambda} = (\lambda_{\mathrm{est}},\lambda_G,\lambda_{\mathrm{diag}})^\top$. For a
fixed $\rho\in(0,1)$, consider the surrogate objective
\begin{equation}
\label{eq_surrogate_pi_rho}
\begin{aligned}
    F_{\rho,\boldsymbol{\lambda}}(\boldsymbol{\pi})
    := 
    \lambda_{\mathrm{est}}
    h(\boldsymbol{\pi})^\rho
    +
    \lambda_G
    \boldsymbol{\pi}_S^\top
    G
    \boldsymbol{\pi}_S +
    \frac{\lambda_{\mathrm{diag}}}{2}
    \boldsymbol{\pi}_S^\top
    \left(
        \boldsymbol{1}g^\top
        +
        g\boldsymbol{1}^\top
    \right)
    \boldsymbol{\pi}_S .
\end{aligned}
\end{equation}
Since multiplying all three calibration parameters by the same positive
constant leaves the minimizer of
$F_{\rho,\boldsymbol{\lambda}}$ unchanged, only their relative magnitudes
matter. Without loss of generality, we normalize them as
\begin{equation}
\label{eq_lambda_simplex}
    \lambda_{\mathrm{est}}
    +
    \lambda_G
    +
    \lambda_{\mathrm{diag}}
    =1,
    \qquad
    \lambda_{\mathrm{est}},
    \lambda_G,
    \lambda_{\mathrm{diag}}>0.
\end{equation}
Thus, although three calibration parameters are introduced, the
parameterization has only two degrees of freedom. For notational convenience, define
\begin{equation}
\label{eq_AG_calibrated}
    A_G(\boldsymbol{\lambda})
    :=
    \lambda_G G
    +
    \frac{\lambda_{\mathrm{diag}}}{2}
    \left(
        \boldsymbol{1}g^\top
        +
        g\boldsymbol{1}^\top
    \right),
\end{equation}
so that
\begin{equation}
    F_{\rho,\boldsymbol{\lambda}}(\boldsymbol{\pi})
    =
    \lambda_{\mathrm{est}}
    h(\boldsymbol{\pi})^\rho
    +
    \boldsymbol{\pi}_S^\top
    A_G(\boldsymbol{\lambda})
    \boldsymbol{\pi}_S .
\end{equation}

In practice, we select the $\boldsymbol{\lambda}$ over a finite simplex grid. Specifically,
for an integer $M_\lambda\geq 3$, define
\begin{equation}
\label{eq_lambda_grid}
    \mathcal G_\lambda
    :=
    \left\{
        \left(
            \frac{j_1}{M_\lambda},
            \frac{j_2}{M_\lambda},
            \frac{j_3}{M_\lambda}
        \right):
        j_1,j_2,j_3\in\mathbb N_+,\quad
        j_1+j_2+j_3=M_\lambda
    \right\}.
\end{equation}
We then select
$(\lambda_{\mathrm{est}},\lambda_G,\lambda_{\mathrm{diag}})
\in\mathcal G_\lambda$
according to the gradient alignment criterion introduced in Section \ref{subsec_select_Gamma}.

\paragraph{Numerical Enforcement of the Regularity Condition} The regularity constraint
(\ref{eq_weight_regularity_asymptotic}) requires
    $\max\{1,w_1,\ldots,w_K\}
    \lesssim
    \frac{
        n_0+\sum_{k=1}^K w_k^2n_k
    }{
        n_0+W_N
    }$.
To enforce this condition numerically, we introduce a fixed constant
$C_{\mathrm{reg}}\geq1$ and impose the constraint
\begin{equation}
\label{eq_regularity_finite_w}
    \max\{1,w_1,\ldots,w_K\}
    \leq
    C_{\mathrm{reg}}
    \frac{
        n_0+\sum_{k=1}^K w_k^2n_k
    }{
        n_0+W_N
    },
\end{equation}
which is equivalent to
\begin{equation}
\label{eq_regularity_pi}
    \frac{\pi_k}{n_k}
    \leq
    C_{\mathrm{reg}}
    h(\boldsymbol{\pi}),
    \qquad
    k=0,\ldots,K.
\end{equation}
The value of $C_{\mathrm{reg}}$ is selected from a finite candidate grid,
with the detailed selection procedure described in
Section~\ref{subsec_select_Gamma}. Hence, for each fixed $\rho$, we consider
\begin{align}
\label{eq_pi_optimization_original}
    \boldsymbol{\pi}^* \in \argmin_{\boldsymbol{\pi}}
    \quad&
    F_{\rho,\boldsymbol{\lambda}}(\boldsymbol{\pi})
    \\
    \text{s.t.}\quad&
    \boldsymbol{\pi}\in\Delta^K,
    \nonumber\\
    &
    \frac{\pi_k}{n_k}
    \leq
    C_{\mathrm{reg}}
    h(\boldsymbol{\pi}),
    \quad
    k=0,\ldots,K. \notag
\end{align}
In implementation, we additionally impose
$\pi_0\geq\epsilon_\pi$ for a small
$\epsilon_\pi>0$ to ensure that the recovered weights in
(\ref{eq_recover_w_from_pi}) remain finite.

\subsection{Majorization--Minimization quadratic program (MM-QP) update for fixed hyperparameters}
\label{subsec_mm_qp_pi}

The optimization problem~(\ref{eq_pi_optimization_original}) is generally nonconvex for
three reasons: the fractional-power term $h^{\rho}$, the possibly
indefinite matrix $A_G(\boldsymbol{\lambda})$, and the quadratic constraints in
(\ref{eq_regularity_pi}). We handle these terms using a
Majorization--Minimization quadratic program (MM-QP) procedure.

\paragraph{Majorization of $h(\boldsymbol{\pi})^\rho$} Suppose that a feasible iterate
$\boldsymbol{\pi}^{(\kappa)}$ is available and define
\begin{equation}
\label{eq_h_kappa_pi}
    h^{(\kappa)}
    :=
    h(\boldsymbol{\pi}^{(\kappa)})
    =
    (\boldsymbol{\pi}^{(\kappa)})^\top
    D_N
    \boldsymbol{\pi}^{(\kappa)}.
\end{equation}

Since $u\mapsto u^{\rho}$ is concave for $0<\rho<1$,
\begin{equation}
\label{eq_fractional_majorization_pi}
    h^{\rho}
    \leq
    (h^{(\kappa)})^{\rho}
    +
    a_\rho^{(\kappa)}
    (h-h^{(\kappa)}),
\end{equation}
where
\begin{equation}
\label{eq_a_kappa_pi}
    a_\rho^{(\kappa)}
    :=
    \rho
    (h^{(\kappa)})^{\rho-1}.
\end{equation}
Therefore, up to an additive constant independent of
$\boldsymbol{\pi}$, the nonconstant part of the majorizer of
$h(\boldsymbol{\pi})^\rho$ is
\begin{equation}
\label{eq_estimation_majorizer_pi}
    a_\rho^{(\kappa)}
    \boldsymbol{\pi}^\top
    D_N
    \boldsymbol{\pi}.
\end{equation}
Importantly, this construction only uses the concavity of the scalar
map $u\mapsto u^\rho$ and does not require the composition
$\boldsymbol{\pi}\mapsto h(\boldsymbol{\pi})^\rho$ to be concave.

\paragraph{Majorization of $\boldsymbol{\pi}_S^\top
    A_G(\boldsymbol{\lambda})
    \boldsymbol{\pi}_S$}
Consider the spectral decomposition
\[
    A_G(\boldsymbol{\lambda})
    =
    V\Xi V^\top,
    \qquad
    \Xi
    =
    \operatorname{diag}(\xi_1,\ldots,\xi_K),
\]
where $\xi_1,\ldots,\xi_K$ are the eigenvalues of
$A_G(\boldsymbol{\lambda})$.
Define
\[
    \Xi_+
    :=
    \operatorname{diag}
    \bigl(
        \max\{\xi_j,0\}
    \bigr),
    \qquad
    \Xi_-
    :=
    \operatorname{diag}
    \bigl(
        \max\{-\xi_j,0\}
    \bigr),
\]
and
\begin{equation}
\label{eq_AG_DC}
    A^+_G(\boldsymbol{\lambda})
    :=
    V\Xi_+V^\top \succeq0,
    \qquad
    A^-_G(\boldsymbol{\lambda})
    :=
    V\Xi_-V^\top \succeq0.
\end{equation}
Then
\begin{equation}
    A_G(\boldsymbol{\lambda})
    =
    A_G^+(\boldsymbol{\lambda})
    -
    A_G^-(\boldsymbol{\lambda}),
\end{equation}
and hence
\begin{equation}
\begin{aligned}
    \boldsymbol{\pi}_S^\top
    A_G(\boldsymbol{\lambda})
    \boldsymbol{\pi}_S
    = 
    \boldsymbol{\pi}_S^\top
    A_G^+(\boldsymbol{\lambda})
    \boldsymbol{\pi}_S -
    \boldsymbol{\pi}_S^\top
    A_G^-(\boldsymbol{\lambda})
    \boldsymbol{\pi}_S.
\end{aligned}
\end{equation}
Since
$-\boldsymbol{\pi}_S^\top
A_G^-(\boldsymbol{\lambda})
\boldsymbol{\pi}_S$
is concave, its first-order Taylor expansion at
$\boldsymbol{\pi}_S^{(\kappa)}$ provides the upper bound
\begin{align}
\label{eq_AG_negative_majorization}
    -
    \boldsymbol{\pi}_S^\top
    A_G^-(\boldsymbol{\lambda})
    \boldsymbol{\pi}_S
    \leq
    -
    2
    (\boldsymbol{\pi}_S^{(\kappa)})^\top
    A_G^-(\boldsymbol{\lambda})
    \boldsymbol{\pi}_S
    +
    (\boldsymbol{\pi}_S^{(\kappa)})^\top
    A_G^-(\boldsymbol{\lambda})
    \boldsymbol{\pi}_S^{(\kappa)}.
\end{align}
Therefore, up to an additive constant independent of
$\boldsymbol{\pi}_S$, the transfer-bias term
$\boldsymbol{\pi}_S^\top
A_G(\boldsymbol{\lambda})
\boldsymbol{\pi}_S$
is majorized by
\begin{equation}
    \boldsymbol{\pi}_S^\top
    A_G^+(\boldsymbol{\lambda})
    \boldsymbol{\pi}_S
    -
    2
    (\boldsymbol{\pi}_S^{(\kappa)})^\top
    A_G^-(\boldsymbol{\lambda})
    \boldsymbol{\pi}_S.
\end{equation}

\paragraph{Inner approximation of the regularity constraint
(\ref{eq_weight_regularity_asymptotic}).}
Because
$h(\boldsymbol{\pi})=\boldsymbol{\pi}^\top D_N\boldsymbol{\pi}$
is convex, its first-order Taylor expansion at
$\boldsymbol{\pi}^{(\kappa)}$ provides the global lower bound
\begin{equation}
\label{eq_h_affine_lower}
    h(\boldsymbol{\pi})
    \geq
    2
    (\boldsymbol{\pi}^{(\kappa)})^\top
    D_N
    \boldsymbol{\pi}
    -
    h^{(\kappa)}.
\end{equation}
We therefore replace
(\ref{eq_regularity_pi}) by the stronger inner constraints
\begin{equation}
\label{eq_regularity_inner_pi}
    \frac{\pi_k}{n_k}
    \leq
    C_{\mathrm{reg}}
    \left[
        2
        (\boldsymbol{\pi}^{(\kappa)})^\top
        D_N
        \boldsymbol{\pi}
        -
        h^{(\kappa)}
    \right],
    \qquad
    k=0,\ldots,K.
\end{equation}
Since
    $2
    (\boldsymbol{\pi}^{(\kappa)})^\top
    D_N
    \boldsymbol{\pi}
    -
    h^{(\kappa)}
    \leq
    h(\boldsymbol{\pi})$,
any point satisfying
(\ref{eq_regularity_inner_pi}) also satisfies the original regularity
constraint (\ref{eq_regularity_pi}). Moreover, the lower bound is tight at the current iterate,
since
    $2
    (\boldsymbol{\pi}^{(\kappa)})^\top
    D_N
    \boldsymbol{\pi}^{(\kappa)}
    -
    h^{(\kappa)}
    =
    h^{(\kappa)}$,
so the current feasible iterate remains feasible for the inner
approximation.

Combining the above bounds, the MM update for a fixed $\rho$ is the
convex quadratic program
\begin{align}
\label{eq_final_pi_mm_qp}
    \boldsymbol{\pi}^{(\kappa+1)}
    \in
    \underset{\boldsymbol{\pi}}{\arg\min}
    \quad&
    \lambda_{\mathrm{est}} a^{(\kappa)}_\rho
    \boldsymbol{\pi}^\top
    D_N
    \boldsymbol{\pi}
    +
    \boldsymbol{\pi}_S^\top
    A_G^+(\boldsymbol{\lambda})
    \boldsymbol{\pi}_S -
    2
    (\boldsymbol{\pi}_S^{(\kappa)})^\top
    A_G^-(\boldsymbol{\lambda})
    \boldsymbol{\pi}_S
    \nonumber\\
    \text{s.t.}\quad&
    \boldsymbol{1}^\top\boldsymbol{\pi}=1,
    \nonumber\\
    &
    \boldsymbol{\pi}\geq0,
    \qquad
    \pi_0\geq\epsilon_\pi,
    \nonumber\\
    &
    \frac{\pi_k}{n_k}
    \leq
    C_{\mathrm{reg}}
    \left[
        2
        (\boldsymbol{\pi}^{(\kappa)})^\top
        D_N
        \boldsymbol{\pi}
        -
        h^{(\kappa)}
    \right],
    \quad
    k=0,\ldots,K.
\end{align}
Since
$a_\rho^{(\kappa)}>0$,
$D_N\succ0$, and
$A_G^+\succeq0$,
the quadratic objective in
(\ref{eq_final_pi_mm_qp}) is strictly convex.
Hence, each MM subproblem admits a unique solution.

Let $Q_{\rho,\boldsymbol{\lambda}}^{(\kappa)}(\boldsymbol{\pi})$ denote the objective
in~(\ref{eq_final_pi_mm_qp}), i.e.,
\begin{align}
\label{eq_define_Q_rho}
Q_{\rho,\boldsymbol{\lambda}}^{(\kappa)}(\boldsymbol{\pi})
:={}&
\lambda_{\mathrm{est}} a_\rho^{(\kappa)}
\boldsymbol{\pi}^\top
D_N
\boldsymbol{\pi}
+
\boldsymbol{\pi}_S^\top
A_G^+(\boldsymbol{\lambda})
\boldsymbol{\pi}_S
-
2
(\boldsymbol{\pi}_S^{(\kappa)})^\top
A_G^-(\boldsymbol{\lambda})
\boldsymbol{\pi}_S.
\end{align}
The corresponding full MM majorizer differs from
$Q_{\rho,\boldsymbol{\lambda}}^{(\kappa)}$ only by the additive constant
\begin{align}
\label{eq_define_C_rho}
C_{\rho, \boldsymbol{\lambda}}^{(\kappa)}
:={}&
\lambda_{\mathrm{est}} \left[\bigl(h^{(\kappa)}\bigr)^\rho
-
a^{(\kappa)}_\rho h^{(\kappa)} \right]
+
(\boldsymbol{\pi}_S^{(\kappa)})^\top
A_G^-(\boldsymbol{\lambda})
\boldsymbol{\pi}_S^{(\kappa)}.
\end{align}
Hence,
\begin{equation}
\label{eq_full_majorization}
F_{\rho, \boldsymbol{\lambda}}(\boldsymbol{\pi})
\leq
Q_{\rho , \boldsymbol{\lambda}}^{(\kappa)}(\boldsymbol{\pi})
+
C_{\rho, \boldsymbol{\lambda}}^{(\kappa)},
\end{equation}
with equality at
$\boldsymbol{\pi}=\boldsymbol{\pi}^{(\kappa)}$.
Since $C_{\rho, \boldsymbol{\lambda}}^{(\kappa)}$ is independent of
$\boldsymbol{\pi}$, minimizing the full majorizer is equivalent to
solving~(\ref{eq_final_pi_mm_qp}). Moreover, the inner approximation of the regularity constraint
contains the current feasible iterate
$\boldsymbol{\pi}^{(\kappa)}$.
Therefore, the construction guarantees monotonic descent of
$F_{\rho, \boldsymbol{\lambda}}$:
\begin{align}
    F_{\rho, \boldsymbol{\lambda}}(\boldsymbol{\pi}^{(\kappa+1)})
    &\leq
    Q_{\rho, \boldsymbol{\lambda}}^{(\kappa)}
    (\boldsymbol{\pi}^{(\kappa+1)})
    +
    C_{\rho, \boldsymbol{\lambda}}^{(\kappa)}
    \nonumber\\
    &\leq
    Q_{\rho, \boldsymbol{\lambda}}^{(\kappa)}
    (\boldsymbol{\pi}^{(\kappa)})
    +
    C_{\rho, \boldsymbol{\lambda}}^{(\kappa)}
    \nonumber\\
    &=
    F_{\rho, \boldsymbol{\lambda}}(\boldsymbol{\pi}^{(\kappa)}).
\end{align}

\paragraph{Initialization.}
A guaranteed feasible initialization is obtained by setting
$w_k^{(0)}=1$ for all source domains, which gives
\begin{equation}
\label{eq_pi_initialization}
    \pi_k^{(0)}
    =
    \frac{n_k}
    {\sum_{j=0}^K n_j},
    \qquad
    k=0,\ldots,K.
\end{equation}
For this initialization,
\[
    \frac{\pi_k^{(0)}}{n_k}
    =
    \frac{1}{\sum_{j=0}^K n_j}
    =
    h(\boldsymbol{\pi}^{(0)}),
\]
and hence the original regularity constraint is satisfied for every
$C_{\mathrm{reg}}\geq1$.


\subsection{Selection of tuning parameters by gradient alignment}
\label{subsec_select_Gamma}

The theoretical exponent
$2\beta/(d_x+d_z+2\beta)$ in (\ref{eq_target_risk_bound_erm})
depends on the unknown H\"older smoothness parameter $\beta$.
As mentioned in Section~\ref{sec_surrogate_obj}, we therefore treat
$\rho$ as a tuning parameter and select it from a finite candidate set
\[
    \mathcal G_\rho
    =
    \{\rho_1,\ldots,\rho_J\}
    \subset(0,1),
\]
where the upper bound $1$ corresponds to the limiting case
$\beta\to\infty$. Similarly, the constant $C_{\mathrm{reg}}$ in the weight regularity
constraint is unknown in practice. We therefore select it from
\[
    \mathcal G_C
    =
    \{C_1,\ldots,C_L\}
    \subset(1,C_{\max}],
\]
where $C_{\max}<\infty$ is a fixed constant independent of the sample
sizes. The calibration parameters
$\boldsymbol{\lambda}
=
(\lambda_{\mathrm{est}},\lambda_G,\lambda_{\mathrm{diag}})^\top$
introduced in~(\ref{eq_surrogate_pi_rho}) are also selected from the
finite simplex grid $\mathcal G_\lambda$ defined in
(\ref{eq_lambda_grid}). Write
\[
    \mathcal G_\lambda
    =
    \{
        \boldsymbol{\lambda}^{(1)},
        \ldots,
        \boldsymbol{\lambda}^{(R)}
    \},
    \qquad
    R
    :=
    |\mathcal G_\lambda|
    =
    \binom{M_\lambda-1}{2}.
\]
Although $\boldsymbol{\lambda}$ contains three components, the
normalization in~(\ref{eq_lambda_simplex}) leaves only two degrees of
freedom.

We do not select these tuning parameters by directly comparing the
values of the surrogate objective
$F_{\rho,\boldsymbol{\lambda}}$ across different candidates. In
particular, changing $\rho$ changes the scale of
$h(\boldsymbol{\pi})^\rho$, while changing
$\boldsymbol{\lambda}$ changes the relative scales of the estimation
and transfer-bias terms. Instead, we jointly select
$\rho$, $C_{\mathrm{reg}}$, and $\boldsymbol{\lambda}$ according to
the predicted first-order improvement of the target validation loss.

Let $\theta$ denote the parameters of the current score network.
For domain $k$, denote its empirical training loss by
$\widehat{\mathcal L}_k(\theta)$. Define their gradient by
\begin{equation}
\label{eq_domain_gradient}
    \boldsymbol{u}_k
    :=
    \nabla_\theta
    \widehat{\mathcal L}_k(\theta),
    \qquad
    k=0,\ldots,K.
\end{equation}
Let
\begin{equation}
\label{eq_target_val_gradient}
    \boldsymbol{u}_0^{\mathrm{val}}
    :=
    \nabla_\theta
    \widehat{\mathcal L}_{0,\mathrm{val}}(\theta)
\end{equation}
denote the gradient of the target validation loss.

For each candidate triple
    $\left( \rho_j,C_l,\boldsymbol{\lambda}^{(m)} \right)
    \in
    \mathcal G_\rho
    \times
    \mathcal G_C
    \times
    \mathcal G_\lambda$,
we solve~(\ref{eq_final_pi_mm_qp}) with
$\rho=\rho_j$,
$C_{\mathrm{reg}}=C_l$, and
$\boldsymbol{\lambda}=\boldsymbol{\lambda}^{(m)}$
to obtain
\[
    \widehat{\boldsymbol{\pi}}_{j,l,m}
    =
    \left(
        \widehat{\pi}_{0,j,l,m},
        \ldots,
        \widehat{\pi}_{K,j,l,m}
    \right)^\top.
\]
The resulting weighted training loss is
\begin{equation}
\label{eq_weighted_loss_rho}
    \widehat{\mathcal L}_{j,l,m}(\theta)
    :=
    \sum_{k=0}^K
    \widehat{\pi}_{k,j,l,m}
    \widehat{\mathcal L}_k(\theta),
\end{equation}
whose gradient is
\begin{equation}
\label{eq_weighted_gradient_rho}
    \boldsymbol{u}_{j,l,m}
    :=
    \nabla_\theta
    \widehat{\mathcal L}_{j,l,m}(\theta)
    =
    \sum_{k=0}^K
    \widehat{\pi}_{k,j,l,m}
    \boldsymbol{u}_k.
\end{equation}
To motivate the selection criterion, consider a single gradient step
with learning rate $\eta>0$,
\[
    \theta_{j,l,m}^{+}
    =
    \theta-\eta\boldsymbol{u}_{j,l,m}.
\]
A first-order Taylor expansion of the target validation loss gives
\begin{align}
\label{eq_gradient_alignment_taylor}
    \widehat{\mathcal L}_{0,\mathrm{val}}
    (\theta_{j,l,m}^{+})
    &=
    \widehat{\mathcal L}_{0,\mathrm{val}}
    (\theta-\eta\boldsymbol{u}_{j,l,m})
    \nonumber\\
    &=
    \widehat{\mathcal L}_{0,\mathrm{val}}(\theta)
    -
    \eta
    \left\langle
        \boldsymbol{u}_0^{\mathrm{val}},
        \boldsymbol{u}_{j,l,m}
    \right\rangle
    +
    O(\eta^2).
\end{align}
Hence, to first order, a larger positive inner product
\[
    \left\langle
        \boldsymbol{u}_0^{\mathrm{val}},
        \boldsymbol{u}_{j,l,m}
    \right\rangle
\]
corresponds to a larger predicted reduction in the target validation
loss. We therefore jointly select the tuning parameters according to
\begin{equation}
\label{eq_select_rho_gradient_alignment}
\begin{aligned}
    \left(
        \widehat{\rho},
        \widehat C_{\mathrm{reg}},
        \widehat{\boldsymbol{\lambda}}
    \right)
    \in
    \underset{
        \substack{
            \rho_j\in\mathcal G_\rho,
            C_l\in\mathcal G_C,
            \boldsymbol{\lambda}^{(m)}
            \in\mathcal G_\lambda
        }
    }{\arg\max}
    \;
    \left\langle
        \boldsymbol{u}_0^{\mathrm{val}},
        \boldsymbol{u}_{j,l,m}
    \right\rangle .
\end{aligned}
\end{equation}
Because the weighted gradient is linear in the domain weights,
(\ref{eq_select_rho_gradient_alignment}) can be evaluated without
performing an additional backward pass for every candidate triple.
Define the domain-wise target-alignment scores
\begin{equation}
\label{eq_domain_alignment_score}
    c_k
    :=
    \left\langle
        \boldsymbol{u}_0^{\mathrm{val}},
        \boldsymbol{u}_k
    \right\rangle,
    \qquad
    k=0,\ldots,K,
\end{equation}
and let
\[
    c
    :=
    (c_0,\ldots,c_K)^\top.
\]
Then
\begin{align}
\label{eq_alignment_linear_pi}
    \left\langle
        \boldsymbol{u}_0^{\mathrm{val}},
        \boldsymbol{u}_{j,l,m}
    \right\rangle
    &=
    \left\langle
        \boldsymbol{u}_0^{\mathrm{val}},
        \sum_{k=0}^K
        \widehat{\pi}_{k,j,l,m}
        \boldsymbol{u}_k
    \right\rangle
    \nonumber\\
    &=
    \sum_{k=0}^K
    \widehat{\pi}_{k,j,l,m}
    \left\langle
        \boldsymbol{u}_0^{\mathrm{val}},
        \boldsymbol{u}_k
    \right\rangle
    \nonumber\\
    &=
    c^\top
    \widehat{\boldsymbol{\pi}}_{j,l,m}.
\end{align}
Consequently, the tuning parameters can equivalently be selected as
\begin{equation}
\label{eq_select_rho_linear_score}
\boxed{
\begin{aligned}
    \left(
        \widehat{\rho},
        \widehat C_{\mathrm{reg}},
        \widehat{\boldsymbol{\lambda}}
    \right)
    \in
    \underset{
        \substack{
            \rho_j\in\mathcal G_\rho,
            C_l\in\mathcal G_C,
            \boldsymbol{\lambda}^{(m)}
            \in\mathcal G_\lambda
        }
    }{\arg\max}
    \;
    c^\top
    \widehat{\boldsymbol{\pi}}_{j,l,m}.
\end{aligned}
}
\end{equation}
This criterion has a direct domain-adaptation interpretation.
If
\[
    c_k
    =
    \left\langle
        \boldsymbol{u}_0^{\mathrm{val}},
        \boldsymbol{u}_k
    \right\rangle
    >0,
\]
then a gradient step using domain $k$ is locally aligned with a
descent direction of the target validation loss. Conversely,
$c_k<0$ indicates a locally conflicting training direction and
potential negative transfer. Hence,
(\ref{eq_select_rho_linear_score}) favors tuning-parameter
configurations whose MM-QP solutions place larger weights on domains
whose training gradients are beneficial to the target domain.

Since
$\widehat C_{\mathrm{reg}}\leq C_{\max}$ and $C_{\max}$ is fixed
independently of the sample sizes, the selected weights continue to
satisfy the weight regularity condition with a sample-size-independent
constant.

After selecting
$\left(\widehat\rho,\widehat C_{\mathrm{reg}},
\widehat{\boldsymbol{\lambda}}\right)$,
we use the corresponding MM-QP solution
\[
    \widehat{\boldsymbol{\pi}}
    :=
    \widehat{\boldsymbol{\pi}}_{
        \widehat\rho,
        \widehat C_{\mathrm{reg}},
        \widehat{\boldsymbol{\lambda}}
    }
\]
for weighted-ERM training, and recover the original source weights via
\begin{equation}
\label{eq_final_w_recovery}
    \widehat w_k
    =
    \frac{
        n_0\widehat\pi_k
    }{
        n_k\widehat\pi_0
    },
    \qquad
    k=1,\ldots,K.
\end{equation}
In practice, neither the tuning parameters nor the MM-QP weights are
updated at every epoch. We first warm up the score model using only
target-domain data, then jointly select
$(\widehat\rho,\widehat C_{\mathrm{reg}},
\widehat{\boldsymbol{\lambda}})$
by gradient alignment and solve the corresponding MM-QP for
$\widehat{\boldsymbol{\pi}}$. The model is subsequently trained for
several epochs with these weights fixed, after which the tuning
parameters and weights are updated again.

\paragraph{Computational complexity.}
For a fixed triple
$(\rho,C_{\mathrm{reg}},\boldsymbol{\lambda})$, each MM iteration
requires the eigendecomposition in~(\ref{eq_AG_DC}) and the solution
of a convex quadratic program of dimension $K+1$ in
(\ref{eq_final_pi_mm_qp}). Both operations have computational
complexity $\mathcal O(K^3)$, so the overall complexity of each MM
iteration remains $\mathcal O(K^3)$.

Let
\[
    J=|\mathcal G_\rho|,
    \qquad
    L=|\mathcal G_C|,
    \qquad
    R=|\mathcal G_\lambda|.
\]
The MM-QP weight selection across all candidate triples therefore costs
$\mathcal O(JLRK^3)$ per MM iteration. If the number of MM iterations
is treated as fixed, the overall cost of each tuning-parameter
selection step is also $\mathcal O(JLRK^3)$. Moreover, since
$A_G(\boldsymbol{\lambda})$ depends only on
$\boldsymbol{\lambda}$, its eigendecomposition can be reused across
all $(\rho,C_{\mathrm{reg}})$ pairs associated with the same
$\boldsymbol{\lambda}$. Since $K$ is typically much smaller than the
number of score-network parameters, this optimization cost is
negligible relative to score-network training.

Moreover, the gradient-alignment criterion does not require one
backward pass for each candidate triple
$(\rho,C_{\mathrm{reg}},\boldsymbol{\lambda})$.
The domain gradients
$\boldsymbol{u}_0,\ldots,\boldsymbol{u}_K$ and the target validation
gradient $\boldsymbol{u}_0^{\mathrm{val}}$ are computed only once.
The alignment score for each candidate triple is then obtained from
the inexpensive inner product
$c^\top\widehat{\boldsymbol{\pi}}_{j,l,m}$.

Detailed training behavior of DA-Diff and MM-QP can be found in Appendix \ref{sec_train_behavior}.

\section{Data generation mechanism of the synthetic experiments in Section \ref{sec_simulation}}

\subsection{Gaussian mixture data}
\label{app_gmm}
We first introduce the data-generating mechanism for the Gaussian mixture data used to evaluate DA-Diff and other transfer-learning methods in Section \ref{sec_gaussian}.

Let $X_0\in\mathbb{R}^{5}$ follow a Gaussian mixture
$X_0 \sim \sum_{i=1}^{10} 0.1 \cdot \mathcal N(\mu_i,\Sigma_i)$, where $\mathcal N$ is the density of the normal distribution. We consider the linear transform:
\begin{equation*}
    Z_0 = AX_0 + \sigma\,\varepsilon, \qquad \varepsilon\sim N(0,I_5),
\end{equation*}
where $A \in \mathbb{R}^{5 \times 5}$ is fixed and $\sigma>0$. Conditioning on $Z_0=z$, the posterior remains a Gaussian mixture with analytically available parameters:
\begin{equation} \label{eq_exact_poster_GM}
{\textstyle
    P_0(\cdot| Z_0=z) \propto \sum_{i=1}^{10} \widetilde{\pi}_i \cdot \mathcal N(\widetilde{\mu}_i ,\widetilde{\Sigma}_i), }
\end{equation}
where $\widetilde{\Sigma}_i = \bigl(\Sigma_i^{-1} + A^\top A/\sigma^2\bigr)^{-1}$,
$\widetilde{\mu}_i  = \widetilde{\Sigma}_i\bigl(\Sigma_i^{-1}\mu_i + A^\top z/\sigma^2\bigr)$, and the mixture weights satisfy
$\widetilde{\pi}_i =  0.1 \cdot \exp [ 0.5(\widetilde{\mu}_i^\top  \widetilde{\Sigma}_i^{-1} \widetilde{\mu}_i    -   \mu_i^\top  \Sigma_i^{-1} \mu_i) ] / \sqrt{|\Sigma_i|}$.

We draw $\mu_i\sim U([-1.5,1.5]^5)$ and set $\Sigma_i = 0.01 I_5$ for $i=1,\ldots,10$, $A = 0.25\,\mathbf{1}\mathbf{1}^\top + 0.25 I_5$, and $\sigma = 0.5$. We generate three $K$ domains. The sample sizes are set to $n_0 = n_1=\cdots=n_K = 500$. Let $d^{(k)}$ denote the shift strength of the $k$-th source. We consider five near-source domains, with shift strengths lying on the linear interpolation segment from 0.1 toward 0.2, and five far-source domains, with shift strengths lying on the segment from 0.2 toward 0.5. For each source, we introduce a controlled distribution shift by perturbing the mixture parameters as:
\begin{align}
    \mu_i^{(k)} &= \mu_i + d^{(k)}\,\xi_{i}^{(k)}, \qquad \xi_{i}^{(k)} \sim N(0,I_5), \notag \\
    \Sigma_i^{(k)} &= \Sigma_i + \left[ d^{(k)}\, \mathrm{diag}(u_{i}^{(k)}) \right]^2, \qquad u_{i}^{(k)}\sim U([0,1]^5), \notag \\
    A^{(k)}_{ij} &= A_{ij} + d^{(k)}\,\psi_{ij}^{(k)}, \qquad \psi_{ij}^{(k)} \sim N(0,1), \notag \\
    \sigma^{(k)} & =  \max\{\sigma + d^{(k)}  \gamma^{(k)}, 10^{-4}\}, \qquad \gamma^{(k)}\sim N(0,1).
\end{align}

\subsection{Post-nonlinear model}
\label{app_post_nonlinear}
In this subsection, we describe the data-generating mechanism for the synthetic experiments used to evaluate DA-CIT and the competing conditional independence tests. Under the null hypothesis, the data-generating mechanism follows the fork structure $X \leftarrow Z \rightarrow Y$. Under the alternative, conditional
dependence is introduced through a direct effect of $Y$ on $X$. We construct multiple source domains by perturbing the covariate distribution and the conditional mechanisms relating $Z$ to $X$ and $Y$, with the transferability of each source controlled by its shift strength $d^{(k)}$. 

For the target domain, we generate $
Z_0\sim N(0,I_{d_z}),
$ and independently draw $
\beta_X,\beta_Y\sim N(0,I_{d_z}).
$ Given $Z_0$, we generate
\begin{align}
    Y_0
     =
    g_Y\left(
    Z_0^\top {\beta_Y}/{\|\beta_Y\|_2}
    +0.5\cdot\epsilon_Y
    \right), \quad 
    X_0
    =
    g_X\left(
    Z_0^\top {\beta_X}/{\|\beta_X\|_2}
    +0.5\cdot \epsilon_X
    \right) + h_1 \cdot Y_0.
\end{align}
Here, $h_1\geq 0$ controls the strength of the conditional dependence between $X_0$ and $Y_0$ given $Z_0$. When $h_1=0$, $X_0$ and $Y_0$ are CI given $Z_0$. As $h_1$ increases, the conditional dependence between $X_0$ and $Y_0$ becomes stronger, resulting in a larger departure from conditional independence.  Moreover, $
\epsilon_X,\epsilon_Y\overset{\mathrm{i.i.d.}}{\sim} N(0,1),
$ and the functions $g_X$ and $g_Y$ are independently and uniformly sampled from $ \texttt{func\_set} = 
\left\{
x,\,
x^2,\,
\cos(x),\,
\sin(x),\,
\tanh(x),\,
\exp(-|x|)
\right\}.
$ 

For source domain $k$, let $d^{(k)}$ denote its shift strength. We perturb the data generating  mechanism through the regression mechanism, nonlinear functions, noise scale, and marginal distribution of $Z$. Specifically, the regression mechanism is shifted as
\begin{equation*}
\beta_j^{(k)}
=
\beta_j+d^{(k)}\xi_{j}^{(k)},
\qquad
\xi_{j}^{(k)} \sim N(0,I_{d_z}),
\qquad j\in \{X,Y\}.
\end{equation*}
The noise scale is perturbed according to
\begin{equation}
\sigma^{(k)}
=
0.5 +d^{(k)} u^{(k)},
\qquad
u^{(k)}\sim U(0,1).
\end{equation}
For the nonlinear functions, with probability $1-\exp(-d^{(k)})$, both
$g_X^{(k)}$ and $g_Y^{(k)}$ are independently resampled from \texttt{func\_set};
otherwise, the target functions $g_X$ and $g_Y$ are retained.

We also include a covariate shift in $Z$. For each source domain,
\begin{equation}
Z_k
\sim
N
\left(
\mu^{(k)},
\operatorname{diag}\bigl((s^{(k)}_{Z})^2\bigr)
\right),
\end{equation}
where
\begin{align}
\mu^{(k)}
&=
0.1d^{(k)} \cdot \eta^{(k)}, \quad 
\eta^{(k)}
\sim N(0,I_{d_z}),\notag \\
s^{(k)}_{Z}
&=
\exp(0.1d^{(k)} \cdot \nu^{(k)}), \quad  \nu^{(k)}
\sim N(0,I_{d_z}),
\end{align}
with the exponential applied element-wise. The source variables are then generated as
\begin{align}
Y_k
=
g_Y^{(k)}
\left(
Z_k^\top
{\beta_Y^{(k)}}/{\|\beta_Y^{(k)}\|_2}
+
\sigma^{(k)} \cdot \epsilon_{Y,k}
\right), \quad 
X_k
=
g_X^{(k)}
\left(
Z_k^\top
{\beta_X^{(k)}}/{\|\beta_X^{(k)}\|_2}
+
\sigma^{(k)} \cdot \epsilon_{X,k}
\right) + h_1 \cdot Y_k,\
\end{align}
where $
\epsilon_{X,k},\epsilon_{Y,k}
\overset{\mathrm{i.i.d.}}{\sim}
\mathcal{N}(0,1)$. We use seven near sources and three far sources. The shift strengths of the near sources are linearly interpolated from $0.01$ to $0.1$, while those of the far sources are linearly interpolated from $0.2$ to $0.5$.

\section{Ablation and sensitivity studies}
\label{app_ablation}

In this section, we conduct ablation and sensitivity analyses of DA-Diff and DA-CIT using the data-generation mechanisms described in Appendices~\ref{app_gmm} and~\ref{app_post_nonlinear}. UOWQ has apparent negative transfer performance in Table \ref{tab_gaussian_baselines_w2}, thus we skip the experiments of UOWQ.

\subsection{Sensitivity analysis on Gaussian mixture data}
We study the sensitivity of DA-Diff to the target and source sample sizes, the number of source domains, and several key hyperparameters.  

\begin{table}[ht]
\centering
\caption{
Performance of the proposed DA-Diff, Target-only, Cat-all, Finetune,
DoG~\citep{zhong_2025_dog}, TGDP~\citep{ouyang_2024_tgdp},
 and Robust~\citep{Konstantinov_2019_Robust_Learning}
under the data-generating mechanism in Section~\ref{sec_gaussian}, with sample size $n_0 = \cdots = n_K = 1000, 1500, 2000$.
We report the $W_2$ distance along with one standard deviation (std).
}
\label{tab_gaussian_baselines_w2_appendix}
\begin{tabular}{c l cccc}
\toprule
Sample size & Method
& $W_2$ ($K=2$)
& $W_2$ ($K=5$)
& $W_2$ ($K=10$)
& $W_2$ ($K=20$) \\
\midrule

1000
& DA-Diff (Ours)
& \textbf{0.142}(0.039)
& \textbf{0.140}(0.037)
& \textbf{0.133}(0.030)
& \textbf{0.134}(0.027) \\

& Target-only
& 0.376(0.070)
& 0.376(0.070)
& 0.376(0.070)
& 0.376(0.070) \\

& Cat-all
& 0.440(0.112)
& 0.371(0.087)
& 0.334(0.088)
& 0.308(0.063) \\

& Finetune & 0.216$^\dagger$(0.056) & \underline{0.203}(0.057) & \underline{0.173}(0.045) & \underline{0.164}(0.040)\\

& DoG \citeyearpar{zhong_2025_dog} & 0.285(0.103) & 0.231(0.066) & 0.216(0.064) & 0.181(0.046) \\

& TGDP \citeyearpar{ouyang_2024_tgdp} & 0.562(0.135) & 0.413(0.116) & 0.389(0.104) & 0.328(0.074) \\
& Robust \citeyearpar{Konstantinov_2019_Robust_Learning} & \underline{0.175}(0.055) & 0.219$^\dagger$(0.062) & 0.208$^\dagger$(0.070) & 0.176$^\dagger$(0.047) \\

\midrule

1500
& DA-Diff (Ours)
& \textbf{0.111}(0.025)
& \textbf{0.113}(0.022)
& \textbf{0.106}(0.024)
& \textbf{0.110}(0.027) \\

& Target-only
& 0.259(0.047)
& 0.259(0.047)
& 0.259(0.047)
& 0.259(0.047) \\

& Cat-all
& 0.336(0.098)
& 0.339(0.073)
& 0.322(0.081)
& 0.316(0.071) \\

& Finetune & 0.226(0.052) & \underline{0.154}(0.037) & \underline{0.135}(0.035) & \underline{0.121}(0.032) \\

& DoG \citeyearpar{zhong_2025_dog} & 0.197$^\dagger$(0.058) & 0.164$^\dagger$(0.048) & 0.147$^\dagger$(0.037) & 0.138$^\dagger$(0.032) \\

& TGDP \citeyearpar{ouyang_2024_tgdp} & 0.558(0.131) & 0.428(0.118) & 0.370(0.096) & 0.337(0.083)\\
& Robust \citeyearpar{Konstantinov_2019_Robust_Learning} & \underline{0.150}(0.050) & 0.200(0.062) & 0.176(0.053) & 0.178(0.068) \\

\midrule

2000
& DA-Diff (Ours)
& \textbf{0.095}(0.022)
& \textbf{0.098}(0.027)
& \textbf{0.090}(0.022)
& \underline{0.097}(0.022) \\

& Target-only
& 0.123$^\dagger$(0.018)
& 0.123(0.018)
& 0.123(0.018)
& 0.123(0.018) \\

& Cat-all
& 0.291(0.075)
& 0.335(0.114)
& 0.330(0.080)
& 0.310(0.069) \\

& Finetune & \underline{0.105}(0.017) & \underline{0.103}(0.041) & \underline{0.092}(0.019) & \textbf{0.085}(0.020) \\

& DoG \citeyearpar{zhong_2025_dog} & 0.165(0.057) & 0.121$^\dagger$(0.035) & 0.113$^\dagger$(0.028) & 0.109$^\dagger$(0.031) \\

& TGDP \citeyearpar{ouyang_2024_tgdp} & 0.482(0.128) & 0.421(0.121) & 0.370(0.093) & 0.328(0.081) \\
& Robust \citeyearpar{Konstantinov_2019_Robust_Learning} & 0.123(0.037) & 0.202(0.060) & 0.162(0.043) & 0.161(0.054) \\

\bottomrule
\end{tabular}
\end{table}

Table~\ref{tab_gaussian_baselines_w2_appendix} reports the performance of DA-Diff across different sample sizes and numbers of source domains. DA-Diff achieves the best performance for all considered values of $K$ when the sample size is 1000 or 1500, and remains among the top-performing
methods when the sample size is 2000. These results suggest that DA-Diff is
relatively robust to changes in sample size and the number of source domains.

\begin{figure}[htb!]
    \centering
    \includegraphics[width=\linewidth]
    {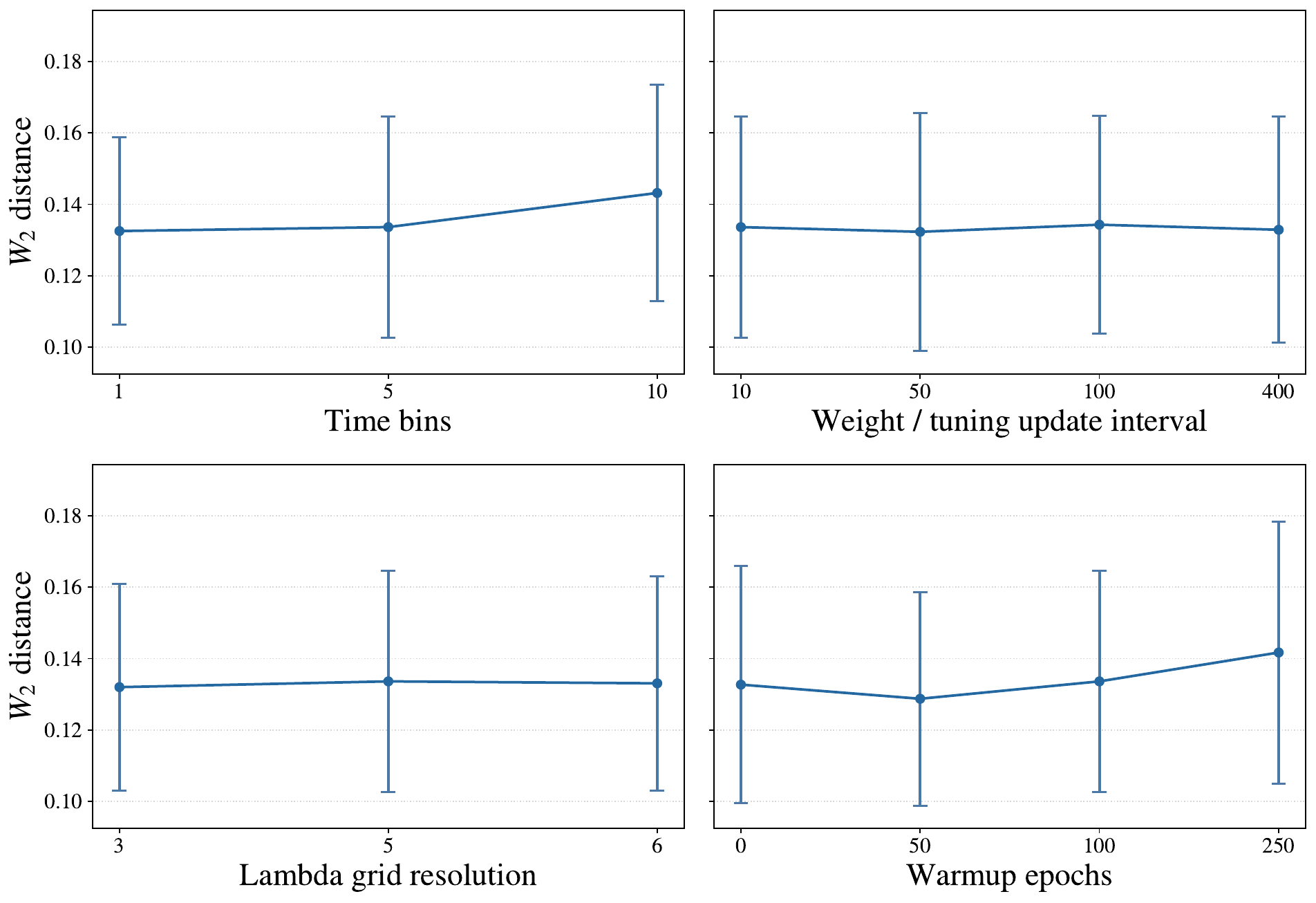}
    \caption{
    Sensitivity study of DA-Diff on the Gaussian mixture data.
    We vary the number of time bins $B_T$,
    the weight/tuning update interval $\texttt{update\_every}$, the resolution of the $\mathcal{G}_\lambda$, and the number of warmup epochs $e_w$,
    and evaluate the performance using the $W_2$ distance.
    }
    \label{fig_da_diff_ablation}
\end{figure}

Figure~\ref{fig_da_diff_ablation} further examines the sensitivity of DA-Diff to four hyperparameters: the number of time bins $B_T$ in Section~\ref{sec_train_behavior}, the weight/tuning update interval $\texttt{update\_every}$ in Algorithm \ref{algo_train_da_diff}, the grid resolution of $\mathcal{G}_\lambda$ in~(\ref{eq_lambda_grid}), and the number of warmup epochs $e_w$ in Algorithm \ref{algo_train_da_diff}. The $W_2$ distance varies only moderately across the considered hyperparameter values, with largely overlapping error bars. Overall, DA-Diff shows limited sensitivity to these hyperparameter choices.

\subsection{Sensitivity analysis on the post-nonlinear model}

We further investigate the sensitivity of DA-CIT to the number of source domains, source--target similarity, the target and source sample sizes, and the strength of conditional dependence under $H_1$. Following the setting in Section~\ref{sec_post_non_linear}, we fix $d_z=20$ throughout and vary one factor at a time while keeping the remaining settings at their default values. Unless otherwise specified, we use $K=10$ source
domains, consisting of seven near sources and three far sources, and set $n_0=n_1=\cdots=n_K=500$. The construction of near/far sources is introduced in Appendix \ref{app_post_nonlinear}. The parameter $h_1$ controls the strength of conditional dependence under $H_1$, with larger values corresponding to stronger departures from conditional independence. Specifically, we consider
\begin{align} 
    & K \in \{2,5,10,20 \}, \notag \\ 
    & \text{Near sources: Far sources} \in \{ 10:0, 7:3, 3:7, 0:10 \}, \notag \\
    & n_0=n_1=\cdots=n_K\in \{ 500, 1000, 1500, 2000\}, \notag \\ 
    & n_1=\cdots=n_K=500, \quad n_0 \in \{ 400, 500, 1000, 1500\}, \notag \\ 
    & n_0=500, \quad n_1=\cdots=n_K \in \{ 500, 1000, 2000\}, \notag \\ 
    & h_1\in \{ 0.05, 0.1, 0.2, 0.3, 0.5\}. \notag 
\end{align}



\begin{figure}[htb!]
    \centering
    \includegraphics[width=0.8\linewidth]
    {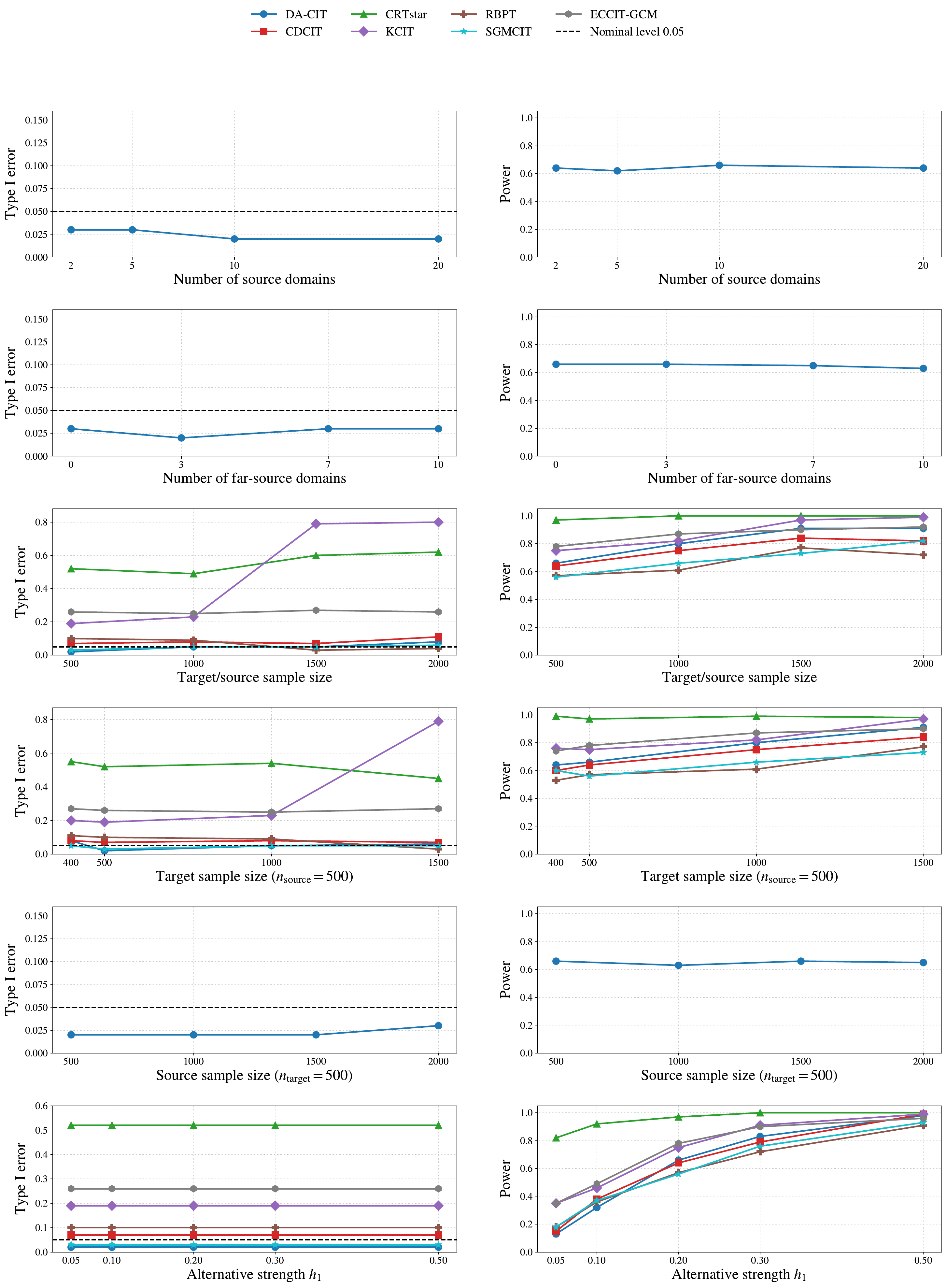}
    \caption{
    Sensitivity results of DA-CIT on the post-nonlinear data with
    $d_z=20$. We vary the number of source domains, the proportion of
    near and far sources, the target and source sample sizes, and the
    alternative strength $h_1$. 
    }
    \label{fig_cit_ablation}
\end{figure}

\begin{figure}[htb!]
    \centering
    \includegraphics[width=0.8\linewidth]
    {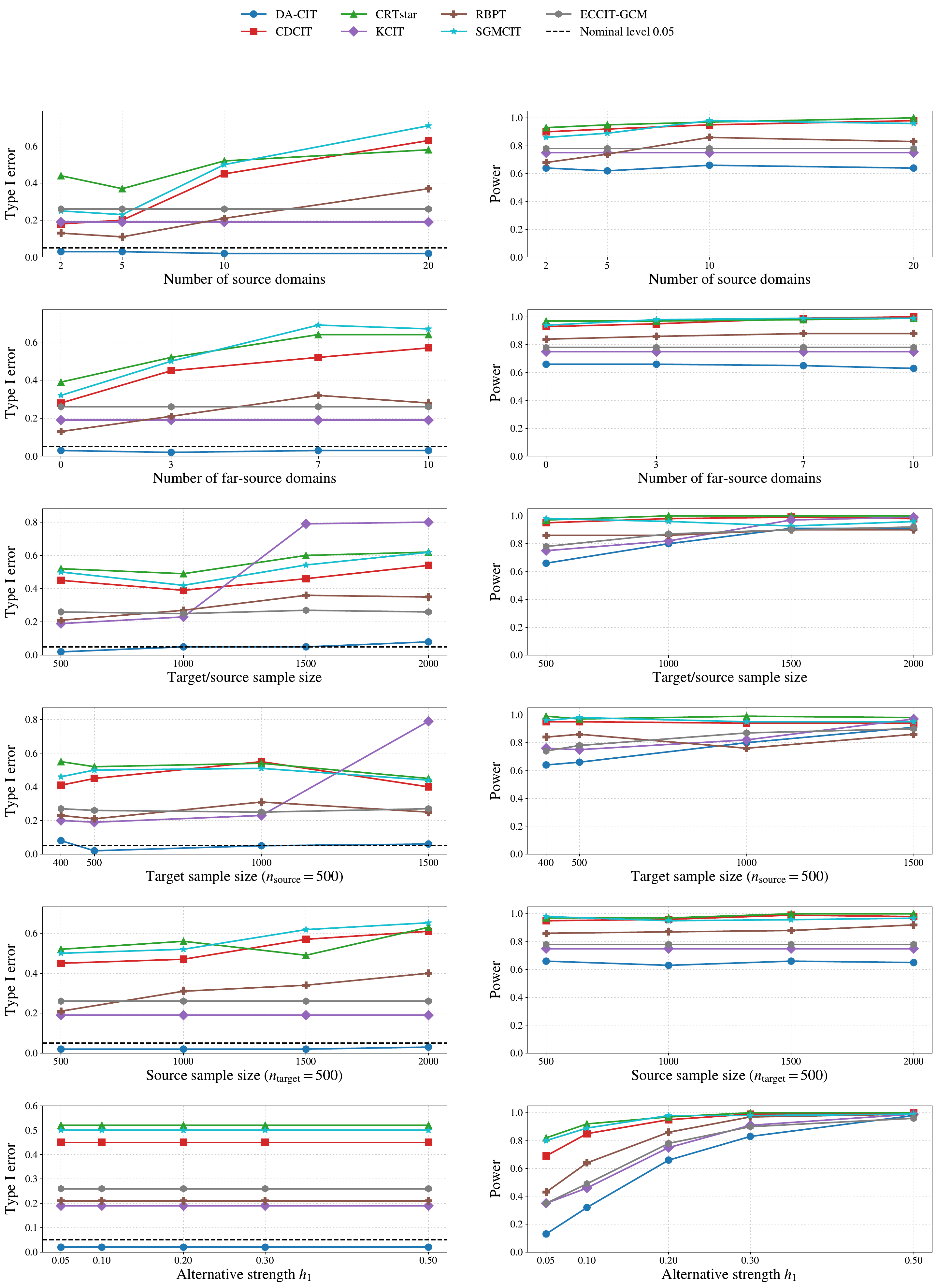}
    \caption{
    Ablation results on the post-nonlinear data with $d_z=20$ under
    the Cat-all setting. We vary the number of source domains,
    the proportion of near and far sources, the target and source sample
    sizes, and the alternative strength $h_1$.
    }
    \label{fig_cit_ablation_concatenate_all}
\end{figure}

Figure~\ref{fig_cit_ablation} summarizes the sensitivity results. Overall, DA-CIT remains stable across changes in the number of source domains, the near--far source composition, and the target and source sample sizes. In most settings, DA-CIT maintains Type~I error at or below
the nominal level and consistently yields lower Type~I error than CDCIT. At the same time, DA-CIT achieves higher power than CDCIT and SGMCIT in most settings. As $h_1$ increases, the power of DA-CIT increases, indicating that stronger conditional dependence under $H_1$ is more readily tested. These results suggest that DA-CIT can effectively leverage heterogeneous source-domain information while maintaining reliable Type~I error control.

For comparison, we also consider a naive pooling strategy in which the target and source samples are directly concatenated before CI testing. As shown in Figure~\ref{fig_cit_ablation_concatenate_all}, this strategy generally leads to substantial Type~I error inflation for the competing methods. In contrast, DA-CIT remains substantially better calibrated,
highlighting the importance of accounting for source--target heterogeneity rather than naively pooling observations across domains.

\subsection{Empirical cumulative distribution functions of CI-test $p$-values}
\label{app_ecdf}

We additionally examine the empirical distribution of the $p$-values produced by different CI testing methods under $H_0$. As a diagnostic of null calibration, we compare the empirical cumulative distribution function (ECDF) of the $p$-values with the CDF of $U(0,1)$. We also report the $p$-value of the Kolmogorov--Smirnov (KS) test, denoted by $p_{\mathrm{KS}}$, based on the 100 $p$-values obtained across repeated experiments.

\begin{figure}[htb!]
    \centering
    \includegraphics[width=\linewidth]
    {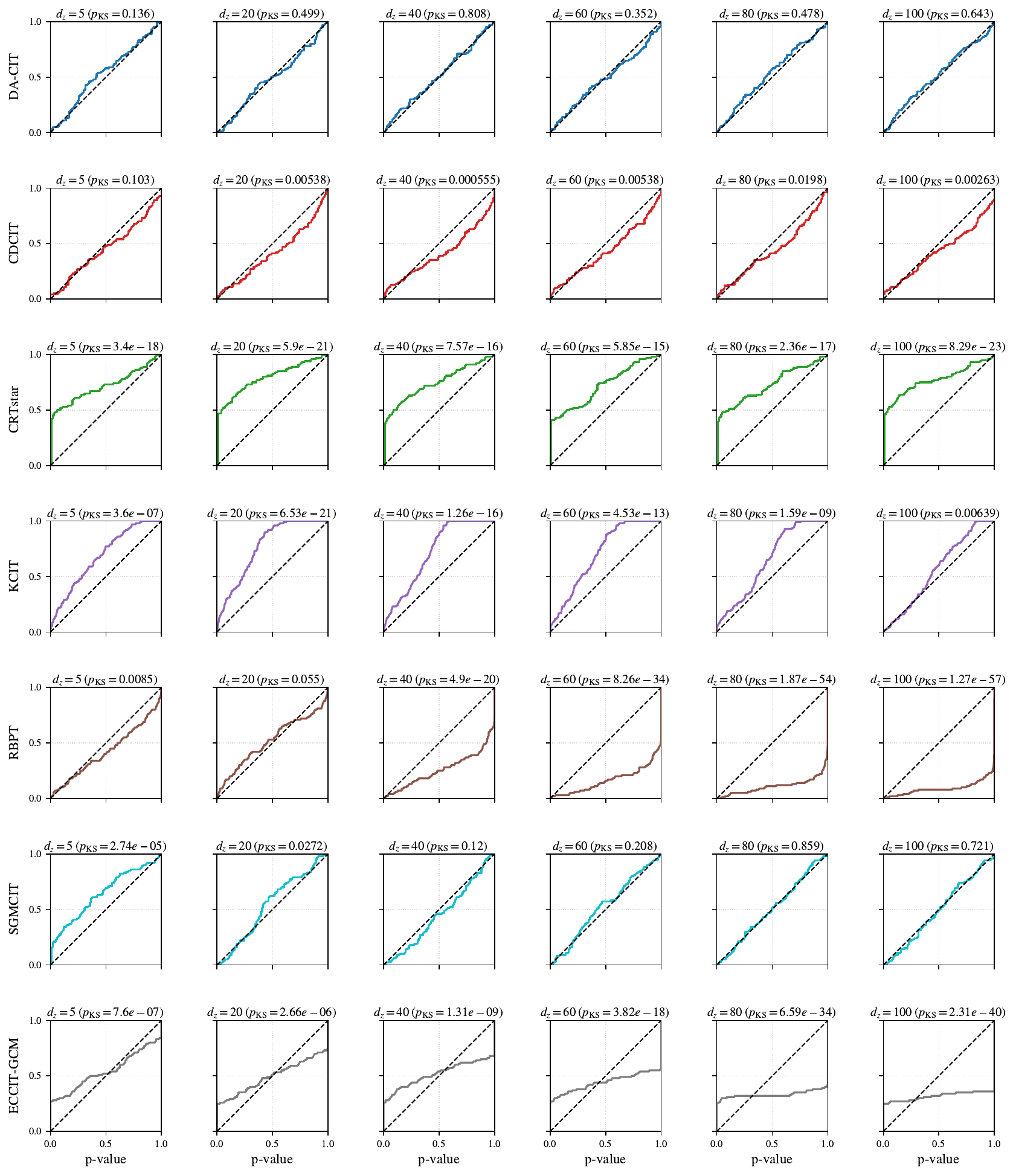}
    \caption{
    ECDFs of the $p$-values under $H_0$ for the CI testing methods
    considered in Figure~\ref{fig_cit_results_main}.
    The dashed diagonal line represents the CDF of
    $U(0,1)$, and $p_{\mathrm{KS}}$ denotes the $p$-value
    of the KS test based on 100 null $p$-values.
    }
    \label{fig_cit_pvalue_ecdf_main}
\end{figure}

\begin{figure}[htb!]
    \centering
    \includegraphics[width=0.66\linewidth]
    {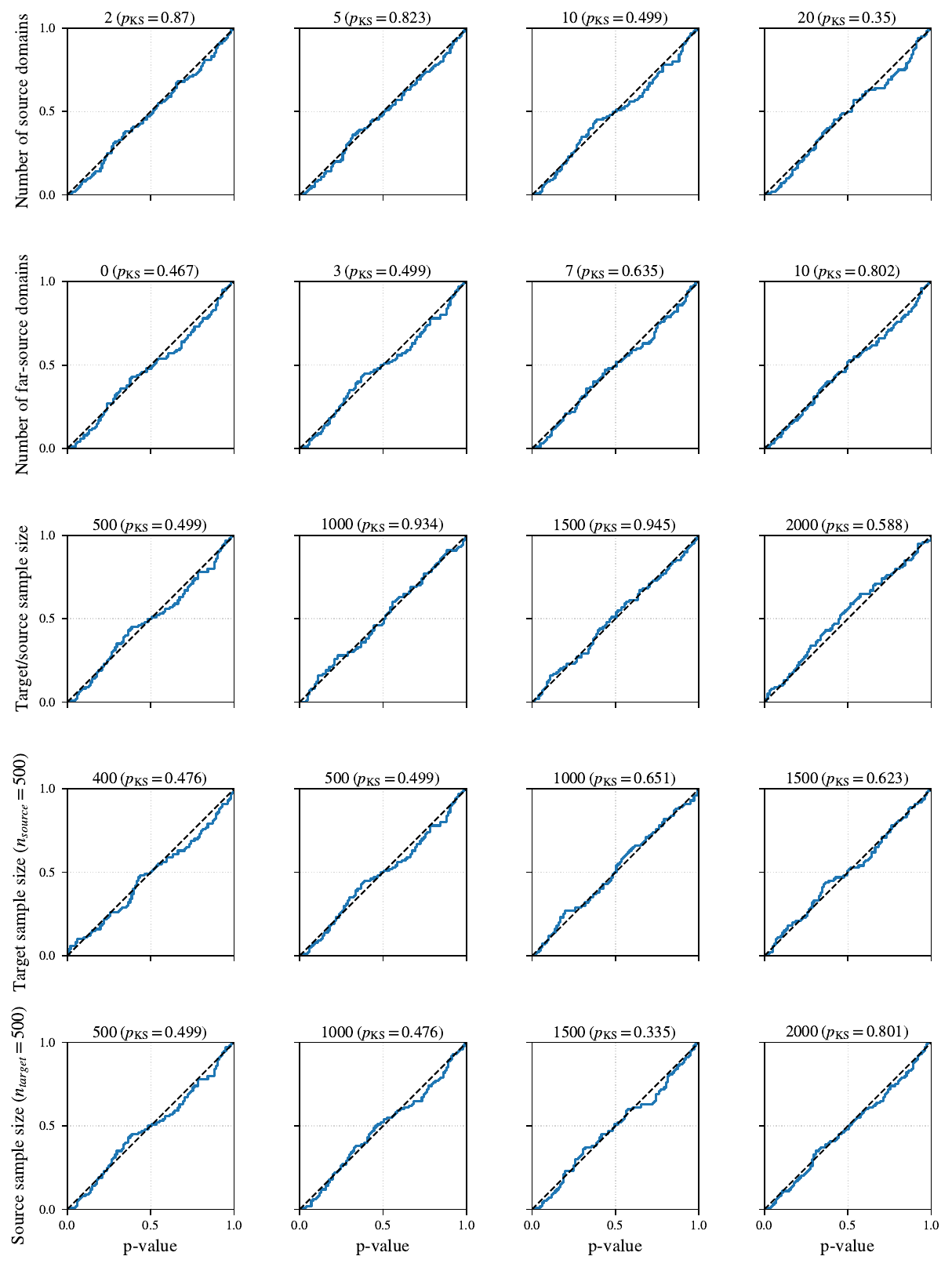}
    \caption{
    ECDFs of the DA-CIT $p$-values under $H_0$ for the ablation
    settings considered in Figures~\ref{fig_cit_ablation} and
    \ref{fig_cit_ablation_concatenate_all}, with $d_z=20$.
    The dashed diagonal line represents the CDF of
    $U(0,1)$, and $p_{\mathrm{KS}}$ denotes the $p$-value
    of the KS test based on 100 null $p$-values.
    }
    \label{fig_dacit_pvalue_ecdf_ablation}
\end{figure}

As shown in Figure~\ref{fig_cit_pvalue_ecdf_main}, the ECDFs produced by DA-CIT closely follow the CDF of $U(0,1)$ across all
considered conditioning dimensions $d_z$. Moreover, none of the corresponding KS tests rejects the null of uniformity at the $0.05$ significance level. 

Figure~\ref{fig_dacit_pvalue_ecdf_ablation} further shows that this behavior remains stable across the sensitivity settings. The ECDFs of DA-CIT remain close to the CDF of $U(0,1)$ as we vary the number of source domains, the number of far-source domains, and the target and source sample sizes. None of the KS tests rejects uniformity at the $0.05$ significance level in these settings. These results provide additional evidence that DA-CIT maintains stable null calibration across heterogeneous multi-source settings.

\section{Algorithms of DA-Diff and DA-CIT}

The algorithms of DA-Diff and DA-CIT are provided in ALgorithm \ref{algo_train_da_diff} and \ref{algo_da_cit}.

\begin{algorithm}[htb]
\caption{Training the Domain-adapted diffusion model (DA-Diff)}
\label{algo_train_da_diff}
\textbf{Input}: Dataset from the target domain $D_0 = \{(x_{0,i}, z_{0,i})\}_{i=1}^{n_0} $, datasets from $K$ source domains $D_k = \{(x_{k,i}, z_{k,i})\}_{i=1}^{n_k}$ and the corresponding pre-trained source models $\widehat{s}_k$ trained on $D_k$ (for $k = 1,\ldots, K$), the grid $\mathcal{G}_M$ of $M$, the grid $\mathcal{G}_{\rho}$ of $\rho$, the number of the warm-up epochs $e_w$, the weight-update interval \texttt{update\_every}, the total number all epochs $E$. \\
\textbf{Output}: A neural network $\widehat{s}_w$ trained by weighted ERM.
\begin{algorithmic}[1]
    \STATE Initialize a neural network $\widehat{s}_w(x,z,t)$
    \FOR{$e$ in $1,\ldots,E$}
        \STATE {\footnotesize\itshape(warm-up phase)}
        \IF{$e\leq e_w$}
            \STATE Compute $\widehat{\mathcal{L}}_0 (\widehat{s}_w) = \frac{1}{n_0} \sum_{i=1}^{n_0} \ell(x_{0,i},z_{0,i};\widehat{s}_w)$ 
            \STATE Take optimization step on $\nabla \widehat{\mathcal{L}}_0 (\widehat{s}_w) $ and update the parameters of $\widehat{s}_w$
            \STATE \hspace*{-\algorithmicindent}{\footnotesize\itshape(weighted ERM phase)}
        \ELSE
            \IF{$(e-e_w-1)\bmod \texttt{update\_every}=0$}
                \STATE Calculate $\widehat G_{ij}$ for $i,j = 1,\ldots, K$ by (\ref{eq_G_Hij_def}) and (\ref{eq_G_Hij_emp_pop}) in parallel and obtain $\widehat G$
                \STATE For each $(\rho, C_{\mathrm{reg}}, \boldsymbol{\lambda}) \in \mathcal{G}_{\rho} \times \mathcal G_C \times \mathcal G_\lambda$, iteratively solve $\widehat{\boldsymbol{\pi}}_{j,l,m}$ using the updates in (\ref{eq_final_pi_mm_qp}), executed in parallel
                \STATE Select $(\rho^*, C_{\mathrm{reg}}^*, \boldsymbol{\lambda}^*)$ that maxmizes the objective in (\ref{eq_select_rho_linear_score}) and obtain the corresponding $\boldsymbol{\pi}^*$
                \STATE Solve $w_1^*, \ldots, w_K^*$ and $W_N^*$ from  $\boldsymbol{\pi}^*$ by (\ref{eq_recover_w_from_pi})
            \ENDIF
            \STATE Compute $\widehat{\mathcal{L}}_{w}(\widehat{s}_w) = \left[ \sum_{i=1}^{n_0} \ell(x_{0,i},z_{0,i};\widehat{s}_w)  + \sum_{k=1}^{K} w_k^*  \sum_{i=1}^{n_k} \ell(x_{k,i},z_{k,i};\widehat{s}_w) \right] / (n_0 + W_N^*)$
            \STATE Take optimization step on $\nabla \widehat{\mathcal{L}}_{w}(\widehat{s}_w) $ and update the parameters of $\widehat{s}_w$
        
        \ENDIF
    \ENDFOR

    \STATE \textbf{Return} $\widehat{s}_w$
\end{algorithmic}
\end{algorithm}


\begin{algorithm}[htb]
\caption{Domain-adapted conditional independence test (DA-CIT)}
\label{algo_da_cit}
\textbf{Input}: Target dataset $D_0 = \{(x_{0,i}, y_{0,i}, z_{0,i})\}_{i=1}^{n_0}$, trained neural network $\widehat{s}_w$, the number of repetitions $B$.\\
\textbf{Output}:  The p-value $p$.
\begin{algorithmic}[1] 
\STATE Let $\mathbf{X}_0 = (x_{0,1}, \ldots, x_{0,n_0})^\top, \mathbf{Y}_0 = (y_{0,1}, \ldots, y_{0,n_0})^\top, \mathbf{Z}_0 = (z_{0,1}, \ldots, z_{0,n_0})^\top$
\STATE Compute $\mathbb T = \mathbb T(\mathbf{X}_0, \mathbf{Y}_0, \mathbf{Z}_0)$ 
\STATE Concatenate $\mathbf{Z}_0$ for $B$ times and use the reverse process in (\ref{eq_diff_reverse}) with the estimated neural network $\widehat{s}_w$ to generate $\mathbf{X}_0^{(b)}$ for $b=1,\ldots,B$ in one shot
\STATE Compute $\mathbb T^{(b)} = \mathbb T(\mathbf{X}_0^{(b)}, \mathbf{Y}_0, \mathbf{Z}_0)$ for $b=1,\ldots,B$ in parallel
\STATE Compute $p$-value: 
\begin{equation*}
    p=
\frac{
1+\sum_{b=1}^B \mathbf{1}\!\left\{
\mathbb T^{(b)} \ge \mathbb T 
\right\}
}{
1+B
}.
\end{equation*}
\STATE \textbf{Return} $p$
\end{algorithmic}
\end{algorithm}

\section{Training behavior of DA-Diff and MM-QP}
\label{sec_train_behavior}

Although our theoretical results and weighted ERM formulation are based on the entire diffusion-time interval $t\in[t_0,T]$, in practice, we can partition $[t_0,T]$ into $B_T$ time bins and perform weighted ERM separately within each bin. The intuition behind this strategy is that the degree of distribution shift across domains varies with the diffusion time. At $t=0$, the distribution shifts among different source domains are fully preserved. As the forward diffusion process progressively adds noise to the variables, these shifts are gradually attenuated, and as $t\to\infty$, the distributions across domains converge to the same limiting distribution, so that the domain shifts asymptotically vanish. Therefore, estimating separate weights $w^*(t)$ for different time bins and performing weighted ERM accordingly can better account for the time-varying transferability of the source domains than using a single set of weights over the entire diffusion-time interval. Ablation studies on the number of time bins are provided in Figure \ref{fig_da_diff_ablation}, Appendix~\ref{app_ablation}. 

We next examine the training behavior of DA-Diff trained using the MM-QP procedure proposed in Appendix~\ref{sec_optimal_weights} on the Gaussian mixture data introduced in Appendix~\ref{app_gmm}. For training stability, we consider only three source domains and set the sample sizes of the target and source domains to $
n_0=n_1=n_2=n_3=5000.
$ The shift strengths $d^{(k)}$ defined in Appendix~\ref{app_gmm} are set to $
d^{(1)}=0.01,
d^{(2)}=0.1,
d^{(3)}=0.2$. The histograms of the target and source data are shown in Figure~\ref{fig_gmm_hist}.

\begin{figure}[htb!]
    \centering
    \begin{subfigure}[t]{0.48\linewidth}
        \centering
        \includegraphics[width=\linewidth]{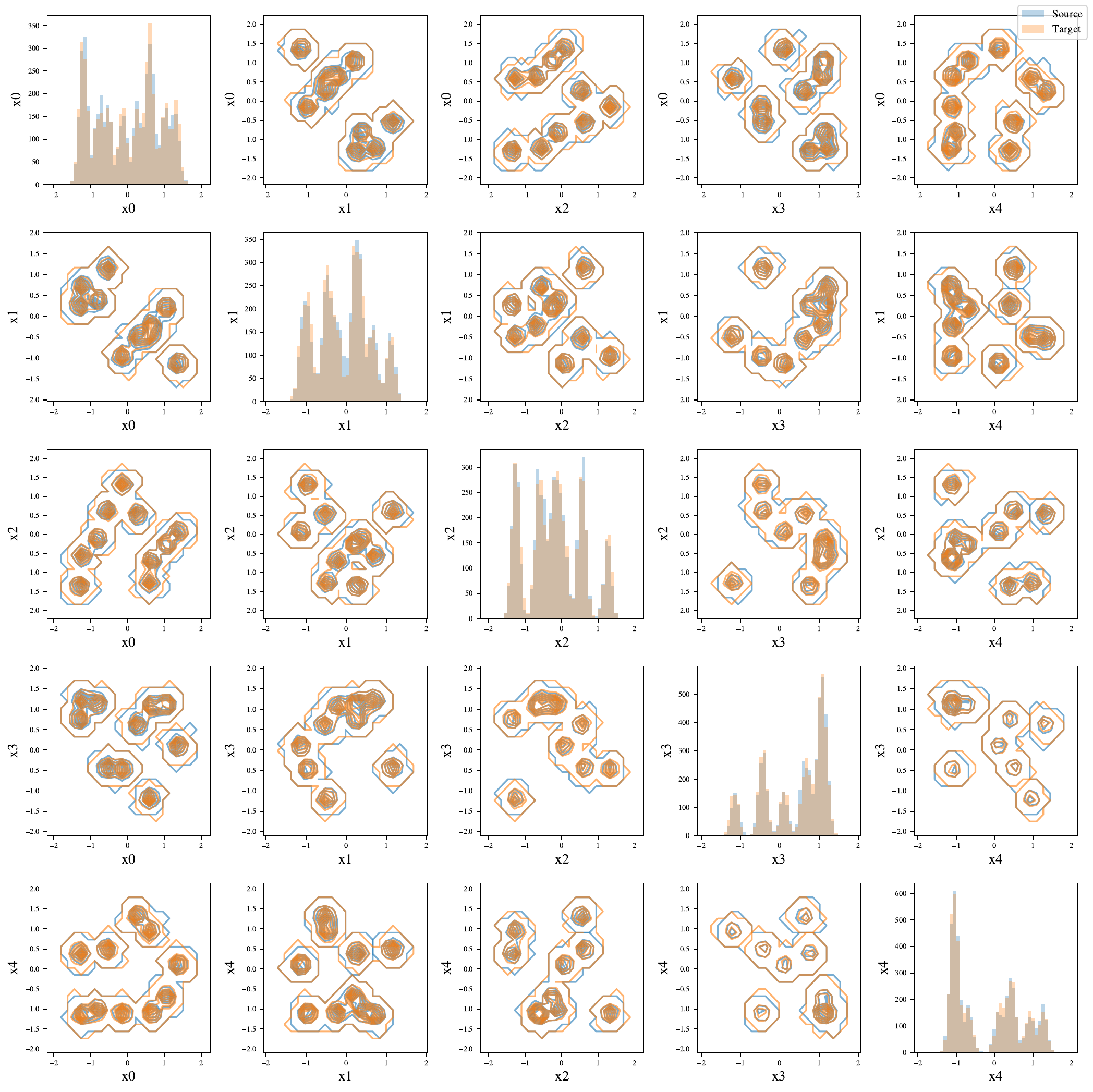}
    \end{subfigure}
    \hfill
    \begin{subfigure}[t]{0.48\linewidth}
        \centering
        \includegraphics[width=\linewidth]{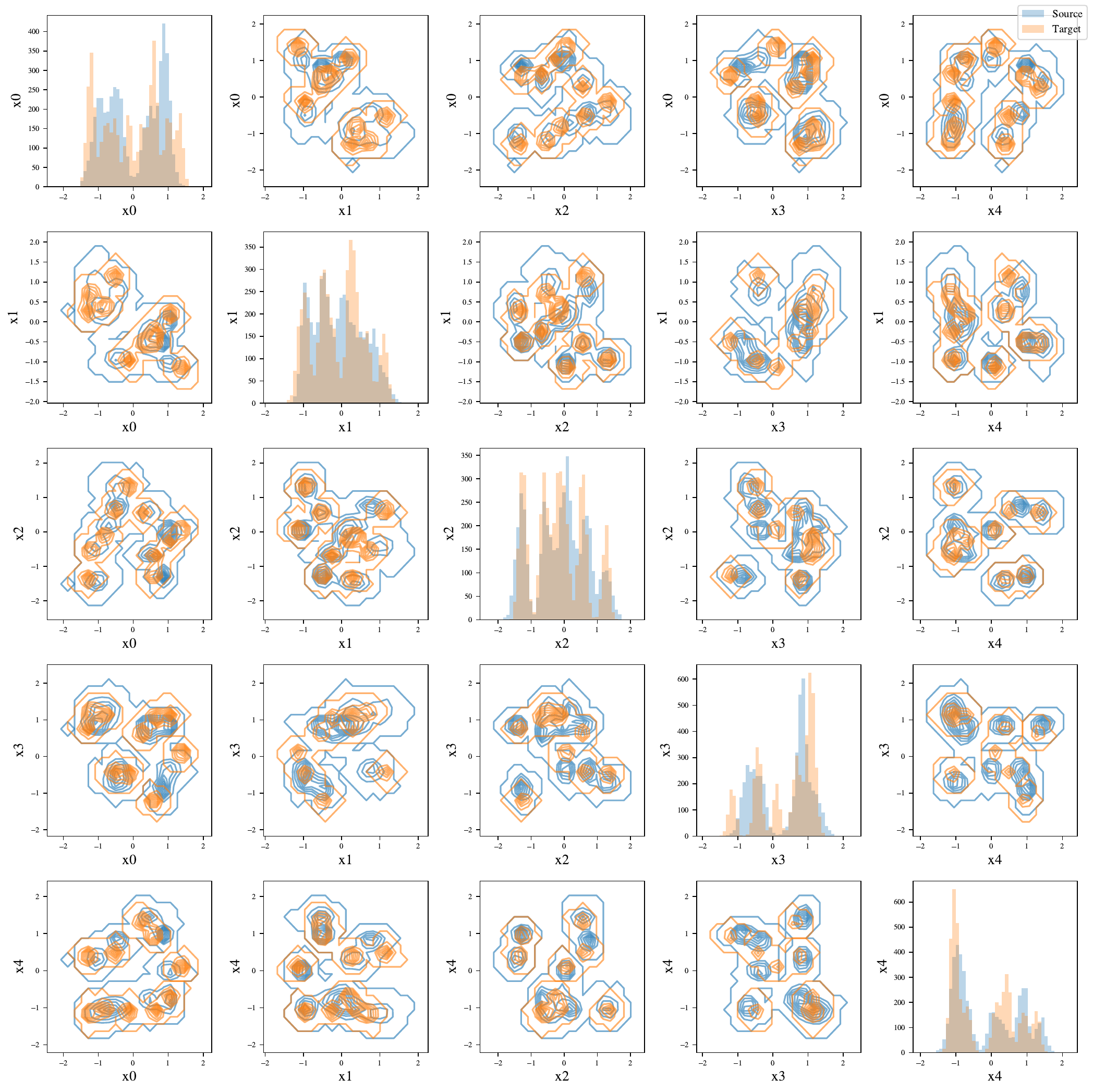}
    \end{subfigure}
    \caption{Histograms of the target and source variable $X$ in the Gaussian mixture experiment.
    The orange histograms represent the target-domain data, while the blue histograms represent the source-domain data.
    The left panel compares the target domain with the first source domain, whose shift strength is $d^{(1)}=0.01$, whereas the right panel compares the target domain with the third source domain, whose shift strength is $d^{(3)}=0.2$.
    As the shift strength increases to $d^{(3)}=0.2$, a clear distributional discrepancy between the target and source domains can be observed.
}
    \label{fig_gmm_hist}
\end{figure}

We train DA-Diff according to Algorithm~\ref{algo_train_da_diff} and report the validation losses of DA-Diff and the target-only diffusion model in Figure~\ref{fig_val_loss_da_tar}.  We set the warm-up epoch and the total epoch number to 100 and 500, respectively. We dynamically update the weights of the weighted ERM and hyperparameters every 10 epochs by MM-QP proposed in Appendix \ref{sec_optimal_weights}. 

As shown in Figure~\ref{fig_val_loss_da_tar}, the validation-loss curve of DA-Diff exhibits a similar level of smoothness to that of the target-only diffusion model.

We next visualize the weights obtained by our optimization procedure.
Figure~\ref{fig_pi_w_WN} shows the solution $\boldsymbol{\pi}$ obtained by the MM-QP procedure in Appendix~\ref{sec_optimal_weights}, together with the corresponding source weights $w_1,\ldots,w_K$ and $W_N$ recovered using~(\ref{eq_recover_w_from_pi}).
The hyperparameters $\rho$, $C_{\mathrm{reg}}$, and $\boldsymbol{\lambda}$ selected by the gradient-alignment criterion in Appendix~\ref{subsec_select_Gamma} are shown in Figure~\ref{fig_hyper_param}.
We use the candidate grids
\[
\mathcal{G}_{\rho}
=
\{0.2,0.4,0.6,0.8\},
\qquad
\mathcal{G}_{C}
=
\{2^1,\ldots,2^7\},
\]
and set $M_{\lambda}=5$ for the construction of $\mathcal{G}_{\lambda}$.

As shown in Figure~\ref{fig_pi_w_WN}, the trajectories of $\boldsymbol{\pi}$, $w$, and $W_N$ exhibit a zig-zag pattern over epochs. This is expected, since the hyperparameters, $\boldsymbol{\pi}$, $w$, and $W_N$ are dynamically updated every 10 epochs. Importantly, Figure~\ref{fig_val_loss_da_tar} shows that these periodic updates do not noticeably increase the fluctuations of the validation loss, and the overall training process remains stable.

From the left column of Figure~\ref{fig_pi_w_WN}, we observe that the mixture distribution used in weighted ERM consistently assigns the largest proportion to the target domain. Moreover, the MM-QP procedure is able to reflect the degree of distribution shift between the source and target domains and assign the source-domain weights accordingly. The behavior of $\boldsymbol{\pi}$ across different time bins is also consistent with our motivation for partitioning the diffusion-time interval: when the diffusion time is close to $t_0$, the distributional discrepancies across domains are relatively large, whereas these discrepancies become weaker as the diffusion time approaches $T$. In contrast, Figure~\ref{fig_uowq_weights} shows that the weights obtained by UOWQ \citep{zhang_2026_uowq} for diffusion-model training exhibit little differentiation across source domains, suggesting that it has difficulty identifying which sources should contribute more and which should be downweighted. This behavior is also reflected in Table~\ref{tab_gaussian_baselines_w2}, where the performance of UOWQ deteriorates as the number of source domains increases, indicating increasingly severe negative transfer. We emphasize, however, that UOWQ was originally developed for regular parametric models rather than specifically for diffusion models, so its assumptions and weighting mechanism are not directly tailored to the score-matching setting considered here.

The source weights $w$ shown in the middle column of Figure~\ref{fig_pi_w_WN} exhibit the same pattern. The right column reports the total weighted source sample size $W_N$ for each time bin. From the scale of the $y$-axis, we can see that relatively few source samples are effectively used in the early time bins, while substantially more source information is incorporated in the later time bins. This is consistent with the fact that the forward diffusion process progressively reduces the distributional discrepancies across domains.

Figure~\ref{fig_hyper_param} shows that the hyperparameters selected by the
gradient-alignment criterion vary throughout the training process.
In particular, $\rho$ switches among several candidate values as training
progresses, while $C_{\mathrm{reg}}$ remains relatively small for most epochs
but occasionally takes substantially larger values.
The three components of $\boldsymbol{\lambda}$ also change across epochs,
indicating that the relative importance assigned to the corresponding terms
in the selection criterion is adjusted dynamically during training.
These results suggest that using a single fixed set of hyperparameters
throughout training may be overly restrictive, whereas the proposed
gradient-alignment procedure can adapt the hyperparameters to the current
state of the model. Such changes in the selected hyperparameters also help explain the
piecewise-varying behavior of $\boldsymbol{\pi}$, $w$, and $W_N$ observed in
Figure~\ref{fig_pi_w_WN}.

\begin{figure}[htb]
    \centering
    \includegraphics[width=0.65\linewidth]{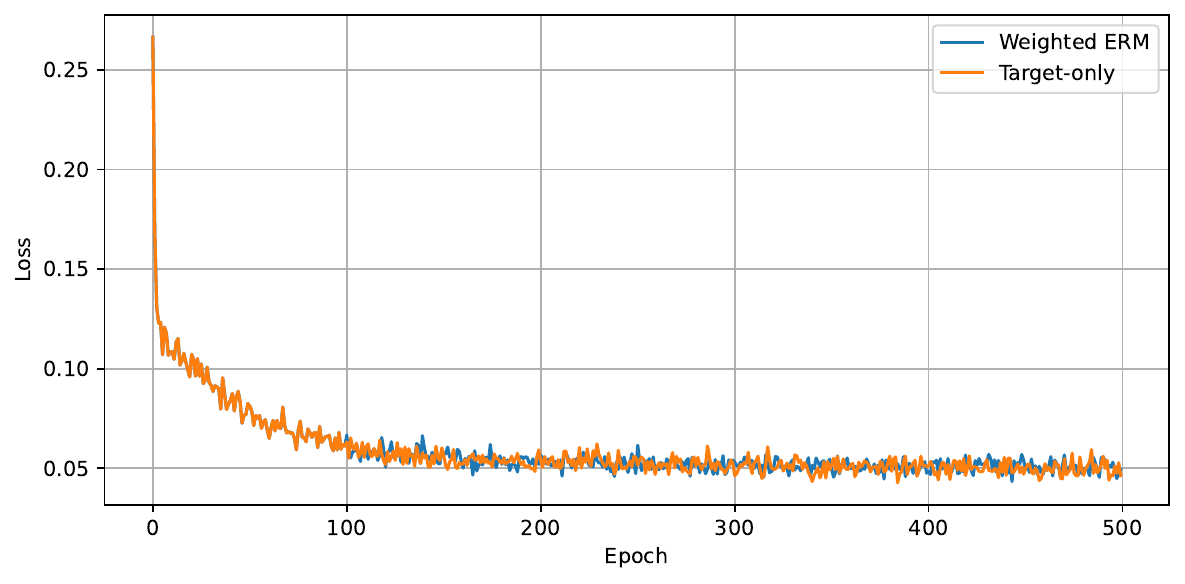}
    \caption{
    Validation loss curves of DA-Diff and the target-only diffusion model during training.
    }
    \label{fig_val_loss_da_tar}
\end{figure}

\begin{figure}[htb]
    \centering
    \includegraphics[width=0.95\linewidth]{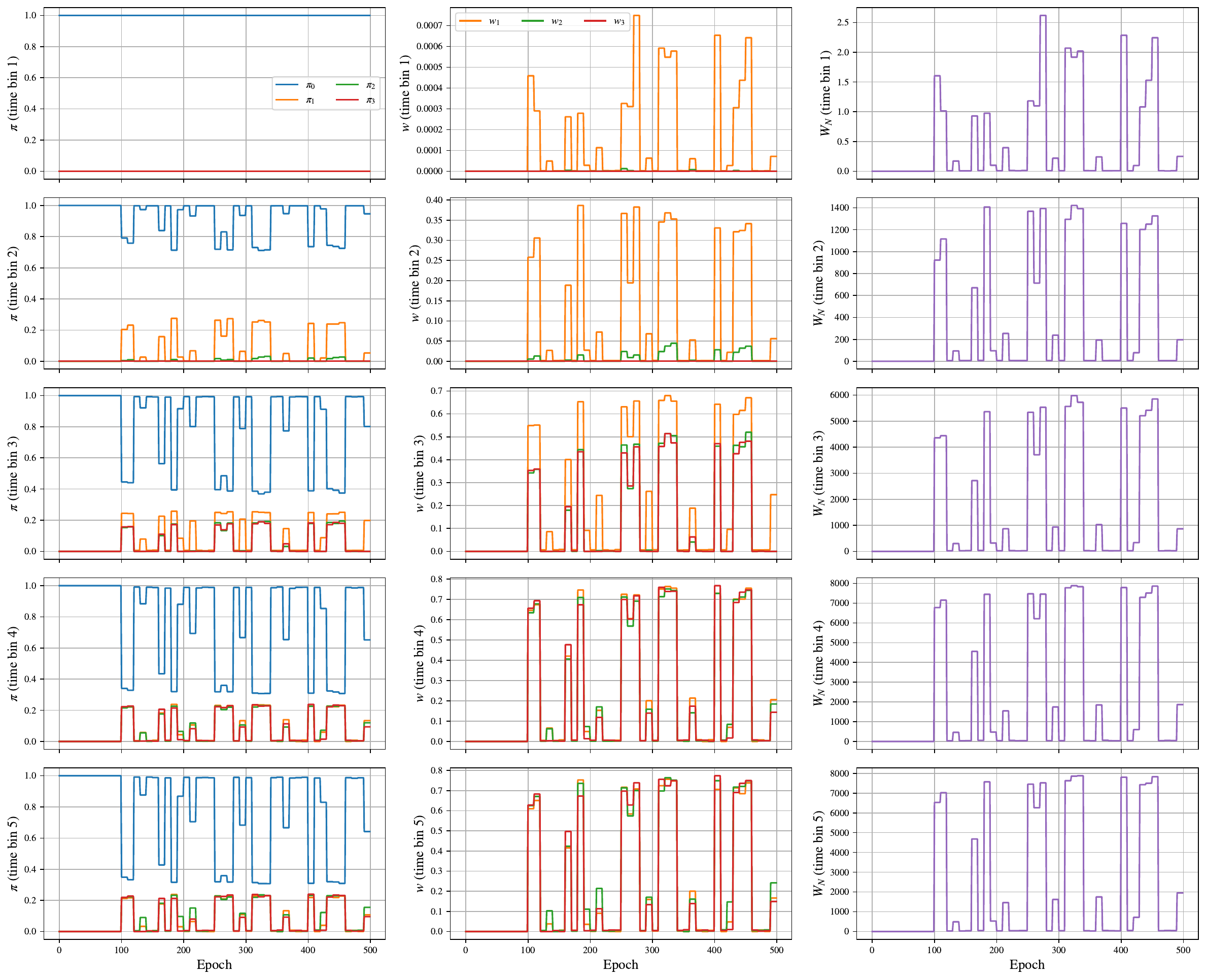}
    \caption{
    Evolution of the MM-QP solution $\boldsymbol{\pi}$ and the corresponding
    $w_1,\ldots,w_K$ and $W_N$ recovered by~(\ref{eq_recover_w_from_pi})
    during the training of DA-Diff.
    }
    \label{fig_pi_w_WN}
\end{figure}

\begin{figure}[htb]
    \centering
    \includegraphics[width=0.95\linewidth]{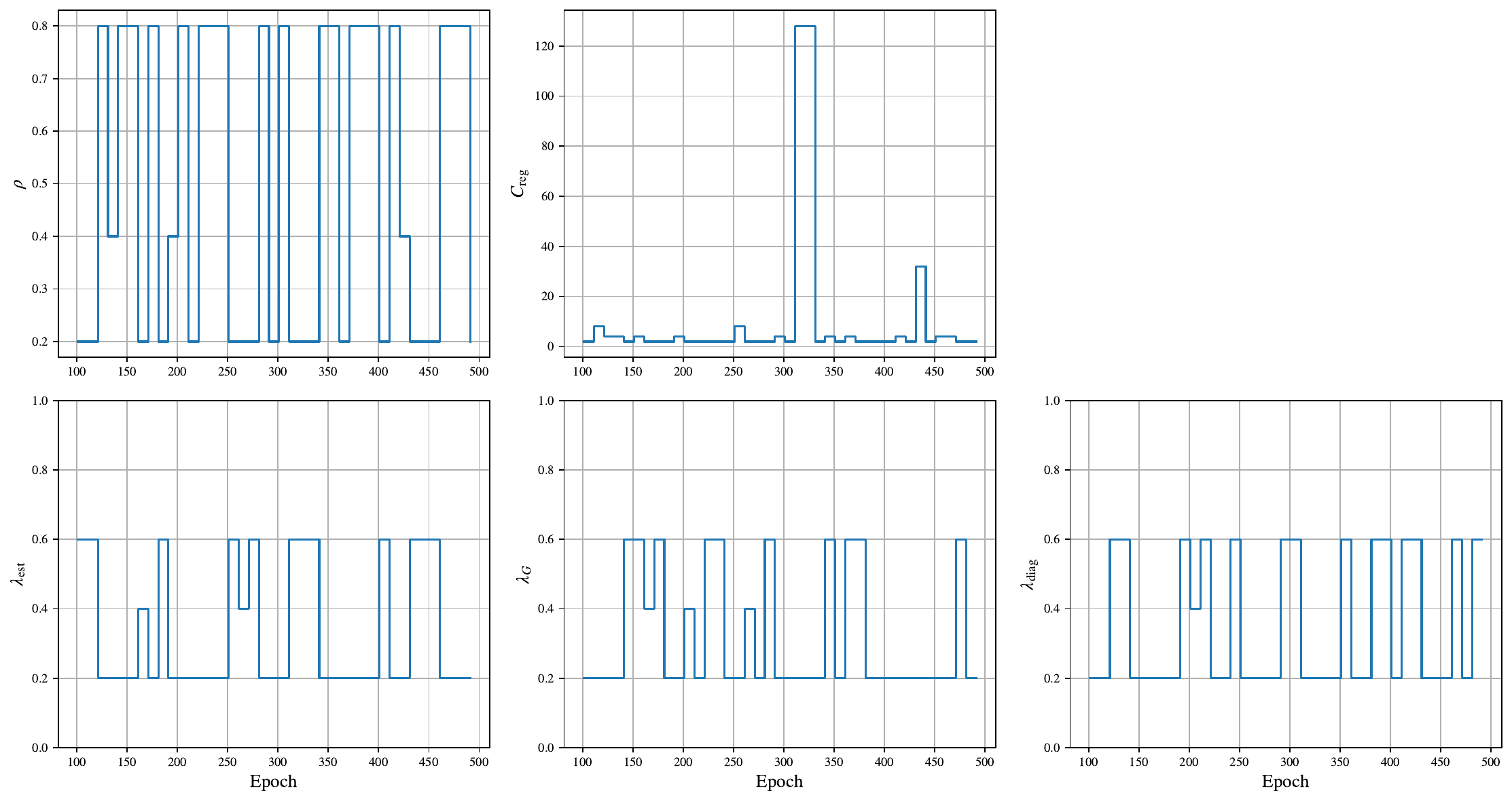}
    \caption{
    Hyperparameters $\rho$, $C_{\mathrm{reg}}$, and $\boldsymbol{\lambda}$
    selected by the gradient-alignment criterion during the training of DA-Diff.
    }
    \label{fig_hyper_param}
\end{figure}

\begin{figure}[t]
    \centering
    \begin{subfigure}[t]{0.32\linewidth}
        \centering
        \includegraphics[width=\linewidth]{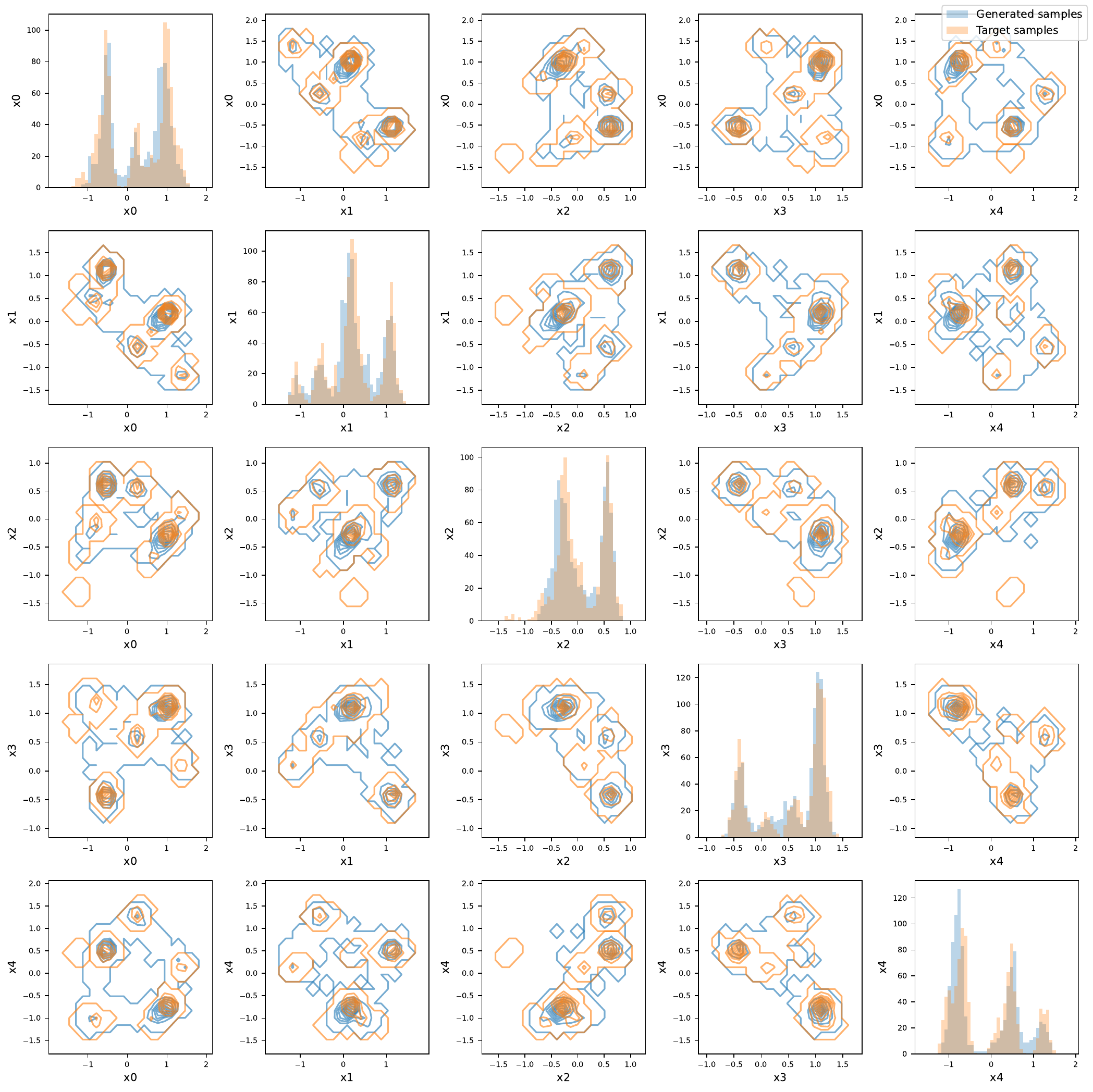}
        \caption*{Target only}
    \end{subfigure}
    \hfill
    \begin{subfigure}[t]{0.32\linewidth}
        \centering
        \includegraphics[width=\linewidth]{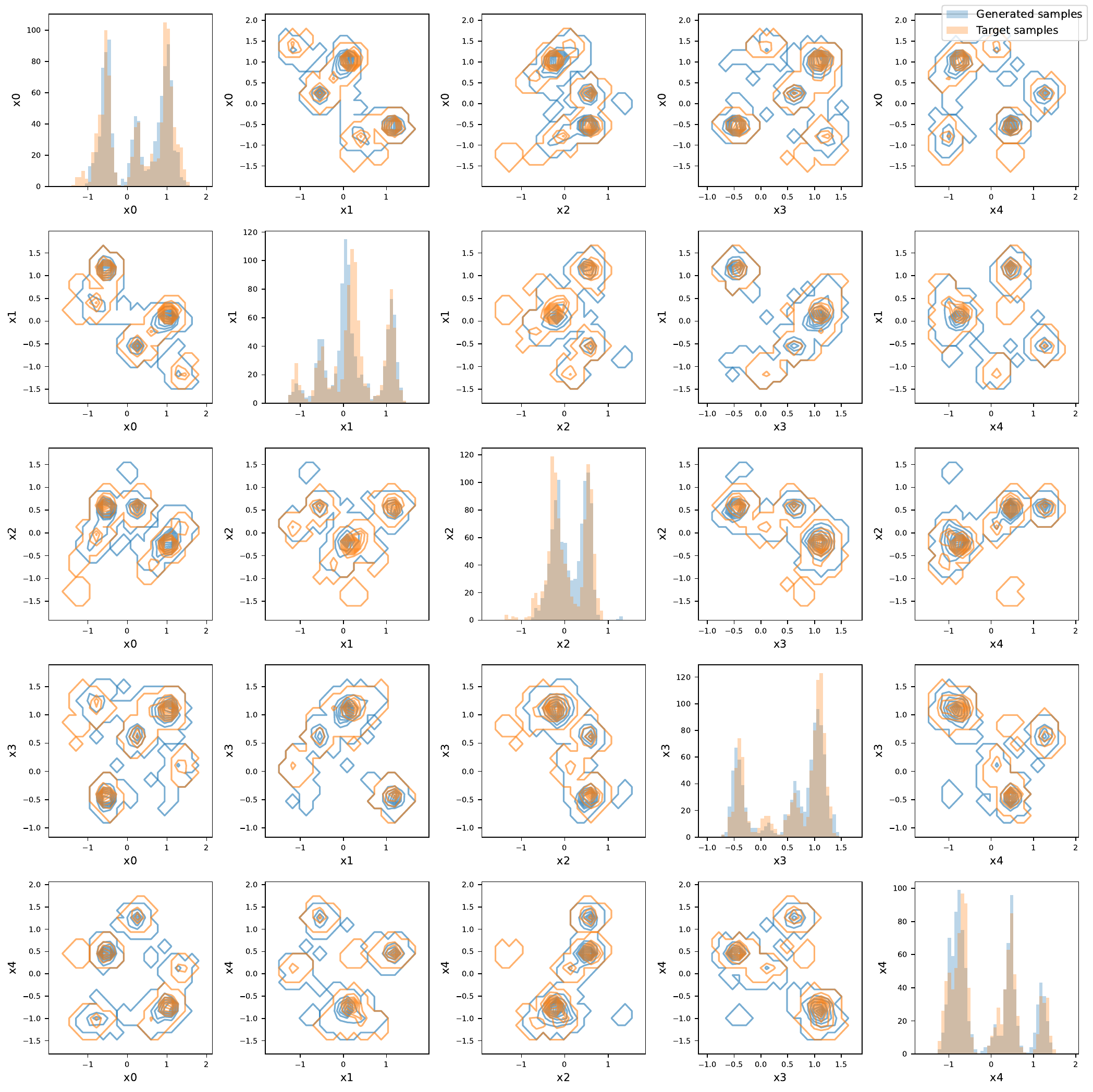}
        \caption*{Cat-all}
    \end{subfigure}
    \hfill
    \begin{subfigure}[t]{0.32\linewidth}
        \centering
        \includegraphics[width=\linewidth]{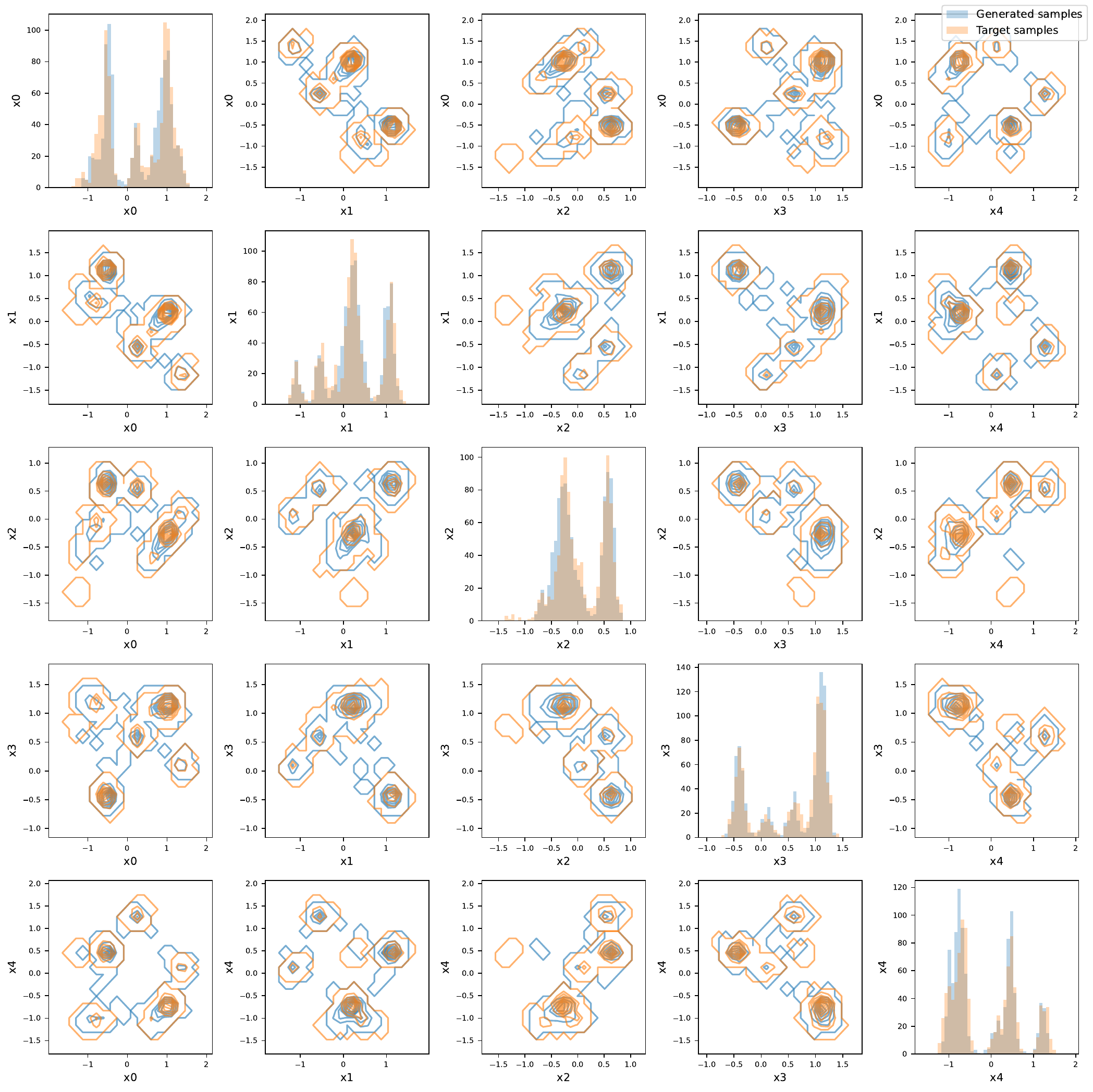}
        \caption*{DA-Diff}
    \end{subfigure}
    \caption{Generated samples in the Gaussian mixture experiment.
    From left to right, the methods are Target only, Cat-all, and DA-Diff.
    Their corresponding $W_2$ distances are $0.040$, $0.047$, and $0.023$, respectively.}
    \label{fig_gmm_generated_samples}
\end{figure}

\begin{figure}[t]
    \centering
    \includegraphics[width=0.95\linewidth]{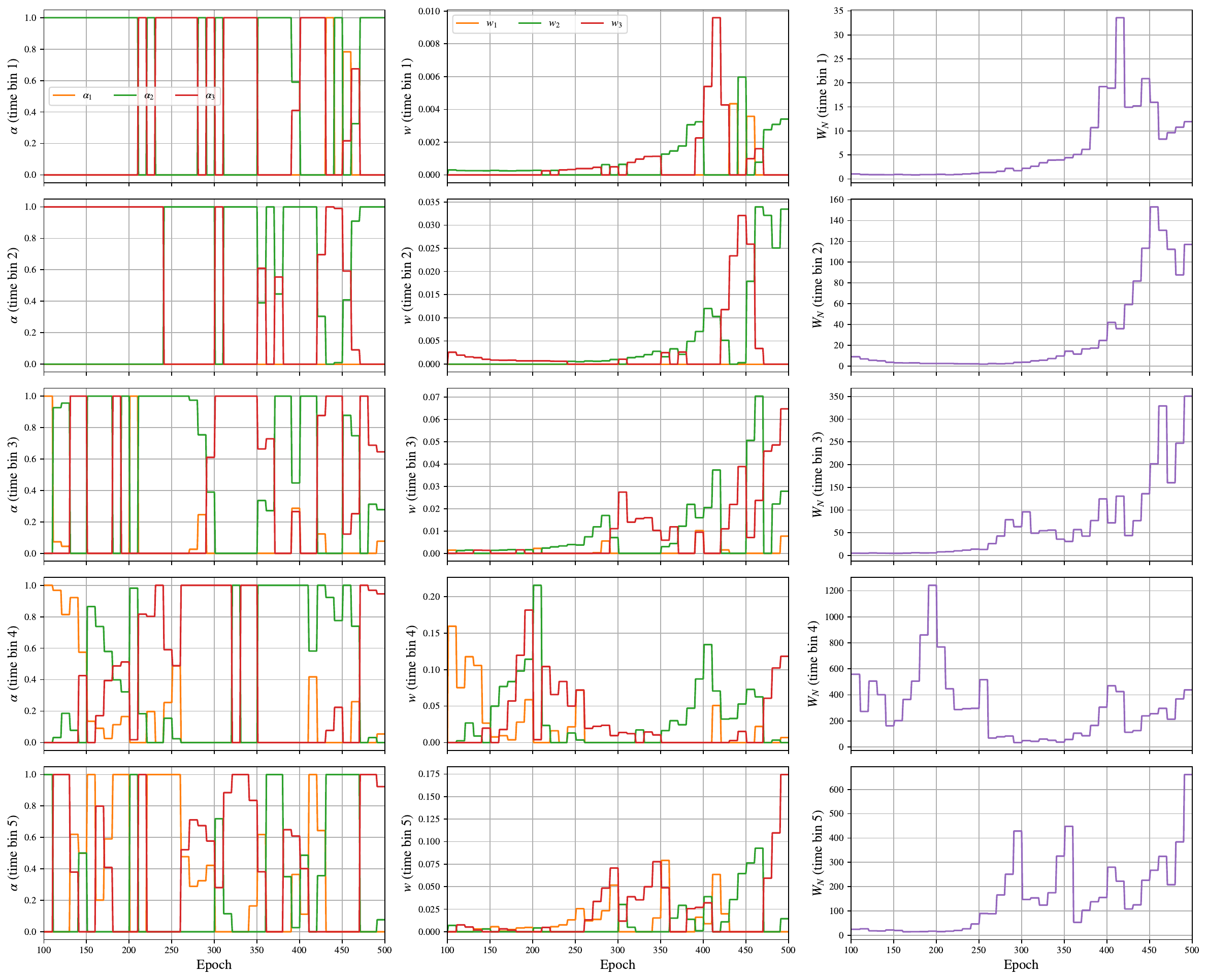}
    \caption{
    Source proportions $\alpha_1, \ldots, \alpha_K$, source weights $w_1,\ldots,w_K$, and the total weighted source sample size $W_N$ solved by UOWQ \citep{zhang_2026_uowq}.
    }
    \label{fig_uowq_weights}
\end{figure}

\section{Computational Efficiency}

We additionally provide a detailed analysis of the wall-clock runtime of
the competing CI testing methods and DA-CIT.

\begin{figure}[htb!]
    \centering
    \includegraphics[width=0.8\linewidth]
    {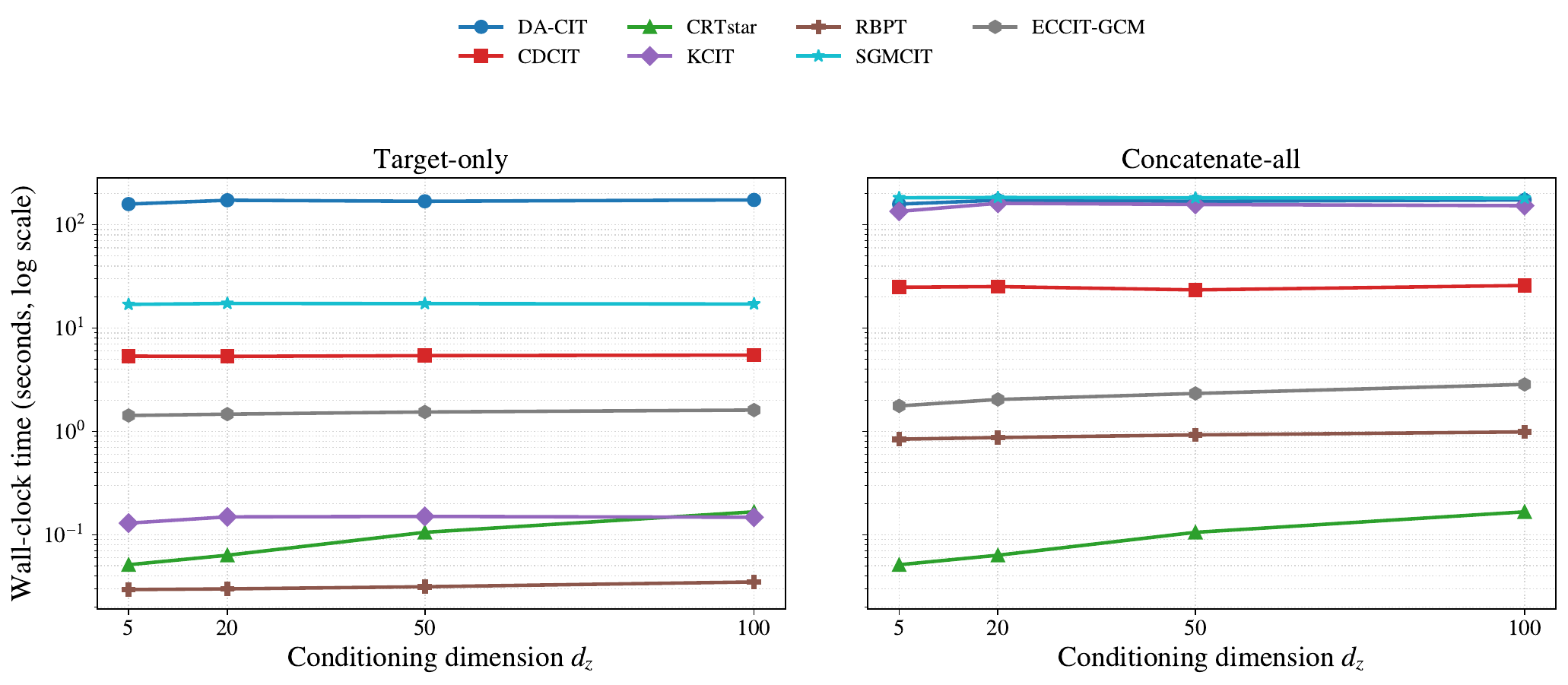}
    \caption{
    Wall-clock runtime of the CI testing methods considered in
    Figure~\ref{fig_cit_results_main} across different conditioning
    dimensions $d_z$. We report the runtime under both the Target-only
    and Concatenate-all settings. The vertical axis is shown on a
    logarithmic scale.
    }
    \label{fig_cit_wall_clock}
\end{figure}

\begin{figure}[htb!]
    \centering
    \includegraphics[width=\linewidth]
    {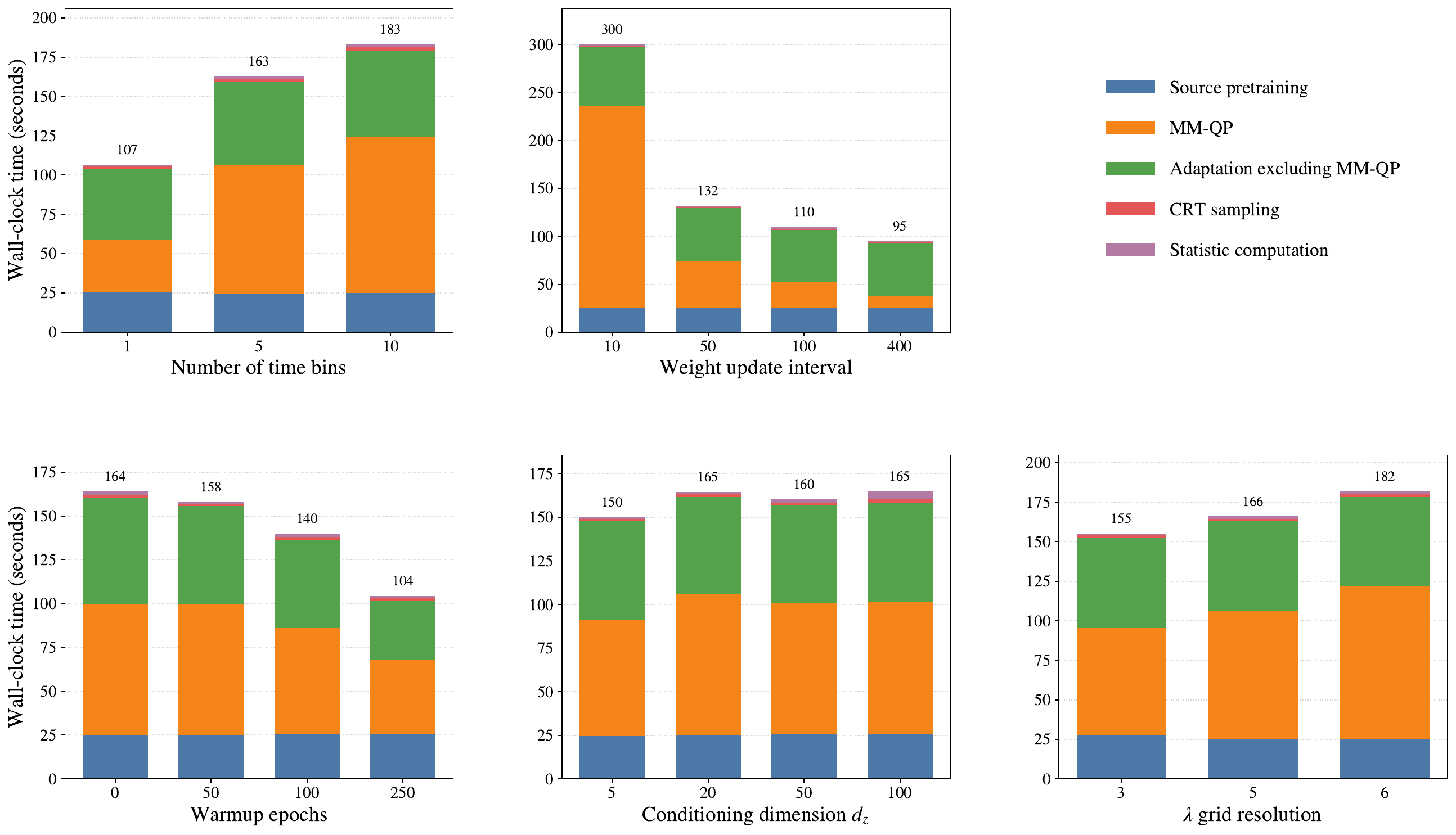}
    \caption{
    Detailed runtime decomposition of DA-CIT under the ablation settings
    considered in Figures~\ref{fig_cit_ablation} and
    \ref{fig_cit_ablation_concatenate_all}. The total runtime is
    decomposed into source pretraining, MM-QP optimization, adaptation
    excluding MM-QP, CRT sampling, and statistic computation.
    }
    \label{fig_dacit_runtime_ablation}
\end{figure}

As shown in Figure~\ref{fig_dacit_runtime_ablation}, the computational
cost of DA-CIT consists of five main components: source-model
pretraining, MM-QP optimization for the domain weights, adaptation
excluding MM-QP, CRT sampling, and statistic computation. Among these,
the adaptation stage and MM-QP optimization account for most of the
runtime, while CRT sampling and statistic computation contribute only a
small fraction. In particular, the MM-QP cost increases when weight
optimization is performed more frequently, when more time bins are used,
or when a finer $\lambda$ grid is adopted.

Although DA-CIT achieves strong CI testing performance, this improvement
comes at the cost of substantially higher computational time, as shown
in Figure~\ref{fig_cit_wall_clock}. Compared with CDCIT, the additional
computational overhead of DA-CIT mainly arises from pretraining the
source-specific score models and repeatedly solving the MM-QP problems
during weighted-ERM training. Therefore, improving the computational
efficiency of source pretraining and weight optimization is an important
direction for future work.

\section{Implementation Details}
\label{sec_implementation_details}

Unless otherwise specified, we use the following implementation for
DA-Diff throughout the experiments.
The score network is a fully connected neural network with hidden
dimension 256 and three hidden layers. 

For each source domain, we first pretrain a source-specific score network
for 500 epochs with learning rate $10^{-3}$. The domain-adaptation stage
is then trained for another 500 epochs with learning rate $10^{-3}$,
with a warm-up period of 100 epochs. We use five batches per epoch.

To allow the transferability of source domains to vary across diffusion
time, we divide the diffusion interval into 5 time bins and learn
separate domain weights for each bin. The weights are updated every
10 epochs. The hyperparameters used for weight optimization are also
re-selected every 100 epochs using the target validation set. We use
30\% of the target training data for validation.

For hyperparameter selection, we fix
\[
    \rho\in\{0.2,0.4,0.6,0.8\},
    \qquad
    C_{\mathrm{reg}}\in\{2,4,8,16,32,64,128\},
\]
and discretize the $\lambda$ search space with a grid resolution of 5.
The minimum target-domain mixture weight is set to
$\epsilon_{\pi}=10^{-6}$.

We solve the weight optimization problem using the MM-QP procedure
described in Appendix~\ref{sec_optimal_weights}. For each optimization,
we use both the pooling and target-only solutions as initializations and
retain the solution with the smaller objective value. The maximum number
of MM iterations is 50 with tolerance $10^{-6}$. Each convex QP
subproblem is solved with at most 2000 iterations and tolerance
$10^{-9}$.

All experiments were conducted on a machine equipped with an NVIDIA RTX A6000 GPU and an Intel(R) Xeon(R) Gold 6430 CPU with 128 cores.

\section{Proof of Theorems}
\label{sec_proof}

Although the first $n_0$ observations in $D_0$ are designated as the samples used for the CRT in~(\ref{eq_diffusion_loss_empirical}), the target-domain observations are i.i.d., so using the first $n_0$ samples or the last $n_0$ samples of $D_0$ for training is theoretically equivalent. Therefore, without loss of generality, throughout the theoretical analysis of the diffusion model, we take the first $n_0$ observations in $D_0$ as the target-domain training samples. 

Before we begin the proof, we first define the H\"older function class
and the ReLU neural network class $\mathcal{F}$ used throughout the analysis.

\paragraph{H\"older function class.}
Let $\beta>0$ denote the degree of smoothness, and write:
\[
    \beta = m+\gamma,
    \qquad
    m=\lfloor \beta \rfloor,
    \qquad
    \gamma\in[0,1).
\]
For a multi-index
$\boldsymbol{\nu}=(\nu_1,\ldots,\nu_d)\in\mathbb{N}_0^d$, let:
\[
    |\boldsymbol{\nu}|_1
    :=
    \sum_{j=1}^d \nu_j,
    \qquad
    \partial^{\boldsymbol{\nu}}
    :=
    \frac{
        \partial^{|\boldsymbol{\nu}|_1}
    }{
        \partial x_1^{\nu_1}\cdots\partial x_d^{\nu_d}
    }.
\]
For a function $f:\mathbb{R}^d\rightarrow\mathbb{R}$, its
$\beta$-H\"older norm is defined as:
\begin{equation}
    \label{eq_holder_norm}
    \begin{aligned}
    \|f\|_{\mathcal{H}^{\beta}(\mathbb{R}^d)}
    :=
    \max_{|\boldsymbol{\nu}|_1\leq m}
    \sup_{x\in\mathbb{R}^d}
    \left|
        \partial^{\boldsymbol{\nu}}f(x)
    \right| + \max_{|\boldsymbol{\nu}|_1=m}
    \sup_{x\neq x'}
    \frac{
        \left|
            \partial^{\boldsymbol{\nu}}f(x)
            -
            \partial^{\boldsymbol{\nu}}f(x')
        \right|
    }{
        \|x-x'\|_{\infty}^{\gamma}
    }.
    \end{aligned}
\end{equation}
where the second term is omitted when $\gamma=0$.
A function $f$ is said to be $\beta$-H\"older continuous if
$\|f\|_{\mathcal{H}^{\beta}(\mathbb{R}^d)}<\infty$.

For any $B>0$, define the H\"older ball of smoothness $\beta$ and
radius $B$ as:
\begin{equation}
    \label{eq_holder_ball}
    \mathcal{H}^{\beta}(\mathbb{R}^d,B)
    :=
    \left\{
        f:\mathbb{R}^d\rightarrow\mathbb{R}
        :
        \|f\|_{\mathcal{H}^{\beta}(\mathbb{R}^d)}
        \leq B
    \right\}.
\end{equation}

\paragraph{ReLU neural network class.}
We use neural networks to parameterize the score functions.
Let:
\[
    \sigma(u):=\max\{u,0\}
\]
denote the ReLU activation function, applied componentwise. For the
input:
\begin{equation}
    u_0
    :=
    \begin{bmatrix}
        x^\top \ z^\top \ t
    \end{bmatrix}^{\top},
\end{equation}
consider an $L$-layer feed-forward ReLU network defined recursively by:
\begin{align}
    & u_{\ell}
    =
    \sigma\left(A_{\ell}u_{\ell-1}+b_{\ell}\right),
    \qquad
    \ell=1,\ldots,L-1,
    \notag\\
    & s(x,z,t)
    =
    A_Lu_{L-1}+b_L.
    \label{eq_relu_network}
\end{align}
Here, $d_0=d_x+d_z+1, d_L=d_x,$ and $A_{\ell} \in \mathbb{R}^{d_{\ell}\times d_{\ell-1}}, b_{\ell}\in\mathbb{R}^{d_{\ell}},
    \ell=1,\ldots,L$.

For $M_t,W,\kappa,L,S>0$, define the ReLU neural network class:
\begin{equation}
    \label{eq_relu_network_class}
    \begin{aligned}
    \mathcal{F}(M_t,W,\kappa,L,S)
    :=
    \Bigg\{
        s \text{ of the form (\ref{eq_relu_network})}:
        \;&
        \max_{1\leq \ell\leq L-1}d_{\ell}\leq W,
        \\
        &
        \sup_{x,z}
        \|s(x,z,t)\|_{\infty}
        \leq M_t,
        \\
        &
        \max_{1\leq\ell\leq L}
        \left(
            \|A_{\ell}\|_{\infty}
            \vee
            \|b_{\ell}\|_{\infty}
        \right)
        \leq\kappa,
        \\
        &
        \sum_{\ell=1}^L
        \left(
            \|A_{\ell}\|_0
            +
            \|b_{\ell}\|_0
        \right)
        \leq S
    \Bigg\}.
    \end{aligned}
\end{equation}
Here, $W$ controls the maximum width of the network, $L$ denotes the
network depth, $\kappa$ bounds the magnitude of the network parameters,
and $S$ controls the total number of nonzero parameters. Moreover,
$\|\cdot\|_{\infty}$ denotes the entrywise maximum norm and
$\|\cdot\|_0$ denotes the number of nonzero entries. The quantity
$M_t$ controls the magnitude of the network output and is allowed to
depend on the diffusion time $t$.

\subsection{Global population minimizer of the weighted score matching risk}
\label{app_global_minimizer}
We define the weighted score matching risk:
\begin{equation}
\label{eq_weighted_score_matching_risk}
\mathcal{R}_w(s) := n_0\,\mathcal{R}_0(s) + \sum_{k=1}^K w_k n_k\, \mathcal{R}_k(s).
\end{equation}

\begin{thm}[Global population minimizer of the weighted score matching risk (\ref{eq_weighted_score_matching_risk})]
Recall the $\pi_k$ defined in (\ref{eq_define_pi}):
\begin{equation}
\pi_k :=
\begin{cases}
\frac{n_0}{n_0+W_N}, & k=0,\\ \\ 
\frac{w_k n_k}{n_0+W_N}, & k\in\{1,\ldots,K\}.
\end{cases}
\end{equation}
For each $t\in[t_0,T]$, define the mixture density:
\begin{equation}
\label{eq_mixture_density_pw}
 p_{w,t}(x,z) := \sum_{k=0}^{K} \pi_k\, p_{k,t}(x,z),
\end{equation}
and the posterior domain probability:
\begin{equation}
\label{eq_posterior_eta}
\eta_k(x,z,t) := \frac{\pi_k\, p_{k,t}(x,z)}{p_{w,t}(x,z)}.
\end{equation}
Then the (global) population minimizer over all measurable functions $s$:
\begin{equation}
\label{eq_optimal_weighted_score}
 s_w^o := \arg\min_{s} \mathcal{R}_w(s) = \arg\min_{s}  \sum_{k=0}^K \pi_k \mathcal{R}_{k} (s) 
\end{equation}
satisfies, for all $(x,z,t)$ with $p_{w,t}(x,z)>0$,
\begin{equation}
\label{eq_optimal_weighted_score_pointwise}
 s_w^o(x,z,t) = \sum_{k=0}^{K} \eta_k(x,z,t)\, s_k^*(x,z,t).
\end{equation}
\begin{proof}
    For notational convenience, we introduce the following inner products and associated norms on the relevant function spaces. For any measurable vector-valued functions $g_1(x,z,t)$ and $g_2(x,z,t)$, we define the inner product induced by the $k$-th domain as
    \begin{align}
    \label{eq_inner_product_Hk}
    \langle g_1, g_2 \rangle_{\mathcal{H}_k} := \frac{1}{T-t_0}\int_{t_0}^{T} \mathbb{E}_{(X_t,Z)\sim p_{k,t}(\cdot,\cdot)}\Big[\,\langle g_1(X_t,Z,t), g_2(X_t,Z,t)\rangle\,\Big] \, dt,
    \end{align}
    with the associated squared norm $\|g_1\|_{\mathcal{H}_k}^2 := \langle g_1,g_1\rangle_{\mathcal{H}_k}$. Under this definition and (\ref{eq_population_score_matching_risk}), the population risk on the $k$-th domain can be equivalently written as $\mathcal{R}_k (s) = \|s-s_k^*\|_{\mathcal{H}_k}^2$. Similarly, define the inner product under the mixture density $p_{w,t}$ as
    \begin{align}
    \label{eq_inner_product_Hw}
    \langle g_1, g_2 \rangle_{\mathcal{H}_w} := \frac{1}{T-t_0}\int_{t_0}^{T} \mathbb{E}_{(X_t,Z)\sim p_{w,t} (\cdot, \cdot)}\Big[\,\langle g_1(X_t,Z,t), g_2(X_t,Z,t)\rangle\,\Big] \, dt,
    \end{align}
    with the associated squared norm $\|g_1\|_{\mathcal{H}_w}^2 := \langle g_1,g_1\rangle_{\mathcal{H}_w}$.

    Define $\overline{\mathcal{R}}_w (s) := \sum_{k=0}^K \pi_k \mathcal{R}_{k} (s)$, we know:
    \begin{align}
        \overline{\mathcal{R}}_w (s) & = \frac{1}{T - t_0} \int_{t_0}^T \sum_{k=0}^K \pi_k \mathbb{E}_{(X_t,Z) \sim p_{k,t} (\cdot, \cdot)} \| s - s_k^* \|_2^2 \, dt \notag \\
        & = \frac{1}{T - t_0} \int_{t_0}^T  \int_{\mathcal{X} \times \mathcal{Z}}  \sum_{k=0}^K \pi_k p_{k,t}  \| s - s_k^* \|_2^2 \, dx dz dt \notag \\
        & = \frac{1}{T - t_0} \int_{t_0}^T  \int_{ \{(x,z) \in \mathcal{X} \times \mathcal{Z} | p_{w,t} (x,z) >0\} } p_{w,t} \sum_{k=0}^K \frac{\pi_k p_{k,t} }{p_{w,t}} \| s - s_k^* \|_2^2 \, dx dz dt \notag \\
        & = \frac{1}{T - t_0} \int_{t_0}^T  \int_{ \{(x,z) \in \mathcal{X} \times \mathcal{Z} | p_{w,t} (x,z) >0\} } p_{w,t} \sum_{k=0}^K \eta_k \| s - s_k^* \|_2^2 \, dx dz dt \notag \\
        & = \frac{1}{T - t_0} \int_{t_0}^T  \mathbb{E}_{ (X_t,Z) \sim p_{w,t}(\cdot, \cdot) } \left[  \sum_{k=0}^K \eta_k \| s - s_k^* \|_2^2 \right] \, dt. \label{eq_rk_ew_eta}
    \end{align}
    We next characterize the global minimizer of 
    $\overline{\mathcal{R}}_w$ over the space of measurable functions. To derive its minimizer, let $h$ be an arbitrary admissible perturbation
    and consider the perturbed function $s_\epsilon=s+\epsilon h$.
    Assuming that differentiation with respect to $\epsilon$ can be interchanged with the time integral and expectation, the first variation of $\overline{\mathcal{R}}_w(s)$ at $s$ in the direction $h$ is given by
    \begin{align}
    D\overline{\mathcal{R}}_w(s)[h]
    &:=
    \left.
    \frac{d}{d\epsilon}
    \overline{\mathcal{R}}_w(s+\epsilon h)
    \right|_{\epsilon=0} \nonumber\\
    &=
    \frac{2}{T-t_0}
    \int_{t_0}^{T}
    \mathbb{E}_{(X_t,Z)\sim p_{w,t}(\cdot, \cdot)}
    \left[
    \left\langle
    \sum_{k=0}^K
    \eta_k
    \bigl(s-s_k^*\bigr),
    h
    \right\rangle
    \right]dt.
    \label{eq_first_variation_weighted_risk}
    \end{align}
    Since $h$ is arbitrary, the first-order optimality condition
    $D\overline{\mathcal{R}}_w(s)[h]=0$ for every admissible $h$ implies:
    \begin{equation}
    \sum_{k=0}^K
    \eta_k
    \bigl(s-s_k^*\bigr)
    =0
    \end{equation}
    for almost every $t\in[t_0,T]$ and Lebesgue-almost every $(x_t,z)\in \left\{ (x,z)\in\mathcal{X}\times\mathcal{Z} | p_{w,t}(x,z)>0 \right\}$.
    Using the truth $\sum_{k=0}^K\eta_k=1$ ((\ref{eq_mixture_density_pw}) and (\ref{eq_posterior_eta})), we obtain:
    \begin{equation*}
        \boxed{s_w^o(x_t,z,t)
        =
        \sum_{k=0}^K
        \eta_k(x_t,z,t)s_k^*(x_t,z,t).}
    \end{equation*}
    Moreover, the second variation in the direction $h$ is
    \begin{align}
    D^2\overline{\mathcal{R}}_w(s)[h,h]
    &=
    \frac{2}{T-t_0}
    \int_{t_0}^{T}
    \mathbb{E}_{(X_t,Z)\sim p_{w,t}}
    \left[
    \|h(x_t,z,t)\|_2^2
    \right] \, dt \nonumber\\
    &=
    2\|h\|_{\mathcal{H}_w}^2
    \geq 0.
    \end{align}
    Therefore, $\overline{\mathcal{R}}_w (s)$ is strictly convex in $s$, and $s_w^o$ is its unique
    global minimizer almost everywhere.
    
\end{proof}

\end{thm}

\subsection{Excess-risk decomposition}
\label{app_excess_decomp}

Knowing the global minimizer $s_w^o$, we can further decompose $\overline{\mathcal{R}}_w(s)$ based on~(\ref{eq_rk_ew_eta}) as follows
\begin{align}
    \overline{\mathcal{R}}_w (s) & = \frac{1}{T - t_0} \int_{t_0}^T  \mathbb{E}_{ (X_t,Z) \sim p_{w,t}(\cdot, \cdot) } \left[  \sum_{k=0}^K \eta_k \| s - s_k^* \|_2^2 \right] \, dt \notag \\
    & = \frac{1}{T - t_0} \int_{t_0}^T  \mathbb{E}_{ (X_t,Z) \sim p_{w,t}(\cdot, \cdot) } \left[  \sum_{k=0}^K \eta_k \| s - s_w^o + s_w^o - s_k^* \|_2^2 \right] \, dt \notag \\
    & = \frac{1}{T - t_0} \int_{t_0}^T  \mathbb{E}_{ (X_t,Z) \sim p_{w,t}(\cdot, \cdot) } \Biggl[  \sum_{k=0}^K \eta_k \| s - s_w^o\|_2^2 + \sum_{k=0}^K \eta_k \|s_w^o - s_k^* \|_2^2 \notag \\
    & \hspace{13em} + 2 \sum_{k=0}^K \eta_k \left\langle s - s_w^o, s_w^o - s_k^*  \right\rangle \Biggr] \, dt. 
\end{align}
By the truth that $\sum_{k=0}^K \eta_k = 1$ and (\ref{eq_optimal_weighted_score_pointwise}), we have $\sum_{k=0}^K \eta_k \| s - s_w^o\|_2^2 = \| s - s_w^o\|_2^2$ and $\sum_{k=0}^K \eta_k \left\langle s - s_w^o, s_w^o - s_k^*  \right\rangle = \left\langle s - s_w^o, \sum_{k=0}^K \eta_k (s_w^o - s_k^*)  \right\rangle = \left\langle s - s_w^o,  s_w^o - \sum_{k=0}^K \eta_k s_k^*  \right\rangle = 0$. Therefore, $\overline{\mathcal{R}}_w (s)$ can be rewritten as follows
\begin{align}
    \overline{\mathcal{R}}_w (s) & = \frac{1}{T - t_0} \int_{t_0}^T  \mathbb{E}_{ (X_t,Z) \sim p_{w,t}(\cdot, \cdot) } \| s - s_w^o\|_2^2 \, dt \notag \\
    & \quad + \frac{1}{T - t_0} \int_{t_0}^T  \mathbb{E}_{ (X_t,Z) \sim p_{w,t}(\cdot, \cdot) } \Biggl[ \sum_{k=0}^K \eta_k \|s_w^o - s_k^* \|_2^2 \Biggr] \, dt \label{eq_decompose_Rw},
\end{align}
where the first term depends on $s$, while the second term is irrelevant to $s$.
Moreover, it holds that
\begin{equation}
    \overline{\mathcal{R}}_w (s_w^o) = \frac{1}{T - t_0} \int_{t_0}^T \mathbb{E}_{ (X_t,Z) \sim p_{w,t}(\cdot, \cdot) } \Biggl[ \sum_{k=0}^K \eta_k \|s_w^o - s_k^* \|_2^2 \Biggr] \, dt,
\end{equation}
and the excessive score matching risk is defined as
\begin{equation}
\label{eq_excessive_risk}
    \overline{\mathcal{E}}_w (s) :=\overline{\mathcal{R}}_w (s) - \overline{\mathcal{R}}_w (s_w^o) = \frac{1}{T - t_0} \int_{t_0}^T \mathbb{E}_{ (X_t,Z) \sim p_{w,t}(\cdot, \cdot) } \| s - s_w^o\|_2^2 \, dt = \| s - s_w^o \|_{\mathcal{H}_w}^2.
\end{equation}

With the definitions in~(\ref{eq_inner_product_Hk}) and~(\ref{eq_inner_product_Hw}), we can bound the score matching risk $\mathbb{E}_{D_0,\ldots,D_K}\big[\mathcal{R}_0(\widehat{s}_w)\big]$ in Theorem~\ref{thm_score_risk_bound} as follows
\begin{equation}
\label{eq_error_decomp}
\boxed{\begin{aligned}
    \mathbb{E}_{D_0,\ldots,D_K}\big[\mathcal{R}_0(\widehat{s}_w)\big] & = \mathbb{E}_{D_0,\ldots,D_K}\big[ \|\widehat{s}_w-s_0^*\|_{\mathcal{H}_0}^2 \big]  \\
    & \lesssim \mathbb{E}_{D_0,\ldots,D_K}\big[ \|\widehat{s}_w-s_w^o\|_{\mathcal{H}_0}^2 \big] + \|s_w^o-s_0^*\|_{\mathcal{H}_0}^2,
\end{aligned}}
\end{equation}
where the first term is the estimation error under the density $p_{0,t}$, and the second term is the transfer bias between the optimal minimizer $s_w^o$ of $\overline{\mathcal{R}}_w(s)$ and the true score in the target domain. 

\subsection{Properties of $p_{w,t}$}
\label{app_property_of_pw}
In this section, we unveil some properties of the mixture density $p_{w,t}$, including smoothness, its conditional density given $z$, and its score function.

\begin{lem}[Conditional-density structure and score representation of the mixture distribution]
\label{thm_mixture_distribution_properties}

Let
\[
    P_w(x,z)
    :=
    \sum_{k=0}^K \pi_k P_k(x,z),
    \qquad
    \pi_k\geq 0,
    \qquad
    \sum_{k=0}^K \pi_k=1.
\]
and its joint density is $p_{w,0}(x,z)$ defined in (\ref{eq_mixture_density_pw}) when the diffusion time $t=0$. Under Assumption~\ref{assump_smoothness_tail}, the following statements hold.

Define the marginal density of $Z$ under $P_w$ as
\begin{equation}
    p_w(z)
    :=
    \sum_{k=0}^K \pi_k p_k(z),
    \label{eq_pw_z_marginal_pi_k}
\end{equation}
and, for $p_w(z)>0$, define
\begin{equation}
    \overline{\eta}_{k}(z)
    :=
    \frac{\pi_k p_k(z)}{p_w(z)},
    \qquad
    k=0,\ldots,K.
    \label{eq_eta_bar_weighted}
\end{equation}
Then,
\begin{equation}
    \overline{\eta}_{k}(z)\geq 0,
    \qquad
    \sum_{k=0}^K \overline{\eta}_{k}(z)=1.
\end{equation}

The conditional density of $X$ given $Z=z$ under $P_w$ satisfies
\begin{equation}
    p_{w,0}(x| z)
    =
    \sum_{k=0}^K
    \overline{\eta}_{k}(z)
    p_k(x| z)
    =
    \exp\left(
        -{C_x\|x\|_2^2}/{2}
    \right)
    \widetilde f_w(x,z),
    \label{eq_mixture_conditional_density_structure}
\end{equation}
where
\begin{equation}
    \widetilde f_w(x,z)
    :=
    \sum_{k=0}^K
    \overline{\eta}_{k}(z)
    f_k(x,z)
\end{equation}
satisfies
\begin{equation}
    \widetilde f_w \in \mathcal H^\beta
    \left(
        \mathbb R^{d_x}\times\mathbb R^{d_z},
        B_{\mathrm{mix}}
    \right), \quad \sup_{x,z} |f_k (x,z) | \leq B \ \text{ for all $k$,} \quad \text{and} \quad 
    \widetilde f_w(x,z) \geq C_f,
    \label{eq_tilde_fw_bounds}
\end{equation}
where $B_{\mathrm{mix}}>0$ depends only on
$\beta,d_x,d_z,B,L_q$, and $C_{\mathrm{DR}}$.
Moreover, the marginal density $p_w(z)$ has a sub-Gaussian tail:
\begin{equation}
    p_w(z)
    \lesssim
    \exp\left(
        -{C_z\|z\|_2^2}/{2}
    \right).
    \label{eq_pw_z_subgaussian}
\end{equation}

For any $t\in[t_0,T]$, the conditional density under the forward
diffusion satisfies
\begin{equation}
    p_{w,t}(x| z)
    =
    \sum_{k=0}^K
    \overline{\eta}_{k}(z)
    p_{k,t}(x| z).
    \label{eq_pwt_conditional_mixture}
\end{equation}
Furthermore, the population minimizer $s_w^o$ of the weighted
score-matching risk satisfies
\begin{equation}
    s_w^o(x,z,t)
    =
    \nabla_x\log p_{w,t}(x| z)
    \label{eq_sw_conditional_score}
\end{equation}
for almost every $(x,z,t)$.
\end{lem}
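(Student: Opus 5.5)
The argument is mostly direct computation from the definitions, with one genuinely technical step: the H\"older regularity of $\widetilde f_w$. I would take the claims in the order stated.

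\textbf{Marginal and weights.} Integrate $p_{w,0}(x,z)=\sum_k\pi_k p_k(x|z)p_k(z)$ over $x$ to get $p_w(z)=\sum_k\pi_k p_k(z)$. Nonnegativity of $\overline\eta_k$ and $\sum_k\overline\eta_k=1$ then follow immediately from the definition (\ref{eq_eta_bar_weighted}).

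\textbf{Conditional density.} Dividing $p_{w,0}(x,z)$ by $p_w(z)$ gives $p_{w,0}(x|z)=\sum_k\overline\eta_k(z)p_k(x|z)$. Substituting (\ref{eq_joint_density_tail}) and using that $C_x$ is common to all $k$ factors out $\exp(-C_x\|x\|_2^2/2)$ and leaves $\widetilde f_w$. Since $\widetilde f_w$ is a convex combination in $k$ at each $z$, the bounds $C_f\le\widetilde f_w\le B$ are inherited from $C_f\le f_k\le B$.

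\textbf{Sub-Gaussian tail.} This holds because $p_w$ is a finite convex combination of sub-Gaussian densities with common constant $C_z$.

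\textbf{Diffusion-time mixture.} The forward OU kernel acts on $x$ only and does not depend on the domain, so $p_{k,t}(x,z)=\int \phi_t(x|x')p_k(x',z)\,dx'$ with $\phi_t$ the Gaussian kernel $\mathcal N(a_tx',\sigma_t^2I)$. By linearity, $p_{w,t}(x,z)=\sum_k\pi_kp_{k,t}(x,z)$. The $z$-marginal is unchanged, so dividing by $p_w(z)$ yields (\ref{eq_pwt_conditional_mixture}).

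\textbf{Score identity.} Start from the pointwise form (\ref{eq_optimal_weighted_score_pointwise}) of the previous theorem, $s_w^o=\sum_k\eta_k s_k^*$ with $\eta_k=\pi_kp_{k,t}(x,z)/p_{w,t}(x,z)$. Write $s_k^*=\nabla_x p_{k,t}(x|z)/p_{k,t}(x|z)$. Since $p_k(z)$ does not depend on $x$,
\[
\sum_k\eta_k s_k^*=\frac{\sum_k\pi_kp_k(z)\nabla_xp_{k,t}(x|z)}{p_{w,t}(x,z)}=\frac{\nabla_x p_{w,t}(x,z)}{p_{w,t}(x,z)}=\nabla_x\log p_{w,t}(x|z).
\]
Strict positivity of $p_{k,t}$ for $t>0$ (Gaussian convolution of a density) makes these divisions legitimate. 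Differentiating under the integral is justified by the Gaussian tails.

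\textbf{Main obstacle: $\widetilde f_w\in\mathcal H^\beta$.} Each $f_k$ is already in the H\"older ball, so the issue is the smoothness of $\overline\eta_k(z)$ in $z$. Here I would use (\ref{eq_assump_smooth_ratio}): since $p_k=p_0q_k$ with $q_0\equiv1$, the factor $p_0$ cancels and
\[
\overline\eta_k(z)=\frac{\pi_kq_k(z)}{Q(z)},\qquad Q:=\sum_j\pi_jq_j .
\]
This avoids dividing by $p_0$. The numerators are in $\mathcal H^\beta(L_q)$ and $Q\in\mathcal H^\beta$ with norm at most $1+L_q$, using $\sum\pi_j=1$.

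For the reciprocal $1/Q$ I need a uniform lower bound on $Q$. The natural route is (\ref{eq_assump_ratio}): $p_0\le C_{\mathrm{DR}}p_{w\setminus0}$ gives $\sum_{j\ge1}\overline w_jq_j\ge C_{\mathrm{DR}}^{-1}$, hence $Q\ge\pi_0+(1-\pi_0)C_{\mathrm{DR}}^{-1}\ge C_{\mathrm{DR}}^{-1}$, on the $p_0$-support. Combined with $q_0\equiv1$ and the continuity of the $q_k$, I expect this to extend everywhere; pinning down that extension (in particular where $p_0=0$) is the delicate point.

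Given the lower bound, a Fa\`a di Bruno argument bounds the H\"older norm of $1/Q$ by a constant depending on $\beta,d_z,L_q,C_{\mathrm{DR}}$. The product rule for H\"older norms, $\|fg\|_{\mathcal H^\beta}\lesssim\|f\|_{\mathcal H^\beta}\|g\|_{\mathcal H^\beta}$, then gives $\overline\eta_k\in\mathcal H^\beta$. Treating $\overline\eta_k$ as a function of $(x,z)$ that is constant in $x$, the same product rule shows $\overline\eta_k f_k\in\mathcal H^\beta$.

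Finally, I would not sum the individual norms, because that would make the constant depend on $K$. Instead I bound $\widetilde f_w=\big(\sum_k\pi_kq_kf_k\big)\cdot Q^{-1}$ directly. The first factor is a convex combination of functions with $\mathcal H^\beta$ norm at most a constant times $L_qB$ (recall $q_0\equiv1$), so its norm is bounded independently of $K$ and $\pi$. This yields $B_{\mathrm{mix}}$ depending only on $\beta,d_x,d_z,B,L_q,C_{\mathrm{DR}}$.
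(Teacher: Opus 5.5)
Your proposal follows the paper's route almost step for step. The marginal, the convex-combination bounds, the tail, linearity of the Gaussian kernel and the score identity all match. In the H\"older step you likewise cancel $p_0$ through $p_k=p_0q_k$, write $\widetilde f_w=F_w/Q$ with $F_w=\sum_k\pi_kq_kf_k$, lower-bound $Q$ via (\ref{eq_assump_ratio}), and bound $F_w$ as a single convex combination so the constant is free of $K$.

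\textbf{The gap.} The gap is exactly the step you flag: the lower bound $Q\ge C_{\mathrm{DR}}^{-1}$ does not extend to all of $\mathbb R^{d_z}$. Continuity of the $q_k$ only carries it to the closed support of $P_0$. Off that support, $p_k=p_0q_k=0$, so the assumptions constrain the $q_k$ only through nonnegativity and the H\"older bound. There $q_0\equiv1$ gives only $Q\ge\pi_0=n_0/(n_0+W_N)$, which tends to zero precisely in the regime $W_N\gg n_0$ the theory is built for. The paper explicitly allows $p_0$ to vanish; that is the stated purpose of the factorization $p_k=p_0q_k$.

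This is not just a technicality. Take $K=1$ and $q_1$ vanishing somewhere off $\operatorname{supp}P_0$. Then $\pi_1q_1/Q=\pi_1q_1/(\pi_0+\pi_1q_1)$ passes from $1/2$ to $0$ over a distance that shrinks to $0$ as $\pi_0\to0$. With $f_1\neq f_0$ there, the H\"older norm of $F_w/Q$ on $\mathbb R^{d_x}\times\mathbb R^{d_z}$ blows up. Your $B_{\mathrm{mix}}$ would therefore depend on the weights, contrary to the statement.

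\textbf{The missing idea.} $\widetilde f_w$ is only determined for $p_w$-a.e.\ $z$, so you may redefine it off the support instead of trying to control $1/Q$ there. The paper takes a smooth $\chi:\mathbb R\to[0,1]$ with $\chi=0$ on $(-\infty,C_{\mathrm{DR}}^{-1}/2]$ and $\chi=1$ on $[C_{\mathrm{DR}}^{-1},\infty)$. It then sets
\[
\widetilde f_w^{\mathrm{ext}}=\frac{\chi(Q)}{Q}F_w+\bigl(1-\chi(Q)\bigr)f_0 .
\]
This construction has three properties:
\begin{itemize}
\item Wherever $p_0>0$ it equals $F_w/Q$, so it is a valid version of the conditional-density factor.
\item Pointwise it is a convex combination of $f_0,\ldots,f_K$ with coefficients $\chi(Q)\pi_kq_k/Q$ and $1-\chi(Q)$, so $C_f\le\widetilde f_w^{\mathrm{ext}}\le B$ still holds.
\item The map $u\mapsto\chi(u)/u$ vanishes near $0$ and is smooth with bounded derivatives depending only on $C_{\mathrm{DR}}$. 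Your planned composition (Fa\`a di Bruno) and product estimates then give a H\"older bound depending only on $\beta,d_x,d_z,B,L_q,C_{\mathrm{DR}}$, with no lower bound on $Q$ needed off the support.
\end{itemize}
With this replacement for the ``extend everywhere'' step, the rest of your argument goes through.
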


\begin{proof}
By definition,
\begin{equation}
    p_{w,0}(x,z)
    =
    \sum_{k=0}^K
    \pi_k p_k(x,z)
    =
    \sum_{k=0}^K
    \pi_k p_k(z)p_k(x| z).
\end{equation}
Integrating with respect to $x$ gives
\begin{equation}
    p_w(z)
    =
    \sum_{k=0}^K
    \pi_k p_k(z).
\end{equation}
Therefore, for every $z$ such that $p_w(z)>0$,
\begin{align}
    p_{w,0}(x| z)
    &=
    \frac{p_{w,0}(x,z)}{p_w(z)}
    \notag\\
    &=
    \sum_{k=0}^K
    \frac{\pi_k p_k(z)}{p_w(z)}
    p_k(x| z)
    \notag\\
    &=
    \sum_{k=0}^K
    \overline{\eta}_{k}(z)
    p_k(x| z).
    \label{eq_pw0_conditional_mixture_proof}
\end{align}

By Assumption~\ref{assump_smoothness_tail},
\[
    p_k(x| z)
    =
    \exp\left(
        -{C_x\|x\|_2^2}/{2}
    \right)
    f_k(x,z).
\]
Hence,
\begin{align}
    p_{w,0}(x| z)
    &=
    \exp\left(
        -{C_x\|x\|_2^2}/{2}
    \right)
    \sum_{k=0}^K
    \overline{\eta}_{k}(z)
    f_k(x,z)
    \notag\\
    &=
    \exp\left(
        -{C_x\|x\|_2^2}/{2}
    \right)
    \widetilde f_w(x,z),
\end{align}
which proves  (\ref{eq_mixture_conditional_density_structure}). A direct result of assumption (\ref{eq_assump_holder}) and definition in (\ref{eq_holder_norm}) is that $\sup_{x,z} |f_k (x,z) | \leq B$ for all $k$.  Since
$\overline{\eta}_{k}(z)\geq0$,
$\sum_{k=0}^K\overline{\eta}_{k}(z)=1$, and
$C_f\leq f_k(x,z)\leq B$, we have
\begin{equation}
    C_f
    \leq
    \widetilde f_w(x,z)
    \leq
    B,
\end{equation}
which proves the second and third statements of  (\ref{eq_tilde_fw_bounds}). 

We next establish the H\"older regularity of $\widetilde f_w (x,z)$ in (\ref{eq_tilde_fw_bounds}). Recall the definition of $q_k, k=0,\ldots, K$ in (\ref{eq_assump_smooth_ratio}). We define 
\begin{equation}
    q_w(z)
    :=
    \sum_{k=0}^K
    \pi_k q_k(z).
    \label{eq_qw_definition}
\end{equation}
Then, for Lebesgue-almost every $z$,
\begin{align}
    p_w(z)
    &=
    \sum_{k=0}^K
    \pi_kp_k(z)
    \notag\\
    &=
    p_0(z)
    \sum_{k=0}^K
    \pi_kq_k(z)
    \notag\\
    &=
    p_0(z)q_w(z).
    \label{eq_pw_p0_qw}
\end{align}
In particular, this identity remains meaningful even at points
where $p_0(z)=0$, since no density ratio is formed there. Further, define
\begin{equation}
    F_w(x,z)
    :=
    \sum_{k=0}^K
    \pi_k q_k(z)f_k(x,z).
    \label{eq_Fw_definition}
\end{equation}
Whenever $p_w(z)>0$, we necessarily have $p_0(z)>0$ by
(\ref{eq_pw_p0_qw}), and hence
\begin{align}
    \widetilde f_w(x,z)
    &=
    \sum_{k=0}^K
    \frac{\pi_kp_k(z)}{p_w(z)}
    f_k(x,z)
    \notag\\
    &=
    \sum_{k=0}^K
    \frac{\pi_kq_k(z)}{q_w(z)}
    f_k(x,z)
    \notag\\
    &=
    \frac{F_w(x,z)}{q_w(z)}.
    \label{eq_fw_Fw_over_qw}
\end{align}
We first control the H\"older regularity of $q_w$.
By the subadditivity of the H\"older norm, (\ref{eq_assump_smooth_ratio}) and
$\sum_{k=0}^K\pi_k=1$,
\begin{align}
    \|q_w\|_{\mathcal H^\beta(\mathbb R^{d_z})}
    &\leq
    \sum_{k=0}^K
    \pi_k
    \|q_k\|_{\mathcal H^\beta(\mathbb R^{d_z})}
    \notag\\
    &\leq
    L_q.
    \label{eq_qw_holder_bound}
\end{align}
We next derive a lower bound for $q_w$ on the support of
the target marginal distribution.
By the definitions in (\ref{eq_pw_z_marginal_pi_k}) and $p_{w\setminus0}(z)$ in Assumption \ref{assump_transf}, we have 
\begin{equation}
    p_w(z)
    =
    \pi_0p_0(z)
    +
    (1-\pi_0)p_{w\setminus0}(z).
\end{equation}
By the marginal overlap condition
(\ref{eq_assump_ratio}),
\begin{equation}
    p_{w\setminus0}(z)
    \geq
    C_{\mathrm{DR}}^{-1}p_0(z)
\end{equation}
for $p_0$-almost every $z$.
Consequently,
\begin{align}
    p_w(z)
    &\geq
    \left[
        \pi_0
        +
        \frac{1-\pi_0}{C_{\mathrm{DR}}}
    \right]
    p_0(z)
    \notag\\
    &\geq
    C_{\mathrm{DR}}^{-1}p_0(z).
    \label{eq_pw_lower_p0}
\end{align}
Combining
(\ref{eq_pw_p0_qw}) and
(\ref{eq_pw_lower_p0}) yields
\begin{equation}
    q_w(z)
    \geq C_{\mathrm{DR}}^{-1},
    \label{eq_qw_lower_bound_support}
\end{equation}
for $p_0$-almost every $z$.

We next control the H\"older regularity of $F_w$.
Regarding $q_k(z)$ as a function of $(x,z)$ that is constant
in $x$, the standard product inequality for H\"older spaces gives
\begin{equation}
    \|gh\|_{\mathcal H^\beta(\mathbb R^{d_x+d_z})}
    \leq
    C_{\beta,d_x,d_z}
    \|g\|_{\mathcal H^\beta(\mathbb R^{d_z})}
    \|h\|_{\mathcal H^\beta(\mathbb R^{d_x+d_z})},
    \label{eq_holder_product_inequality}
\end{equation}
where $C_{\beta,d_x,d_z} >0$ is a constant.  
Therefore, by the definition of $F_w$ in (\ref{eq_Fw_definition}),
\begin{align}
    \|F_w\|_{\mathcal H^\beta(\mathbb R^{d_x+d_z})}
    &\leq
    \sum_{k=0}^K
    \pi_k
    \|q_k f_k\|_{\mathcal H^\beta(
        \mathbb R^{d_x+d_z})}
    \notag\\
    &\leq
    C_{\beta,d_x,d_z}
    \sum_{k=0}^K
    \pi_k
    \|q_k\|_{\mathcal H^\beta(\mathbb R^{d_z})}
    \|f_k\|_{\mathcal H^\beta(
        \mathbb R^{d_x+d_z})}
    \notag\\
    &\leq
    C_{\beta,d_x,d_z}L_qB.
    \label{eq_Fw_holder_bound}
\end{align}
The representation in
(\ref{eq_fw_Fw_over_qw}) is only required on the support of
$p_w$. At points where $p_w(z)=0$, the conditional distribution
is not uniquely determined, and therefore it can be modified
without changing the joint distribution. Let
$\chi:\mathbb R\to[0,1]$ be a fixed infinitely differentiable
function satisfying
\begin{equation}
    \chi(u)
    =
    0
    \quad\text{for }u\leq C_{\mathrm{DR}}^{-1}/2,
    \qquad
    \chi(u)
    =
    1
    \quad\text{for }u\geq C_{\mathrm{DR}}^{-1}.
    \label{eq_cutoff_chi}
\end{equation}
Define
\begin{equation}
    h(u)
    :=
    \begin{cases}
        \chi(u)/u,
        & u>0,\\
        0,
        & u\leq0.
    \end{cases}
    \label{eq_cutoff_h}
\end{equation}
Since $\chi$ vanishes on a neighborhood of zero,
$h$ is infinitely differentiable with uniformly bounded
derivatives of every finite order. We then define, for every
$(x,z)\in\mathbb R^{d_x}\times\mathbb R^{d_z}$,
\begin{equation}
    \widetilde f_w^{\mathrm{ext}}(x,z)
    :=
    h(q_w(z))F_w(x,z)
    +
    \bigl[1-\chi(q_w(z))\bigr]f_0(x,z).
    \label{eq_fw_smooth_extension}
\end{equation}
For $p_w$-almost every $z$, we have $p_0(z)>0$ and,
by (\ref{eq_qw_lower_bound_support}),
$q_w(z)\geq C_{\mathrm{DR}}^{-1}$.
Hence
\[
    \chi(q_w(z))=1,
    \qquad
    h(q_w(z))=\frac{1}{q_w(z)}.
\]
Therefore,
\begin{align}
    \widetilde f_w^{\mathrm{ext}}(x,z)
    &=
    \frac{F_w(x,z)}{q_w(z)}
    \notag\\
    &=
    \widetilde f_w(x,z)
\end{align}
for $p_w$-almost every $z$.
Thus
$\widetilde f_w^{\mathrm{ext}}$
defines a valid version of the conditional-density factor. For simplicity, we henceforth denote this version again by
$\widetilde f_w$.

We next verify that the uniform lower and upper bounds are
preserved. When $q_w(z)>0$,
(\ref{eq_fw_smooth_extension}) can be rewritten as
\begin{align}
    \widetilde f_w^{\mathrm{ext}}(x,z)
    &=
    \sum_{k=0}^K
    \chi(q_w(z))
    \frac{\pi_kq_k(z)}{q_w(z)}
    f_k(x,z)
    \notag\\
    &\quad+
    \bigl[1-\chi(q_w(z))\bigr]f_0(x,z).
    \label{eq_fw_extension_convex_combination}
\end{align}
All coefficients in
(\ref{eq_fw_extension_convex_combination})
are nonnegative and their sum equals
\begin{align}
    \chi(q_w(z))
    \sum_{k=0}^K
    \frac{\pi_kq_k(z)}{q_w(z)}
    +
    1-\chi(q_w(z))
    =
    1.
\end{align}
Hence
$\widetilde f_w^{\mathrm{ext}}$
is a convex combination of
$f_0,\ldots,f_K$.
When $q_w(z)=0$,
(\ref{eq_fw_smooth_extension}) reduces to
\[
    \widetilde f_w^{\mathrm{ext}}(x,z)
    =
    f_0(x,z).
\]
Since
\[
    C_f
    \leq
    f_k(x,z)
    \leq
    B,
    \qquad
    k=0,\ldots,K,
\]
we conclude that
\begin{equation}
    C_f
    \leq
    \widetilde f_w^{\mathrm{ext}}(x,z)
    \leq
    B
\end{equation}
for all $(x,z)$.

It remains to establish the joint H\"older regularity.
By (\ref{eq_qw_holder_bound}),
$q_w\in\mathcal H^\beta(\mathbb R^{d_z},L_q)$.
Since $\chi$ and $h$ are fixed smooth functions with bounded
derivatives up to every finite order, the standard composition
inequality for H\"older spaces implies
\begin{align}
    \|\chi\circ q_w\|_{\mathcal H^\beta(\mathbb R^{d_z})}
    &\leq
    C_\chi,
    \label{eq_chi_qw_holder}
    \\
    \|h\circ q_w\|_{\mathcal H^\beta(\mathbb R^{d_z})}
    &\leq
    C_h,
    \label{eq_h_qw_holder}
\end{align}
where $C_\chi$ and $C_h$ depend only on
$\beta,d_z,L_q$, and $C_{\mathrm{DR}}$. 
Combining
(\ref{eq_Fw_holder_bound}),
(\ref{eq_chi_qw_holder}),
(\ref{eq_h_qw_holder}),
and the H\"older product inequality yields
\begin{align}
    \|
        \widetilde f_w^{\mathrm{ext}}
    \|_{\mathcal H^\beta(
        \mathbb R^{d_x+d_z})}
    &\leq
    C_{\beta,d_x,d_z}
    \Big(
        \|h\circ q_w\|_{\mathcal H^\beta}
        \|F_w\|_{\mathcal H^\beta} +
        \|1-\chi\circ q_w\|_{\mathcal H^\beta}
        \|f_0\|_{\mathcal H^\beta}
    \Big)
    \notag\\
    &\leq
    B_{\mathrm{mix}},
    \label{eq_fw_ext_holder_final}
\end{align}
where
    $B_{\mathrm{mix}}
    =
    C\left(
        \beta,
        d_x,
        d_z,
        B,
        L_q,
        C_{\mathrm{DR}}
    \right)
    <\infty.$
Importantly, $B_{\mathrm{mix}}$ does not depend on
$K$, the sample sizes, or the particular mixture weights.
Therefore,
\begin{equation}
    \widetilde f_w^{\mathrm{ext}}
    \in
    \mathcal H^\beta
    \left(
        \mathbb R^{d_x}\times\mathbb R^{d_z},
        B_{\mathrm{mix}}
    \right).
\end{equation}
Since
$\widetilde f_w^{\mathrm{ext}}$
coincides with the original conditional-density factor
$p_w$-almost everywhere, the first statement of (\ref{eq_tilde_fw_bounds}) is proven. 


The sub-Gaussian tail of $p_w(z)$ follows directly from
Assumption~\ref{assump_smoothness_tail}:
\begin{align}
    p_w(z)
    &=
    \sum_{k=0}^K
    \pi_k p_k(z)
    \notag\\
    &\lesssim
    \sum_{k=0}^K
    \pi_k
    \exp\left(
        -{C_z\|z\|_2^2}/{2}
    \right)
    \notag\\
    &\lesssim
    \exp\left(
        -{C_z\|z\|_2^2}/{2}
    \right),
\end{align}
which proves (\ref{eq_pw_z_subgaussian}). We next consider the forward diffusion. Since the diffusion process is
applied only to $X$ and leaves $Z$ unchanged,
\begin{equation}
    p_{k,t}(x,z)
    =
    p_k(z)p_{k,t}(x| z),
    \qquad
    k=0,\ldots,K.
\end{equation}
Consequently,
\begin{align}
    p_{w,t}(x,z)
    &=
    \sum_{k=0}^K
    \pi_k p_{k,t}(x,z)
    \notag\\
    &=
    \sum_{k=0}^K
    \pi_k p_k(z)p_{k,t}(x| z).
\end{align}
Since the marginal distribution of $Z$ is unchanged by the forward
diffusion,
\[
    \int_{\mathbb R^{d_x}}
    p_{w,t}(x,z)\,dx
    =
    p_w(z).
\]
Therefore,
\begin{align}
    p_{w,t}(x| z)
    &=
    \frac{p_{w,t}(x,z)}{p_w(z)}
    \notag\\
    &=
    \sum_{k=0}^K
    \frac{\pi_k p_k(z)}{p_w(z)}
    p_{k,t}(x| z)
    \notag\\
    &=
    \sum_{k=0}^K
    \overline{\eta}_{k}(z)
    p_{k,t}(x| z),
\end{align}
which proves (\ref{eq_pwt_conditional_mixture}). Recall that
\begin{equation}
    s_k^*(x,z,t)
    :=
    \nabla_x\log p_{k,t}(x| z).
\end{equation}
Since $\overline{\eta}_{k}(z)$ does not depend on $x$,
differentiating (\ref{eq_pwt_conditional_mixture}) with respect to
$x$ gives
\begin{align}
    \nabla_x p_{w,t}(x| z)
    &=
    \sum_{k=0}^K
    \overline{\eta}_{k}(z)
    \nabla_x p_{k,t}(x| z)
    \notag\\
    &=
    \sum_{k=0}^K
    \overline{\eta}_{k}(z)
    p_{k,t}(x| z)
    s_k^*(x,z,t).
\end{align}
Therefore,
\begin{align}
    \nabla_x\log p_{w,t}(x| z)
    &=
    \frac{
        \nabla_x p_{w,t}(x| z)
    }{
        p_{w,t}(x| z)
    }
    \notag\\
    &=
    \frac{
        \sum_{k=0}^K
        \overline{\eta}_{k}(z)
        \nabla_x p_{k,t}(x| z)
    }{
        p_{w,t}(x| z)
    }
    \notag\\
    &=
    \frac{
        \sum_{k=0}^K
        \overline{\eta}_{k}(z)
        p_{k,t}(x| z)
        s_k^*(x,z,t)
    }{
        p_{w,t}(x| z)
    }
    \notag\\
    &=
    \sum_{k=0}^K
    \frac{
        \overline{\eta}_{k}(z)
        p_{k,t}(x| z)
    }{
        p_{w,t}(x| z)
    }
    s_k^*(x,z,t)
    \notag\\
    &=
    \sum_{k=0}^K
    \eta_{k,w,t}(x,z)
    s_k^*(x,z,t),
    \label{eq_pwt_score_mixture}
\end{align}
where
\begin{align}
    \eta_{k,w,t}(x,z)
    &:=
    \frac{
        \overline{\eta}_{k}(z)p_{k,t}(x| z)
    }{
        p_{w,t}(x| z)
    }
    \notag\\
    &=
    \frac{
        \pi_k p_k(z)p_{k,t}(x| z)
    }{
        p_w(z)p_{w,t}(x| z)
    }
    \notag\\
    &=
    \frac{
        \pi_k p_{k,t}(x,z)
    }{
        p_{w,t}(x,z)
    }
    \notag\\
    &=
    \eta_k(x,z,t).
    \label{eq_eta_joint_form}
\end{align}
It follows from
(\ref{eq_optimal_weighted_score_pointwise}) that
\begin{equation}
    \nabla_x\log p_{w,t}(x| z)
    =
    \sum_{k=0}^K
    \eta_k(x,z,t)s_k^*(x,z,t)
    =
    s_w^o(x,z,t),
\end{equation}
which proves (\ref{eq_sw_conditional_score}). 

\end{proof}

\subsection{Bounding the transfer bias
$\|s_w^o-s_0^*\|_{\mathcal H_0}^2$}
\label{app_transfer_bias}

\begin{thm}[The bound of transfer bias]
    Under Assumption \ref{assump_transf} and \ref{assump_smoothness_tail}, the transfer bias $\|s_w^o-s_0^*\|_{\mathcal H_0}^2$ can be bounded as
    \begin{equation}
        \|s_w^o-s_0^*\|_{\mathcal H_0}^2
        \lesssim
        \frac{W_N^2}{(n_0+W_N)^2}
        \left[
            \overline w^\top
            G
            \overline w
            +
            \operatorname{diag}(G)^\top
            \overline w
        \right].
    \end{equation}
\end{thm}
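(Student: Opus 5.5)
The plan is to use the explicit form of $s_w^o$ from (\ref{eq_optimal_weighted_score_pointwise}) together with $\sum_{k=0}^K\eta_k=1$. Writing $d_k:=s_0^*-s_k^*$ for $k=1,\ldots,K$, this gives pointwise
\[
s_w^o-s_0^*=\sum_{k=0}^K\eta_k\,(s_k^*-s_0^*)=-\sum_{k=1}^K\eta_k\,d_k .
\]
So we must bound $\frac{1}{T-t_0}\int_{t_0}^T\mathbb E_{p_{0,t}}\|\sum_{k\ge1}\eta_k d_k\|_2^2\,dt$. I would factor $\eta_k=\pi_k r_k$ with $r_k:=p_{k,t}(x,z)/p_{w,t}(x,z)$, and then split
\[
\sum_{k\ge1}\eta_kd_k=\sum_{k\ge1}\pi_kd_k+\sum_{k\ge1}\pi_k(r_k-1)d_k,
\]
using $\|a+b\|^2\le 2\|a\|^2+2\|b\|^2$.

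\textbf{First piece.} Integrating $\|\sum_{k\ge1}\pi_kd_k\|^2$ under $p_{0,t}$ and over $t$ gives exactly $\boldsymbol\pi_S^\top G\boldsymbol\pi_S$. Since $\pi_k=\frac{W_N}{n_0+W_N}\overline w_k$, this equals $\frac{W_N^2}{(n_0+W_N)^2}\overline w^\top G\overline w$.

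\textbf{Second piece.} By weighted Cauchy--Schwarz,
\[
\Big\|\sum_{k\ge1}\pi_k(r_k-1)d_k\Big\|^2\le\Big(\sum_{k\ge1}\pi_k|r_k-1|\Big)\Big(\sum_{k\ge1}\pi_k|r_k-1|\,\|d_k\|^2\Big).
\]
If $\sup r_k\le C$ uniformly in $(x,z,t)$ and $k$, this is at most $(1+C)^2(1-\pi_0)\sum_{k\ge1}\pi_k\|d_k\|^2$. Integrating gives $(1-\pi_0)\sum_k\pi_kG_{kk}=\frac{W_N^2}{(n_0+W_N)^2}\operatorname{diag}(G)^\top\overline w$, because $1-\pi_0=W_N/(n_0+W_N)$. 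Adding the two pieces yields the claim, with constants absorbed into $\lesssim$.

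\textbf{Main obstacle: the uniform bound $r_k\le C$ at every diffusion time.} I would write $p_{j,t}(x,z)=p_j(z)\,p_{j,t}(x| z)$ and bound the two factors separately.

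For the $x$-part, Assumption~\ref{assump_smoothness_tail} gives
\[
C_f e^{-C_x\|x\|^2/2}\le p_j(x| z)\le Be^{-C_x\|x\|^2/2}
\]
for all $j$. Hence $p_{j,0}(\cdot| z)\ge (C_f/B)\,p_{k,0}(\cdot| z)$ pointwise. The forward process acts on each conditional density by the same positive Gaussian kernel (the OU transition $\mathcal N(a_tx_0,\sigma_t^2I)$), and a pointwise inequality between nonnegative functions is preserved by such a linear positive map. Therefore $p_{j,t}(x| z)\ge (C_f/B)\,p_{k,t}(x| z)$ for all $t$.

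For the $z$-part, (\ref{eq_pw_lower_p0}) gives $p_w(z)\ge C_{\mathrm{DR}}^{-1}p_0(z)$, and (\ref{eq_assump_ratio}) gives $p_k(z)\le C_{\mathrm{dr}}^{-1}p_0(z)$.

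Combining these,
\[
p_{w,t}(x,z)\ge\frac{C_f}{B}\,p_{k,t}(x| z)\,p_w(z)\ge\frac{C_f}{BC_{\mathrm{DR}}}\,p_0(z)\,p_{k,t}(x| z),
\]
so $r_k\le \frac{BC_{\mathrm{DR}}}{C_fC_{\mathrm{dr}}}$ $p_0$-a.e. This holds uniformly in $t$, $k$ and $w$, which is what the second piece requires.
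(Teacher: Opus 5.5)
Your proof is correct and follows the paper's argument. You split the source weights into a constant part, which yields $\overline w^\top G\overline w$ times $W_N^2/(n_0+W_N)^2$, plus a deviation-from-one part, which you control by Jensen/Cauchy--Schwarz using uniform density-ratio bounds from Assumption~\ref{assump_smoothness_tail} propagated through the OU kernel and the two overlap constants of Assumption~\ref{assump_transf}. The only difference is that you normalize by the mixture density ($r_k=p_{k,t}/p_{w,t}$) rather than by the target density ($R_{k,t}=p_{k,t}/p_{0,t}$). As a result, the prefactor $W_N/(n_0+W_N)$ comes directly from $\pi_k$, and the paper's separate lower bound on the denominator $n_0+W_N\sum_k\overline w_kR_{k,t}$ is absorbed into your uniform bound $r_k\le BC_{\mathrm{DR}}/(C_fC_{\mathrm{dr}})$.
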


\begin{proof}
Recall that in Assumption \ref{assump_transf} and Theorem \ref{thm_score_risk_bound}, we defined
\[
    W_N
    :=
    \sum_{k=1}^K w_k n_k,
    \qquad
    \overline w_k
    :=
    \frac{w_k n_k}{W_N},
    \qquad
    \overline w
    :=
    (\overline w_1,\ldots,\overline w_K)^\top,
\]
so that
\[
    \overline w_k\geq 0,
    \qquad
    \sum_{k=1}^K \overline w_k=1.
\]
Define
\begin{equation}
    \Delta_{k,t}(x_t,z)
    :=
    s_k^*(x_t,z,t)
    -
    s_0^*(x_t,z,t),
    \qquad
    k=1,\ldots,K.
    \label{eq_delta_k_t}
\end{equation}
According to the expression of the population-optimal weighted score
$s_w^o$ in~(\ref{eq_optimal_weighted_score_pointwise}), we have
\begin{align}
    s_w^o
    &=
    \frac{
        n_0 p_{0,t}s_0^*
        +
        \sum_{k=1}^K
        w_k n_k p_{k,t}s_k^*
    }{
        n_0 p_{0,t}
        +
        \sum_{k=1}^K
        w_k n_k p_{k,t}
    }.
    \label{eq_swo_transfer}
\end{align}
For $p_{0,t}(x_t,z)>0$, define the joint density ratio
\begin{equation}
    R_{k,t}(x_t,z)
    :=
    \frac{
        p_{k,t}(x_t,z)
    }{
        p_{0,t}(x_t,z)
    }
    =
    \frac{p_k(z)}{p_0(z)}
    \frac{
        p_{k,t}(x_t| z)
    }{
        p_{0,t}(x_t| z)
    }.
    \label{eq_joint_density_ratio_transfer}
\end{equation}
Then,
\begin{equation}
    s_w^o-s_0^*
    =
    \frac{
        W_N
        \sum_{k=1}^K
        \overline w_k
        R_{k,t}\Delta_{k,t}
    }{
        n_0
        +
        W_N
        \sum_{k=1}^K
        \overline w_kR_{k,t}
    }.
    \label{eq_swo_minus_s0_transfer}
\end{equation}

We first derive a uniform lower bound for the denominator.
By Assumption~\ref{assump_smoothness_tail} and Lemma \ref{thm_mixture_distribution_properties},
\[
    p_k(x| z)
    =
    \exp\left(
        -{C_x\|x\|_2^2}/{2}
    \right)
    f_k(x,z),
\]
where
$C_f\leq f_k(x,z)\leq B$.
Therefore,
\begin{equation}
    \frac{C_f}{B}
    p_0(x| z)
    \leq
    p_k(x| z)
    \leq
    \frac{B}{C_f}
    p_0(x| z).
    \label{eq_conditional_ratio_bound_transfer}
\end{equation}
Since the forward diffusion is applied only to $X$, integrating
(\ref{eq_conditional_ratio_bound_transfer}) against the nonnegative
transition kernel preserves the inequalities. Hence, for every
$t\in[t_0,T]$,
\begin{gather} 
    \textcolor{white}{ i.e } \frac{C_f}{B}
    p_{0,t}(x_t| z)
    \leq
    p_{k,t}(x_t| z)
    \leq
    \frac{B}{C_f}
    p_{0,t}(x_t| z), \text{ i.e. } \notag \\
    \textcolor{white}{ i.e } \frac{C_f}{B}
    \leq
    \frac{
        p_{k,t}(x_t| z)
    }{
        p_{0,t}(x_t| z)
    }
    \leq
    \frac{B}{C_f}. \textcolor{white}{ i.e.e } 
    \label{eq_diffused_conditional_ratio_bound}
\end{gather}

Using (\ref{eq_joint_density_ratio_transfer}) and
(\ref{eq_diffused_conditional_ratio_bound}), we obtain
\begin{align}
    \sum_{k=1}^K
    \overline w_kR_{k,t}(x_t,z)
    &=
    \sum_{k=1}^K
    \overline w_k
    \frac{p_k(z)}{p_0(z)}
    \frac{
        p_{k,t}(x_t| z)
    }{
        p_{0,t}(x_t| z)
    }
    \notag\\
    &\geq
    \frac{C_f}{B}
    \sum_{k=1}^K
    \overline w_k
    \frac{p_k(z)}{p_0(z)}
    \notag\\
    &=
    \frac{C_f}{B}
    \frac{
        p_{w\setminus0}(z)
    }{
        p_0(z)
    }
    \notag\\
    &\geq
    \frac{C_f}{B C_{\mathrm{DR}}},
    \label{eq_weighted_joint_ratio_lower}
\end{align}
where the last inequality follows from
Assumption~\ref{assump_transf}.
Consequently,
\begin{align}
    n_0
    +
    W_N
    \sum_{k=1}^K
    \overline w_kR_{k,t}
    &\geq
    n_0
    +
    \frac{C_f}{B C_{\mathrm{DR}}} W_N
    \notag\\
    &\gtrsim
    n_0+W_N,
    \label{eq_transfer_denominator_lower}
\end{align}
where the hidden constant depends only on
$C_{\mathrm{DR}},C_f,$ and $B$.
It follows from (\ref{eq_swo_minus_s0_transfer}) that
\begin{align}
    \|s_w^o-s_0^*\|_{\mathcal H_0}^2
    &\lesssim
    \frac{W_N^2}{(n_0+W_N)^2}
    \left\|
        \sum_{k=1}^K
        \overline w_k
        R_{k,t}\Delta_{k,t}
    \right\|_{\mathcal H_0}^2.
    \label{eq_transfer_bias_first_bound}
\end{align}
We next bound the term involving the joint density ratios.
By (\ref{eq_assump_ratio}), (\ref{eq_joint_density_ratio_transfer}), and
(\ref{eq_diffused_conditional_ratio_bound}),
\begin{equation}
    0
    \leq
    R_{k,t}(x_t,z)
    \leq
    \frac{B}{C_f C_{\mathrm{dr}}},
\end{equation}
uniformly over $k,x_t,z,$ and $t$. Since all constants on the
right-hand side are independent of the sample sizes and the weights, and $B/C_f C_{\mathrm{dr}}$ is larger than 1,
we have
\begin{equation}
    |R_{k,t}(x_t,z)-1|
    \lesssim 1.
    \label{eq_R_minus_one_bound}
\end{equation}
Now decompose
\begin{align}
    \sum_{k=1}^K
    \overline w_kR_{k,t}\Delta_{k,t}
    &=
    \sum_{k=1}^K
    \overline w_k\Delta_{k,t}
    +
    \sum_{k=1}^K
    \overline w_k
    (R_{k,t}-1)\Delta_{k,t}.
    \label{eq_transfer_decomposition}
\end{align}
Using
$\|a+b\|_2^2\leq2\|a\|_2^2+2\|b\|_2^2$,
we obtain
\begin{align}
    \left\|
        \sum_{k=1}^K
        \overline w_kR_{k,t}\Delta_{k,t}
    \right\|_{\mathcal H_0}^2
    \lesssim
    \left\|
        \sum_{k=1}^K
        \overline w_k\Delta_{k,t}
    \right\|_{\mathcal H_0}^2 +
    \left\|
        \sum_{k=1}^K
        \overline w_k
        (R_{k,t}-1)\Delta_{k,t}
    \right\|_{\mathcal H_0}^2.
    \label{eq_transfer_decomposition_bound}
\end{align}
By the definition of the source discrepancy matrix $G$ in
(\ref{eq_G_def}),
\begin{equation}
    \left\|
        \sum_{k=1}^K
        \overline w_k\Delta_{k,t}
    \right\|_{\mathcal H_0}^2
    =
    \overline w^\top
    G
    \overline w.
    \label{eq_transfer_quadratic_G}
\end{equation}
For the second term, since
$\overline w_k\geq0$ and
$\sum_{k=1}^K\overline w_k=1$, Jensen's inequality gives, pointwise,
\begin{align}
    \left\|
        \sum_{k=1}^K
        \overline w_k
        (R_{k,t}-1)\Delta_{k,t}
    \right\|_2^2
    &\leq
    \sum_{k=1}^K
    \overline w_k
    (R_{k,t}-1)^2
    \|\Delta_{k,t}\|_2^2
    \notag\\
    &\lesssim
    \sum_{k=1}^K
    \overline w_k
    \|\Delta_{k,t}\|_2^2,
    \label{eq_transfer_jensen_diag}
\end{align}
where the last inequality follows from
(\ref{eq_R_minus_one_bound}).
Taking expectation under $p_{0,t}$ and integrating over
$t\in[t_0,T]$ yield
\begin{align}
    \left\|
        \sum_{k=1}^K
        \overline w_k
        (R_{k,t}-1)\Delta_{k,t}
    \right\|_{\mathcal H_0}^2
    &\lesssim
    \sum_{k=1}^K
    \overline w_k
    \|\Delta_{k,t}\|_{\mathcal H_0}^2
    \notag\\
    &=
    \sum_{k=1}^K
    \overline w_kG_{kk}
    \notag\\
    &=
    \operatorname{diag}(G)^\top
    \overline w.
    \label{eq_transfer_diag_G}
\end{align}
Combining
(\ref{eq_transfer_decomposition_bound}),
(\ref{eq_transfer_quadratic_G}), and
(\ref{eq_transfer_diag_G}), we obtain
\begin{equation}
    \left\|
        \sum_{k=1}^K
        \overline w_kR_{k,t}\Delta_{k,t}
    \right\|_{\mathcal H_0}^2
    \lesssim
    \overline w^\top
    G
    \overline w
    +
    \operatorname{diag}(G)^\top
    \overline w.
    \label{eq_weighted_ratio_discrepancy_bound}
\end{equation}
Substituting
(\ref{eq_weighted_ratio_discrepancy_bound}) into
(\ref{eq_transfer_bias_first_bound}) gives
\begin{equation}
\boxed{
    \|s_w^o-s_0^*\|_{\mathcal H_0}^2
    \lesssim
    \frac{W_N^2}{(n_0+W_N)^2}
    \left[
        \overline w^\top
        G
        \overline w
        +
        \operatorname{diag}(G)^\top
        \overline w
    \right].
}
\label{eq_final_transfer_bias_general}
\end{equation}
Here, the hidden constant depends only on
$C_{\mathrm{dr}},C_{\mathrm{DR}},C_f,$ and $B$, and is independent of
$n_0$, $W_N$, and $\overline w$.

\end{proof}

\subsection{Bounding the estimation error $\mathbb{E}_{D_0,\ldots,D_K}
\big[
\|\widehat{s}_w-s_w^o\|_{\mathcal{H}_0}^2
\big]$}
\label{app_bound_esti_error}

\subsubsection{Turn $\mathcal{H}_0$ to $\mathcal{H}_w$}
\label{sec_turn_H0_to_Hw}
In this section, we relate
$\mathbb{E}_{D_0,\ldots,D_K}
\big[
\|\widehat{s}_w-s_w^o\|_{\mathcal{H}_0}^2
\big]$
to the corresponding error measured under $\mathcal{H}_w$.
\begin{lem}[Change of measure from $\mathcal H_0$ to $\mathcal H_w$]
    Under Assumption \ref{assump_transf} and \ref{assump_smoothness_tail}, we have 
    \begin{equation}
        \mathbb{E}_{D_0,\ldots,D_K}
    \left[
        \|\widehat{s}_w-s_w^o\|_{\mathcal{H}_0}^2
    \right]
    \lesssim
    \mathbb{E}_{D_0,\ldots,D_K}
    \left[
        \|\widehat{s}_w-s_w^o\|_{\mathcal{H}_w}^2
    \right].
    \end{equation}
\end{lem}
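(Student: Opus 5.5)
The plan is a pointwise density-ratio comparison between $p_{0,t}$ and $p_{w,t}$. The claim follows once we show that $p_{0,t}(x_t,z)\lesssim p_{w,t}(x_t,z)$ uniformly in $(x_t,z,t)$, with a constant independent of the sample sizes and weights. Indeed, for any fixed realization of $D_0,\ldots,D_K$ the integrand $\|\widehat s_w-s_w^o\|_2^2$ is nonnegative. So such a uniform ratio bound gives $\|\widehat s_w-s_w^o\|_{\mathcal H_0}^2\lesssim\|\widehat s_w-s_w^o\|_{\mathcal H_w}^2$ pathwise, by integrating over $(x_t,z)$ and then over $t\in[t_0,T]$ with the common factor $1/(T-t_0)$. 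Taking $\mathbb E_{D_0,\ldots,D_K}$ then preserves the inequality.

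To get the density bound, I will write $p_{w,t}(x_t,z)=\pi_0p_{0,t}(x_t,z)+\sum_{k\ge1}\pi_k p_{k,t}(x_t,z)$. Each term factorizes as $p_k(z)p_{k,t}(x_t|z)$. By the diffused conditional ratio bound (\ref{eq_diffused_conditional_ratio_bound}), established in the transfer-bias proof from Assumption~\ref{assump_smoothness_tail}, we have $p_{k,t}(x_t|z)\ge (C_f/B)\,p_{0,t}(x_t|z)$ for all $t$. Hence
\[
p_{w,t}(x_t,z)\ \ge\ \frac{C_f}{B}\,p_{0,t}(x_t|z)\Big[\pi_0p_0(z)+(1-\pi_0)p_{w\setminus0}(z)\Big]=\frac{C_f}{B}\,p_{0,t}(x_t|z)\,p_w(z).
\]
Next I invoke (\ref{eq_pw_lower_p0}) from Lemma~\ref{thm_mixture_distribution_properties}, namely $p_w(z)\ge C_{\mathrm{DR}}^{-1}p_0(z)$ $p_0$-a.e. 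This uses the upper half of (\ref{eq_assump_ratio}) in Assumption~\ref{assump_transf} together with $\pi_0+(1-\pi_0)/C_{\mathrm{DR}}\ge C_{\mathrm{DR}}^{-1}$, which holds because $C_{\mathrm{DR}}\ge1$. Combining the two bounds gives $p_{w,t}(x_t,z)\ge \frac{C_f}{BC_{\mathrm{DR}}}p_{0,t}(x_t,z)$ for $p_{0,t}$-a.e. $(x_t,z)$. Equivalently, $dP_{0,t}/dP_{w,t}\le BC_{\mathrm{DR}}/C_f$. Off the support of $p_0$ the $\mathcal H_0$ integrand carries zero mass, so those points do not matter.

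The only delicate point is uniformity. The constant must not depend on $w$, $n_0$, $W_N$, or $t$, since otherwise the hidden constant in $\lesssim$ would grow along the asymptotic regime. This is why I will use the convex-combination form $\pi_0+(1-\pi_0)C_{\mathrm{DR}}^{-1}$ rather than dropping the source part and using $\pi_0$ alone. The latter could vanish when $W_N\gg n_0$.

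A second subtlety is that $\widehat s_w$ is data-dependent. This causes no difficulty, because the bound is a deterministic inequality between measures applied to an arbitrary measurable nonnegative function. Finally, integrability is not an issue: both sides may be $+\infty$ in principle, but $\widehat s_w\in\mathcal F$ is bounded by $M_t$ and $s_w^o$ is square-integrable under $p_{w,t}$, so the inequality is between finite quantities whenever the right side is finite.
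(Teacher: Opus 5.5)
Your proposal is correct and follows essentially the paper's route: you combine the diffused conditional bound $p_{k,t}(x_t|z)\ge (C_f/B)\,p_{0,t}(x_t|z)$ with the overlap condition $p_0\le C_{\mathrm{DR}}\,p_{w\setminus0}$ to get $p_{0,t}/p_{w,t}\le C_{\mathrm{DR}}B/C_f$ uniformly in the weights, and then apply a pathwise change of measure. The only cosmetic difference is the order of steps: you apply the conditional bound to every component (including $k=0$, using $C_f\le B$) and then invoke $p_w(z)\ge C_{\mathrm{DR}}^{-1}p_0(z)$, whereas the paper first bounds the source mixture and then uses $C_f/(C_{\mathrm{DR}}B)\le 1$ to absorb the target weight $\pi_0$.
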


\begin{proof}    
First, Assumption~\ref{assump_smoothness_tail} implies a uniform overlap
between the conditional distributions across domains. In particular,
since
\[
    p_k(x| z)
    =
    \exp\left(
        -{C_x\|x\|_2^2}/{2}
    \right)
    f_k(x,z)
\]
and $C_f \leq f_k(x,z) \leq B$, we have
\begin{equation}
    p_k(x| z)
    \geq
    \frac{C_f}{B}
    p_0(x| z),
    \qquad
    k=1,\ldots,K.
    \label{eq_conditional_density_overlap}
\end{equation}
Since the forward diffusion is applied only to $X$, integrating
(\ref{eq_conditional_density_overlap}) against the transition kernel
preserves the inequality. Hence, for every $t\in[t_0,T]$,
\begin{equation}
    p_{k,t}(x_t| z)
    \geq
    \frac{C_f}{B}
    p_{0,t}(x_t| z).
    \label{eq_conditional_diffused_overlap}
\end{equation}
Recall that in Assumption \ref{assump_transf}, we defined
\[
    p_{w\setminus 0}(z)
    =
    \sum_{k=1}^K
    \frac{w_k n_k}{W_N}p_k(z).
\]
Combining (\ref{eq_conditional_diffused_overlap}) with
Assumption~\ref{assump_transf}, we obtain
\begin{align}
    \sum_{k=1}^K
    \frac{w_k n_k}{W_N}
    p_{k,t}(x_t,z)
    &=
    \sum_{k=1}^K
    \frac{w_k n_k}{W_N}
    p_k(z)p_{k,t}(x_t| z)
    \notag\\
    &\geq
    \frac{C_f}{B}
    p_{0,t}(x_t| z)
    \sum_{k=1}^K
    \frac{w_k n_k}{W_N}p_k(z)
    \notag\\
    &=
    \frac{C_f}{B}
    p_{0,t}(x_t| z)
    p_{w\setminus 0}(z)
    \notag\\
    &\geq
    \frac{C_f}{B C_{\mathrm{DR}}}
    p_{0,t}(x_t,z).
    \label{eq_source_mixture_overlap}
\end{align}
Recall that
\[
    p_{w,t}(x_t,z)
    =
    \frac{
        n_0p_{0,t}(x_t,z)
        +
        \sum_{k=1}^K w_k n_k p_{k,t}(x_t,z)
    }{
        n_0+W_N
    }.
\]
Using (\ref{eq_source_mixture_overlap}), we have
\begin{align}
    p_{w,t}(x_t,z)
    &\geq
    \frac{
        n_0
        +
        W_N \cdot C_f/C_{\mathrm{DR}} B
    }{
        n_0+W_N
    }
    p_{0,t}(x_t,z)
    \notag\\
    &\geq
    \frac{C_f}{C_{\mathrm{DR}}B}
    p_{0,t}(x_t,z),
    \label{eq_pw_p0_lower_bound}
\end{align}
where the last inequality follows from
$C_f/C_{\mathrm{DR}}B \leq 1$.
Therefore,
\begin{equation}
    \frac{p_{0,t}(x_t,z)}
         {p_{w,t}(x_t,z)}
    \leq
    \frac{C_{\mathrm{DR}}B}{C_f}.
    \label{eq_p0_pw_upper_bound}
\end{equation}

For any measurable function $g$, by the definitions of
$\mathcal{H}_0$ and $\mathcal{H}_w$, we have
\begin{align}
    \|g\|_{\mathcal{H}_0}^2
    &=
    \frac{1}{T-t_0}
    \int_{t_0}^T
    \int
    \|g(x_t,z,t)\|_2^2
    p_{0,t}(x_t,z)
    \,dx_t\,dz\,dt
    \notag\\
    &=
    \frac{1}{T-t_0}
    \int_{t_0}^T
    \int
    \|g(x_t,z,t)\|_2^2
    \frac{p_{0,t}(x_t,z)}
         {p_{w,t}(x_t,z)}
    p_{w,t}(x_t,z)
    \,dx_t\,dz\,dt
    \notag\\
    &\leq
    \frac{C_{\mathrm{DR}}B}{C_f}
    \|g\|_{\mathcal{H}_w}^2.
    \label{eq_H0_Hw_norm_conversion}
\end{align}

Applying (\ref{eq_H0_Hw_norm_conversion}) to
$g=\widehat{s}_w-s_w^o$ and taking expectation with respect to
$D_0,\ldots,D_K$ yield
\begin{equation}
\boxed{
    \mathbb{E}_{D_0,\ldots,D_K}
    \left[
        \|\widehat{s}_w-s_w^o\|_{\mathcal{H}_0}^2
    \right]
    \leq
    \frac{C_{\mathrm{DR}}B}{C_f}
    \mathbb{E}_{D_0,\ldots,D_K}
    \left[
        \|\widehat{s}_w-s_w^o\|_{\mathcal{H}_w}^2
    \right]
    \lesssim
    \mathbb{E}_{D_0,\ldots,D_K}
    \left[
        \|\widehat{s}_w-s_w^o\|_{\mathcal{H}_w}^2
    \right].
}
\label{eq_H0_Hw_estimation_error}
\end{equation}

Hence, any convergence rate established for the estimation error under
$\mathcal{H}_w$ is preserved under $\mathcal{H}_0$.

\end{proof}

\subsubsection{Decomposition of the expected excess risk $\mathbb{E}_{D_0,\ldots,D_K} \bigl[ \overline{\mathcal{E}}_w(\widehat{s}_w)\bigr]$}
Recall that in (\ref{eq_excessive_risk}), the excess risk is defined as
\begin{equation*}
    \overline{\mathcal{E}}_w(s)
    =
    \overline{\mathcal{R}}_w(s)
    -
    \overline{\mathcal{R}}_w(s_w^o).
\end{equation*}
By the definition of $\overline{\mathcal{R}}_w(s)$ defined above (\ref{eq_rk_ew_eta}), the relationship between ${\mathcal{R}}_k(s)$ and ${\mathcal{L}}_k(s)$ in (\ref{eq_score_matching_risk_decomposition}), and the definition of ${\mathcal{L}}_k(s)$ (\ref{eq_population_domain_loss}), we have:
\begin{align}
    \overline{\mathcal{E}}_w(s)
    & =
    \sum_{k=0}^K \pi_k \left[\mathcal{R}_{k} (s) - \mathcal{R}_{k} (s_w^o)\right] \notag \\
    & =\sum_{k=0}^K \pi_k \left[\mathcal{L}_{k} (s) - \mathcal{L}_{k} (s_w^o)\right] \notag \\
    & = \sum_{k=0}^K \pi_k \mathbb{E}_{(x,z)\sim P_k}\big[\ell(x,z;s) - \ell(x,z;s_w^o)\big] \notag \\
    & = \sum_{k=0}^K \pi_k \mathbb{E}_{D_k}\left[\frac{1}{n_k} \sum_{i=1}^{n_k} \ell(x_{k,i},z_{k,i};s) - \frac{1}{n_k} \sum_{i=1}^{n_k}\ell(x_{k,i},z_{k,i};s_w^o)\right]. \label{eq_excess_risk_decomposed}
\end{align}
We introduce an independent ghost sample for each domain. Specifically, let
\begin{equation}
    D_k'
    :=
    \left\{
        (x_{k,i}',z_{k,i}')
    \right\}_{i=1}^{n_k},
    \qquad
    (x_{k,i}',z_{k,i}')
    \overset{\mathrm{i.i.d.}}{\sim} P_k,
    \qquad
    k=0,\ldots,K,
\end{equation}
where $D_k'$ is independent of $D_k$, and the collection
$D'=(D_0',\ldots,D_K')$ is independent of the training data
$D=(D_0,\ldots,D_K)$.

For any $s$, define the excess loss
\begin{equation}
    h_s(x,z)
    :=
    \ell(x,z;s)
    -
    \ell(x,z;s_w^o).
    \label{eq_definition_hs}
\end{equation}
Then, conditional on the training data $D$, $\widehat{s}_w$ is fixed, and
\begin{align}
    \overline{\mathcal{E}}_w(\widehat{s}_w)
    &=
    \sum_{k=0}^K
    \pi_k
    \mathbb{E}_{(X,Z)\sim P_k}
    \left[
        h_{\widehat{s}_w}(X,Z)
    \right]
    \notag\\
    &=
    \mathbb{E}_{D'}
    \left[
        \sum_{k=0}^K
        \pi_k
        \frac{1}{n_k}
        \sum_{i=1}^{n_k}
        h_{\widehat{s}_w}
        (x_{k,i}',z_{k,i}')
        \,\middle|\, D
    \right].
    \label{eq_excess_risk_ghost_sample}
\end{align}
Consequently,
\begin{equation}
    \mathbb{E}_{D}
    \left[
        \overline{\mathcal{E}}_w(\widehat{s}_w)
    \right]
    =
    \mathbb{E}_{D}
    \mathbb{E}_{D'}
    \left[
        \sum_{k=0}^K
        \pi_k
        \frac{1}{n_k}
        \sum_{i=1}^{n_k}
        h_{\widehat{s}_w}
        (x_{k,i}',z_{k,i}')
    \right].
    \label{eq_expected_excess_risk_ghost}
\end{equation}

To control the contribution from the tail region, for some truncation
level $R>0$, define the truncated loss as:
\begin{equation}
    \ell^{\mathrm{tr}}(x,z;s)
    :=
    \ell(x,z;s)
    \mathbf{1}
    \left\{
        \|x\|_{\infty}\leq R, \  \|z\|_{\infty}\leq R
    \right\},
    \label{eq_truncated_loss}
\end{equation}
and the corresponding truncated excess loss as
\begin{align}
    h_s^{\mathrm{tr}}(x,z)
    &:=
    \ell^{\mathrm{tr}}(x,z;s)
    -
    \ell^{\mathrm{tr}}(x,z;s_w^o)
    \notag\\
    &=
    h_s(x,z)
    \mathbf{1}
    \left\{
        \|x\|_{\infty}\leq R , \  \|z\|_{\infty}\leq R
    \right\}.
    \label{eq_truncated_excess_loss}
\end{align}

We next define the empirical excess losses evaluated on the training
samples and the ghost samples, respectively. Let:
\begin{equation}
    H_1
    :=
    \sum_{k=0}^K
    \pi_k
    \frac{1}{n_k}
    \sum_{i=1}^{n_k}
    h_{\widehat{s}_w}
    (x_{k,i},z_{k,i}),
    \label{eq_H1}
\end{equation}
and
\begin{equation}
    H_1^{\mathrm{tr}}
    :=
    \sum_{k=0}^K
    \pi_k
    \frac{1}{n_k}
    \sum_{i=1}^{n_k}
    h_{\widehat{s}_w}^{\mathrm{tr}}
    (x_{k,i},z_{k,i}).
    \label{eq_H1_tr}
\end{equation}
Similarly, define
\begin{equation}
    H_2
    :=
    \sum_{k=0}^K
    \pi_k
    \frac{1}{n_k}
    \sum_{i=1}^{n_k}
    h_{\widehat{s}_w}
    (x_{k,i}',z_{k,i}'),
    \label{eq_H2}
\end{equation}
and
\begin{equation}
    H_2^{\mathrm{tr}}
    :=
    \sum_{k=0}^K
    \pi_k
    \frac{1}{n_k}
    \sum_{i=1}^{n_k}
    h_{\widehat{s}_w}^{\mathrm{tr}}
    (x_{k,i}',z_{k,i}').
    \label{eq_H2_tr}
\end{equation}

By (\ref{eq_expected_excess_risk_ghost}), we have
\begin{equation}
    \mathbb{E}_{D}
    \left[
        \overline{\mathcal{E}}_w(\widehat{s}_w)
    \right]
    =
    \mathbb{E}_{D}
    \mathbb{E}_{D'}
    \left[
        H_2
    \right].
\end{equation}
Adding and subtracting
$H_2^{\mathrm{tr}}$, $H_1^{\mathrm{tr}}$, and $H_1$ yields the
decomposition
\begin{align}
    \mathbb{E}_{D}
    \left[
        \overline{\mathcal{E}}_w(\widehat{s}_w)
    \right]
    &=
    \mathbb{E}_{D}
    \mathbb{E}_{D'}
    \left[
        H_2-H_2^{\mathrm{tr}}
    \right]
    \notag\\
    &\quad+
    \mathbb{E}_{D}
    \left[
        \mathbb{E}_{D'}
        \left[
            H_2^{\mathrm{tr}}
        \right]
        -
        H_1^{\mathrm{tr}}
    \right]
    \notag\\
    &\quad+
    \mathbb{E}_{D}
    \left[
        H_1^{\mathrm{tr}}-H_1
    \right] \notag \\
    &\quad+
    \mathbb{E}_{D}
    \left[
        H_1
    \right]
    \notag\\
    &:= A_2+B+A_1+C,
    \label{eq_four_term_decomposition}
\end{align}
where
\begin{align}
    A_2
    &:=
    \mathbb{E}_{D}
    \mathbb{E}_{D'}
    \left[
        H_2-H_2^{\mathrm{tr}}
    \right],
    \label{eq_A2_definition}\\
    B
    &:=
    \mathbb{E}_{D}
    \left[
        \mathbb{E}_{D'}
        \left[
            H_2^{\mathrm{tr}}
        \right]
        -
        H_1^{\mathrm{tr}}
    \right],
    \label{eq_B_definition}\\
    A_1
    &:=
    \mathbb{E}_{D}
    \left[
        H_1^{\mathrm{tr}}-H_1
    \right],
    \label{eq_A1_definition}\\
    C
    &:=
    \mathbb{E}_{D}
    \left[
        H_1
    \right].
    \label{eq_C_definition}
\end{align}

\subsubsection{Bounding $A_1$ and $A_2$}
\paragraph{Uniform bound on the score-matching loss.}
We first state the following uniform bound on the score-matching loss.

\begin{lem}[Uniform bound of $\ell(\cdot, \cdot, s)$;
{\cite{Fu2024UnveilCD}, Lemma~D.1 and Appendix~D.6.5}]
\label{lem_uniform_loss_bound_unbounded}

Suppose that the network parameters
$M_t,W,\kappa,L,S$ are configured according to
Proposition~C.4 of \cite{Fu2024UnveilCD}, and let
\begin{equation}
    m_t:=\frac{M_t}{\sqrt{\log N}},
    \qquad
    M_{\ell}:=\int_{t_0}^{T}m_t^2\,dt, \label{eq_mt_Ml}
\end{equation}
where $N$ is a sufficiently large number. 
Then, uniformly over
$s\in\mathcal{F}(M_t,W,\kappa,L,S)$ defined in
(\ref{eq_relu_network_class}) and
$(x,z)\in\mathbb{R}^{d_x}\times\mathbb{R}^{d_z}$,
\begin{equation}
    0
    \leq
    \ell^{\mathrm{tr}}(x,z;s)
    \leq
    \ell(x,z;s)
    \lesssim
    M_{\ell}.
    \label{eq_uniform_loss_bound_unbounded}
\end{equation}

Moreover, under the network configuration of \citet{Fu2024UnveilCD}'s 
Proposition~C.4,
\[
    m_t\lesssim \frac{1}{\sigma_t}.
\]
Hence, if
$t_0=N^{-C_{\sigma}}$ and $T=C_{\alpha}\log N$, then
\begin{equation}
    M_{\ell}
    \lesssim
    \int_{t_0}^{T}\frac{1}{\sigma_t^2}\,dt
    =
    \mathcal{O}
    \left(
        \log\frac{1}{t_0}
    \right)
    =
    \mathcal{O}(\log N).
    \label{eq_Mell_unbounded_order}
\end{equation}
\end{lem}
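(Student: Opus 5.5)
The proof is a direct pointwise computation: I bound the integrand of (\ref{eq_diffusion_loss_single}) uniformly in $(x,z)$, using the output constraint built into $\mathcal F(M_t,W,\kappa,L,S)$ and the exact Gaussian form of the noise term. The network configuration of \citet{Fu2024UnveilCD}, Proposition~C.4, enters only through the size of $M_t$.

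First, the left inequalities in (\ref{eq_uniform_loss_bound_unbounded}). The integrand of $\ell$ is a squared norm, so $\ell\geq 0$. Since $\ell^{\mathrm{tr}}=\ell\cdot\mathbf 1\{\cdot\}$ with an indicator in $[0,1]$, this gives $0\leq \ell^{\mathrm{tr}}\leq \ell$ for every $s$ and $(x,z)$.

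Second, the upper bound. Write $X_t=a_tx+\sigma_t\epsilon$ with $\epsilon\sim\mathcal N(0,I_{d_x})$, so that $(X_t-a_tx)/\sigma_t^2=\epsilon/\sigma_t$. Using $\|a+b\|_2^2\leq 2\|a\|_2^2+2\|b\|_2^2$ and $\|s(\cdot,\cdot,t)\|_2^2\leq d_x\|s(\cdot,\cdot,t)\|_\infty^2\leq d_xM_t^2$, I get
\[
\ell(x,z;s)\leq \frac{2d_x}{T-t_0}\int_{t_0}^T\Bigl(M_t^2+\frac{1}{\sigma_t^2}\Bigr)dt .
\]
This bound is uniform over $s\in\mathcal F$ and $(x,z)$, because the supremum bound on $s$ holds for all inputs and $\mathbb E\|\epsilon\|_2^2=d_x$ does not depend on $x$. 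Substituting $M_t^2=m_t^2\log N$, the first part equals $2d_x\,\frac{\log N}{T-t_0}M_\ell$. With $T=C_\alpha\log N$ and $t_0\to 0$ we have $T-t_0\asymp \log N$, so this part is $\lesssim M_\ell$.

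Third, the noise part. Under Fu et al.'s configuration $M_t\asymp\sqrt{\log N}/\sigma_t$, that is $m_t\asymp 1/\sigma_t$. Hence $\int_{t_0}^T\sigma_t^{-2}\,dt\lesssim M_\ell$, and the noise term is absorbed as well, giving $\ell\lesssim M_\ell$. The only genuine input from \citet{Fu2024UnveilCD} is this scaling of $M_t$; the step I would double-check is that their Proposition~C.4 guarantees the two-sided relation $m_t\asymp\sigma_t^{-1}$, not merely the upper bound.

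Fourth, the order of $M_\ell$. Using $m_t\lesssim\sigma_t^{-1}$ and $\sigma_t^2=1-e^{-t}$, the integral is explicit:
\[
\int_{t_0}^T\frac{dt}{1-e^{-t}}=(T-t_0)+\log\frac{1-e^{-T}}{1-e^{-t_0}} .
\]
Since $1-e^{-t_0}\asymp t_0$ for small $t_0$, this is $\leq T+\log(1/t_0)+\mathcal O(1)$. With $t_0=N^{-C_\sigma}$ and $T=C_\alpha\log N$, this is $\mathcal O(\log(1/t_0))=\mathcal O(\log N)$, which is (\ref{eq_Mell_unbounded_order}).
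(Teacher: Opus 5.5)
Your argument is correct. It is the pointwise computation behind the result, which the paper does not prove but imports from \citet{Fu2024UnveilCD} (Lemma~D.1). Your evaluation of $\int_{t_0}^T\sigma_t^{-2}\,dt$ in the last step is the same calculation the paper carries out later in (\ref{eq_Isigma_exact})--(\ref{eq_Isigma_order}).

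What the direct derivation buys over the citation is that it exposes two inputs the statement leaves implicit. First, the envelope part of your bound equals $2d_x\frac{\log N}{T-t_0}M_\ell$. So the first display already relies on $T-t_0\gtrsim\log N$, that is, on the time-horizon choice the statement only introduces in its second half. You use this correctly. Second, your concern about needing $m_t\asymp\sigma_t^{-1}$ rather than only $m_t\lesssim\sigma_t^{-1}$ is justified, but the requirement can be weakened. The noise term carries the factor $1/(T-t_0)$, so your own computation gives $\frac{2d_x}{T-t_0}\int_{t_0}^T\sigma_t^{-2}\,dt=\mathcal O(1)$. Hence all that is actually needed is $M_\ell\gtrsim 1$.

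Some lower bound of this kind cannot be dropped. If $M_t\equiv 0$, the zero network lies in $\mathcal F$ and $\ell(x,z;0)=\frac{d_x}{T-t_0}\int_{t_0}^T\sigma_t^{-2}\,dt>d_x$, while $M_\ell=0$. The paper supplies the missing lower bound by fixing the envelope as $m_t\asymp\sigma_t^{-1}$ in (\ref{eq_Mell_scaling_mixture_score}), which it states coincides with the configuration of Lemma~\ref{lem_conditional_score_approximation}.
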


\begin{lem}[Truncation bound of ReLU network $s\in\mathcal{F}$;
{\cite{Fu2024UnveilCD}, (D.33) and Appendix~D.6.5}]
\label{lem_truncation_bound}

Suppose that Assumption~\ref{assump_smoothness_tail} holds for
$P_0,\ldots,P_K$, and let the jointly truncated loss be defined as
in (\ref{eq_truncated_loss}), namely,
\[
    \ell^{\mathrm{tr}}(x,z;s)
    :=
    \ell(x,z;s)
    \mathbf{1}
    \left\{
        \|x\|_\infty\leq R,\,
        \|z\|_\infty\leq R
    \right\}.
\]
Then, uniformly over $k=0,\ldots,K$ and $s\in\mathcal{F}$,
\begin{align}
    \mathbb{E}_{(X,Z)\sim P_k}
    \left[
        \left|
            \ell(X,Z;s)
            -
            \ell^{\mathrm{tr}}(X,Z;s)
        \right|
    \right]
    &\lesssim
    \left[
        \exp(-C_xR^2)
        +
        \exp(-C_zR^2)
    \right]
    R M_{\ell}
    \notag\\
    &\lesssim
    \exp(-C_{\mathrm{sg}}R^2)
    R M_{\ell},
    \label{eq_lemma_loss_truncation_bound}
\end{align}
where
\[
    C_{\mathrm{sg}}
    :=
    \min\{C_x,C_z\}>0,
\]
up to a universal multiplicative constant, and
$M_{\ell}$ is the uniform bound on the score-matching loss given in
Lemma~\ref{lem_uniform_loss_bound_unbounded}. The constant
$C_{\mathrm{sg}}$ is independent of $k$, $N$, and $R$.
\end{lem}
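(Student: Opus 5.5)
The plan is to bound the tail region where the truncation indicator is zero. On that region the difference $\ell-\ell^{\mathrm{tr}}$ is just $\ell$ itself, and $\ell$ is pointwise controlled by an explicit Gaussian integral. By the definition in (\ref{eq_truncated_loss}),
\[
\bigl|\ell(X,Z;s)-\ell^{\mathrm{tr}}(X,Z;s)\bigr| = \ell(X,Z;s)\,\mathbf 1\{\|X\|_\infty>R \text{ or } \|Z\|_\infty>R\},
\]
since $\ell\ge 0$. It therefore suffices to bound $\mathbb E_{P_k}[\ell(X,Z;s)\mathbf 1\{\|X\|_\infty>R\}]$ and $\mathbb E_{P_k}[\ell(X,Z;s)\mathbf 1\{\|Z\|_\infty>R\}]$ separately, and add them.

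For the $Z$-tail, Lemma~\ref{lem_uniform_loss_bound_unbounded} gives $\ell\lesssim M_\ell$ uniformly over $s\in\mathcal F$. The term is then at most $M_\ell\,P_k(\|Z\|_\infty>R)$. By the sub-Gaussian bound $p_k(z)\lesssim\exp(-C_z\|z\|_2^2/2)$ in Assumption~\ref{assump_smoothness_tail}, integrating over $\{\|z\|_\infty>R\}$ via a union bound over coordinates and the Gaussian tail estimate $\int_R^\infty e^{-C u^2/2}du\lesssim R^{-1}e^{-CR^2/2}$ gives at most $\exp(-C_zR^2/2)$ up to polynomial factors in $R$. These are absorbed into the stated $R\exp(-C_zR^2)$ form after renaming the constant $C_z$, which the statement only fixes up to universal factors. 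The hidden constants are uniform in $k$ because the sub-Gaussian constant in Assumption~\ref{assump_smoothness_tail} is common to all $k$.

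For the $X$-tail, I would again use $\ell\lesssim M_\ell$ uniformly. I would then write $P_k(\|X\|_\infty>R)=\int p_k(z)\int_{\|x\|_\infty>R}e^{-C_x\|x\|^2/2}f_k(x,z)\,dx\,dz$ and use $f_k\le B$ from Lemma~\ref{thm_mixture_distribution_properties}. The inner integral is at most $B\,d_x\int_{|x_1|>R}e^{-C_x x_1^2/2}dx_1\cdot(2\pi/C_x)^{(d_x-1)/2}\lesssim e^{-C_xR^2/2}$, uniformly in $z$ and $k$, since $B$ and $C_x$ do not depend on $k$. Integrating against $p_k(z)$ gives the same bound. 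If one wants the sharper form with an explicit factor $R$, as in (D.33) of \cite{Fu2024UnveilCD}, one can instead bound $\ell(x,z;s)\lesssim \int_{t_0}^T (m_t^2\log N+\mathbb E\|\epsilon\|^2/\sigma_t^2)\,dt$ pointwise. This pointwise form makes the factor $M_\ell$ appear directly, while the factor $R$ arises from $\int_R^\infty u\,e^{-Cu^2}du$-type moments. Either route suffices for the stated bound.

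Finally I would sum the two pieces and set $C_{\mathrm{sg}}=\min\{C_x,C_z\}$ (rescaled), using $e^{-C_xR^2}+e^{-C_zR^2}\le 2e^{-C_{\mathrm{sg}}R^2}$. The main obstacle I anticipate is the bookkeeping of constants. The statement's exponents carry $C_x,C_z$ without the factor $1/2$ coming from the density, so I must either invoke the convention that constants are renamed or track the Fu et al.\ derivation precisely. The key point to verify carefully is that every constant depends only on $B,C_x,C_z,d_x,d_z$ and not on $k$, $s$, $N$ or $R$, which is what makes the bound uniform over domains and over the network class.
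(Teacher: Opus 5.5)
Your proposal is correct and takes essentially the same route as the paper. The paper only cites \citet{Fu2024UnveilCD} for this lemma, but its proof of Theorem~\ref{thm_truncation_bound_mixture_score} has the same structure: split the complement of $\mathcal{A}_R$ into the $X$-tail and the $Z$-tail, bound the loss by an envelope (for $s\in\mathcal{F}$ this is the uniform bound $\ell\lesssim M_\ell$ from Lemma~\ref{lem_uniform_loss_bound_unbounded}), integrate the Gaussian tails, and absorb the factor $1/2$ and any polynomial factors in $R$ into a renamed exponential constant. Your renaming of constants is the correct reading, since literally $e^{-C_xR^2/2}\lesssim Re^{-C_xR^2}M_\ell$ fails; also, the extra factor $R$ only weakens the bound for $R\geq 1$, so your first route already suffices and the second is unnecessary.
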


\begin{thm}[Uniform bound of $\ell(\cdot, \cdot, s_w^o)$ and truncation bound of the mixture score $s_w^o$]
\label{thm_truncation_bound_mixture_score}

Suppose that the conditions of
Lemma~\ref{thm_mixture_distribution_properties} and \ref{lem_uniform_loss_bound_unbounded} hold, and let
$\ell^{\mathrm{tr}}$ be defined as in
(\ref{eq_truncated_loss}). Suppose further that the network envelope is chosen such that
\begin{equation}
    m_t
    =
    \frac{M_t}{\sqrt{\log N}}
    \asymp
    \frac{1}{\sigma_t},
    \qquad
    M_\ell
    =
    \int_{t_0}^T m_t^2\,dt.
    \label{eq_Mell_scaling_mixture_score}
\end{equation}
Then, we have
\begin{equation}
    \ell(x,z;s_w^o)
    \lesssim
    \|x\|_2^2
    +
    M_\ell, \label{eq_mixture_score_loss_envelope}
\end{equation}
and for sufficiently large $R$, it holds that 
\begin{align}
    &\mathbb E_{(X,Z)\sim P_w}
    \left[
        \left|
            \ell(X,Z;s_w^o)
            -
            \ell^{\mathrm{tr}}(X,Z;s_w^o)
        \right|
    \right]
    \lesssim
    \exp(-C_{\mathrm{sg}}R^2)
    R M_\ell, \quad \text{and}
    \label{eq_truncation_bound_mixture_score} \\
    & \mathbb E_{(X,Z)\sim P_k}
    \left[
        \left|
            \ell(X,Z;s_w^o)
            -
            \ell^{\mathrm{tr}}(X,Z;s_w^o)
        \right|
    \right]
    \lesssim
    \exp(-C_{\mathrm{sg}}R^2)
    R M_\ell, \quad k=0,\ldots, K.
\end{align}
where $C_{\mathrm{sg}}>0$ is independent of
$N$, $R$, and the mixture weights.
\end{thm}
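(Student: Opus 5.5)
The proof rests on one observation. By Lemma~\ref{thm_mixture_distribution_properties}, the mixture conditional density $p_{w,0}(x\mid z)=\exp(-C_x\|x\|_2^2/2)\,\widetilde f_w(x,z)$ has exactly the form assumed in Assumption~\ref{assump_smoothness_tail}. The factor satisfies $C_f\le\widetilde f_w\le B$, and its H\"older radius $B_{\mathrm{mix}}$ does not depend on the weights. By (\ref{eq_sw_conditional_score}), $s_w^o$ is the true conditional score of $p_{w,t}(\cdot\mid z)$. So every pointwise score bound available for a single domain density of this form transfers to $s_w^o$, with constants free of $w$, $K$ and $N$. The plan has two parts: first a pointwise envelope for $\ell(x,z;s_w^o)$, then integration of that envelope over the tail region.

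\textbf{Envelope for the loss.} In (\ref{eq_diffusion_loss_single}) I would use $\|a+b\|^2\le 2\|a\|^2+2\|b\|^2$ with $X_t=a_tx+\sigma_t\varepsilon$. This gives
\[
\ell(x,z;s_w^o)\le \frac{2}{T-t_0}\int_{t_0}^T\Big(\mathbb E_\varepsilon\|s_w^o(a_tx+\sigma_t\varepsilon,z,t)\|_2^2+\frac{d_x}{\sigma_t^2}\Big)dt .
\]
The second piece is $\lesssim M_\ell$ by (\ref{eq_Mell_scaling_mixture_score}). For the first piece, I aim for the pointwise bound
\[
\|s_w^o(x_t,z,t)\|_2\lesssim \|x_t\|_2+\sigma_t^{-1}
\]
uniformly in $z$ and $t$.

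To get this bound, I write $p_{w,t}(x_t\mid z)=\int\phi_{\sigma_t}(x_t-a_ty)\,e^{-C_x\|y\|^2/2}\widetilde f_w(y,z)\,dy$. Completing the square, the Gaussian parts combine into an explicit Gaussian in $x_t$ with variance $v_t=\sigma_t^2+a_t^2/C_x$. This yields
\[
s_w^o=-\frac{x_t}{v_t}+\frac{a_t}{\sigma_t^2}\bigl(\mathbb E_{Q_{\widetilde f}}[Y]-\mathbb E_{Q}[Y]\bigr),
\]
where $Q$ is the Gaussian posterior of $Y$ given $x_t$ and $Q_{\widetilde f}$ is $Q$ reweighted by $\widetilde f_w$. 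Since $\widetilde f_w$ takes values in $[C_f,B]$, the Radon--Nikodym derivative $dQ_{\widetilde f}/dQ$ lies in $[C_f/B,\,B/C_f]$. Hence the two means differ by at most a constant times the standard deviation of $Q$, which is $\lesssim\sigma_t/a_t$. This gives a correction of order $1/\sigma_t$, and $v_t\gtrsim 1$ controls the linear term.

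Squaring, taking $\mathbb E_\varepsilon$ (which gives $\|X_t\|^2\lesssim\|x\|^2+d_x$), and averaging over $t$ yields (\ref{eq_mixture_score_loss_envelope}). The $\sigma_t^{-2}$ part is again absorbed into $M_\ell$.

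\textbf{Truncation bounds.} Since $\ell\ge0$, we have $|\ell-\ell^{\mathrm{tr}}|=\ell\,\mathbf 1\{\|x\|_\infty>R \text{ or } \|z\|_\infty>R\}$. For $P_k$, the density bound $p_k(x\mid z)\le B e^{-C_x\|x\|^2/2}$ and the sub-Gaussian tail of $p_k(z)$ give two estimates:
\begin{itemize}
\item $\mathbb E_{P_k}[\|X\|^2\mathbf 1\{\text{tail}\}]\lesssim R\,e^{-C_{\mathrm{sg}}R^2}$, up to polynomial factors absorbed by shrinking $C_{\mathrm{sg}}$ slightly for large $R$;
\item $\mathbb P_k(\text{tail})\lesssim e^{-C_{\mathrm{sg}}R^2}$.
\end{itemize}
Together with (\ref{eq_mixture_score_loss_envelope}) and $M_\ell\gtrsim1$, this gives the bound for each $k$. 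The bound for $P_w$ then follows because $P_w=\sum_k\pi_kP_k$ is a convex combination, alternatively from (\ref{eq_pw_z_subgaussian}) directly.

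\textbf{Main obstacle.} I expect the main difficulty to be the pointwise score estimate $\|s_w^o\|\lesssim\|x_t\|+\sigma_t^{-1}$, specifically the bounded-ratio argument for the difference of means. It needs a variance-type bound for measures with bounded density ratio, for which I would use a Cauchy--Schwarz argument. It also needs care at small $t$, where $a_t/\sigma_t^2$ blows up. If that argument only gives a weaker bound $\lesssim(\|x_t\|+1)/\sigma_t$, then the $\|x\|^2$ coefficient picks up a $\log N$ factor. I would then fall back on citing the corresponding score-growth lemma of \citet{Fu2024UnveilCD}, applied to $p_{w,t}$ via Lemma~\ref{thm_mixture_distribution_properties}, and absorb the extra factor into $M_\ell$.
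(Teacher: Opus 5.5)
Your proposal is correct, and its overall structure matches the paper's proof. Both derive a pointwise envelope for $\ell(x,z;s_w^o)$ from $\|a+b\|_2^2\le 2\|a\|_2^2+2\|b\|_2^2$, then integrate that envelope over $\mathcal A_R^c$ using the Gaussian factor in $x$ and the sub-Gaussian tail in $z$, with constants uniform in $k$. The ordering differs slightly: you treat each $P_k$ and get $P_w$ by convexity, while the paper treats $P_w$ first and repeats for each $P_k$. This is immaterial.

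The genuine difference is in the key estimate $\|s_w^o(x_t,z,t)\|_2^2\lesssim\|x_t\|_2^2+\sigma_t^{-2}$. The paper never decomposes the score. It compares $p_{w,0}(\cdot\mid z)$ with the reference density $g$ of $\mathcal N(0,C_x^{-1}I_{d_x})$ and deduces the posterior domination $p_w(x\mid x_t,z)\le (B/C_f)\,\widetilde p(x\mid x_t)$. It then writes $s_w^o$ via the denoising identity as minus a posterior mean of $(x_t-a_tX_0)/\sigma_t^2$ and applies Jensen. The whole bound thus reduces to one explicit Gaussian second moment.

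You instead write $s_w^o$ exactly as the reference score $-x_t/v_t$ plus $\frac{a_t}{\sigma_t^2}$ times a shift of posterior means. You control that shift by Cauchy--Schwarz with the bounded ratio $r=dQ_{\widetilde f}/dQ$, namely $\|\mathbb E_Q[(r-1)(Y-\mathbb E_QY)]\|_2\le \|r-1\|_\infty\,(\mathbb E_Q\|Y-\mathbb E_QY\|_2^2)^{1/2}$.

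Both routes rest on the same input, $C_f\le\widetilde f_w\le B$. The paper's route is shorter and gives the squared bound directly. Yours is more informative: it shows the score is an exactly linear Gaussian term plus a non-Gaussian correction that is uniformly $O(a_t/\sigma_t)$.

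Your worry about small $t$ is unfounded. The per-coordinate posterior standard deviation under $Q$ is $\sigma_t/(a_t^2+C_x\sigma_t^2)^{1/2}\lesssim\sigma_t$, and it is also $\le\sigma_t/a_t$ as you note. Hence the correction is at most of order $1/\sigma_t$ for every $t\in[t_0,T]$. The fallback of citing \citet{Fu2024UnveilCD} and absorbing an extra $\log N$ is not needed.
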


\begin{proof}
By Lemma~\ref{thm_mixture_distribution_properties}, (\ref{eq_mixture_conditional_density_structure}),
\begin{equation}
    p_{w,0}(x| z)
    =
    \exp\left(
        -{C_x\|x\|_2^2}/{2}
    \right)
    \widetilde f_w(x,z),
\end{equation}
where
\[
    \widetilde f_w(x,z)
    \in
    \mathcal H^\beta
    \left(
        \mathbb R^{d_x} \times \mathbb R^{d_z},B
    \right)
\]
and
\[
    0< C_f \leq
    \widetilde f_w(x,z)
    \leq B < \infty
\]
uniformly over $(x,z)$ and the mixture weights.

Define the Gaussian reference density
\begin{equation}
\label{eq_def_gx}
    g(x)
    :=
    C_G^{-1}
    \exp\left(
        -\frac{C_x\|x\|_2^2}{2}
    \right),
    \qquad
    C_G
    =
    \left(
        \frac{2\pi}{C_x}
    \right)^{d_x/2}.
\end{equation}
Combining (\ref{eq_mixture_conditional_density_structure}) and
(\ref{eq_def_gx}), we have
\begin{equation}
    {C_f}{C_G}
    \leq
    \frac{p_{w,0}(x| z)}{g(x)}
    \leq
    {C_G}{B},
    \qquad
    (x,z)\in\mathbb R^{d_x} \times \mathbb R^{d_z}.
    \label{eq_conditional_gaussian_comparison_mixture}
\end{equation}
Thus, $g$ is the density of
$\mathcal N(0,C_x^{-1}I_{d_x})$ and has the same Gaussian tail
factor in $x$ as $p_{w,0}(x| z)$. 

Define the transition kernel of the forward process of diffusion model in Section \ref{sec_diff_model} as
\begin{equation}
    q_t(x_t|x) = \frac{1}{(2\pi\sigma_t^2)^{d_x/2}}
    \exp\left(
        -\frac{\|x_t-a_tx\|_2^2}{2\sigma_t^2}
    \right). \label{eq_transition_kernel}
\end{equation}

Let $\widetilde p_t$ denote the diffused density obtained by taking
$g$ as the initial density. By the forward process defined in Section \ref{sec_diff_model} we obtain
\begin{align}
    p_{w,t}(x_t| z)
    &=
    \int_{\mathbb R^{d_x}}
    q_t(x_t|x) 
    p_{w,0}(x| z)\,dx,
    \label{eq_pwt_conditional_integral}
    \\
    \widetilde p_t(x_t)
    &=
    \int_{\mathbb R^{d_x}}
    q_t(x_t|x) 
    g(x)\,dx.
    \label{eq_ptilde_integral}
\end{align}
Since the Gaussian kernel is nonnegative,
(\ref{eq_conditional_gaussian_comparison_mixture}) implies
\begin{equation}
    {C_f}{C_G} \widetilde p_t(x_t)
    \leq
    p_{w,t}(x_t| z)
    \leq
    {C_G}{B}\widetilde p_t(x_t).
\end{equation}
Therefore,
\begin{equation}
    {C_f}{C_G}
    \leq
    \frac{p_{w,t}(x_t| z)}
         {\widetilde p_t(x_t)}
    \leq
    {C_G}{B}.
    \label{eq_diffused_density_comparison_mixture}
\end{equation}

By Bayes' rule, (\ref{eq_conditional_gaussian_comparison_mixture}) (\ref{eq_diffused_density_comparison_mixture}), and the truth that $X_t \perp \!\!\! \perp Z |X_0$,
\begin{align}
    p_w(x| x_t,z)
    &=
    \frac{
        q_t(x_t| x)
        p_{w,0}(x| z)
    }{
        p_{w,t}(x_t| z)
    }
    \notag\\
    &\leq
    \frac{B}{C_f}
    \frac{
        q_t(x_t| x)g(x)
    }{
        \widetilde p_t(x_t)
    }.
\end{align}
Hence,
\begin{equation}
    p_w(x| x_t,z)
    \leq
    \frac{B}{C_f}
    \widetilde p(x| x_t),
    \label{eq_posterior_comparison_mixture}
\end{equation}
where $\widetilde p(x| x_t)$ denotes the posterior distribution
under the Gaussian reference model (\ref{eq_def_gx}), i.e. $\widetilde p(x| x_t) = q_t(x_t|x) g(x) / \widetilde p_t(x_t)$. The posterior comparison in
(\ref{eq_posterior_comparison_mixture}) allows us to control
posterior moments under the mixture distribution by the corresponding
moments under the Gaussian reference model. In particular, for any
nonnegative measurable function $h$,
\begin{equation}
    \mathbb E_{X_0 \sim p_w(\cdot| x_t,z)}
    \left[
        h(X_0)
    \right]
    \leq
    \frac{B}{C_f}
    \mathbb E_{X_0 \sim \widetilde p(\cdot| x_t)}
    \left[
        h(X_0)
    \right].
    \label{eq_posterior_moment_comparison}
\end{equation}

Under the reference model (\ref{eq_def_gx}),
\[
    X_0
    \sim
    \mathcal N
    \left(
        0,C_x^{-1}I_{d_x}
    \right),
\]
and
\[
    X_t
    =
    a_tX_0+\sigma_t\varepsilon,
    \qquad
    \varepsilon\sim\mathcal N(0,I_{d_x}).
\]
Therefore,
\begin{equation}
    X_0| X_t=x_t
    \sim
    \mathcal N
    \left(
        \frac{a_t}{a_t^2+C_x\sigma_t^2}x_t,\,
        \frac{\sigma_t^2}
        {a_t^2+C_x\sigma_t^2}
        I_{d_x}
    \right).
    \label{eq_reference_gaussian_posterior_mixture}
\end{equation}
It follows that
\begin{align}
    \mathbb E_{X_0 \sim \widetilde p(\cdot| x_t)}
    \left[
        \left\|
            \frac{x_t-a_tX_0}{\sigma_t^2}
        \right\|_2^2
    \right] =
    \frac{C_x^2}
    {(a_t^2+C_x\sigma_t^2)^2}
    \|x_t\|_2^2
    +
    \frac{
        d_xa_t^2
    }{
        \sigma_t^2
        (a_t^2+C_x\sigma_t^2)
    }.
    \label{eq_reference_residual_mixture}
\end{align}
For the Ornstein--Uhlenbeck forward process,
\[
    a_t^2+\sigma_t^2=1,
\]
and hence
\[
    a_t^2+C_x\sigma_t^2
    \geq
    \min\{1,C_x\}>0.
\]
Thus,
\begin{equation}
    \mathbb E_{X_0 \sim \widetilde p(\cdot| x_t)}
    \left[
        \left\|
            \frac{x_t-a_tX_0}{\sigma_t^2}
        \right\|_2^2
    \right]
    \lesssim
    \|x_t\|_2^2
    +
    \frac{1}{\sigma_t^2}.
    \label{eq_reference_residual_bound_mixture}
\end{equation}

By (\ref{eq_sw_conditional_score}) in Lemma~\ref{thm_mixture_distribution_properties} and the denoising score-matching identity \citep{Vincent_2011_score_mat},
\begin{equation}
    s_w^o(x_t,z,t)
    =
    -
    \mathbb E_{X_0 \sim p_w(\cdot| x_t,z)}
    \left[
        \frac{x_t-a_tX_0}{\sigma_t^2}
    \right].
\end{equation}
Therefore, Jensen's inequality,
(\ref{eq_posterior_comparison_mixture}) and
(\ref{eq_reference_residual_bound_mixture}) imply
\begin{align}
    \|s_w^o(x_t,z,t)\|_2^2
    &\leq
    \mathbb E_{X_0 \sim p_w(\cdot| x_t,z)}
    \left[
        \left\|
            \frac{x_t-a_tX_0}{\sigma_t^2}
        \right\|_2^2
    \right]
    \notag\\
    & {\lesssim}   \mathbb E_{X_0 \sim \widetilde p(\cdot| x_t)}
    \left[
        \left\|
            \frac{x_t-a_tX_0}{\sigma_t^2}
        \right\|_2^2
    \right] \notag\\
    &\lesssim
    \|x_t\|_2^2
    +
    \frac{1}{\sigma_t^2}.
    \label{eq_mixture_score_envelope}
\end{align}
Importantly, this bound is uniform over
$z\in\mathbb R^{d_z}$ and the mixture weights.

We next derive an envelope for the score-matching loss. For fixed
$(x,z)$,
\begin{align}
    &
    \mathbb E_{
        X_t\sim
        \mathcal N(a_tx,\sigma_t^2I_{d_x})
    }
    \left[
        \left\|
            s_w^o(X_t,z,t)
            +
            \frac{X_t-a_tx}{\sigma_t^2}
        \right\|_2^2
    \right]
    \notag\\
    &\quad\leq
    2
    \mathbb E_{
        X_t\sim
        \mathcal N(a_tx,\sigma_t^2I_{d_x})
    }
    \left[
        \|s_w^o(X_t,z,t)\|_2^2
    \right]
    +
    2
    \mathbb E_{
        X_t\sim
        \mathcal N(a_tx,\sigma_t^2I_{d_x})
    }
    \left[
        \left\|
            \frac{X_t-a_tx}{\sigma_t^2}
        \right\|_2^2
    \right]
    \notag\\
    &\quad\lesssim
    \mathbb E_{
        X_t\sim
        \mathcal N(a_tx,\sigma_t^2I_{d_x})
    }
    \left[
        \|X_t\|_2^2
    \right]
    +
    \frac{1}{\sigma_t^2}.
\end{align}
Since
\begin{equation}
    \mathbb E_{
        X_t\sim
        \mathcal N(a_tx,\sigma_t^2I_{d_x})
    }
    \left[
        \|X_t\|_2^2
    \right]
    =
    a_t^2\|x\|_2^2
    +
    d_x\sigma_t^2
    \lesssim
    \|x\|_2^2+1,
\end{equation}
we obtain
\begin{equation}
    \mathbb E_{
        X_t\sim
        \mathcal N(a_tx,\sigma_t^2I_{d_x})
    }
    \left[
        \left\|
            s_w^o(X_t,z,t)
            +
            \frac{X_t-a_tx}{\sigma_t^2}
        \right\|_2^2
    \right]
    \lesssim
    \|x\|_2^2
    +
    \frac{1}{\sigma_t^2}.
\end{equation}
Integrating over $t\in[t_0,T]$ and using
(\ref{eq_Mell_scaling_mixture_score}), 
\begin{equation}
    \boxed{\ell(x,z;s_w^o)
    \lesssim
    \|x\|_2^2
    +
    M_\ell,}
\end{equation}
which proves (\ref{eq_mixture_score_loss_envelope}) and this envelope is uniform in $z$. Now define the truncation region 
\begin{equation}
\label{eq_trunc_region}
    \mathcal A_R
    :=
    \left\{
        (x,z):
        \|x\|_\infty\leq R,\,
        \|z\|_\infty\leq R
    \right\}.
\end{equation}
Since
\[
    \ell^{\mathrm{tr}}(x,z;s_w^o)
    =
    \ell(x,z;s_w^o)
    \mathbf 1_{\mathcal A_R}(x,z),
\]
we have
\begin{align}
    \mathbb E_{(X,Z) \sim P_w}
    \left[
        \left|
            \ell(X,Z;s_w^o)
            -
            \ell^{\mathrm{tr}}(X,Z;s_w^o)
        \right|
    \right] &=
    \mathbb E_{P_w}
    \left[
        \ell(X,Z;s_w^o)
        \mathbf 1_{\mathcal A_R^c}(X,Z)
    \right]
    \notag\\
    &=
    \underbrace{
    \mathbb E_{P_w}
    \left[
        \ell(X,Z;s_w^o)
        \mathbf 1_{\{\|X\|_\infty>R\}}
    \right]
    }_{=:I_x}
    \notag\\
    &\quad+
    \underbrace{
    \mathbb E_{P_w}
    \left[
        \ell(X,Z;s_w^o)
        \mathbf 1_{\{
            \|X\|_\infty\leq R,\,
            \|Z\|_\infty>R
        \}}
    \right]
    }_{=:I_z}.
    \label{eq_mixture_truncation_decomposition}
\end{align}

By Lemma~\ref{thm_mixture_distribution_properties},
\begin{equation}
    p_{w,0}(x,z)
    \lesssim
    B
    \exp\left(
        -{C_x\|x\|_2^2}/{2}
        -{C_z\|z\|_2^2}/{2}
    \right).
\end{equation}
Combining this with
(\ref{eq_mixture_score_loss_envelope}),
\begin{align}
    I_x
    &\lesssim
    \int_{\|x\|_\infty>R}
    \int_{\mathbb R^{d_z}}
    \left(
        \|x\|_2^2+M_\ell
    \right)
    \exp\left(
        -{C_x\|x\|_2^2}/{2}
        -{C_z\|z\|_2^2}/{2}
    \right)
    dz\,dx
    \notag\\
    &\lesssim
    \exp(-C_x'R^2)
    R M_\ell
    \label{eq_mixture_x_tail}
\end{align}
for some $C_x'>0$.

Similarly,
\begin{align}
    I_z
    &\leq
    \mathbb E_{P_w}
    \left[
        \ell(X,Z;s_w^o)
        \mathbf 1_{\{\|Z\|_\infty>R\}}
    \right]
    \notag\\
    &\lesssim
    \int_{\|z\|_\infty>R}
    \int_{\mathbb R^{d_x}}
    \left(
        \|x\|_2^2+M_\ell
    \right)
    \exp\left(
        -{C_x\|x\|_2^2}/{2}
        -{C_z\|z\|_2^2}/{2}
    \right)
    dx\,dz
    \notag\\
    &\lesssim
    \exp(-C_z'R^2)
    R M_\ell
    \label{eq_mixture_z_tail}
\end{align}
for some $C_z'>0$.

The last inequalities follow from the Gaussian tail-moment bound;
any polynomial factor in $R$ can be absorbed into the exponential
term after decreasing the corresponding exponential constant.

Combining
(\ref{eq_mixture_truncation_decomposition}),
(\ref{eq_mixture_x_tail}), and
(\ref{eq_mixture_z_tail}),
\begin{equation}
\boxed{
\begin{aligned}
    E_{(X,Z) \sim P_w}
    \left[
        \left|
            \ell(X,Z;s_w^o)
            -
            \ell^{\mathrm{tr}}(X,Z;s_w^o)
        \right|
    \right] & \quad\lesssim
    \left[
        \exp(-C_x'R^2)
        +
        \exp(-C_z'R^2)
    \right]
    R M_\ell
    \\
    &\quad\lesssim
    \exp(-C_{\mathrm{sg}}R^2)
    R M_\ell, \label{eq_E_l_l_tr_bound}
\end{aligned}
}
\end{equation}
where
\begin{equation}
    C_{\mathrm{sg}}
    :=
    \min\{C_x',C_z'\}>0.
\end{equation}
The constants are independent of the mixture weights because
$B$, $C_f$, $C_F$, $C_x$, and $C_z$ are uniform over
$k=0,\ldots,K$.
This completes the proof.

The same truncation bound also holds uniformly under each
$P_k$, $k=0,\ldots,K$. Indeed, by
Assumption~\ref{assump_smoothness_tail},
\begin{equation}
    p_k(x,z)
    \lesssim
    B
    \exp\left(
        -{C_x\|x\|_2^2}/{2}
        -{C_z\|z\|_2^2}/{2}
    \right),
    \qquad
    k=0,\ldots,K.
\end{equation}
Moreover, the loss envelope
(\ref{eq_mixture_score_loss_envelope}),
\[
    \ell(x,z;s_w^o)
    \lesssim
    \|x\|_2^2+M_\ell,
\]
holds pointwise and uniformly in $z$ and the mixture weights.
Therefore, applying the same decomposition as (\ref{eq_mixture_truncation_decomposition}) gives
\begin{equation}
\boxed{
\begin{aligned}
    &\mathbb E_{(X,Z)\sim P_k}
    \left[
        \left|
            \ell(X,Z;s_w^o)
            -
            \ell^{\mathrm{tr}}(X,Z;s_w^o)
        \right|
    \right]
    \notag\\
    &\quad\lesssim
    \int_{\|x\|_\infty>R}
    \int_{\mathbb R^{d_z}}
    \left(
        \|x\|_2^2+M_\ell
    \right)
    \exp\left(
        -{C_x\|x\|_2^2}/{2}
        -{C_z\|z\|_2^2}/{2}
    \right)
    dz\,dx
    \notag\\
    &\qquad+
    \int_{\|z\|_\infty>R}
    \int_{\mathbb R^{d_x}}
    \left(
        \|x\|_2^2+M_\ell
    \right)
    \exp\left(
        -{C_x\|x\|_2^2}/{2}
        -{C_z\|z\|_2^2}/{2}
    \right)
    dx\,dz
    \notag\\
    &\quad\lesssim
    \exp(-C_{\mathrm{sg}}R^2)
    R M_\ell,
    \qquad
    k=0,\ldots,K.
    \label{eq_sw_mixture_truncation_bound_Pk}
\end{aligned}
}
\end{equation}
Thus, the same bound holds uniformly over all component
distributions $P_k$.

\end{proof}

\begin{thm}[Bounds of terms $A_1$ and $A_2$.]
Suppose that the conditions of Lemma \ref{lem_truncation_bound} and Theorem \ref{thm_truncation_bound_mixture_score} hold. We can bound the term $|A_1| + |A_2|$ in (\ref{eq_four_term_decomposition}) by
    \begin{equation}
    |A_1|+|A_2|
    =
    \mathcal{O}
    \left(
        \exp\left(-C_{\mathrm{sg}}R^2\right)
        R M_{\ell}
    \right).
    \label{eq_A1_A2_final_bound}
\end{equation}
\end{thm}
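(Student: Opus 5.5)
The plan is to treat $A_1$ and $A_2$ separately, in each case writing the difference between the untruncated and truncated excess losses as a tail quantity. That tail quantity is then controlled by Lemma~\ref{lem_truncation_bound} for the network term and by Theorem~\ref{thm_truncation_bound_mixture_score} for the $s_w^o$ term. By definition,
\[
h_s-h_s^{\mathrm{tr}}
=\bigl[\ell(x,z;s)-\ell^{\mathrm{tr}}(x,z;s)\bigr]
-\bigl[\ell(x,z;s_w^o)-\ell^{\mathrm{tr}}(x,z;s_w^o)\bigr],
\]
and both brackets are nonnegative. Hence pointwise
\[
|h_s-h_s^{\mathrm{tr}}|\le \ell(x,z;s)\mathbf 1_{\mathcal A_R^c}+\ell(x,z;s_w^o)\mathbf 1_{\mathcal A_R^c}.
\]

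For $A_2$, I condition on $D$, so that $\widehat s_w\in\mathcal F$ is fixed and the ghost samples $D'$ are independent of it. Then
\[
\mathbb E_{D'}[H_2-H_2^{\mathrm{tr}}\mid D]=\sum_{k}\pi_k\,\mathbb E_{P_k}\bigl[h_{\widehat s_w}-h^{\mathrm{tr}}_{\widehat s_w}\bigr].
\]
The absolute value of each summand is at most
\[
\sup_{s\in\mathcal F}\mathbb E_{P_k}|\ell(\cdot;s)-\ell^{\mathrm{tr}}(\cdot;s)|+\mathbb E_{P_k}|\ell(\cdot;s_w^o)-\ell^{\mathrm{tr}}(\cdot;s_w^o)|.
\]
Lemma~\ref{lem_truncation_bound} gives the bound $\exp(-C_{\mathrm{sg}}R^2)RM_\ell$ for the first term, uniformly in $k$ and $s$. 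The second part of Theorem~\ref{thm_truncation_bound_mixture_score} gives the same order for the second term, uniformly in $k$. Since $\sum_k\pi_k=1$, the weighted sum keeps this order, and taking $\mathbb E_D$ preserves it.

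For $A_1$, the difficulty is that $\widehat s_w$ depends on the samples at which it is evaluated, so I cannot pass to a population expectation directly. Instead I bound the network part by a sample-independent envelope. Lemma~\ref{lem_uniform_loss_bound_unbounded} gives $\ell(x,z;s)\lesssim M_\ell$ uniformly over $s\in\mathcal F$ and $(x,z)$, so
\[
|\ell(x_{k,i},z_{k,i};\widehat s_w)\mathbf 1_{\mathcal A_R^c}|\lesssim M_\ell\mathbf 1_{\mathcal A_R^c}(x_{k,i},z_{k,i}).
\]
The $s_w^o$ part is deterministic, since $s_w^o$ depends only on the weights and not on the data. Taking expectations term by term,
\[
|A_1|\lesssim\sum_k\pi_k\Bigl(M_\ell\,P_k(\mathcal A_R^c)+\mathbb E_{P_k}\bigl[\ell(\cdot;s_w^o)\mathbf 1_{\mathcal A_R^c}\bigr]\Bigr).
\]
The second term is handled by Theorem~\ref{thm_truncation_bound_mixture_score}. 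For the first, I use the Gaussian tail in $x$ from (\ref{eq_joint_density_tail}) with $f_k\le B$, and the sub-Gaussian tail of $p_k(z)$. Together these give $P_k(\|X\|_\infty>R)+P_k(\|Z\|_\infty>R)\lesssim \exp(-C_{\mathrm{sg}}R^2)$, so that $M_\ell P_k(\mathcal A_R^c)\lesssim \exp(-C_{\mathrm{sg}}R^2)RM_\ell$ for $R\ge1$. (Alternatively, one can cite the proof of Lemma~\ref{lem_truncation_bound}, which is exactly this envelope-times-tail-probability argument.) Summing the bounds for $A_1$ and $A_2$ yields (\ref{eq_A1_A2_final_bound}).

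The main obstacle is the data dependence of $\widehat s_w$ in $A_1$. The point to verify is that the bound of Lemma~\ref{lem_truncation_bound} really comes from the uniform envelope $M_\ell$ of Lemma~\ref{lem_uniform_loss_bound_unbounded}, so that it applies to a data-dependent network. If that lemma were only available for fixed $s$, I would bound the network term by $\mathbb E_{P_k}[\sup_{s\in\mathcal F}\ell(\cdot;s)\mathbf 1_{\mathcal A_R^c}]$, which the uniform envelope makes harmless. A secondary issue is keeping all constants uniform in $k$, $K$ and the weights. This holds because the tail constants $C_x$, $C_z$ and $B$ do not depend on $k$, and the $\pi_k$ sum to one.
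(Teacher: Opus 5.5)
Your proposal is correct and follows the paper's proof. Like the paper, you split $|h_s-h_s^{\mathrm{tr}}|$ by the triangle inequality into the network truncation error and the $s_w^o$ truncation error, bound these by Lemma~\ref{lem_truncation_bound} and Theorem~\ref{thm_truncation_bound_mixture_score} uniformly in $k$, and sum using $\sum_k\pi_k=1$. The one difference is in $A_1$: there the paper applies the fixed-$s$ population bound directly to $\widehat s_w$ at its own training points. Your envelope bound $\ell(\cdot;\widehat s_w)\mathbf 1_{\mathcal A_R^c}\lesssim M_\ell\mathbf 1_{\mathcal A_R^c}$, from Lemma~\ref{lem_uniform_loss_bound_unbounded}, is exactly what justifies that step, up to the usual shrinking of the generic constant $C_{\mathrm{sg}}$.
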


\begin{proof}
By the triangle inequality: 
\begin{align}
    \left|
        h_s(x,z)-h_s^{\mathrm{tr}}(x,z)
    \right|
    \leq
    \left|
        \ell(x,z;s)-\ell^{\mathrm{tr}}(x,z;s)
    \right| + 
    \left|
        \ell(x,z;s_w^o)
        -
        \ell^{\mathrm{tr}}(x,z;s_w^o)
    \right|.
\end{align}
Consequently, by (\ref{eq_lemma_loss_truncation_bound}) and (\ref{eq_sw_mixture_truncation_bound_Pk}),
\begin{equation}
    \mathbb{E}_{(x,z)\sim P_k}
    \left[
        \left|
            h_s(x,z)-h_s^{\mathrm{tr}}(x,z)
        \right|
    \right]
    \lesssim
    \exp\left(-C_{\mathrm{sg}}R^2\right)
    R M_{\ell}.
    \label{eq_lemma_excess_truncation_bound}
\end{equation}

We now apply Lemma~\ref{lem_truncation_bound} and Theorem \ref{thm_truncation_bound_mixture_score} to bound the two
truncation terms $A_1$ and $A_2$ in (\ref{eq_four_term_decomposition}).

Recall that
\begin{equation}
    A_1
    =
    \mathbb{E}_{D}
    \left[
        H_1^{\mathrm{tr}}-H_1
    \right].
\end{equation}
By the triangle inequality and
(\ref{eq_lemma_excess_truncation_bound}),
\begin{align}
    |A_1|
    &\leq
    \mathbb{E}_{D}
    \left[
        \left|
            H_1^{\mathrm{tr}}-H_1
        \right|
    \right]
    \notag\\
    &\leq
    \sum_{k=0}^K
    \pi_k
    \frac{1}{n_k}
    \sum_{i=1}^{n_k}
    \mathbb{E}_{D}
    \left[
        \left|
            h_{\widehat{s}_w}^{\mathrm{tr}}
            (X_{k,i},Z_{k,i})
            -
            h_{\widehat{s}_w}
            (X_{k,i},Z_{k,i})
        \right|
    \right]
    \notag\\
    &\lesssim
    \sum_{k=0}^K
    \pi_k
    \exp\left(-C_{\mathrm{sg}}R^2\right)
    R M_{\ell}
    \notag\\
    &=
    \mathcal{O}
    \left(
        \exp\left(-C_{\mathrm{sg}}R^2\right)
        R M_{\ell}
    \right),
    \label{eq_A1_bound}
\end{align}
where the last equality follows from
$\sum_{k=0}^K\pi_k=1$.

Next we bound $A_2
    =
    \mathbb{E}_{D}
    \mathbb{E}_{D'}
    \left[
        H_2-H_2^{\mathrm{tr}}
    \right]$ by the similar procedure. Again, by (\ref{eq_lemma_excess_truncation_bound}), we have:
\begin{align}
    |A_2|
    &\leq
    \mathbb{E}_{D}
    \mathbb{E}_{D'}
    \left[
        \left|
            H_2-H_2^{\mathrm{tr}}
        \right|
    \right]
    \notag\\
    &\leq
    \sum_{k=0}^K
    \pi_k
    \frac{1}{n_k}
    \sum_{i=1}^{n_k}
    \mathbb{E}_{D}
    \mathbb{E}_{D'}
    \left[
        \left|
            h_{\widehat{s}_w}
            (X_{k,i}',Z_{k,i}')
            -
            h_{\widehat{s}_w}^{\mathrm{tr}}
            (X_{k,i}',Z_{k,i}')
        \right|
    \right]
    \notag\\
    &\lesssim
    \sum_{k=0}^K
    \pi_k
    \exp\left(-C_{\mathrm{sg}}R^2\right)
    R M_{\ell}
    \notag\\
    &=
    \mathcal{O}
    \left(
        \exp\left(-C_{\mathrm{sg}}R^2\right)
        R M_{\ell}
    \right).
    \label{eq_A2_bound}
\end{align}

Combining (\ref{eq_A1_bound}) and (\ref{eq_A2_bound}), we obtain
\begin{equation}
    \boxed{
    |A_1|+|A_2|
    =
    \mathcal{O}
    \left(
        \exp\left(-C_{\mathrm{sg}}R^2\right)
        R M_{\ell}
    \right).
    }
\end{equation}

\end{proof}

\subsubsection{Bounding $C$}

Recall  (\ref{eq_C_definition}) that
\begin{equation}
    C
    :=
    \mathbb{E}_{D}\left[H_1\right],
\end{equation}
where $H_1$ is defined in (\ref{eq_H1}).
By the definition of $h_s$ in (\ref{eq_definition_hs}),
\begin{align}
    H_1
    &=
    \sum_{k=0}^K
    \pi_k
    \frac{1}{n_k}
    \sum_{i=1}^{n_k}
    \left[
        \ell(x_{k,i},z_{k,i};\widehat{s}_w)
        -
        \ell(X_{k,i},Z_{k,i};s_w^o)
    \right]
    \notag\\
    &=
    \widehat{\mathcal{L}}_w(\widehat{s}_w)
    -
    \widehat{\mathcal{L}}_w(s_w^o),
    \label{eq_H1_empirical_risk}
\end{align}
where 
\begin{equation}
    \widehat{\mathcal{L}}_w(s)
    :=
    \sum_{k=0}^K
    \pi_k
    \frac{1}{n_k}
    \sum_{i=1}^{n_k}
    \ell(x_{k,i},z_{k,i};s)
\end{equation}
denotes the weighted empirical risk, defined in (\ref{eq_weighted_erm_objective}). Since $\widehat{s}_w$ is an empirical risk minimizer over
$\mathcal{F}$,
\begin{equation}
    \widehat{\mathcal{L}}_w(\widehat{s}_w)
    \leq
    \widehat{\mathcal{L}}_w(s),
    \qquad
    \text{for every } s\in\mathcal{F}.
\end{equation}
Therefore, for any fixed $s\in\mathcal{F}$,
\begin{align}
    C
    =
    \mathbb{E}_{D}
    \left[
        \widehat{\mathcal{L}}_w(\widehat{s}_w)
        -
        \widehat{\mathcal{L}}_w(s_w^o)
    \right] \leq
    \mathbb{E}_{D}
    \left[
        \widehat{\mathcal{L}}_w(s)
        -
        \widehat{\mathcal{L}}_w(s_w^o)
    \right].
    \label{eq_C_erm_bound}
\end{align}
Since both $s$ and $s_w^o$ are fixed with respect to the training
samples, the empirical risks are unbiased estimators of their
corresponding population risks. Because of (\ref{eq_score_matching_risk_decomposition}),
\begin{align}
    \mathbb{E}_{D}
    \left[
        \widehat{\mathcal{L}}_w(s)
    \right]
    &=
    \sum_{k=0}^K
    \pi_k
    \mathcal{L}_k(s)
    =
    \sum_{k=0}^K
    \pi_k
    \left[ \mathcal{R}_k(s) - C_{\mathrm{DSM},k} \right] = \overline{\mathcal{R}}_w(s) - \sum_{k=0}^K \pi_k  C_{\mathrm{DSM},k}
    \\
    \mathbb{E}_{D}
    \left[
        \widehat{\mathcal{L}}_w(s_w^o)
    \right]
    &=
    \sum_{k=0}^K
    \pi_k
    \mathcal{L}_k(s_w^o)
    =
    \sum_{k=0}^K
    \pi_k
    \left[ \mathcal{R}_k(s_w^o) - C_{\mathrm{DSM},k} \right] = \overline{\mathcal{R}}_w(s_w^o) - \sum_{k=0}^K \pi_k  C_{\mathrm{DSM},k}.
\end{align}
Consequently,
\begin{align}
    C
    &\leq
    \overline{\mathcal{R}}_w(s)
    -
    \overline{\mathcal{R}}_w(s_w^o) =
    \overline{\mathcal{E}}_w(s) =
    \|s-s_w^o\|_{\mathcal{H}_w}^2,
    \qquad
    \forall s\in\mathcal{F}.
\end{align}
Taking the infimum over $s\in\mathcal{F}$ gives
\begin{equation}
    \boxed{
    C
    \leq
    \inf_{s\in\mathcal{F}}
    \overline{\mathcal{E}}_w(s) =
    \inf_{s\in\mathcal{F}} \|s-s_w^o\|_{\mathcal{H}_w}^2.
    }
    \label{eq_C_approximation_error}
\end{equation}

Thus, the term $C$ is controlled by the approximation error of the
network class $\mathcal{F}$ to the population-optimal score
$s_w^o$ under the $\mathcal{H}_w$ norm. Next, we invoke the score approximation result of
\cite{Fu2024UnveilCD}.

\begin{lem}[Conditional score approximation with unbounded covariates;
adapted from {\cite{Fu2024UnveilCD}, Proposition~C.4}]
\label{lem_conditional_score_approximation}

Suppose that $z\in\mathbb R^{d_z}$ and the conditional density
$p(x| z)$ satisfies
\begin{equation}
    p(x| z)
    =
    \exp\left(
        -{C_x\|x\|_2^2}/{2}
    \right)
    f(x,z),
\end{equation}
where
\begin{equation}
    f
    \in
    \mathcal H^\beta
    \left(
        \mathbb R^{d_x} \times \mathbb R^{d_z},B
    \right),
    \qquad
    \inf_{(x,z)\in\mathbb R^{d_x} \times \mathbb R^{d_z}}
    f(x,z)
    \geq
    C_f>0.
\end{equation}
Suppose further that the marginal density of $Z$ has a
sub-Gaussian tail, i.e.,
\begin{equation}
    p(z)
    \lesssim
    \exp\left(
        -{C_z\|z\|_2^2}/{2}
    \right)
\end{equation}
for some constant $C_z>0$. For $t>0$, let $p_t(x_t|z)$ denote the conditional density obtained by applying the forward diffusion process to $X$ while keeping $Z$ unchanged, i.e.,
\begin{equation}
p_t(x_t|z)
:=
\int_{\mathbb R^{d_x}}
q_t(x_t|x)
p(x|z),dx,
\label{eq_conditional_diffused_density}
\end{equation}
where $q_t(x_t|x)$ is defined in (\ref{eq_transition_kernel}).
Same with Lemma \ref{lem_uniform_loss_bound_unbounded}, for sufficiently large $N$ and constants
$C_{\sigma},C_{\alpha}>0$, let
\begin{equation}
    t_0=N^{-C_{\sigma}},
    \qquad
    T=C_{\alpha}\log N.
\end{equation}
Then there exists a ReLU network
$s\in\mathcal F(M_t,W,\kappa,L,S)$ such that, for every
$t\in[t_0,T]$,
\begin{align}
    \mathbb E_{Z \sim p(z)}
    \left[
        \int_{\mathbb R^{d_x}}
        \left\|
            s(x_t,Z,t)
            -
            \nabla_{x_t}
            \log p_t(x_t| Z)
        \right\|_2^2
        p_t(x_t| Z)
        \,dx_t
    \right] \lesssim
    \frac{B^2}{\sigma_t^2}
    N^{-\frac{2\beta}{d_x+d_z}}
    (\log N)^{\beta+1}.
    \label{eq_conditional_score_approximation}
\end{align}
The corresponding network parameters can be chosen such that
\begin{equation}
    M_t
    =
    \mathcal O
    \left(
        \frac{\sqrt{\log N}}{\sigma_t}
    \right),
    \qquad
    W
    =
    \mathcal O
    \left(
        N\log^7 N
    \right),
\end{equation}
and
\begin{equation}
    \kappa
    =
    \exp\left(
        \mathcal O(\log^4 N)
    \right),
    \qquad
    L
    =
    \mathcal O(\log^4 N),
    \qquad
    S
    =
    \mathcal O
    \left(
        N\log^9 N
    \right),
\end{equation}
where the choice of $M_t$ coincides with (\ref{eq_Mell_scaling_mixture_score}). 
\end{lem}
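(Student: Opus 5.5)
The plan is to adapt the approximation argument of \citet{Fu2024UnveilCD} (Proposition~C.4) to the setting with unbounded covariates. The proof has three stages: truncate to a bounded box, approximate the diffused density and its gradient there, and control the tail outside the box.

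\textbf{Step 1: reduce to a bounded box.} We write the score through the posterior representation
\[
\nabla_{x_t}\log p_t(x_t|z)=\frac{\nabla_{x_t}p_t(x_t|z)}{p_t(x_t|z)},
\qquad
p_t(x_t|z)=\int q_t(x_t|x)\,e^{-C_x\|x\|_2^2/2}f(x,z)\,dx .
\]
Set a truncation radius $R\asymp\sqrt{\log N}$.
\begin{itemize}
\item For $x_t$, the Gaussian tail of $p_t$ in $x_t$ is inherited from the comparison with the reference density $\widetilde p_t$, exactly as in (\ref{eq_diffused_density_comparison_mixture}).
\item For $z$, the sub-Gaussian tail of $p(z)$ makes the mass outside $\{\|z\|_\infty\le R\}$ of order $\exp(-C_zR^2/2)$, which is polynomially small in $N$.
\item The score envelope $\|\nabla\log p_t\|_2^2\lesssim\|x_t\|_2^2+\sigma_t^{-2}$, proved as in (\ref{eq_mixture_score_envelope}), together with the output clipping $M_t$, makes the contribution outside $[-R,R]^{d_x+d_z}$ at most $N^{-2\beta/(d_x+d_z)}/\sigma_t^2$ up to logs.
\end{itemize}
This reduces the problem to approximating on a box in $(x_t,z)$ of side $\mathrm{polylog}(N)$.

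\textbf{Step 2: approximate numerator and denominator on the box.} Inside the box, we expand $f(\cdot,z)$ by local Taylor polynomials on a grid of $N$ cells in $\mathbb R^{d_x+d_z}$. The grid resolution is $N^{-1/(d_x+d_z)}$, which yields the approximation error $N^{-\beta/(d_x+d_z)}$. Integrating each monomial against the Gaussian kernel $q_t(x_t|x)e^{-C_x\|x\|^2/2}$ gives closed-form functions of $(x_t,t)$: Gaussians times polynomials, times the Gaussian CDF restricted to cells. These are approximated by ReLU networks of depth $\mathrm{polylog}(N)$ via diffused local polynomial approximation. This produces networks $\widehat p$ and $\widehat{\nabla p}$ with the size parameters $W,S,L,\kappa$ stated in the lemma.

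\textbf{Step 3: form the ratio and clip.} Because $f\ge C_f$, we have $p_t\ge C_fC_G\,\widetilde p_t$, which is bounded below on the truncated region. We therefore clip $\widehat p$ from below at a level $\epsilon_{\mathrm{low}}$ and implement the division by a ReLU reciprocal approximation. Finally we clip the output at $M_t=\mathcal O(\sqrt{\log N}/\sigma_t)$.

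The main obstacle is the ratio step, specifically the regime of small $t$ together with large $\|x_t\|$. There $p_t$ is exponentially small, so the relative error of $\widehat p$ blows up. We would handle this as Fu et al.\ do: on the set where $p_t(x_t|z)<\epsilon_{\mathrm{low}}$, we bound the error crudely by the envelope $M_t^2$ times the $p_t$-mass of that set. That mass is $\lesssim\epsilon_{\mathrm{low}}\,\mathrm{poly}(R)$. Choosing $\epsilon_{\mathrm{low}}=N^{-2\beta/(d_x+d_z)}$ makes this contribution of the target order.

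On the complement, the error is bounded by $\epsilon_{\mathrm{low}}^{-2}$ times the sum of the numerator and denominator approximation errors. We therefore need to approximate these to precision $N^{-\beta/(d_x+d_z)}\epsilon_{\mathrm{low}}$. This only costs extra logarithmic factors, because $\epsilon_{\mathrm{low}}$ is polynomial in $N$ and the Taylor expansions can be taken to higher accuracy by refining the cells. One extra point needs care: the $1/\sigma_t$ factors from differentiating the kernel. They are absorbed into the $B^2/\sigma_t^2$ prefactor.

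Finally, we integrate the pointwise-in-$t$ bound over $t$, and merge the time dependence into the network by approximating $a_t$ and $\sigma_t$ with ReLU subnetworks. This gives (\ref{eq_conditional_score_approximation}) with the $(\log N)^{\beta+1}$ factor.
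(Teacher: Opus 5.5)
\paragraph{Context.} The paper gives no proof of this lemma. It imports Proposition~C.4 of \citet{Fu2024UnveilCD}, a result for a bounded covariate that holds pointwise in the covariate, and calls it ``adapted''. Your Step~1 truncation in $z$ is therefore a reasonable way to supply the adaptation: the sub-Gaussian mass $e^{-cR^2}$ times the envelope $M_t^2+\|x_t\|_2^2+\sigma_t^{-2}$ is negligible for $R^2\asymp\log N$. Cells on $[-R,R]^{d_x+d_z}$ have diameter of order $RN^{-1/(d_x+d_z)}$, so the Taylor error keeps the order $BR^{\beta}N^{-\beta/(d_x+d_z)}$ it already has under $x$-truncation alone.

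\paragraph{The gap is in Step~3.} You treat the numerator and denominator errors as \emph{absolute} errors. You then require precision $N^{-\beta/(d_x+d_z)}\epsilon_{\mathrm{low}}=N^{-3\beta/(d_x+d_z)}$ on $\{p_t\ge\epsilon_{\mathrm{low}}\}$, and claim this costs only logarithmic factors ``by refining the cells''. It does not. The local Taylor error of a $\beta$-H\"older function on cells of side $h$ is of order $Bh^{\beta}$. Gaining the polynomial factor $\epsilon_{\mathrm{low}}$ therefore needs polynomially many more cells, roughly $N^{3}$ instead of $N$. That is incompatible with $W=\mathcal O(N\log^7N)$ and $S=\mathcal O(N\log^9N)$, and it destroys the rate. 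Retuning $\epsilon_{\mathrm{low}}$ does not help. The low-density set forces $\epsilon_{\mathrm{low}}\lesssim N^{-2\beta/(d_x+d_z)}$ up to logarithms. On the complement, absolute-error accounting always demands a smoothness-limited error polynomially below $N^{-\beta/(d_x+d_z)}$.

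\paragraph{The missing idea.} Use $f\ge C_f$ to make the smoothness-limited error \emph{relative} before diffusing.
\begin{itemize}
\item Suppose the local polynomial $\widehat f$ satisfies $|\widehat f-f|\le\epsilon_N\asymp BR^{\beta}N^{-\beta/(d_x+d_z)}$ on the box. Then $|\widehat f-f|\le(\epsilon_N/C_f)f$.
\item The kernel $q_t(x_t|x)e^{-C_x\|x\|_2^2/2}$ is nonnegative. Hence the diffused polynomial satisfies $|\widehat p_t-p_t|\le(\epsilon_N/C_f)\,p_t$ pointwise, however small $p_t$ is. This holds up to a negligible $x$-truncation remainder, controlled by the posterior comparison as in (\ref{eq_posterior_comparison_mixture}).
\item Likewise, $\|\nabla\widehat p_t-\nabla p_t\|_2\le(\epsilon_N/C_f)\,p_t\,\mathbb E[\|x_t-a_tX_0\|_2/\sigma_t^2\mid X_t=x_t,Z=z]$.
\item Therefore $\|\nabla\widehat p_t/\widehat p_t-\nabla\log p_t\|_2\lesssim\epsilon_N\bigl(\mathbb E[\|x_t-a_tX_0\|_2/\sigma_t^2\mid x_t,z]+\|\nabla\log p_t\|_2\bigr)$. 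Its square integrates against $p_t$ to $\lesssim\epsilon_N^2/\sigma_t^2$, which is the target order.
\end{itemize}

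Only the second error source needs polynomially small \emph{absolute} precision: emulating the explicit diffused terms (polynomials times Gaussian densities and CDFs) by ReLU networks. The required precision is below $\epsilon_N\min_{\mathrm{box}}p_t$. From (\ref{eq_diffused_density_comparison_mixture}) and $R^2\asymp\log N$ you get $\min_{\mathrm{box}}p_t\gtrsim N^{-c}$, so $p_t$ is polynomially small there, not exponentially small. This emulation is what genuinely costs only polylogarithmic size. The denominator can then simply be clipped at a level of order $\min_{\mathrm{box}}p_t$.

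Your proposal conflates these two error sources, and as written the ratio step does not deliver the claimed rate.
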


Next, we give the bound of term $C$ defined in (\ref{eq_four_term_decomposition}).

\begin{thm}[Approximation error of the weighted score]
\label{thm_approx_err_of_diff}
    Suppose that the conditions of Lemma \ref{thm_mixture_distribution_properties} and \ref{lem_conditional_score_approximation} hold. Same with Lemma \ref{lem_uniform_loss_bound_unbounded}, let
    \begin{equation*}
        t_0=N^{-C_{\sigma}}, \qquad T=C_{\alpha}\log N,
    \end{equation*}
    where $C_\sigma$ and $C_\alpha >0$ are constants. Then, for sufficiently large $N$, there exists $s_N\in\mathcal F(M_t,W,\kappa,L,S)$ such that 
    \begin{equation}
        \|s_N-s_w^o\|_{\mathcal H_w}^2
    \lesssim
    B^2
    N^{-\frac{2\beta}{d_x+d_z}}
    (\log N)^{\beta+1}.
    \end{equation}
    Consequently, 
    \begin{equation}
        \inf_{s\in\mathcal F}
    \|s-s_w^o\|_{\mathcal H_w}^2
    \lesssim
    B^2
    N^{-\frac{2\beta}{d_x+d_z}}
    (\log N)^{\beta+1}.
    \end{equation}
    Therefore, the approximation term $C$ defined in (\ref{eq_four_term_decomposition}) satisfies
    \begin{equation}
        C \lesssim
    B^2
    N^{-\frac{2\beta}{d_x+d_z}}
    (\log N)^{\beta+1}.
    \end{equation}
    Since $B$ is the radius of the H\"older ball and it is a constant, we have 
    \begin{equation}
        C = \mathcal{O} \left( 
    N^{-\frac{2\beta}{d_x+d_z}}
    (\log N)^{\beta+1} \right).
    \end{equation}
\end{thm}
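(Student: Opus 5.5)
The plan is to reduce the statement to Lemma~\ref{lem_conditional_score_approximation}, applied to the mixture distribution $P_w$ in place of a single domain. Lemma~\ref{thm_mixture_distribution_properties} supplies exactly the structural hypotheses that lemma needs.

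\textbf{Step 1: verify the hypotheses.} By (\ref{eq_mixture_conditional_density_structure}) and (\ref{eq_tilde_fw_bounds}), the mixture conditional density has the form $p_{w,0}(x|z)=\exp(-C_x\|x\|_2^2/2)\widetilde f_w(x,z)$. Here $\widetilde f_w\in\mathcal H^\beta(\mathbb R^{d_x}\times\mathbb R^{d_z},B_{\mathrm{mix}})$ and $\widetilde f_w\geq C_f$. By (\ref{eq_pw_z_subgaussian}), the marginal $p_w(z)$ is sub-Gaussian with the same constant $C_z$. Moreover, (\ref{eq_pwt_conditional_mixture}) shows that $p_{w,t}(\cdot|z)$ is obtained from $p_{w,0}(\cdot|z)$ by the same OU kernel $q_t$, as in (\ref{eq_conditional_diffused_density}). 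Hence Lemma~\ref{lem_conditional_score_approximation} applies with $f=\widetilde f_w$, $p(z)=p_w(z)$, and radius $B_{\mathrm{mix}}$ in place of $B$. All these constants are independent of $K$, the sample sizes and the weights, so a single network class $\mathcal F(M_t,W,\kappa,L,S)$ serves every $w$.

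\textbf{Step 2: identify the target and integrate in time.} By (\ref{eq_sw_conditional_score}), $\nabla_{x}\log p_{w,t}(x|z)=s_w^o(x,z,t)$ almost everywhere. The lemma therefore gives some $s_N\in\mathcal F$ such that, for every $t\in[t_0,T]$,
\[
\mathbb E_{(X_t,Z)\sim p_{w,t}}\|s_N(X_t,Z,t)-s_w^o(X_t,Z,t)\|_2^2\lesssim \sigma_t^{-2}B_{\mathrm{mix}}^2N^{-\frac{2\beta}{d_x+d_z}}(\log N)^{\beta+1}.
\]
The joint expectation factorizes as $\mathbb E_{Z\sim p_w}\int\cdot\,p_{w,t}(x_t|Z)\,dx_t$. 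Averaging over $t\in[t_0,T]$ with the factor $1/(T-t_0)$ yields $\|s_N-s_w^o\|_{\mathcal H_w}^2$ on the left-hand side. On the right-hand side it produces $(T-t_0)^{-1}\int_{t_0}^T\sigma_t^{-2}\,dt$. Since $\sigma_t^2=1-e^{-t}\asymp\min\{t,1\}$, the integral is $\mathcal O(\log(1/t_0)+T)=\mathcal O(\log N)$, using $t_0=N^{-C_\sigma}$ and $T=C_\alpha\log N$. Dividing by $T-t_0\asymp\log N$ leaves an $\mathcal O(1)$ factor, so the claimed bound follows.

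I expect this time integration to be the main obstacle. The pointwise-in-$t$ constant blows up like $1/t$ near $t_0$, so the normalization $1/(T-t_0)$ is essential to avoid an extra logarithm. Even without that normalization, any stray $\log N$ factor would only change the polylogarithmic exponent, which is suppressed in $\widetilde{\mathcal O}$ later. Following \citet{Fu2024UnveilCD}, one could alternatively use a time-stratified network to absorb the $1/\sigma_t^2$ weight.

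\textbf{Step 3: conclude.} Since $s_N\in\mathcal F$, we have $\inf_{s\in\mathcal F}\|s-s_w^o\|_{\mathcal H_w}^2\le\|s_N-s_w^o\|_{\mathcal H_w}^2$. Combined with (\ref{eq_C_approximation_error}), $C\le\inf_{s\in\mathcal F}\overline{\mathcal E}_w(s)$, which gives the bound on $C$. Finally, $B_{\mathrm{mix}}$ depends only on $(\beta,d_x,d_z,B,L_q,C_{\mathrm{DR}})$, so absorbing it into the constant yields $C=\mathcal O(N^{-2\beta/(d_x+d_z)}(\log N)^{\beta+1})$.
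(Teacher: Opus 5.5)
Your proposal is correct and follows the paper's proof essentially step for step: you apply Lemma~\ref{lem_conditional_score_approximation} to the mixture $p_{w,0}(\cdot\mid z)$ using the structure from Lemma~\ref{thm_mixture_distribution_properties}, integrate the $\sigma_t^{-2}$ factor over $[t_0,T]$ to get $(T-t_0)^{-1}\int_{t_0}^T\sigma_t^{-2}\,dt=\mathcal O(1)$, and combine this with $C\le\inf_{s\in\mathcal F}\|s-s_w^o\|_{\mathcal H_w}^2$. Writing the H\"older radius as $B_{\mathrm{mix}}$ rather than $B$ is, if anything, more accurate than the paper's notation, and it is absorbed into the constant in the same way.
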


\begin{proof}
As established in
Lemma~\ref{thm_mixture_distribution_properties},
the mixture conditional density $p_{w,t}(x_t| z)$ is defined in
(\ref{eq_pwt_conditional_mixture});
the fact that $p_{w,0}(x| z)$ satisfies the smoothness and tail
conditions required by
Lemma~\ref{lem_conditional_score_approximation}
is established in
(\ref{eq_mixture_conditional_density_structure})--(\ref{eq_pw_z_subgaussian});
and
\[
    s_w^o(x_t,z,t)
    =
    \nabla_{x_t}\log p_{w,t}(x_t\mid z)
\]
is proved in (\ref{eq_sw_conditional_score}).
Therefore,
Lemma~\ref{lem_conditional_score_approximation}
can be applied directly to the weighted conditional density $p_{w,0}(x|z)$.  For sufficiently large $N$, there exists
\[
    s_N\in\mathcal F(M_t,W,\kappa,L,S)
\]
such that, for every $t\in[t_0,T]$,
\begin{align}
    \mathbb E_{Z\sim p_w(z)}
    \left[
        \int_{\mathbb R^{d_x}}
        \left\|
            s_N(x_t,Z,t)
            -
            s_w^o(x_t,Z,t)
        \right\|_2^2
        p_{w,t}(x_t| Z)
        \,dx_t
    \right] \lesssim
    \frac{B^2}{\sigma_t^2}
    N^{-\frac{2\beta}{d_x+d_z}}
    (\log N)^{\beta+1}.
    \label{eq_weighted_score_approximation}
\end{align}
Equivalently, since
\[
    p_{w,t}(x_t,z)
    =
    p_{w,t}(x_t| z)p_w(z),
\]
we have
\begin{equation}
    \mathbb E_{(X_t,Z)\sim p_{w,t}}
    \left[
        \left\|
            s_N(X_t,Z,t)
            -
            s_w^o(X_t,Z,t)
        \right\|_2^2
    \right]
    \lesssim
    \frac{B^2}{\sigma_t^2}
    N^{-\frac{2\beta}{d_x+d_z}}
    (\log N)^{\beta+1}.
    \label{eq_weighted_score_approximation_joint}
\end{equation}

By the definition of the $\mathcal H_w$ norm in (\ref{eq_inner_product_Hw}),
\begin{align}
    \|s_N-s_w^o\|_{\mathcal H_w}^2
    &=
    \frac{1}{T-t_0}
    \int_{t_0}^{T}
    \mathbb E_{(X_t,Z)\sim p_{w,t}}
    \left[
        \left\|
            s_N(X_t,Z,t)
            -
            s_w^o(X_t,Z,t)
        \right\|_2^2
    \right]
    \,dt
    \notag\\
    &\lesssim
    B^2
    N^{-\frac{2\beta}{d_x+d_z}}
    (\log N)^{\beta+1}
    \frac{1}{T-t_0}
    \int_{t_0}^{T}
    \frac{1}{\sigma_t^2}
    \,dt.
    \label{eq_Hw_approximation_before_time}
\end{align}

To evaluate the remaining time integral, define
\begin{equation}
    I_{\sigma}
    :=
    \int_{t_0}^{T}
    \frac{1}{\sigma_t^2}
    \,dt \asymp M_\ell.
    \label{eq_Isigma_definition}
\end{equation}
Under the choices
\begin{equation}
    t_0=N^{-C_{\sigma}},
    \qquad
    T=C_{\alpha}\log N,
    \label{eq_t0_T_approximation}
\end{equation}
where $C_{\sigma},C_{\alpha}>0$ are constants, we next determine the
order of $I_{\sigma}$. 

For the Ornstein--Uhlenbeck forward process defined in
Section~\ref{sec_diff_model},
\[
    \sigma_t^2=1-e^{-t}.
\]
Hence,
\begin{align}
    I_{\sigma}
    &=
    \int_{t_0}^{T}
    \frac{1}{1-e^{-t}}
    \,dt
    \notag\\
    &=
    \left[
        \log(e^t-1)
    \right]_{t_0}^{T}
    \notag\\
    &=
    \log
    \left(
        \frac{e^T-1}{e^{t_0}-1}
    \right).
    \label{eq_Isigma_exact}
\end{align}

Since $T=C_{\alpha}\log N$, we have
\begin{align}
    \log(e^T-1)
    &=
    \log
    \left(
        N^{C_{\alpha}}-1
    \right)
    \notag\\
    &=
    C_{\alpha}\log N
    +
    \log
    \left(
        1-N^{-C_{\alpha}}
    \right)
    \notag\\
    &=
    C_{\alpha}\log N+o(1).
    \label{eq_upper_time_order}
\end{align}
On the other hand, since
$t_0=N^{-C_{\sigma}}\to0$, the expansion
\[
    e^{t_0}-1
    =
    t_0\left(1+\mathcal O(t_0)\right)
\]
gives
\begin{align}
    \log(e^{t_0}-1)
    &=
    \log t_0
    +
    \log
    \left(
        \frac{e^{t_0}-1}{t_0}
    \right)
    \notag\\
    &=
    -C_{\sigma}\log N+o(1).
    \label{eq_lower_time_order}
\end{align}
Combining
(\ref{eq_Isigma_exact})--(\ref{eq_lower_time_order}),
we obtain
\begin{equation}
    I_{\sigma}
    =
    (C_{\alpha}+C_{\sigma})\log N
    +
    o(\log N)
    =
    \mathcal O(\log N) \asymp M_\ell.
    \label{eq_Isigma_order}
\end{equation}

Meanwhile,
\begin{align}
    T-t_0
    &=
    C_{\alpha}\log N-N^{-C_{\sigma}}
    \notag\\
    &=
    C_{\alpha}\log N
    \left(1+o(1)\right)
    =
    \mathcal O(\log N).
    \label{eq_time_interval_order}
\end{align}
Therefore,
\begin{align}
    \frac{I_{\sigma}}{T-t_0}
    &=
    \frac{
        (C_{\alpha}+C_{\sigma})\log N
        +o(\log N)
    }{
        C_{\alpha}\log N
        +o(\log N)
    }
    \notag\\
    &=
    \frac{C_{\alpha}+C_{\sigma}}
         {C_{\alpha}}
    +o(1)
    \notag\\
    &=
    1+\frac{C_{\sigma}}{C_{\alpha}}
    +o(1)
    =
    \mathcal O(1).
    \label{eq_sigma_integral_order}
\end{align}

Substituting (\ref{eq_sigma_integral_order}) into
(\ref{eq_Hw_approximation_before_time}) yields
\begin{equation}
    \|s_N-s_w^o\|_{\mathcal H_w}^2
    \lesssim
    B^2
    N^{-\frac{2\beta}{d_x+d_z}}
    (\log N)^{\beta+1}.
    \label{eq_Hw_approximation_final}
\end{equation}
Since $s_N\in\mathcal F$, it follows that
\begin{equation}
    \boxed{
    \inf_{s\in\mathcal F}
    \|s-s_w^o\|_{\mathcal H_w}^2
    \leq
    \|s_N-s_w^o\|_{\mathcal H_w}^2
    \lesssim
    B^2
    N^{-\frac{2\beta}{d_x+d_z}}
    (\log N)^{\beta+1}.
    }
    \label{eq_inf_Hw_approximation}
\end{equation}

Combining this with
(\ref{eq_C_approximation_error}), we obtain
\begin{equation}
    \boxed{
    C
    \lesssim
    B^2
    N^{-\frac{2\beta}{d_x+d_z}}
    (\log N)^{\beta+1}.
    }
    \label{eq_C_approximation_bound}
\end{equation}
By
Lemma~\ref{thm_mixture_distribution_properties},
$B_w$ is uniformly bounded over the mixture weights and is independent
of the network complexity parameter $N$. Therefore,
\begin{equation}
    \boxed{
    C
    =
    \mathcal O
    \left(
        N^{-\frac{2\beta}{d_x+d_z}}
        (\log N)^{\beta+1}
    \right).
    }
    \label{eq_C_approximation_rate}
\end{equation}

\end{proof}

\subsubsection{Bounding $B$}
\paragraph{Population and empirical operators.}
For convenience, we introduce the population and empirical operators
used in the subsequent analysis. For any measurable function
$f:\mathbb{R}^{d_x}\times\mathbb{R}^{d_z}\rightarrow\mathbb{R}$,
define the population operator associated with the $k$-th domain by
\begin{equation}
    \mathbb{P}_k f
    :=
    \mathbb{E}_{(X,Z)\sim P_k}
    \left[
        f(X,Z)
    \right],
    \qquad
    k=0,\ldots,K.
    \label{eq_population_operator_domain}
\end{equation}
The corresponding empirical operator is defined as
\begin{equation}
    \mathbb{P}_{k,n_k}f
    :=
    \frac{1}{n_k}
    \sum_{i=1}^{n_k}
    f(x_{k,i},z_{k,i}).
    \label{eq_empirical_operator_domain}
\end{equation}
Recall that the weighted density defined in (\ref{eq_mixture_density_pw}) is
\[
    p_{w,0}(x,z) = \sum_{k=0}^{K} \pi_k\, p_{k,0}(x,z),
    \qquad
    \pi_k\geq 0,
    \qquad
    \sum_{k=0}^K\pi_k=1.
\]
Accordingly, define the weighted population operator by
\begin{align}
    \mathbb{P}_w f
    &:=
    \sum_{k=0}^K
    \pi_k\mathbb{P}_k f
    \notag\\
    &=
    \sum_{k=0}^K
    \pi_k
    \mathbb{E}_{(X,Z)\sim P_k}
    \left[
        f(X,Z)
    \right]
    \notag\\
    &=
    \mathbb{E}_{(X,Z)\sim P_w}
    \left[
        f(X,Z)
    \right].
    \label{eq_weighted_population_operator}
\end{align}
Similarly, define the weighted empirical operator by
\begin{align}
    \mathbb{P}_{w,\mathbf{n}}f
    &:=
    \sum_{k=0}^K
    \pi_k\mathbb{P}_{k,n_k}f
    \notag\\
    &=
    \sum_{k=0}^K
    \frac{\pi_k}{n_k}
    \sum_{i=1}^{n_k}
    f(x_{k,i},z_{k,i}).
    \label{eq_weighted_empirical_operator}
\end{align}
Hence, the weight assigned to each individual observation from the
$k$-th domain is
\begin{equation}
    \frac{\pi_k}{n_k}
    =
    \begin{cases}
        \displaystyle
        \frac{1}{n_0+W_N},
        & k=0,
        \\[3mm]
        \displaystyle
        \frac{w_k}{n_0+W_N},
        & k=1,\ldots,K.
    \end{cases}
    \label{eq_individual_sample_weight}
\end{equation}
Therefore, the weighted empirical operator can equivalently be written
as
\begin{equation}
    \mathbb{P}_{w,\mathbf{n}}f
    =
    \frac{
        \displaystyle
        \sum_{i=1}^{n_0}
        f(x_{0,i},z_{0,i})
        +
        \sum_{k=1}^K
        w_k
        \sum_{i=1}^{n_k}
        f(x_{k,i},z_{k,i})
    }{
        n_0+W_N
    }.
    \label{eq_weighted_empirical_operator_explicit}
\end{equation}
Define the ghost-sample empirical operator for the $k$-th domain by
\begin{equation}
    \mathbb{P}_{k,n_k}'f
    :=
    \frac{1}{n_k}
    \sum_{i=1}^{n_k}
    f(x_{k,i}',z_{k,i}'),
\end{equation}
and define the corresponding weighted ghost-sample empirical operator
as
\begin{align}
    \mathbb{P}_{w,\mathbf{n}}'f
    &:=
    \sum_{k=0}^K
    \pi_k
    \mathbb{P}_{k,n_k}'f
    \notag\\
    &=
    \sum_{k=0}^K
    \frac{\pi_k}{n_k}
    \sum_{i=1}^{n_k}
    f(x_{k,i}',z_{k,i}').
    \label{eq_weighted_ghost_operator}
\end{align}

Recall that
\begin{equation}
    h_s(x,z)
    :=
    \ell(x,z;s)
    -
    \ell(x,z;s_w^o),
\end{equation}
and
\begin{equation}
    h_s^{\mathrm{tr}}(x,z)
    :=
    \ell^{\mathrm{tr}}(x,z;s)
    -
    \ell^{\mathrm{tr}}(x,z;s_w^o).
\end{equation}
Then the weighted population excess risk in (\ref{eq_excess_risk_decomposed}) can be written as
\begin{equation}
    \overline{\mathcal{E}}_w(s)
    =
    \mathbb{P}_w h_s.
    \label{eq_excess_risk_operator}
\end{equation}
Moreover, the quantities introduced in the decomposition of the
expected excess risk (\ref{eq_H1})-(\ref{eq_H2_tr}) can be expressed compactly as
\begin{align}
    H_1
    &=
    \mathbb{P}_{w,\mathbf{n}}
    h_{\widehat{s}_w},
    &
    H_1^{\mathrm{tr}}
    &=
    \mathbb{P}_{w,\mathbf{n}}
    h_{\widehat{s}_w}^{\mathrm{tr}},
    \label{eq_H1_operator}
    \\
    H_2
    &=
    \mathbb{P}_{w,\mathbf{n}}'
    h_{\widehat{s}_w},
    &
    H_2^{\mathrm{tr}}
    &=
    \mathbb{P}_{w,\mathbf{n}}'
    h_{\widehat{s}_w}^{\mathrm{tr}}.
    \label{eq_H2_operator}
\end{align}

Conditioning on the training data $D$, we have
\begin{align}
    \mathbb{E}_{D'}
    \left[
        \mathbb{P}_{w,\mathbf{n}}'
        h_{\widehat{s}_w}
        \,\middle|\,D
    \right]
    &=
    \sum_{k=0}^K
    \pi_k
    \mathbb{P}_k
    h_{\widehat{s}_w}
    \notag\\
    &=
    \mathbb{P}_w
    h_{\widehat{s}_w}
    \notag\\
    &=
    \overline{\mathcal{E}}_w
    (\widehat{s}_w).
    \label{eq_ghost_population_identity}
\end{align}
\begin{align}
    \mathbb{E}_{D}
    \left[
        \overline{\mathcal{E}}_w(\widehat{s}_w)
    \right]
    &=
    \mathbb{E}_{D,D'}
    \left[
        \mathbb{P}_{w,\mathbf{n}}'
        h_{\widehat{s}_w}
    \right]
    \notag\\
    &=
    \underbrace{
    \mathbb{E}_{D,D'}
    \left[
        \mathbb{P}_{w,\mathbf{n}}'
        \left(
            h_{\widehat{s}_w}
            -
            h_{\widehat{s}_w}^{\mathrm{tr}}
        \right)
    \right]
    }_{A_2}
    \notag\\
    &\quad+
    \underbrace{
    \mathbb{E}_{D}
    \left[
        \mathbb{E}_{D'}
        \left[
            \mathbb{P}_{w,\mathbf{n}}'
            h_{\widehat{s}_w}^{\mathrm{tr}}
            \,\middle|\,D
        \right]
        -
        \mathbb{P}_{w,\mathbf{n}}
        h_{\widehat{s}_w}^{\mathrm{tr}}
    \right]
    }_{B}
    \notag\\
    &\quad+
    \underbrace{
    \mathbb{E}_{D}
    \left[
        \mathbb{P}_{w,\mathbf{n}}
        \left(
            h_{\widehat{s}_w}^{\mathrm{tr}}
            -
            h_{\widehat{s}_w}
        \right)
    \right]
    }_{A_1}
    \notag\\
    &\quad+
    \underbrace{
    \mathbb{E}_{D}
    \left[
        \mathbb{P}_{w,\mathbf{n}}
        h_{\widehat{s}_w}
    \right]
    }_{C}.
\end{align}
We aim to bound $B = \mathbb{E}_{D, D'}
        \left[ \left(\mathbb{P}_{w,\mathbf{n}}' - \mathbb{P}_{w,\mathbf{n}} \right) h_{\widehat{s}_w}^{\mathrm{tr}} \right]$.

\paragraph{Covering number of the loss function class.}
For a function class $\mathcal{G}$ equipped with a norm
$\|\cdot\|$, let
\begin{equation}
    \mathcal{N}
    \left(
        \delta,\mathcal{G},\|\cdot\|
    \right)
\end{equation}
denote its $\delta$-covering number, defined as
\begin{equation}
    \mathcal{N}
    \left(
        \delta,\mathcal{G},\|\cdot\|
    \right)
    :=
    \min
    \left\{
        M\in\mathbb{N}:
        \exists\, g_1,\ldots,g_M\in\mathcal{G}
        \text{ such that }
        \sup_{g\in\mathcal{G}}
        \min_{1\leq j\leq M}
        \|g-g_j\|
        \leq \delta
    \right\}.
    \label{eq_covering_number_definition}
\end{equation}

In (\ref{eq_trunc_region}), the joint truncation region of $x$ and $z$ is defined as
\begin{equation}
    \mathcal A_R
    =
    \left\{
        (x,z):
        \|x\|_\infty\leq R,\,
        \|z\|_\infty\leq R
    \right\}.
\end{equation}
We choose the
truncation radius as 
\begin{equation}
    R
    =
    \mathcal O(\sqrt{\log N}),
    \label{eq_truncation_radius_choice}
\end{equation}
where $N$ is same with (\ref{eq_mt_Ml}) and Theorem \ref{thm_approx_err_of_diff}. We also choose
$t_0=N^{-C_\sigma}$ and $T=C_\alpha\log N$, same with Theorem \ref{thm_approx_err_of_diff}. By (\ref{eq_Mell_scaling_mixture_score}) and (\ref{eq_Isigma_order}), we have
\begin{equation}
    M_\ell
    =
    \int_{t_0}^T m_t^2\,dt
    \asymp
    \int_{t_0}^T \frac{1}{\sigma_t^2}\,dt
    \asymp
    \log N.
\end{equation}
Hence,
\begin{equation}
    R^2
    \lesssim
    M_\ell.
    \label{eq_R2_Mell_relation}
\end{equation}
We define the loss function class restricted to
$\mathcal A_R$ as
\begin{equation}
    \mathcal{S}(R)
    :=
    \left\{
        \ell^{\mathrm{tr}}(\cdot,\cdot;s):
        s\in\mathcal{F}
    \right\}.
    \label{eq_loss_function_class_SR}
\end{equation}
Since
\[
    \ell^{\mathrm{tr}}(x,z;s)
    =
    \ell(x,z;s),
    \qquad
    (x,z)\in\mathcal{A}_R,
\]
the covering number of the truncated loss class on
$\mathcal{A}_R$ is identical to that of the original loss class
restricted to $\mathcal{A}_R$. For any function
$g:\mathcal{A}_R\rightarrow\mathbb{R}$, define
\begin{equation}
    \|g\|_{L^\infty(\mathcal{A}_R)}
    :=
    \sup_{(x,z)\in\mathcal{A}_R}
    |g(x,z)|.
    \label{eq_Linf_DR_norm}
\end{equation}
The next Lemma gives the $\delta$-covering number of of
$\mathcal{S}(R)$ with respect to
$\|\cdot\|_{L^\infty(\mathcal{A}_R)}$. 

\begin{lem}[Covering number of the loss function class;
{\cite{Fu2024UnveilCD}, Lemma~D.8 and Eqs.~(D.29)--(D.30)}]
\label{lem_loss_covering_number}

Let $\delta>0$ and $R>0$. For the ReLU network class
$\mathcal{F}(M_t,W,\kappa,L,S)$ defined in
(\ref{eq_relu_network_class}), the $\delta$-covering number of
$\mathcal{S}(R)$ with respect to
$\|\cdot\|_{L^\infty(\mathcal{A}_R)}$ satisfies
\begin{equation}
    \mathcal{N}
    \left(
        \delta,
        \mathcal{S}(R),
        \|\cdot\|_{L^\infty(\mathcal{A}_R)}
    \right)
    \lesssim
    \left(
        \frac{
            2L^2
            \bigl(W\max\{R,T\}+2\bigr)
            \kappa^L
            W^{L+1}
            \log N
        }{
            \delta
        }
    \right)^{2S}.
    \label{eq_loss_covering_number}
\end{equation}
Here, $N$ denotes the network complexity parameter controlling the
architecture of the ReLU network, whereas $S$ denotes the number of
nonzero network parameters.

In particular, under the network configuration associated with
Lemma~\ref{lem_conditional_score_approximation}, namely,
\begin{equation}
    W
    =
    \mathcal{O}
    \left(
        N\log^7 N
    \right),
    \qquad
    S
    =
    \mathcal{O}
    \left(
        N\log^9 N
    \right),
\end{equation}
together with
\begin{equation}
    L
    =
    \mathcal{O}(\log^4 N),
    \qquad
    \kappa
    =
    \exp
    \left(
        \mathcal{O}(\log^4 N)
    \right),
\end{equation}
the logarithm of the covering number satisfies
\begin{align}
    &
    \log
    \mathcal{N}
    \left(
        \delta,
        \mathcal{S}(R),
        \|\cdot\|_{L^\infty(\mathcal{A}_R)}
    \right)
    \notag\\
    &\qquad\lesssim
    N\log^9 N
    \Bigg(
        \operatorname{Poly}(\log\log N)
        +
        \operatorname{Poly}(\log\log N)
        \log N\log R +
        \log^8 N
        +
        \log\frac{1}{\delta}
    \Bigg)
    \notag\\
    &\qquad\lesssim
    N\log^9 N
    \left(
        \log^8 N
        +
        \log^2 N\log R
        +
        \log\frac{1}{\delta}
    \right).
    \label{eq_log_loss_covering_number}
\end{align}
\end{lem}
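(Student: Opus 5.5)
The plan is to reduce the covering of the loss class $\mathcal S(R)$ to a covering of the parameter space of $\mathcal F(M_t,W,\kappa,L,S)$. This follows the route of \cite{Fu2024UnveilCD}, Lemma~D.8, and proceeds in three steps: a Lipschitz bound for $s\mapsto \ell^{\mathrm{tr}}(\cdot,\cdot;s)$ in sup-norm on $\mathcal A_R$, a perturbation bound for ReLU networks in their parameters, and a counting argument.

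\emph{Step 1 (Lipschitz property of the loss in $s$).} For $s_1,s_2\in\mathcal F$ and $(x,z)\in\mathcal A_R$, expand the difference of squares in (\ref{eq_diffusion_loss_single}):
\begin{equation*}
|\ell(x,z;s_1)-\ell(x,z;s_2)|\le \frac{1}{T-t_0}\int_{t_0}^T \mathbb E\Big[\|s_1-s_2\|_2\big(\|s_1\|_2+\|s_2\|_2+2\|(X_t-a_tx)/\sigma_t^2\|_2\big)\Big]dt .
\end{equation*}
The second factor is controlled by $M_t\lesssim \sqrt{\log N}/\sigma_t$ and by $\mathbb E\|(X_t-a_tx)/\sigma_t^2\|_2\lesssim 1/\sigma_t$. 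The first factor must be bounded along the Gaussian variable $X_t\sim\mathcal N(a_tx,\sigma_t^2 I)$, which is \emph{not} confined to $\mathcal A_R$. I will bound it by a quantity growing linearly in $\|X_t\|_\infty$, using the standard ReLU perturbation estimate in Step 2. Since $\|x\|_\infty\le R$ and $t\le T$, this gives $\mathbb E[\|X_t\|_\infty+|t|+\|z\|_\infty]\lesssim \max\{R,T\}$. After integrating in $t$, the time factor is bounded by $\int_{t_0}^T\sigma_t^{-2}dt\asymp\log N$, exactly as in (\ref{eq_Isigma_order}). This is where the $\log N$ in (\ref{eq_loss_covering_number}) comes from.

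\emph{Step 2 (parameter perturbation).} Let two networks share the same sparsity pattern, with all weights bounded by $\kappa$ and parameters differing entrywise by at most $\varepsilon$. By induction over the $L$ layers, using that ReLU is $1$-Lipschitz and that $\|u_\ell\|_\infty\le (W\kappa)^\ell(\|u_0\|_\infty+1)$, their outputs differ by at most
\begin{equation*}
\varepsilon\, L\,(W\|u_0\|_\infty+2)\,\kappa^{L-1}W^{L}.
\end{equation*}
Combining this with Step 1 shows that an $\varepsilon$-net of parameters induces a net of $\mathcal S(R)$ in $\|\cdot\|_{L^\infty(\mathcal A_R)}$ at scale $\delta\asymp \varepsilon\,L^2(W\max\{R,T\}+2)\kappa^{L}W^{L+1}\log N$, with constants absorbed.

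\emph{Step 3 (counting and logarithms).} The number of sparsity patterns is at most $(L(W+1)^2)^S$. For each pattern, the grid of spacing $\varepsilon$ on $[-\kappa,\kappa]^S$ has at most $(2\kappa/\varepsilon)^S$ points. Substituting $\varepsilon$ in terms of $\delta$ yields (\ref{eq_loss_covering_number}) with exponent $2S$. Taking logarithms and inserting $W=\mathcal O(N\log^7N)$, $L=\mathcal O(\log^4N)$, $\kappa=\exp(\mathcal O(\log^4N))$, $S=\mathcal O(N\log^9N)$, the terms of $\log(\cdot)$ are as follows:
\begin{itemize}
\item $L\log\kappa=\mathcal O(\log^8N)$;
\item $(L+1)\log W=\mathcal O(\log^5N)$;
\item $\log L$ and $\log\log N$ contribute $\operatorname{Poly}(\log\log N)$;
\item $\log(W\max\{R,T\})$ contributes $\log N+\log R$, which together with the polylog prefactors is dominated by $\log^2N\log R$;
\item $\log(1/\delta)$ remains as is.
\end{itemize}
Multiplying by $2S$ gives (\ref{eq_log_loss_covering_number}).

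The main obstacle is Step 1: the loss integrates over the unbounded Gaussian $X_t$, and the weight $1/\sigma_t$ blows up near $t_0$. I expect the linear-growth network bound combined with Gaussian first moments to handle the first issue, and the time integral (\ref{eq_Isigma_order}) to absorb the second into a single $\log N$ factor.
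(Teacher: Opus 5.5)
Your proposal is correct and follows essentially the same route as the result the paper cites (Fu et al., Lemma D.8, which builds on the standard parameter-perturbation covering of sparse ReLU networks): Lipschitz dependence of the loss on the network output, a layerwise parameter-perturbation bound, counting sparsity patterns times a parameter grid, and then substituting the network sizes. The paper itself gives no proof beyond the citation. Your variations only affect constants inside the logarithm: you treat the unbounded $X_t$ through the globally valid linear-growth perturbation bound plus Gaussian moments rather than truncation, and a little slack is needed to pass from the parameter grid to a proper cover inside $\mathcal S(R)$.
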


\paragraph{Finite-cover reduction of $B$.} Next, we discretize the truncated loss class using a finite
$\delta$-cover, thereby reducing the bound on $B$ to a finite empirical
process plus a covering approximation error. Recall that
\begin{equation}
    B
    =
    \mathbb{E}_{D,D'}
    \left[
        \left(
            \mathbb{P}_{w,\mathbf{n}}'
            -
            \mathbb{P}_{w,\mathbf{n}}
        \right)
        h_{\widehat{s}_w}^{\mathrm{tr}}
    \right].
    \label{eq_B_recall}
\end{equation}

Let
\begin{equation}
    \mathcal{J}
    :=
    \left\{
        \ell_1,\ldots,\ell_{\mathcal{N}}
    \right\}
    \subset \mathcal{S}(R)
\end{equation}
be a $\delta$-cover of $\mathcal{S}(R)$ with respect to the
$L^\infty(\mathcal{A}_R)$ norm, where
\begin{equation}
    \mathcal{N}
    :=
    \mathcal{N}
    \left(
        \delta,
        \mathcal{S}(R),
        \|\cdot\|_{L^\infty(\mathcal{A}_R)}
    \right).
\end{equation}
Since $\widehat{s}_w$ depends on the training sample $D$, there exists
a random index
\begin{equation}
    J=J(D)\in\{1,\ldots,\mathcal{N}\}
\end{equation}
such that
\begin{equation}
    \left\|
        \ell^{\mathrm{tr}}(\cdot,\cdot;\widehat{s}_w)
        -
        \ell_J
    \right\|_{L^\infty(\mathcal{A}_R)}
    \leq
    \delta.
    \label{eq_cover_hat_sw}
\end{equation}

For each $r=1,\ldots,\mathcal{N}$, define
\begin{equation}
    h_r(x,z)
    :=
    \ell_r(x,z)
    -
    \ell^{\mathrm{tr}}(x,z;s_w^o).
    \label{eq_hr_definition}
\end{equation}
Recall that
\begin{equation}
    h_{\widehat{s}_w}^{\mathrm{tr}}(x,z)
    =
    \ell^{\mathrm{tr}}(x,z;\widehat{s}_w)
    -
    \ell^{\mathrm{tr}}(x,z;s_w^o).
\end{equation}
Therefore,
\begin{align}
    h_{\widehat{s}_w}^{\mathrm{tr}}(x,z)
    -
    h_J(x,z)
    &=
    \ell^{\mathrm{tr}}(x,z;\widehat{s}_w)
    -
    \ell_J(x,z).
\end{align}
By (\ref{eq_cover_hat_sw}) and the definition of the truncated loss function class defined in (\ref{eq_loss_function_class_SR}), both $\ell^{\mathrm{tr}}$ and $\ell_J$ vanish outside $\mathcal{A}_R$. Hence, we have
\begin{equation}
    \left\|
        h_{\widehat{s}_w}^{\mathrm{tr}}
        -
        h_J
    \right\|_{L^\infty(\mathcal{A}_R)} = \left\|
        h_{\widehat{s}_w}^{\mathrm{tr}}
        -
        h_J
    \right\|_{\infty}
    \leq
    \delta.
    \label{eq_cover_h_hat}
\end{equation}
Using the linearity of the empirical operators, we decompose
\begin{align}
    \left(
        \mathbb{P}_{w,\mathbf{n}}'
        -
        \mathbb{P}_{w,\mathbf{n}}
    \right)
    h_{\widehat{s}_w}^{\mathrm{tr}} =
    \left(
        \mathbb{P}_{w,\mathbf{n}}'
        -
        \mathbb{P}_{w,\mathbf{n}}
    \right)
    h_J
    +
    \left(
        \mathbb{P}_{w,\mathbf{n}}'
        -
        \mathbb{P}_{w,\mathbf{n}}
    \right)
    \left(
        h_{\widehat{s}_w}^{\mathrm{tr}}
        -
        h_J
    \right).
    \label{eq_B_cover_decomposition}
\end{align}
For the second term, since both weighted empirical operators are
positive linear operators and
\begin{equation}
    \mathbb{P}_{w,\mathbf{n}}'1
    =
    \mathbb{P}_{w,\mathbf{n}}1
    =
    1,
\end{equation}
we have
\begin{align}
    \left|
        \left(
            \mathbb{P}_{w,\mathbf{n}}'
            -
            \mathbb{P}_{w,\mathbf{n}}
        \right)
        \left(
            h_{\widehat{s}_w}^{\mathrm{tr}}
            -
            h_J
        \right)
    \right| & \leq
    \mathbb{P}_{w,\mathbf{n}}'
    \left|
        h_{\widehat{s}_w}^{\mathrm{tr}}
        -
        h_J
    \right|
    +
    \mathbb{P}_{w,\mathbf{n}}
    \left|
        h_{\widehat{s}_w}^{\mathrm{tr}}
        -
        h_J
    \right|
    \notag\\
    &\leq
    \left(
        \mathbb{P}_{w,\mathbf{n}}'1
        +
        \mathbb{P}_{w,\mathbf{n}}1
    \right)
    \left\|
        h_{\widehat{s}_w}^{\mathrm{tr}}
        -
        h_J
    \right\|_{\infty}
    \notag\\
    &\leq
    2\delta.
    \label{eq_cover_remainder_bound}
\end{align}
Consequently,
\begin{equation}
    |B|
    \leq
    \mathbb{E}_{D,D'}
    \left[
        \left|
            \left(
                \mathbb{P}_{w,\mathbf{n}}'
                -
                \mathbb{P}_{w,\mathbf{n}}
            \right)
            h_J
        \right|
    \right]
    +
    2\delta.
    \label{eq_B_after_cover}
\end{equation}

\paragraph{Self-normalized control of the finite empirical process.} We next introduce a self-normalized empirical process. For an arbitrary
constant $a>0$ and each $r=1,\ldots,\mathcal{N}$, define
\begin{equation}
    U_r
    :=
    \max
    \left\{
        a,\,
        \sqrt{
            \left(
                \mathbb{P}_w h_r
            \right)_+
        }
    \right\},
    \label{eq_Ur_definition}
\end{equation}
where
\[
    x_+:=\max\{x,0\}.
\]
Furthermore, define
\begin{equation}
    D_w
    :=
    \max_{1\leq r\leq\mathcal{N}}
    \frac{
        \left|
            \left(
                \mathbb{P}_{w,\mathbf{n}}'
                -
                \mathbb{P}_{w,\mathbf{n}}
            \right)
            h_r
        \right|
    }{
        U_r
    }.
    \label{eq_Dw_definition}
\end{equation}
Since $J\in\{1,\ldots,\mathcal{N}\}$, it follows pointwise that
\begin{align}
    \left|
        \left(
            \mathbb{P}_{w,\mathbf{n}}'
            -
            \mathbb{P}_{w,\mathbf{n}}
        \right)
        h_J
    \right|
    &=
    U_J
    \frac{
        \left|
            \left(
                \mathbb{P}_{w,\mathbf{n}}'
                -
                \mathbb{P}_{w,\mathbf{n}}
            \right)
            h_J
        \right|
    }{
        U_J
    }
    \notag\\
    &\leq
    U_J D_w.
    \label{eq_hJ_self_normalized}
\end{align}
Substituting (\ref{eq_hJ_self_normalized}) into
(\ref{eq_B_after_cover}) gives
\begin{equation}
    |B|
    \leq
    \mathbb{E}_{D,D'}
    \left[
        U_JD_w
    \right]
    +
    2\delta.
    \label{eq_B_UJ_Dw}
\end{equation}
Finally, applying Young's inequality
$uv\leq (u^2+v^2)/2$, we obtain
\begin{align}
    |B|
    &\leq
    \frac{1}{2}
    \mathbb{E}_{D,D'}
    \left[
        U_J^2
    \right]
    +
    \frac{1}{2}
    \mathbb{E}_{D,D'}
    \left[
        D_w^2
    \right]
    +
    2\delta.
    \label{eq_B_self_normalized_bound}
\end{align}

\paragraph{Bounding $\mathbb{E}[U_J^2]$.}

We first bound $\mathbb{E}_{D,D'}[U_J^2]$. By the definition of $U_r$ in (\ref{eq_Ur_definition}), we have,
\begin{equation}
    U_J^2
    =
    \max
    \left\{
        a^2,\,
        \left(
            \mathbb{P}_w h_J
        \right)_+
    \right\}
    \leq
    a^2+
    \left(
        \mathbb{P}_w h_J
    \right)_+.
    \label{eq_UJ_square_initial}
\end{equation}
By~(\ref{eq_cover_h_hat}) and the positivity of $\mathbb P_w$,
\begin{equation}
    \left|
        \mathbb{P}_w h_J
        -
        \mathbb{P}_w h_{\widehat{s}_w}^{\mathrm{tr}}
    \right|
    \leq
    \mathbb{P}_w
    \left|
        h_J-h_{\widehat{s}_w}^{\mathrm{tr}}
    \right|
    \leq
    \left\|
        h_J-h_{\widehat{s}_w}^{\mathrm{tr}}
    \right\|_\infty
    \leq 
    \delta.
    \label{eq_population_cover_error}
\end{equation}
Consequently,
\begin{equation}
    \left(
        \mathbb{P}_w h_J
    \right)_+
    \leq
    \left(
        \mathbb{P}_w
        h_{\widehat{s}_w}^{\mathrm{tr}}
    \right)_+
    +
    \delta.
    \label{eq_positive_part_cover}
\end{equation}
Combining
(\ref{eq_UJ_square_initial}) and
(\ref{eq_positive_part_cover}) gives
\begin{align}
    \mathbb{E}_{D,D'}
    \left[
        U_J^2
    \right]
    &\leq
    a^2
    +
    \mathbb{E}_{D}
    \left[
        \left(
            \mathbb{P}_w
            h_{\widehat{s}_w}^{\mathrm{tr}}
        \right)_+
    \right]
    +
    \delta
    \notag\\
    &=
    a^2
    +
    \mathbb{E}_{D}
    \left[
        \left(
            \overline{\mathcal{E}}_w^{\mathrm{tr}}
            (\widehat{s}_w)
        \right)_+
    \right]
    +
    \delta,
    \label{eq_UJ_square_bound}
\end{align}
where
\begin{equation}
    \overline{\mathcal{E}}_w^{\mathrm{tr}}(s)
    :=
    \mathbb{P}_w h_s^{\mathrm{tr}}
\end{equation}
denotes the truncated population excess risk.

Moreover, by the truncation bound in
Lemma~\ref{lem_truncation_bound} and
Theorem~\ref{thm_truncation_bound_mixture_score},
\begin{align}
    \left|
        \overline{\mathcal{E}}_w^{\mathrm{tr}}
        (\widehat{s}_w)
        -
        \overline{\mathcal{E}}_w
        (\widehat{s}_w)
    \right|
    &\leq
    \mathbb{P}_w
    \left|
        \ell^{\mathrm{tr}}
        (\cdot,\cdot;\widehat{s}_w)
        -
        \ell
        (\cdot,\cdot;\widehat{s}_w)
    \right| +
    \mathbb{P}_w
    \left|
        \ell^{\mathrm{tr}}
        (\cdot,\cdot;s_w^o)
        -
        \ell
        (\cdot,\cdot;s_w^o)
    \right|
    \notag\\
    &\lesssim
    \exp(-C_{\mathrm{sg}}R^2)
    R M_{\ell}.
    \label{eq_truncated_to_population_excess}
\end{align}
Since
$\overline{\mathcal{E}}_w(\widehat{s}_w)\geq0$, it follows that
\begin{align}
    \mathbb{E}_{D,D'}
    \left[
        U_J^2
    \right]
    &\lesssim
    a^2
    +
    \mathbb{E}_{D}
    \left[
        \overline{\mathcal{E}}_w
        (\widehat{s}_w)
    \right]
    +
    \delta
    +
    \exp(-C_{\mathrm{sg}}R^2)
    R M_{\ell}.
    \label{eq_UJ_square_final}
\end{align}

\paragraph{Bounding $\mathbb{E}[D_w^2]$.}
We next control the second moment of the normalized empirical process $D_w$.
For each fixed covering index
$r\in\{1,\ldots,\mathcal{N}\}$, define
\begin{equation}
    Y_r
    :=
    \frac{
        \left(
            \mathbb{P}_{w,\mathbf{n}}'
            -
            \mathbb{P}_{w,\mathbf{n}}
        \right)
        h_r
    }{
        U_r
    }.
    \label{eq_Yr_definition}
\end{equation}
Let
\begin{equation}
    \lambda_k
    :=
    \frac{\pi_k}{n_k},
    \qquad
    \rho_w
    :=
    \max_{0\leq k\leq K}\lambda_k.
    \label{eq_rho_w_definition}
\end{equation}
Then
\begin{equation}
    Y_r
    =
    \sum_{k=0}^K
    \sum_{i=1}^{n_k}
    X_{k,i}^{(r)},
\end{equation}
where
\begin{equation}
    X_{k,i}^{(r)}
    :=
    \frac{\lambda_k}{U_r}
    \left[
        h_r(X_{k,i}',Z_{k,i}')
        -
        h_r(X_{k,i},Z_{k,i})
    \right].
    \label{eq_Xkir_definition}
\end{equation}
Since
$(X_{k,i}',Z_{k,i}')$ and
$(X_{k,i},Z_{k,i})$ are independent and identically distributed
according to $P_k$,
\begin{equation}
    \mathbb{E}
    \left[
        X_{k,i}^{(r)}
    \right]
    =0.
    \label{eq_Xkir_centered}
\end{equation}
Moreover, the random variables
$\{X_{k,i}^{(r)}\}_{k,i}$ are mutually independent for every fixed
$r$. By the uniform loss bound in
(\ref{eq_uniform_loss_bound_unbounded}), the mixture-score loss envelope
in (\ref{eq_mixture_score_loss_envelope}), and
$R^2\lesssim M_\ell$ in (\ref{eq_R2_Mell_relation}), there exists a universal constant
$C_M>0$ such that
\begin{equation}
    \|h_r\|_{\infty}
    \leq
    C_M M_{\ell},
    \qquad
    r=1,\ldots,\mathcal{N}.
    \label{eq_hr_uniform_bound}
\end{equation}
Since $U_r\geq a$, it follows that
\begin{equation}
    \left|
        X_{k,i}^{(r)}
    \right|
    \leq
    \frac{
        2C_M M_{\ell}\lambda_k
    }{
        a
    }
    \leq
    \frac{
        2C_M M_{\ell}\rho_w
    }{
        a
    }.
    \label{eq_Xkir_uniform_bound}
\end{equation}

For the total second moment, using
$(u-v)^2\leq2u^2+2v^2$, we obtain
\begin{align}
    \sum_{k=0}^K
    \sum_{i=1}^{n_k}
    \mathbb{E}
    \left[
        \left(
            X_{k,i}^{(r)}
        \right)^2
    \right]
    &\leq
    4
    \sum_{k=0}^K
    \frac{
        n_k\lambda_k^2
    }{
        U_r^2
    }
    \mathbb{P}_k
    h_r^2
    \notag\\
    &=
    4
    \sum_{k=0}^K
    \frac{
        \lambda_k\pi_k
    }{
        U_r^2
    }
    \mathbb{P}_k
    h_r^2
    \notag\\
    &\leq
    \frac{
        4\rho_w
    }{
        U_r^2
    }
    \sum_{k=0}^K
    \pi_k
    \mathbb{P}_k
    h_r^2
    \notag\\
    &=
    \frac{
        4\rho_w
    }{
        U_r^2
    }
    \mathbb{P}_w h_r^2.
    \label{eq_Yr_variance_general}
\end{align}

\paragraph{A Bernstein-type bound for the truncated excess loss.}
We next establish a Bernstein-type second-moment bound for the
functions $\{h_r\}_{r=1}^{\mathcal{N}}$.

Recall that each covering element $\ell_r\in\mathcal{S}(R)$
corresponds to some $s_r\in\mathcal{F}$ and (\ref{eq_hr_definition}), we have \begin{equation}
    h_r(x,z)
    =
    \ell^{\mathrm{tr}}(x,z;s_r)
    -
    \ell^{\mathrm{tr}}(x,z;s_w^o).
    \label{eq_hr_as_excess_loss}
\end{equation}
For notational simplicity, for a fixed $(x,z)$ define
\begin{align}
    A_r(x,z)
    &:=
    \frac{1}{T-t_0}
    \int_{t_0}^{T}
    \mathbb{E}_{
        X_t\sim
        \mathcal{N}(a_t x,\sigma_t^2 I_{d_x})
    }
    \left[
        \left\|
            s_r(X_t,z,t)-s_w^o(X_t,z,t)
        \right\|_2^2
    \right]
    dt,
    \label{eq_Ar_definition}
    \\
    B_r(x,z)
    &:=
    \frac{1}{T-t_0}
    \int_{t_0}^{T}
    \mathbb{E}_{
        X_t\sim
        \mathcal{N}(a_t x,\sigma_t^2 I_{d_x})
    }
    \Bigg[
        \Big\langle
            s_r(X_t,z,t)-s_w^o(X_t,z,t),
            \notag\\[-1mm]
    &\hspace{53mm}
            s_w^o(X_t,z,t)
            +
            \frac{X_t-a_t x}{\sigma_t^2}
        \Big\rangle
    \Bigg]
    dt.
    \label{eq_Br_definition}
\end{align}
By expanding the difference of the two squared losses, we obtain
\begin{equation}
    h_r(x,z)
    =
    \mathbf 1_{\mathcal A_R}(x,z)
    \left[
        A_r(x,z)+2B_r(x,z)
    \right].
    \label{eq_hr_AB_decomposition}
\end{equation}

By the Cauchy--Schwarz inequality,
\begin{align}
    |B_r(x,z)|^2
    &\leq
    A_r(x,z) \cdot 
    \frac{1}{T-t_0}
    \int_{t_0}^{T}
    \mathbb{E}_{
        X_t\sim
        \mathcal{N}(a_t x,\sigma_t^2 I_{d_x})
    }
    \left[
        \left\|
            s_w^o(X_t,z,t)
            + \frac{X_t - a_t x}{\sigma_t^2}
        \right\|_2^2
    \right]dt
    \notag\\
    &=
    A_r(x,z) \cdot 
    \ell(x,z;s_w^o).
    \label{eq_Br_CS}
\end{align}

Moreover, using
$\|u-v\|_2^2\leq2\|u-y\|_2^2+2\|v-y\|_2^2$,
we have
\begin{align}
    A_r(x,z)
    &\leq
    2\ell(x,z;s_r)
    +
    2\ell(x,z;s_w^o).
    \label{eq_Ar_loss_bound}
\end{align}
By the uniform loss bounds in (\ref{eq_uniform_loss_bound_unbounded}), (\ref{eq_mixture_score_loss_envelope}), and (\ref{eq_R2_Mell_relation}), there exists a constant $C_\ell>0$ such
that, uniformly over $r$ and $(x,z)\in\mathcal{A}_R$,
\begin{equation}
    \ell(x,z;s_r)
    \leq C_\ell M_\ell,
    \qquad
    \ell(x,z;s_w^o) \leq R^2 + M_\ell 
    \leq C_\ell M_\ell.
    \label{eq_loss_envelope_sr_swo}
\end{equation}
Consequently,
\begin{equation}
    A_r(x,z)
    \leq
    4C_\ell M_\ell,
    \qquad
    |B_r(x,z)|^2
    \leq
    C_\ell M_\ell A_r(x,z).
    \label{eq_Ar_Br_bounds}
\end{equation}

Applying $(a+b)^2\leq2a^2+2b^2$ to
(\ref{eq_hr_AB_decomposition}), we obtain
\begin{align}
    h_r^2(x,z)
    &\leq
    \mathbf{1}_{\mathcal A_R}(x,z)
    \left[
        2A_r^2(x,z)+8B_r^2(x,z)
    \right]
    \notag\\
    &\lesssim
    M_\ell
    \mathbf{1}_{\mathcal A_R}(x,z)
    A_r(x,z).
    \label{eq_hr_square_pointwise}
\end{align}
Taking expectation with respect to $P_w$ gives
\begin{equation}
    \mathbb{P}_w h_r^2
    \lesssim
    M_\ell
    \mathbb{P}_w
    \left[
        \mathbf{1}_{\mathcal A_R} A_r
    \right]
    \leq
    M_\ell
    \mathbb{P}_w A_r.
    \label{eq_hr_second_moment_Ar}
\end{equation}

We now identify $\mathbb{P}_w A_r$ with the population excess risk.
By the denoising-score identity \citep{Vincent_2011_score_mat} and (\ref{eq_weighted_population_operator}),
\begin{equation}
    \mathbb{E}_{
        X_0\sim p_w(\cdot| X_t=x_t,Z=z)
    }
    \left[
        \frac{x_t-a_tX_0}{\sigma_t^2}
    \right]
    =
    -s_w^o(x_t,z,t).
    \label{eq_denoising_score_identity_weighted}
\end{equation}
\begin{align}
    \mathbb{P}_w B_r
    &=
    \frac{1}{T-t_0}
    \int_{t_0}^{T}
    \mathbb{E}_{(X_0,Z)\sim P_w}
    \mathbb{E}_{
        X_t\sim
        \mathcal{N}(a_tX_0,\sigma_t^2I_{d_x})
    }
    \Bigg[
        \Big\langle
            s_r(X_t,Z,t)-s_w^o(X_t,Z,t),
            \notag\\[-1mm]
    &\hspace{38mm}
            s_w^o(X_t,Z,t)
            +
            \frac{X_t-a_tX_0}{\sigma_t^2}
        \Big\rangle
    \Bigg]
    \,dt
    \notag\\
    &=
    \frac{1}{T-t_0}
    \int_{t_0}^{T}
    \mathbb{E}_{(X_t,Z)\sim p_{w,t}}
    \Bigg[
        \Big\langle
            s_r(X_t,Z,t)-s_w^o(X_t,Z,t),
            \notag\\[-1mm]
    &\hspace{25mm}
            s_w^o(X_t,Z,t)
            +
            \mathbb{E}_{
                X_0\sim p_w(\cdot | X_t,Z)
            }
            \left[
                \frac{X_t-a_tX_0}{\sigma_t^2}
            \right]
        \Big\rangle
    \Bigg]
    \,dt
    \notag\\
    &=
    0.
    \label{eq_Br_orthogonality}
\end{align}
Hence, for the untruncated excess loss
\begin{equation}
    h_r^{\mathrm{full}}(x,z)
    :=
    \ell(x,z;s_r)-\ell(x,z;s_w^o),
\end{equation}
we have
\begin{align}
    \mathbb{P}_w h_r^{\mathrm{full}}
    =
    \mathbb{P}_w A_r =
    \|s_r-s_w^o\|_{\mathcal{H}_w}^2
    \geq 0.
    \label{eq_full_excess_equals_Ar}
\end{align}
Combining (\ref{eq_hr_second_moment_Ar}) and
(\ref{eq_full_excess_equals_Ar}), we obtain
\begin{equation}
    \mathbb{P}_w h_r^2
    \lesssim
    M_\ell
    \mathbb{P}_w h_r^{\mathrm{full}}.
    \label{eq_hr_Bernstein_full}
\end{equation}

By Lemma~\ref{lem_truncation_bound} and Theorem \ref{thm_truncation_bound_mixture_score},
\begin{align}
    \left|
        \mathbb{P}_w h_r^{\mathrm{full}}
        -
        \mathbb{P}_w h_r
    \right|
    &\leq
    \sum_{k=0}^K
    \pi_k
    \mathbb{E}_{P_k}
    \left[
        \left|
            h_r^{\mathrm{full}}-h_r
        \right|
    \right]
    \notag\\
    &\lesssim
    \exp(-C_{\mathrm{sg}}R^2)
    R M_\ell.
    \label{eq_full_truncated_excess_difference}
\end{align}
Define
\begin{equation}
    \tau_R
    :=
    C_{\mathrm{tr}}
    \exp(-C_{\mathrm{sg}}R^2)
    R M_\ell
\end{equation}
for a sufficiently large constant $C_{\mathrm{tr}}>0$. Since
$\mathbb{P}_w h_r^{\mathrm{full}}\geq0$, it follows that
\begin{equation}
    \mathbb{P}_w h_r^{\mathrm{full}}
    \leq
    (\mathbb{P}_w h_r)_+
    +
    \tau_R.
    \label{eq_full_excess_truncated_positive}
\end{equation}
Combining
(\ref{eq_hr_Bernstein_full}) and
(\ref{eq_full_excess_truncated_positive}), we arrive at
\begin{equation}
    \boxed{
    \mathbb{P}_w h_r^2
    \lesssim
    M_\ell
    \left[
        (\mathbb{P}_w h_r)_+
        +
        \tau_R
    \right].
    }
    \label{eq_Bernstein_truncated_hr}
\end{equation}
If $a>0$ is chosen such that
\begin{equation}
    a^2\geq\tau_R,
    \label{eq_a_dominates_truncation}
\end{equation}
then, by the definition of $U_r$ in (\ref{eq_Ur_definition}),
\begin{equation}
    (\mathbb{P}_w h_r)_+
    +
    \tau_R
    \leq 2 \max
    \left\{
        a^2,\, 
            \left(
                \mathbb{P}_w h_r
            \right)_+ 
    \right\} = 
    2U_r^2.
\end{equation}
Therefore,
\begin{equation}
    \boxed{
    \mathbb{P}_w h_r^2
    \lesssim
    M_\ell U_r^2,
    \qquad
    r=1,\ldots,\mathcal{N}.
    }
    \label{eq_Bernstein_normalized_hr}
\end{equation}
Equivalently,
\begin{equation}
    \boxed{
    \frac{
        \mathbb{P}_w h_r^2
    }{
        U_r^2
    }
    \lesssim
    M_\ell.
    }
    \label{eq_normalized_second_moment_hr}
\end{equation}

\paragraph{Bounding the normalized empirical process $D_w$.} In (\ref{eq_Yr_definition}), we defined
\begin{equation}
    Y_r
    =
    \frac{
        \left(
            \mathbb{P}_{w,\mathbf{n}}'
            -
            \mathbb{P}_{w,\mathbf{n}}
        \right)
        h_r
    }{
        U_r
    }
    =
    \sum_{k=0}^K
    \sum_{i=1}^{n_k}
    X_{k,i}^{(r)},
    \label{eq_Yr_sum}
\end{equation}
where $X_{k,i}^{(r)}$ is defined in (\ref{eq_Xkir_definition}).  
For every fixed $r$, the random variables
$\{X_{k,i}^{(r)}\}_{k,i}$ are mutually independent and centered:
\begin{equation}
    \mathbb{E}
    \left[
        X_{k,i}^{(r)}
    \right]
    =0.
\end{equation}

Let
$C_1 = 2 C_M >0$ and by (\ref{eq_Xkir_uniform_bound}), we have
\begin{equation}
    \left|
        X_{k,i}^{(r)}
    \right|
    \leq
    \frac{
        C_1 M_\ell\rho_w
    }{
        a
    },
    \label{eq_X_bound_Bernstein}
\end{equation}
By (\ref{eq_Yr_variance_general}) and the Bernstein-type bound
(\ref{eq_Bernstein_normalized_hr}), provided that $a^2 \geq \tau_R$ we have
\begin{align}
    \sum_{k=0}^K
    \sum_{i=1}^{n_k}
    \mathbb{E}
    \left[
        \left(
            X_{k,i}^{(r)}
        \right)^2
    \right]
    &\leq
    \frac{
        4\rho_w
    }{
        U_r^2
    }
    \mathbb{P}_w h_r^2
    \notag\\
    &\leq
    C_2 M_\ell\rho_w
    \label{eq_Yr_variance_proxy}
\end{align}
for some constant $C_2>0$. Therefore, Bernstein's inequality \citep{boucheron_2013_Concentration_ineq} gives, for every $t>0$,
\begin{align}
    \mathbb{P}
    \left(
        |Y_r|\geq t
    \right)
    &\leq
    2
    \exp
    \left\{
        -
        \frac{
            t^2/2
        }{
            C_2 M_\ell\rho_w
            +
            \dfrac{
                C_1 M_\ell\rho_w
            }{
                3a
            }t
        }
    \right\}
    \notag\\
    &\leq
    2
    \exp
    \left[
        -c
        \min
        \left\{
            \frac{t^2}{M_\ell\rho_w},
            \frac{a t}{M_\ell\rho_w}
        \right\}
    \right]
    \label{eq_Yr_Bernstein_tail}
\end{align}
for some universal constant $c>0$. Recall the definition of $D_w$ in (\ref{eq_Dw_definition}), $Y_r$ in (\ref{eq_Yr_definition}),  and (\ref{eq_Yr_Bernstein_tail}),
\begin{align}
    \mathbb{P}
    \left(
        D_w\geq t
    \right)
    &\leq
    \sum_{r=1}^{\mathcal{N}}
    \mathbb{P}
    \left(
        |Y_r|\geq t
    \right)
    \notag\\
    &\leq
    2\mathcal{N}
    \exp
    \left[
        -c
        \min
        \left\{
            \frac{t^2}{M_\ell\rho_w},
            \frac{a t}{M_\ell\rho_w}
        \right\}
    \right].
    \label{eq_Dw_tail}
\end{align}

Let
\begin{equation}
    L_{\mathcal{N}}
    :=
    \log(2\mathcal{N}).
    \label{eq_LN_definition}
\end{equation}

Set
\begin{equation}
    A_w:=M_\ell\rho_w,
    \qquad
    B_w^{\mathrm{emp}}
    :=
    \frac{M_\ell\rho_w}{a}.
\end{equation}
Then (\ref{eq_Dw_tail}) can be written as
\begin{equation}
    \mathbb{P}
    \left(
        D_w\geq t
    \right)
    \leq
    \exp
    \left[
        L_{\mathcal{N}}
        -
        c
        \min
        \left\{
            \frac{t^2}{A_w},
            \frac{t}{B_w^{\mathrm{emp}}}
        \right\}
    \right].
    \label{eq_Dw_subexp_tail}
\end{equation}
A standard integration of the sub-exponential tail bound yields
\begin{equation}
    \mathbb{E}_{D,D'}
    \left[
        D_w^2
    \right]
    \lesssim
    A_w L_{\mathcal{N}}
    +
    \left(
        B_w^{\mathrm{emp}}
    \right)^2
    L_{\mathcal{N}}^2.
    \label{eq_Dw_second_moment_general}
\end{equation}
Therefore,
\begin{equation}
    \mathbb{E}_{D,D'}
    \left[
        D_w^2
    \right]
    \lesssim
    M_\ell\rho_w L_{\mathcal{N}}
    +
    \frac{
        M_\ell^2\rho_w^2
    }{
        a^2
    }
    L_{\mathcal{N}}^2.
    \label{eq_Dw_second_moment}
\end{equation}

Indeed, using
\[
    \mathbb{E}[D_w^2]
    =
    2\int_0^\infty
    t\,
    \mathbb{P}(D_w\geq t)\,dt,
\]
and splitting the integral at a constant multiple of
\[
    \sqrt{
        A_w L_{\mathcal{N}}
    }
    +
    B_w^{\mathrm{emp}}L_{\mathcal{N}},
\]
gives (\ref{eq_Dw_second_moment_general}).

We choose $a>0$ such that
\begin{equation}
    a^2
    \asymp
    M_\ell\rho_w L_{\mathcal{N}}
    +
    \tau_R,
    \label{eq_a_choice}
\end{equation}
where
\begin{equation}
    \tau_R
    =
    C_{\mathrm{tr}}
    \exp(-C_{\mathrm{sg}}R^2)
    R M_\ell.
\end{equation}
In particular,
\begin{equation}
    a^2
    \gtrsim
    M_\ell\rho_w L_{\mathcal{N}},
    \qquad
    a^2\geq\tau_R.
\end{equation}

Under the choice (\ref{eq_a_choice}),
\begin{align}
    \frac{
        M_\ell^2\rho_w^2
    }{
        a^2
    }
    L_{\mathcal{N}}^2
    &\lesssim
    M_\ell\rho_wL_{\mathcal{N}}.
\end{align}
Consequently,
\begin{equation}
    \boxed{
    \mathbb{E}_{D,D'}
    \left[
        D_w^2
    \right]
    \lesssim
    M_\ell\rho_w
    L_{\mathcal{N}}.
    }
    \label{eq_Dw_second_moment_final}
\end{equation}

Recall from (\ref{eq_UJ_square_final}) that
\begin{align}
    \mathbb{E}_{D}
    \left[
        U_J^2
    \right]
    &\leq
    a^2
    +
    \mathbb{E}_{D}
    \left[
        \overline{\mathcal{E}}_w
        (\widehat{s}_w)
    \right]
    +
    \delta
    +
    \tau_R.
    \label{eq_UJ_second_moment_final}
\end{align}

Combining 
(\ref{eq_B_self_normalized_bound}),
(\ref{eq_Dw_second_moment_final}), and
(\ref{eq_UJ_second_moment_final}), we obtain
\begin{align}
    |B|
    &\leq
    \frac{1}{2}
    \mathbb{E}_{D}
    \left[
        \overline{\mathcal{E}}_w
        (\widehat{s}_w)
    \right]
    +
    \frac{1}{2}a^2
    +
    C_D
    M_\ell\rho_wL_{\mathcal{N}}
    +
    \frac{1}{2}\tau_R
    +
    \frac{5}{2}\delta,
    \label{eq_B_before_a}
\end{align}
for some constant $C_D>0$. By (\ref{eq_a_choice}),
\[
    a^2
    \lesssim
    M_\ell\rho_wL_{\mathcal N}
    +
    \tau_R.
\]
Therefore, absorbing the term $a^2/2$ and the numerical constants
into a constant $C_B>0$, we obtain
\begin{equation}
    \boxed{
    |B|
    \leq
    \frac{1}{2}
    \mathbb{E}_{D}
    \left[
        \overline{\mathcal{E}}_w
        (\widehat{s}_w)
    \right]
    +
    C_B
    \left[
        M_\ell\rho_wL_{\mathcal{N}}
        +
        \tau_R
        +
        \delta
    \right].
    }
    \label{eq_B_final_bound}
\end{equation}

\paragraph{Effective sample size and maximal observation weight.}

To characterize the variance of the weighted empirical risk, recall (\ref{eq_weighted_empirical_operator_explicit}) that
\begin{equation}
    \mathbb{P}_{w,\mathbf{n}} f
    =
    \sum_{k=0}^K
    \pi_k
    \mathbb{P}_{k,n_k}f
    =
    \sum_{k=0}^K
    \frac{\pi_k}{n_k}
    \sum_{i=1}^{n_k}
    f(X_{k,i},Z_{k,i}).
\end{equation}
Since the samples are independent across domains, for any measurable
function $f$ with finite second moments,
\begin{align}
    \operatorname{Var}
    \left(
        \mathbb{P}_{w,\mathbf{n}}f
    \right)
    &=
    \sum_{k=0}^K
    \frac{\pi_k^2}{n_k}
    \operatorname{Var}_{P_k}
    \left(
        f(X,Z)
    \right).
    \label{eq_weighted_empirical_variance}
\end{align}
Hence, the quantity
\begin{equation}
    \nu_w
    :=
    \sum_{k=0}^K
    \frac{\pi_k^2}{n_k}
    \label{eq_nu_w_definition}
\end{equation}
plays the role of the variance factor of the weighted empirical risk.
Motivated by the usual variance factor $1/n$ in standard ERM, we define
the effective sample size by
\begin{equation}
    N_{\mathrm{eff}}
    :=
    \nu_w^{-1}
    =
    \left(
        \sum_{k=0}^K
        \frac{\pi_k^2}{n_k}
    \right)^{-1} .
    \label{eq_Neff_definition}
\end{equation}
Recall (\ref{eq_define_pi}) that
\begin{equation}
    \pi_0
    =
    \frac{n_0}{n_0+W_N},
    \qquad
    \pi_k
    =
    \frac{w_kn_k}{n_0+W_N},
    \quad k=1,\ldots,K,
\end{equation}
where
    $W_N=\sum_{k=1}^K w_kn_k$.
Therefore,
\begin{align}
    \frac{1}{N_{\mathrm{eff}}}
    &=
    \frac{\pi_0^2}{n_0}
    +
    \sum_{k=1}^K
    \frac{\pi_k^2}{n_k}
    \notag\\
    &=
    \frac{
        n_0+\sum_{k=1}^K w_k^2n_k
    }{
        (n_0+W_N)^2
    },
\end{align}
or equivalently,
\begin{equation}
    \boxed{
    N_{\mathrm{eff}}
    =
    \frac{
        (n_0+W_N)^2
    }{
        n_0+\sum_{k=1}^K w_k^2n_k
    }.
    }
    \label{eq_Neff_explicit}
\end{equation}

Recall the definition of $\lambda_k$ and $\rho_w$ in (\ref{eq_rho_w_definition}), and by the definition of $\pi_k$, we have
\begin{equation}
    \rho_w
    =
    \frac{
        \max\{1,w_1,\ldots,w_K\}
    }{
        n_0+W_N
    }.
    \label{eq_rho_w_explicit}
\end{equation}
Moreover,
\begin{align}
    \frac{1}{N_{\mathrm{eff}}}
    =
    \sum_{k=0}^K
    \frac{\pi_k^2}{n_k}
    =
    \sum_{k=0}^K
    n_k\lambda_k^2 \leq
    \rho_w
    \sum_{k=0}^K
    n_k\lambda_k
    =
    \rho_w
    \sum_{k=0}^K\pi_k
    =
    \rho_w.
    \label{eq_Neff_rho_lower}
\end{align}
Thus,
\begin{equation}
    \frac{1}{N_{\mathrm{eff}}}
    \leq
    \rho_w.
\end{equation}

Using the assumption in (\ref{eq_weight_regularity_asymptotic}), we obtain
\begin{equation}
    \max\{1,w_1,\ldots,w_K\}
    \leq
    C_{\mathrm{reg}}
        \dfrac{n_0+\sum_{k=1}^K w_k^2n_k}{n_0+W_N},
    \label{eq_weight_regularity}
\end{equation}
where $C_{\mathrm{reg}} \geq 1$. 
Using (\ref{eq_Neff_explicit}) and (\ref{eq_rho_w_explicit}),
condition (\ref{eq_weight_regularity}) is equivalent to
\begin{equation}
    \rho_w
    \leq
    \frac{C_{\mathrm{reg}}}{N_{\mathrm{eff}}}.
\end{equation}
Together with (\ref{eq_Neff_rho_lower}), this yields
\begin{equation}
    \boxed{
    \rho_w
    \asymp
    \frac{1}{N_{\mathrm{eff}}}.
    }
    \label{eq_rho_Neff_equivalence}
\end{equation}

Consequently, under (\ref{eq_weight_regularity}),
\begin{equation}
    M_\ell\rho_w\log(2\mathcal{N})
    \lesssim
    \frac{
        M_\ell\log(2\mathcal{N})
    }{
        N_{\mathrm{eff}}
    }.
    \label{eq_empirical_complexity_Neff}
\end{equation}

\subsubsection{Proof of Theorem \ref{thm_score_risk_bound}}
\label{app_proof_of_th1}

\paragraph{Combining the bounds.}

Combining (\ref{eq_four_term_decomposition}),
(\ref{eq_A1_A2_final_bound}),
(\ref{eq_C_approximation_rate}), and
(\ref{eq_B_final_bound}), we obtain
\begin{align}
    \mathbb{E}_{D}
    \left[
        \overline{\mathcal{E}}_w(\widehat{s}_w)
    \right]
    &\leq
    |A_1|+|A_2|+|B|+C
    \notag\\
    &\leq
    \frac{1}{2}
    \mathbb{E}_{D}
    \left[
        \overline{\mathcal{E}}_w(\widehat{s}_w)
    \right]
    +
    C_B M_{\ell}\rho_w L_{\mathcal{N}}
    +
    C_B\tau_R
    +
    C_B\delta
    \notag\\
    &\quad+
    C_A
    \exp\left(-C_{\mathrm{sg}}R^2\right)
    R M_{\ell}
    +
    C_{\mathrm{app}}
    N^{-\frac{2\beta}{d_x+d_z}}
    (\log N)^{\beta+1},
    \label{eq_excess_risk_before_absorption}
\end{align}
where $C_A$, $C_B$, and $C_{\mathrm{app}}$ are positive constants
independent of the sample sizes and $N$.
Therefore, moving the first term on the right-hand side to the
left-hand side yields
\begin{align}
    \mathbb{E}_{D}
    \left[
        \overline{\mathcal{E}}_w(\widehat{s}_w)
    \right]
    &\lesssim
    N^{-\frac{2\beta}{d_x+d_z}}
    (\log N)^{\beta+1}
    +
    M_{\ell}\rho_w L_{\mathcal{N}}
    +
    \delta
    \notag\\
    &\quad+
    \tau_R
    +
    \exp\left(-C_{\mathrm{sg}}R^2\right)
    R M_{\ell}.
    \label{eq_excess_risk_combined}
\end{align}

Consequently, since
$\tau_R\lesssim
\exp(-C_{\mathrm{sg}}R^2)R M_{\ell}$,
we arrive at
\begin{equation}
    \boxed{
    \mathbb{E}_{D}
    \left[
        \overline{\mathcal{E}}_w(\widehat{s}_w)
    \right]
    \lesssim
    N^{-\frac{2\beta}{d_x+d_z}}
    (\log N)^{\beta+1}
    +
    M_{\ell}\rho_w L_{\mathcal{N}}
    +
    \delta
    +
    \exp\left(-C_{\mathrm{sg}}R^2\right)
    R M_{\ell}.
    }
    \label{eq_weighted_excess_risk_final}
\end{equation}

\paragraph{Choice of tuning parameters and convergence rate.}

Recall that in (\ref{eq_neff_def}), the effective sample size is defined as
\begin{equation}
    N_{\mathrm{eff}}
    =
    \frac{
        \left(
            n_0+\sum_{k=1}^K w_k n_k
        \right)^2
    }{
        n_0+\sum_{k=1}^K w_k^2 n_k
    }. 
\end{equation}
Moreover, recall that
\begin{equation}
    \rho_w
    =
    \frac{
        \max\{1,w_1,\ldots,w_K\}
    }{
        n_0+\sum_{k=1}^K w_k n_k
    }.
\end{equation}
Under Assumption~(\ref{eq_weight_regularity_asymptotic}),
\begin{align}
    \rho_w
    &\lesssim
    \frac{
        n_0+\sum_{k=1}^K w_k^2 n_k
    }{
        \left(
            n_0+\sum_{k=1}^K w_k n_k
        \right)^2
    }
    \notag\\
    &=
    \frac{1}{N_{\mathrm{eff}}}.
    \label{eq_rho_Neff_relation}
\end{align}

Following the parameter choices in
{\cite{Fu2024UnveilCD}, proof of Theorem~4.1}, we take
\begin{equation}
    \delta
    =
    N^{-\frac{2\beta}{d_x+d_z}},
    \qquad
    R^2
    =
    C_R\log N,
    \label{eq_delta_R_choice}
\end{equation}
where $C_R>0$ is chosen sufficiently large.
By (\ref{eq_LN_definition}) and  the covering-number bound in Lemma \ref{lem_loss_covering_number},
\begin{align}
    L_{\mathcal{N}}
    &=
    \log(2\mathcal{N})
    \notag\\
    &\lesssim
    N\log^9 N
    \left(
        \log^8 N
        +
        \log^2 N\log R
        +
        \log\frac{1}{\delta}
    \right)
    \notag\\
    &\lesssim
    N\log^{17}N.
    \label{eq_covering_rate_parameter_choice}
\end{align}
Moreover, in (\ref{eq_Mell_unbounded_order})
\begin{equation}
    M_{\ell}
    \lesssim
    \log\frac{1}{t_0}.
    \label{eq_Mell_rate}
\end{equation}
By choosing $C_R$ sufficiently large, the truncation remainder
$\exp(-C_{\mathrm{sg}}R^2)RM_\ell$ is of smaller order than the
approximation term and can therefore be absorbed into the leading
terms. The choice of $\delta$ in
(\ref{eq_delta_R_choice}) is also dominated by the approximation term.
Therefore, using (\ref{eq_rho_Neff_relation}), we obtain
\begin{align}
    \mathbb{E}_{D}
    \left[
        \overline{\mathcal{E}}_w(\widehat{s}_w)
    \right]
    &\lesssim
    N^{-\frac{2\beta}{d_x+d_z}}
    (\log N)^{\beta+1}
    +
    \frac{
        N
    }{
        N_{\mathrm{eff}}
    }
    \log\frac{1}{t_0}
    \log^{17}N.
    \label{eq_excess_risk_bias_variance}
\end{align}

Balancing the polynomial parts of the approximation and estimation
terms in (\ref{eq_excess_risk_bias_variance}), namely,
\[
    N^{-\frac{2\beta}{d_x+d_z}}
    \qquad\text{and}\qquad
    \frac{N}{N_{\mathrm{eff}}},
\]
we choose
\begin{equation}
    N
    \asymp
    N_{\mathrm{eff}}^{
        \frac{d_x+d_z}
        {d_x+d_z+2\beta}
    }.
    \label{eq_optimal_network_complexity}
\end{equation}
Under this choice,
\begin{equation}
    N^{-\frac{2\beta}{d_x+d_z}}
    \asymp
    \frac{N}{N_{\mathrm{eff}}}
    \asymp
    N_{\mathrm{eff}}^{
        -\frac{2\beta}
        {d_x+d_z+2\beta}
    },
    \label{eq_bias_variance_balance}
\end{equation}
and
\begin{equation}
    \log N
    \asymp
    \log N_{\mathrm{eff}}.
\end{equation}

Since $t_0=N^{-C_\sigma}$,
\begin{equation}
    \log\frac{1}{t_0}
    =
    C_\sigma\log N.
\end{equation}
Therefore,
\begin{align}
    N^{-\frac{2\beta}{d_x+d_z}}
    (\log N)^{\beta+1}
    &\lesssim
    \log\frac{1}{t_0}\,
    N^{-\frac{2\beta}{d_x+d_z}}
    (\log N)^{\beta}.
\end{align}
Consequently, substituting
(\ref{eq_optimal_network_complexity}) into
(\ref{eq_excess_risk_bias_variance}) yields
\begin{equation}
    \boxed{
    \mathbb{E}_{D}
    \left[
        \overline{\mathcal{E}}_w(\widehat{s}_w)
    \right]
    \lesssim
    \log\frac{1}{t_0}\,
    N_{\mathrm{eff}}^{
        -\frac{2\beta}
        {d_x+d_z+2\beta}
    }
    \left(
        \log N_{\mathrm{eff}}
    \right)^{\max\{17,\beta \}}.
    }
    \label{eq_weighted_score_estimation_rate}
\end{equation}

\paragraph{Target-domain risk bound.}

Combining
(\ref{eq_error_decomp}),
(\ref{eq_final_transfer_bias_general}), 
(\ref{eq_H0_Hw_estimation_error}),
(\ref{eq_weighted_score_estimation_rate}), and
, we obtain
\begin{align}
    \mathbb{E}_{D_0,\ldots,D_K}
    \left[
        \mathcal{R}_0(\widehat{s}_w)
    \right]
    &\lesssim
    \log\frac{1}{t_0}\,
    N_{\mathrm{eff}}^{
        -\frac{2\beta}
        {d_x+d_z+2\beta}
    }
    \left(
        \log N_{\mathrm{eff}}
    \right)^{\max\{17, \beta \}}
    \notag\\
    &\quad+
    \frac{W_N^2}{(n_0+W_N)^2} \left[ 
    \overline{w}^\top G \overline{w} + \operatorname{diag}(G)^\top
        \overline w \right].
    \label{eq_target_risk_final_general}
\end{align}
\hfill \qedsymbol

\subsection{The TV error bound of DA-Diff}
\label{app_proof_of_tv}
\subsubsection{Proof of Theorem \ref{thm_tv_target_bound}}
\begin{lem}[Target-domain total variation bound;
{\cite{Fu2024UnveilCD}, Appendix D.3}]
\label{lem_target_tv_bound}

Let $\widehat{P}_{w}(\cdot| z)$ denote the
conditional distribution generated by the backward diffusion process
using the estimated score network $\widehat{s}_w$, and let
$P_0(\cdot| z)$ denote the target conditional distribution.
Then
\begin{align}
    &\mathbb{E}_{D_0,\ldots,D_K}
    \mathbb{E}_{Z\sim p_0}
    \left[
        d_{\mathrm{TV}}
        \left(
            P_0(\cdot| Z),
            \widehat{P}_{w}(\cdot| z)
        \right)
    \right]
    \notag\\
    &\quad\lesssim
    \sqrt{t_0}
    \left(
        \log\frac{1}{t_0}
    \right)^{\frac{d_x+1}{2}}
    +
    \exp(-T)
    +
    \sqrt{
        {T}
    }\,
    \sqrt{
        \mathbb{E}_{D_0,\ldots,D_K}
        \left[
            \mathcal{R}_0(\widehat{s}_w)
        \right]
    }.
    \label{eq_target_tv_score_bound}
\end{align}

Consequently, by (\ref{eq_target_risk_final_general}),
\begin{align}
    &\mathbb{E}_{D_0,\ldots,D_K}
    \mathbb{E}_{Z\sim p_0}
    \left[
        d_{\mathrm{TV}}
        \left(
            P_0(\cdot| Z),
            \widehat{P}_{w}(\cdot| z)
        \right)
    \right]
    \notag\\
    &\quad\lesssim
    \sqrt{t_0}
    \left(
        \log\frac{1}{t_0}
    \right)^{\frac{d_x+1}{2}}
    +
    \exp(-T)
    \notag\\
    &\qquad+
    \sqrt{T}
    \Bigg[
        \log\frac{1}{t_0}\,
        N_{\mathrm{eff}}^{
            -\frac{2\beta}
            {d_x+d_z+2\beta}
        }
        \left(
            \log N_{\mathrm{eff}}
        \right)^{\max\{17, \beta \}}
        \notag\\
    &\hspace{19mm}
        +
        \frac{W_N^2}{(n_0+W_N)^2} \left[ 
    \overline{w}^\top G \overline{w} + \operatorname{diag}(G)^\top
        \overline w \right]
    \Bigg]^{1/2}.
    \label{eq_target_tv_final_general}
\end{align}
\end{lem}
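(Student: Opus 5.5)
The statement to prove is Lemma~\ref{lem_target_tv_bound}. The plan is to follow the standard Girsanov/data-processing argument of \cite{Fu2024UnveilCD}, Appendix~D.3, applied to the target domain with the learned score $\widehat{s}_w$ in place of the target-only estimator. The only place the domain-adaptation structure enters is the score-error term, and there it enters only through $\mathcal{R}_0(\widehat{s}_w)$. The second display then follows by substituting (\ref{eq_target_risk_final_general}).

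Fix $z$ and a realization of the training data $D_0,\ldots,D_K$. Let $P_0^{t_0}(\cdot\mid z)$ denote the law of $X_0(t_0)$ under the forward OU process started from $P_0(\cdot\mid z)$. Let $\widetilde P(\cdot\mid z)$ denote the output at time $t_0$ of the learned reverse SDE with drift $\widehat{s}_w$, initialized at $\mathcal{N}(0,I_{d_x})$. By the triangle inequality,
\[
d_{\mathrm{TV}}\bigl(P_0(\cdot\mid z),\widehat P_w(\cdot\mid z)\bigr)\le d_{\mathrm{TV}}\bigl(P_0(\cdot\mid z),P_0^{t_0}(\cdot\mid z)\bigr)+d_{\mathrm{TV}}\bigl(P_0^{t_0}(\cdot\mid z),\widetilde P(\cdot\mid z)\bigr).
\]
The first term is the early-stopping error. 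Using the representation (\ref{eq_joint_density_tail}) with $f_0\in\mathcal H^\beta$ bounded below, Fu et al.'s Lemma on early stopping gives a bound $\sqrt{t_0}(\log(1/t_0))^{(d_x+1)/2}$, uniformly in $z$. The proof truncates $x$ at radius $\sqrt{\log(1/t_0)}$ and uses the Lipschitz property of $f_0$.

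For the second term, I would compare path measures. One path is the true reverse process started from $P_{0,T}(\cdot\mid z)$; the other is the learned process started from $\mathcal{N}(0,I)$. By the data-processing inequality and the chain rule for KL,
\[
\mathrm{KL} \le \mathrm{KL}\bigl(P_{0,T}(\cdot\mid z)\,\|\,\mathcal{N}(0,I)\bigr)+\tfrac12\int_{t_0}^T\mathbb{E}_{p_{0,t}(\cdot\mid z)}\|\widehat{s}_w-s_0^*\|_2^2\,dt .
\]
The OU contraction and the sub-Gaussian tail of $P_0(\cdot\mid z)$ bound the first summand by $e^{-T}$. Pinsker's inequality then gives the TV bound by the square root of this KL. Taking $\mathbb{E}_{Z\sim p_0}$ and using Jensen (concavity of $\sqrt{\cdot}$) turns the time integral into $(T-t_0)\,\mathcal{R}_0(\widehat{s}_w)$ by definition (\ref{eq_population_score_matching_risk}). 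A second Jensen over $D_0,\ldots,D_K$ yields $\sqrt{T}\sqrt{\mathbb{E}[\mathcal{R}_0(\widehat{s}_w)]}$. The initialization contribution is $\sqrt{e^{-T}}$; it can be written as $\exp(-T)$ after rescaling $T$ (i.e.\ $C_\alpha$), or one can use the TV contraction of the initialization error directly.

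The main obstacle is justifying Girsanov for the learned drift, which is a ReLU network and so only Lipschitz and bounded by $M_t$ (Novikov's condition, or an approximation argument). I would handle this as in Fu et al.: they first bound the KL for an auxiliary process where both drifts are bounded and locally Lipschitz on $[t_0,T]$, then pass to the limit. Boundedness of $\widehat{s}_w$ comes from the class $\mathcal F$, and the true score has the envelope (\ref{eq_mixture_score_envelope}) specialized to $k=0$. Finally, plugging (\ref{eq_target_risk_final_general}) into $\sqrt{\mathbb{E}[\mathcal{R}_0]}$ gives (\ref{eq_target_tv_final_general}).
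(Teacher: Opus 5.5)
\paragraph{Overall.} Your route (early-stopping split, Girsanov with the KL chain rule on path measures, Pinsker, Jensen in $Z$ and in the data, then substitution of (\ref{eq_target_risk_final_general})) is the argument the paper relies on. The paper gives no proof of its own here: it imports the first display from Fu et al.\ (Appendix D.3) and then performs exactly your substitution step.

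\paragraph{The initialization term.} This is the one step that does not go through as written. With the paper's forward process ($a_t=e^{-t/2}$, $\sigma_t^2=1-e^{-t}$), OU contraction gives $\mathrm{KL}(P_{0,T}(\cdot\mid z)\,\|\,\mathcal N(0,I))\lesssim e^{-T}$, so Pinsker yields a contribution of order $e^{-T/2}$. That order is sharp. When $m(z):=\mathbb E[X_0\mid Z_0=z]\neq 0$, the law $P_{0,T}(\cdot\mid z)$ is, to leading order, $\mathcal N(0,I)$ shifted in mean by $e^{-T/2}m(z)$. Hence $d_{\mathrm{TV}}(P_{0,T}(\cdot\mid z),\mathcal N(0,I))$ is of exact order $e^{-T/2}\|m(z)\|$ for large $T$. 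Neither of your remedies repairs this:
\begin{itemize}
\item ``rescaling $T$'' changes the sampler's horizon, not the bound for the given $T$;
\item the data-processing/TV-contraction route is bounded by that same $e^{-T/2}$ quantity.
\end{itemize}
What your argument actually establishes is the first display with $e^{-T/2}$ in place of $\exp(-T)$, and correspondingly the second display.

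\paragraph{Consequences and fix.} The discrepancy is harmless downstream. With the choice $T=\frac{2\beta}{d_x+d_z+2\beta}\log N_{\mathrm{eff}}$ from Corollary~\ref{cor_target_tv_rate}, one has $e^{-T/2}=N_{\mathrm{eff}}^{-\beta/(d_x+d_z+2\beta)}$, which is exactly the leading rate. So Theorems~\ref{thm_tv_target_bound} and~\ref{thm_da_cit_type1} are unchanged, and you should simply state the lemma with $e^{-T/2}$. Obtaining the literal $\exp(-T)$ would need an ingredient beyond Girsanov plus Pinsker. For instance, you could show that the exact reverse kernel further damps the initialization mismatch, and then control the score error under the resulting perturbed marginals. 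Your sketch does not provide this.

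\paragraph{Smaller caveat.} This one is shared with the imported lemma. The early-stopping step as you describe it uses that $f_0$ is Lipschitz. Assumption~\ref{assump_smoothness_tail} guarantees this only for $\beta\ge 1$. For $\beta<1$ the same truncation argument gives $t_0^{\beta/2}$ up to logarithmic factors, rather than $\sqrt{t_0}$.
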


\begin{cor}[Target-domain total variation convergence rate]
\label{cor_target_tv_rate}

Suppose the conditions of
Lemma~\ref{lem_target_tv_bound} and Theorem \ref{thm_score_risk_bound} hold. Let
\begin{equation}
    T
    =
    \frac{2\beta}
    {d_x+d_z+2\beta}
    \log N_{\mathrm{eff}},
    \qquad
    t_0
    =
    N_{\mathrm{eff}}^{
        -\frac{4\beta}{d_x+d_z+2\beta}-1
    }.
    \label{eq_T_t0_tv_choice}
\end{equation}
Then
\begin{equation}
    \log\frac{1}{t_0}
    \asymp
    \log N_{\mathrm{eff}},
    \qquad
    T
    \asymp
    \log N_{\mathrm{eff}}.
    \label{eq_T_t0_log_order}
\end{equation}
Moreover,
\begin{align}
    \exp(-T)
    &=
    N_{\mathrm{eff}}^{
        -\frac{2\beta}
        {d_x+d_z+2\beta}
    },
    \label{eq_terminal_tv_rate}
    \\
    \sqrt{t_0}
    \left(
        \log\frac{1}{t_0}
    \right)^{\frac{d_x+1}{2}}
    &=
    o\left(
        N_{\mathrm{eff}}^{
            -\frac{\beta}
            {d_x+d_z+2\beta}
        }
    \right).
    \label{eq_early_stopping_tv_rate}
\end{align}

Combining Lemma~\ref{lem_target_tv_bound} with
(\ref{eq_target_risk_final_general}), we therefore obtain
\begin{align}
    &
    \mathbb{E}_{D_0,\ldots,D_K}
    \mathbb{E}_{Z\sim p_0}
    \left[
        d_{\mathrm{TV}}
        \left(
            P_0(\cdot| Z),
            \widehat{P}_w
            (\cdot| Z)
        \right)
    \right]
    \notag\\
    &\quad\lesssim
    N_{\mathrm{eff}}^{
        -\frac{\beta}
        {d_x+d_z+2\beta}
    }
    \left(
        \log N_{\mathrm{eff}}
    \right)^{
        \max\left\{
            \frac{19}{2},
            \frac{\beta+2}{2}
        \right\}
    }
    \notag\\
    &\qquad\quad+
    \sqrt{\log N_{\mathrm{eff}}}
    \left[
        \frac{W_N^2}{(n_0+W_N)^2} \left[ 
    \overline{w}^\top G \overline{w} + \operatorname{diag}(G)^\top
        \overline w \right]
    \right]^{1/2} \notag \\
    &\quad\lesssim
    N_{\mathrm{eff}}^{
        -\frac{\beta}
        {d_x+d_z+2\beta}
    }
    \left(
        \log N_{\mathrm{eff}}
    \right)^{
        \max\left\{
            \frac{19}{2},
            \frac{\beta+2}{2}
        \right\}
    }
    \notag\\
    &\qquad\quad+
    \sqrt{\log N_{\mathrm{eff}}} \cdot
        \frac{W_N}{n_0+W_N} \sqrt{  \overline{w}^\top G \overline{w} + \operatorname{diag}(G)^\top \overline w } .
    \label{eq_target_tv_rate_transfer_general}
\end{align}
\end{cor}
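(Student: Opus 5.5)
The corollary follows by plugging the choices of $T$ and $t_0$ in (\ref{eq_T_t0_tv_choice}) into the bound (\ref{eq_target_tv_final_general}) of Lemma~\ref{lem_target_tv_bound} and simplifying term by term. Write $\Gamma=\beta/(d_x+d_z+2\beta)$.

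\textbf{Logarithmic orders.} First check (\ref{eq_T_t0_log_order}). We have $T=2\Gamma\log N_{\mathrm{eff}}$, which is of order $\log N_{\mathrm{eff}}$ directly. Also $\log(1/t_0)=(4\Gamma+1)\log N_{\mathrm{eff}}$, so $\log(1/t_0)\asymp\log N_{\mathrm{eff}}$ as well.

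\textbf{Terminal term.} Identity (\ref{eq_terminal_tv_rate}) is immediate, since $\exp(-T)=N_{\mathrm{eff}}^{-2\Gamma}$. This is smaller than $N_{\mathrm{eff}}^{-\Gamma}$ and is therefore absorbed.

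\textbf{Early-stopping term.} For (\ref{eq_early_stopping_tv_rate}), $\sqrt{t_0}=N_{\mathrm{eff}}^{-2\Gamma-1/2}$. Multiplying by $(\log N_{\mathrm{eff}})^{(d_x+1)/2}$ still gives $o(N_{\mathrm{eff}}^{-\Gamma})$, because the extra polynomial factor $N_{\mathrm{eff}}^{-\Gamma-1/2}$ dominates any power of the logarithm.

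\textbf{Score-risk term.} This is the only step that needs care with exponents. Inside the bracket, the estimation part is $\log(1/t_0)\,N_{\mathrm{eff}}^{-2\Gamma}(\log N_{\mathrm{eff}})^{\max\{17,\beta\}}$. Using $\log(1/t_0)\asymp\log N_{\mathrm{eff}}$ and multiplying by $T\asymp\log N_{\mathrm{eff}}$ gives $N_{\mathrm{eff}}^{-2\Gamma}(\log N_{\mathrm{eff}})^{\max\{19,\beta+2\}}$. Taking the square root, via $\sqrt{a+b}\le\sqrt a+\sqrt b$, yields $N_{\mathrm{eff}}^{-\Gamma}(\log N_{\mathrm{eff}})^{\max\{19/2,(\beta+2)/2\}}$. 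The bias part becomes $\sqrt{T}$ times the square root of $\frac{W_N^2}{(n_0+W_N)^2}[\overline w^\top G\overline w+\operatorname{diag}(G)^\top\overline w]$. Since $W_N/(n_0+W_N)\ge0$, this square root factors as $\frac{W_N}{n_0+W_N}\sqrt{\overline w^\top G\overline w+\operatorname{diag}(G)^\top\overline w}$, and $G$ is a Gram matrix, so the quantity under the root is nonnegative.

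\textbf{Consistency with Theorem~\ref{thm_score_risk_bound}.} One subtlety is that Theorem~\ref{thm_score_risk_bound} was derived with $t_0=N^{-C_\sigma}$ and $T=C_\alpha\log N$, where $N\asymp N_{\mathrm{eff}}^{(d_x+d_z)/(d_x+d_z+2\beta)}$. The new choices of $t_0$ and $T$ are still of the form $N^{-C_\sigma}$ and $C_\alpha\log N$ for suitable constants, since $\log N\asymp\log N_{\mathrm{eff}}$. So the earlier risk bound, with its constants adjusted, remains valid, and the approximation lemma (Lemma~\ref{lem_conditional_score_approximation}) holds for any fixed positive $C_\sigma$ and $C_\alpha$. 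I expect verifying this compatibility to be the main (mild) obstacle: one must confirm the hidden constants do not depend on $C_\sigma$ or $C_\alpha$ in a way that breaks the bound. The rest is bookkeeping of log powers.
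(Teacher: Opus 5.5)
Your proposal is correct and follows essentially the same route as the paper. You substitute the chosen $T$ and $t_0$ into Lemma~\ref{lem_target_tv_bound}, absorb $\exp(-T)=N_{\mathrm{eff}}^{-2\Gamma}$ and the early-stopping term into $N_{\mathrm{eff}}^{-\Gamma}$, and track the log powers through $\sqrt{T\log(1/t_0)}$ to obtain the exponent $\max\{19/2,(\beta+2)/2\}$. The paper leaves your compatibility check implicit, and it does go through: with $N\asymp N_{\mathrm{eff}}^{(d_x+d_z)/(d_x+d_z+2\beta)}$, the choices correspond to $C_\sigma=(d_x+d_z+6\beta)/(d_x+d_z)$ and $C_\alpha=2\beta/(d_x+d_z)$, which depend only on $\beta,d_x,d_z$, so any dependence of the hidden constants on them is harmless.
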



Suppressing polylogarithmic factors in $N_{\mathrm{eff}}$ and
absorbing the lower-order remainder term, this can be written as
\begin{align}
    &\mathbb{E}_{D_0,\ldots,D_K}
    \mathbb{E}_{Z_0 \sim P_0}
    \left[
        d_{\mathrm{TV}}
        \left(
            P_0(\cdot| Z_0),
            \widehat{P}_w(\cdot| Z_0)
        \right)
    \right] \notag \\
    & \quad =
    \widetilde{\mathcal O}
    \left(
        N_{\mathrm{eff}}^{
            -\frac{\beta}{d_x+d_z+2\beta}
        }
        +
        \frac{W_N}{n_0+W_N}
        \sqrt{  \overline{w}^\top G \overline{w} + \operatorname{diag}(G)^\top \overline w }
    \right),
\end{align}
which proves Theorem~\ref{thm_tv_target_bound}.

\hfill \qedsymbol

\subsection{Type I error control of DA-CIT}
\label{subsec_da_cit_type1}

\begin{lem}[Type I error control of CRT; Theorem~4 of
\citet{berrett2020conditional}]
\label{lem_da_cit_type1}

Suppose that the null hypothesis $H_0$ holds. Let $D_0^{\mathrm{CRT}}
    :=
    \left\{
        (x_{0,i},y_{0,i},z_{0,i})
    \right\}_{i=1}^{n_0}$
denote the target samples used to conduct the CRT, and let $D_0^{\mathrm{Tr}}
    :=
    \left\{
        (x_{0,i},y_{0,i},z_{0,i})
    \right\}_{i=n_0+1}^{2n_0}$
denote the target samples used to train the conditional
diffusion model. Since the observations are independent,
$D_0^{\mathrm{CRT}}$ and $D_0^{\mathrm{Tr}}$ are independent.
Suppose that $\widehat P_0(\cdot| z)$ is estimated using
$D_0^{\mathrm{Tr}}$ and, possibly, the source datasets
$D_1,\ldots,D_K$, and is therefore independent of
$D_0^{\mathrm{CRT}}$. Then, for any $\alpha\in(0,1)$, the CRT $p$-value satisfies
\begin{align}
    \mathbb{P}(p\leq\alpha)
    \leq
    \alpha
    +
    \mathcal{O}
    \left(
        n_0\,
        \mathbb{E}_{D_0^{\mathrm{Tr}},D_1,\ldots,D_K}
        \mathbb{E}_{Z_0}
        \left[
            d_{\mathrm{TV}}
            \left(
                P_0(\cdot| Z_0),
                \widehat P_0(\cdot| Z_0)
            \right)
        \right]
    \right).
    \label{eq_crt_type1}
\end{align}
\end{lem}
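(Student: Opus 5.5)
The plan is to follow the coupling argument behind Theorem~4 of \citet{berrett2020conditional}. First compare the actual CRT with an idealized CRT in which the observed $\mathbf X_0$ is replaced by a draw from the estimated conditional law. Then pay for this replacement in total variation.

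Throughout, condition on the training data $\mathcal D^{\mathrm{Tr}}:=(D_0^{\mathrm{Tr}},D_1,\ldots,D_K)$. Because $\mathcal D^{\mathrm{Tr}}$ is independent of $D_0^{\mathrm{CRT}}$, the estimator $\widehat P_0(\cdot\mid z)$ is a fixed Markov kernel conditionally on $\mathcal D^{\mathrm{Tr}}$. Also condition on $(\mathbf Y_0,\mathbf Z_0)$.

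\emph{Step 1: the law of $\mathbf X_0$ under $H_0$.} Since the rows are i.i.d., under $H_0$ the conditional law of $\mathbf X_0$ given $(\mathbf Y_0,\mathbf Z_0,\mathcal D^{\mathrm{Tr}})$ is $Q:=\prod_{i=1}^{n_0}P_0(\cdot\mid z_{0,i})$. The pseudo-samples $\mathbf X_0^{(1)},\ldots,\mathbf X_0^{(B)}$ are drawn i.i.d.\ from $\widehat Q:=\prod_{i=1}^{n_0}\widehat P_0(\cdot\mid z_{0,i})$, independently of $\mathbf X_0$. The event $\{p\le\alpha\}$ is a measurable function $\phi(\mathbf X_0,\mathbf X_0^{(1)},\ldots,\mathbf X_0^{(B)};\mathbf Y_0,\mathbf Z_0)$. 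So its conditional probability is $\mathbb E_{Q\otimes\widehat Q^{\otimes B}}[\phi]$.

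\emph{Step 2: the idealized test.} Let $\widetilde{\mathbf X}_0\sim\widehat Q$, independent of the pseudo-samples. Then $(\widetilde{\mathbf X}_0,\mathbf X_0^{(1)},\ldots,\mathbf X_0^{(B)})$ are i.i.d.\ given $(\mathbf Y_0,\mathbf Z_0,\mathcal D^{\mathrm{Tr}})$. Hence the statistics $\mathbb T(\widetilde{\mathbf X}_0,\mathbf Y_0,\mathbf Z_0),\mathbb T^{(1)},\ldots,\mathbb T^{(B)}$ are exchangeable. The standard rank argument (ties only help, since the comparison in (\ref{eq_crt_pval}) uses $\ge$) gives $\mathbb E_{\widehat Q^{\otimes(B+1)}}[\phi]\le\alpha$.

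\emph{Step 3: transferring the bound.} Since $0\le\phi\le 1$ and the two joint laws differ only in the first factor,
\[
\mathbb E_{Q\otimes\widehat Q^{\otimes B}}[\phi]-\mathbb E_{\widehat Q^{\otimes(B+1)}}[\phi]\le d_{\mathrm{TV}}(Q,\widehat Q).
\]
Subadditivity of TV over product measures then gives
\[
d_{\mathrm{TV}}(Q,\widehat Q)\le\sum_{i=1}^{n_0}d_{\mathrm{TV}}\bigl(P_0(\cdot\mid z_{0,i}),\widehat P_0(\cdot\mid z_{0,i})\bigr).
\]
Subadditivity holds via the telescoping hybrid argument, or by coupling coordinate-wise. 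Combining with Step 2,
\[
\mathbb P(p\le\alpha\mid\mathbf Y_0,\mathbf Z_0,\mathcal D^{\mathrm{Tr}})\le\alpha+\sum_{i=1}^{n_0}d_{\mathrm{TV}}\bigl(P_0(\cdot\mid z_{0,i}),\widehat P_0(\cdot\mid z_{0,i})\bigr).
\]

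\emph{Step 4: taking expectations.} The $z_{0,i}$ are i.i.d.\ from $p_0$ and independent of $\mathcal D^{\mathrm{Tr}}$. So the tower property turns the sum into $n_0\,\mathbb E_{\mathcal D^{\mathrm{Tr}}}\mathbb E_{Z_0}[d_{\mathrm{TV}}(P_0(\cdot\mid Z_0),\widehat P_0(\cdot\mid Z_0))]$, which is exactly (\ref{eq_crt_type1}) with constant one.

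The only delicate point is Step 1. We must make sure that under $H_0$ the conditional law of $\mathbf X_0$ is unaffected by conditioning on $\mathbf Y_0$ and on the training data. This is where sample splitting and independence of the source datasets from $D_0^{\mathrm{CRT}}$ are used. Conditional measurability of $\widehat P_0$ given $\mathcal D^{\mathrm{Tr}}$ also needs a brief check.
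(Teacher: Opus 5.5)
Your proposal is correct and is exactly the coupling argument behind Theorem~4 of Berrett et al., which the paper invokes by citation without giving a proof of its own: you compare with an idealized CRT in which $\mathbf X_0$ is redrawn from $\widehat Q$, use exchangeability to get $\alpha$, and pay $d_{\mathrm{TV}}(Q,\widehat Q)$ for the swap. Your Steps~3--4 also spell out what the paper leaves implicit, namely subadditivity of total variation over the product $\prod_{i=1}^{n_0}$ and averaging over the i.i.d.\ $z_{0,i}$ (independent of the training data), which give the $n_0\,\mathbb E[d_{\mathrm{TV}}]$ form with constant one.
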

Because $D_0^{\mathrm{CRT}}$ and $D_0^{\mathrm{Tr}}$ are independent, and $D_0^{\mathrm{CRT}}$ is also independent of $\widehat s_w$, we have $\mathbb{E}_{D_0^{\mathrm{Tr}},D_1,\ldots,D_K}=\mathbb{E}_{D_0,D_1,\ldots,D_K}$.
By Lemma~\ref{lem_da_cit_type1}, the excess Type I error of DA-CIT is controlled by TV error bound of $\widehat{P}_w (\cdot | Z_0)$ which is estimated by DA-Diff. Substituting the TV convergence rate in (\ref{eq_tv_target_bound}) into (\ref{eq_crt_type1}) therefore yields the Type I error bound of DA-CIT in (\ref{eq_da_cit_type1}).

\hfill \qedsymbol

\subsection{Estimation of source discrepancy matrix $G$}
\label{sec_estimate_G}

In this section, we show that the source discrepancy matrix $G$ can be
accurately estimated as mentioned in Remark~\ref{rem_estimate_G}.
To this end, we assume that an independent holdout validation sample
\begin{equation}
    D_0^{\mathrm{val}}
    :=
    \left\{
        (x_{0,\ell}^{\mathrm{val}},
        z_{0,\ell}^{\mathrm{val}})
    \right\}_{\ell=1}^{n_{\mathrm{val}}}
    \overset{\mathrm{i.i.d.}}{\sim} P_0
\end{equation}
is available, which is independent of all samples used to train
$\widehat s_w,\widehat s_1,\ldots,\widehat s_K$ and is used only for
estimating $G$. Recall the definition
$\Delta_{k,t}=s_k^*-s_0^*$ in~(\ref{eq_delta_k_t}) and the inner product
$\langle \cdot,\cdot\rangle_{\mathcal H_0}$ in
(\ref{eq_inner_product_Hk}). Then,
\begin{equation}
    G_{ij}
    =
    \langle
        \Delta_{i,t},
        \Delta_{j,t}
    \rangle_{\mathcal H_0}.
    \label{eq_G_inner_product}
\end{equation}
Define
\begin{equation}
    \widehat\Delta_{k,t}
    :=
    \widehat s_k-\widehat s_w,
    \qquad k=1,\ldots,K.
    \label{eq_G_hat_delta}
\end{equation}
We estimate $G_{ij}$ using the holdout validation sample as
\begin{align}
    \widehat G_{ij}
    &:=
    \frac{1}{n_{\mathrm{val}}}
    \sum_{\ell=1}^{n_{\mathrm{val}}}
    \frac{1}{T-t_0}
    \int_{t_0}^T
    \mathbb E_{
        X_t\sim
        \mathcal N(
            a_t x_{0,\ell}^{\mathrm{val}},
            \sigma_t^2I_{d_x}
        )
    }
    \Big[
        \big\langle
            \widehat s_w(
                X_t,z_{0,\ell}^{\mathrm{val}},t
            )
            -
            \widehat s_i(
                X_t,z_{0,\ell}^{\mathrm{val}},t
            ),
            \notag\\[-1mm]
    &\hspace{67mm}
            \widehat s_w(
                X_t,z_{0,\ell}^{\mathrm{val}},t
            )
            -
            \widehat s_j(
                X_t,z_{0,\ell}^{\mathrm{val}},t
            )
        \big\rangle
    \Big]
    dt .
    \label{eq_G_empirical_estimator}
\end{align}

We first establish the convergence rates of the score estimators
appearing in~(\ref{eq_G_empirical_estimator}). To achieve that, we need an extra overlap assumption on individual densities.

\begin{assumption}[Individual source overlap]
    For the source-specific estimators, we additionally assume that there
exists a constant $C_{\mathrm{ov}}>0$, independent of the sample sizes,
such that
\begin{equation}
    p_0(z)
    \leq
    C_{\mathrm{ov}}p_k(z),
    \qquad
    k=1,\ldots,K,
    \quad
    p_0\text{-almost everywhere}.
    \label{eq_individual_source_overlap_G}
\end{equation}
\end{assumption}
Together with Assumption~\ref{assump_smoothness_tail}, similar to Appendix \ref{sec_turn_H0_to_Hw}, we can derive
\begin{equation}
    \|g\|_{\mathcal H_0}^2
    \lesssim
    \|g\|_{\mathcal H_k}^2,
    \qquad
    k=1,\ldots,K.
    \label{eq_Hk_to_H0_G}
\end{equation}

\begin{lem}[Score estimation errors for estimating $G$]
\label{lem_score_error_G}

Suppose that Assumptions~\ref{assump_transf} and
\ref{assump_smoothness_tail} hold, together with
(\ref{eq_individual_source_overlap_G}), and that the conditions of
Theorem~\ref{thm_score_risk_bound} are satisfied.
For $k=1,\ldots,K$, let $\widehat s_k$ be the conditional score
estimator trained using $D_k$.

Define
\begin{equation}
    r_w
    :=
    \mathbb E_{D_0,\ldots,D_K}
    \left[
        \|\widehat s_w-s_0^*\|_{\mathcal H_0}^2
    \right],
    \qquad
    r_k
    :=
    \mathbb E_{D_k}
    \left[
        \|\widehat s_k-s_k^*\|_{\mathcal H_0}^2
    \right],
    \label{eq_G_score_errors}
\end{equation}
and
\begin{equation}
    \varepsilon_G^2
    :=
    r_w
    +
    \max_{1\leq k\leq K}r_k.
    \label{eq_epsilon_G}
\end{equation}
Let
\[
    n_{\min}
    :=
    \min_{1\leq k\leq K}n_k.
\]
Then,
\begin{align}
    \varepsilon_G^2
    \lesssim\;&
    \widetilde{\mathcal O}
    \left(
        N_{\mathrm{eff}}^{
        -\frac{2\beta}
        {d_x+d_z+2\beta}}
    \right)
    +
    \frac{W_N^2}{(n_0+W_N)^2}
    n_0^{-2\gamma} +
    \log\frac{1}{t_0}\,
    n_{\min}^{
        -\frac{2\beta}
        {d_x+d_z+2\beta}}
    (\log n_{\min})^{\max\{17,\beta\}}.
    \label{eq_epsilon_G_rate}
\end{align}
Under the feasible regime~(\ref{eq_neff_sandwich_condition}),
the first two terms on the right-hand side of
(\ref{eq_epsilon_G_rate}) converge to zero.
Moreover, since we choose
$t_0=N^{-C_\sigma}$ and
$N\asymp
N_{\mathrm{eff}}^{(d_x+d_z)/(d_x+d_z+2\beta)}$ in (\ref{eq_t0_T_approximation}) and (\ref{eq_optimal_network_complexity}),
the last term converges to zero under the additional growth condition
\begin{equation}
\label{eq_source_sample_growth_condition}
    N_{\mathrm{eff}}
    \ll
    \exp\left(n_{\min}^{c_s}\right),
    \qquad
    0<c_s<2\Gamma,
\end{equation}
where
$\Gamma=\beta/(d_x+d_z+2\beta)$ is defined in Section \ref{sec_da_cit} and $c_s$ is a constant. Therefore,
$\varepsilon_G=o(1)$
under~(\ref{eq_neff_sandwich_condition}) and (\ref{eq_source_sample_growth_condition}) when 
$n_0,n_{\min}\to\infty$.
\end{lem}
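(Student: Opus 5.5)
The bound on $\varepsilon_G^2$ splits into its two pieces, $r_w$ and $\max_k r_k$, and each is handled by results already in hand. For $r_w$, note that $r_w=\mathbb E_{D_0,\ldots,D_K}[\mathcal R_0(\widehat s_w)]$. Theorem~\ref{thm_score_risk_bound}, in the explicit form (\ref{eq_target_risk_final_general}), therefore gives the estimation term $\widetilde{\mathcal O}(N_{\mathrm{eff}}^{-2\Gamma})$ plus the transfer bias $\frac{W_N^2}{(n_0+W_N)^2}[\overline w^\top G\overline w+\operatorname{diag}(G)^\top\overline w]$. To reduce the bias term to $n_0^{-2\gamma}$, I use that $G$ is a Gram matrix of the $\Delta_{k,t}$ in $\mathcal H_0$. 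Cauchy--Schwarz gives $|G_{ij}|\le\sqrt{G_{ii}G_{jj}}$, and (\ref{eq_assump_loss}) gives $G_{kk}=\mathcal O(n_0^{-2\gamma})$. Since $\overline w$ lies in the simplex, both $\overline w^\top G\overline w\le\max_k G_{kk}$ and $\operatorname{diag}(G)^\top\overline w\le\max_kG_{kk}$. This accounts for the first two terms of (\ref{eq_epsilon_G_rate}).

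For $r_k$, I view $\widehat s_k$ as the special case of the weighted ERM with a single domain, namely domain $k$ as its own target with no sources. Assumption~\ref{assump_smoothness_tail} applies to $P_k$ directly. The decomposition of Appendices~\ref{app_excess_decomp}--\ref{app_proof_of_th1} then runs with $\pi=e_k$, $s_w^o=s_k^*$ and zero transfer bias, and in this case $N_{\mathrm{eff}}=n_k$ and the regularity condition holds trivially. This yields $\mathbb E[\|\widehat s_k-s_k^*\|_{\mathcal H_k}^2]\lesssim\log(1/t_0)\,n_k^{-2\Gamma}(\log n_k)^{\max\{17,\beta\}}$. To move from $\mathcal H_k$ to $\mathcal H_0$ I apply (\ref{eq_Hk_to_H0_G}), which follows from (\ref{eq_individual_source_overlap_G}) together with the conditional density ratio bound $p_{0,t}(x|z)\le (B/C_f)p_{k,t}(x|z)$, exactly as in Appendix~\ref{sec_turn_H0_to_Hw}. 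The bound is monotone in $n_k$, so taking the maximum over $k$ gives the $n_{\min}$ term.

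One subtlety needs care. The time horizon $t_0=N^{-C_\sigma}$ is the one tied to $N_{\mathrm{eff}}$, not the one optimal for $n_k$, whereas the source network uses its own complexity $N_k\asymp n_k^{(d_x+d_z)/(d_x+d_z+2\beta)}$. The Fu et al.\ machinery is stated for matched $t_0$ and $N$, so I will keep the factor $\log(1/t_0)$ explicit through $M_\ell$ and the covering number, rather than absorbing it. This is why it appears in (\ref{eq_epsilon_G_rate}).

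The asymptotic claims then follow. Under (\ref{eq_neff_sandwich_condition}) we have $N_{\mathrm{eff}}\to\infty$, and since $\gamma>1>0$ the first two terms vanish. For the last term, $\log(1/t_0)=C_\sigma\log N\asymp\log N_{\mathrm{eff}}\ll n_{\min}^{c_s}$ by (\ref{eq_source_sample_growth_condition}), while $n_{\min}^{-2\Gamma}$ polylog$(n_{\min})$ decays faster because $c_s<2\Gamma$. The product therefore tends to zero.

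The main obstacle is justifying the mismatched $t_0$ in the source estimation bound. The constants in Lemma~\ref{lem_uniform_loss_bound_unbounded} and the truncation radius must be checked to still work when $t_0$ is set by $N$ but the network size is set by $n_k$. I would handle this by redoing only the bookkeeping of $M_\ell\asymp\log(1/t_0)$ in (\ref{eq_weighted_excess_risk_final}).
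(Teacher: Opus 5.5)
Your proposal is correct and follows the paper's proof essentially step for step. You bound $r_w$ via Theorem~\ref{thm_score_risk_bound} together with the Gram-matrix/simplex reduction of the transfer bias to $n_0^{-2\gamma}$, and you bound $r_k$ via a single-domain rate in $\mathcal H_k$ transferred to $\mathcal H_0$ through (\ref{eq_Hk_to_H0_G}). The differences are minor. The paper simply cites Lemma~D.7 of \citet{Fu2024UnveilCD} for the source rate, carrying the common $\log(1/t_0)$ factor without discussing the $t_0$-versus-$N_k$ mismatch you flag, rather than specializing its own weighted-ERM analysis to a single domain. The paper also only asserts the asymptotic claims that you verify at the end.
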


\begin{proof}
First,  Theorem \ref{thm_score_risk_bound} gives
\begin{align}
    r_w
    \lesssim\;&
    \widetilde{\mathcal O}
    \left(
        N_{\mathrm{eff}}^{
        -\frac{2\beta}
        {d_x+d_z+2\beta}}
    \right) +
    \frac{W_N^2}{(n_0+W_N)^2}
    \left[
        \overline w^\top G\overline w
        +
        \operatorname{diag}(G)^\top
        \overline w
    \right].
    \label{eq_rw_G_proof}
\end{align}

By Assumption~\ref{assump_transf},
\begin{equation}
    G_{kk}
    =
    \|\Delta_{k,t}\|_{\mathcal H_0}^2
    =
    \mathcal O(n_0^{-2\gamma}),
    \qquad
    k=1,\ldots,K.
    \label{eq_G_diag_rate}
\end{equation}
Since $G$ is a Gram matrix, the Cauchy--Schwarz inequality gives
\[
    |G_{ij}|
    \leq
    \sqrt{G_{ii}G_{jj}}
    =
    \mathcal O(n_0^{-2\gamma}).
\]
Moreover, since $\overline w$ belongs to the probability simplex,
\begin{align}
    \overline w^\top G\overline w
    &\leq
    \sum_{i=1}^K\sum_{j=1}^K
    \overline w_i\overline w_j|G_{ij}|
    =
    \mathcal O(n_0^{-2\gamma}),
    \\
    \operatorname{diag}(G)^\top\overline w
    &=
    \sum_{k=1}^K
    \overline w_kG_{kk}
    =
    \mathcal O(n_0^{-2\gamma}).
\end{align}
Substituting these bounds into~(\ref{eq_rw_G_proof}) yields
\begin{equation}
    r_w
    \lesssim
    \widetilde{\mathcal O}
    \left(
        N_{\mathrm{eff}}^{
        -\frac{2\beta}
        {d_x+d_z+2\beta}}
    \right)
    +
    \frac{W_N^2}{(n_0+W_N)^2}
    n_0^{-2\gamma}.
    \label{eq_rw_G_rate}
\end{equation}

We next consider the source-domain score estimators.
Lemma D.7 of \citet{Fu2024UnveilCD} shows that, under Assumption~\ref{assump_smoothness_tail}, the conditional score estimator
trained using $n_k$ observations satisfies
\begin{equation}
    \mathbb E_{D_k}
    \left[
        \|\widehat s_k-s_k^*\|_{\mathcal H_k}^2
    \right]
    \lesssim
    \log\frac{1}{t_0}\,
    n_k^{
        -\frac{2\beta}
        {d_x+d_z+2\beta}}
    (\log n_k)^{\max\{17,\beta\}}.
    \label{eq_source_score_rate_Hk}
\end{equation}
By (\ref{eq_Hk_to_H0_G}),
\begin{align}
    r_k
    &=
    \mathbb E_{D_k}
    \left[
        \|\widehat s_k-s_k^*\|_{\mathcal H_0}^2
    \right]
    \notag\\
    &\lesssim
    \mathbb E_{D_k}
    \left[
        \|\widehat s_k-s_k^*\|_{\mathcal H_k}^2
    \right]
    \notag\\
    &\lesssim
    \log\frac{1}{t_0}\,
    n_k^{
        -\frac{2\beta}
        {d_x+d_z+2\beta}}
    (\log n_k)^{\max\{17,\beta\}}.
    \label{eq_rk_G_rate}
\end{align}
Taking the maximum over $k=1,\ldots,K$ and combining
(\ref{eq_rw_G_rate}) and (\ref{eq_rk_G_rate}) gives
(\ref{eq_epsilon_G_rate}). 

\end{proof}

We next separate the score estimation error from the empirical
approximation error. Define the population plug-in counterpart of
(\ref{eq_G_empirical_estimator}) as
\begin{equation}
    \widetilde G_{ij}
    :=
    \left\langle
        \widehat\Delta_{i,t},
        \widehat\Delta_{j,t}
    \right\rangle_{\mathcal H_0},
    \qquad
    i,j=1,\ldots,K.
    \label{eq_G_population_estimator}
\end{equation}

\begin{lem}[Error of the population plug-in estimator]
\label{lem_G_plugin_consistency}

Under the conditions of Lemma~\ref{lem_score_error_G}, uniformly over
$i,j=1,\ldots,K$,
\begin{equation}
    \mathbb E_{D_0,\ldots,D_K}
    \left[
        \left|
            \widetilde G_{ij}-G_{ij}
        \right|
    \right]
    \lesssim
    n_0^{-2\gamma} 
    +
    \varepsilon_G^2.
    \label{eq_G_plugin_error}
\end{equation}
Under the feasible growth regime
(\ref{eq_neff_sandwich_condition}) and (\ref{eq_source_sample_growth_condition}), the right-hand side converges to zero.
\end{lem}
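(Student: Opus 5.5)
We need to bound $\mathbb E|\widetilde G_{ij}-G_{ij}|$ where $\widetilde G_{ij}=\langle\widehat\Delta_i,\widehat\Delta_j\rangle_{\mathcal H_0}$ and $G_{ij}=\langle\Delta_i,\Delta_j\rangle_{\mathcal H_0}$. The key observation is a sign convention. With $\widehat\Delta_k=\widehat s_k-\widehat s_w$ and $\Delta_k=s_k^*-s_0^*$, the difference $e_k:=\widehat\Delta_k-\Delta_k=(\widehat s_k-s_k^*)-(\widehat s_w-s_0^*)$ is an estimation error. Its squared norm satisfies
\[
\|e_k\|_{\mathcal H_0}^2\le 2\|\widehat s_k-s_k^*\|_{\mathcal H_0}^2+2\|\widehat s_w-s_0^*\|_{\mathcal H_0}^2 .
\]
Taking expectations gives $\mathbb E\|e_k\|_{\mathcal H_0}^2\lesssim r_k+r_w\le 2\varepsilon_G^2$, by the definitions in Lemma~\ref{lem_score_error_G}.

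Next we use bilinearity of the inner product to write
\[
\widetilde G_{ij}-G_{ij}=\langle e_i,\Delta_j\rangle_{\mathcal H_0}+\langle\Delta_i,e_j\rangle_{\mathcal H_0}+\langle e_i,e_j\rangle_{\mathcal H_0}.
\]
Cauchy--Schwarz in $\mathcal H_0$ bounds the absolute value by $\|e_i\|\|\Delta_j\|+\|\Delta_i\|\|e_j\|+\|e_i\|\|e_j\|$. For the cross terms, Young's inequality $uv\le(u^2+v^2)/2$ gives $\|e_i\|\|\Delta_j\|\le \tfrac12\|e_i\|^2+\tfrac12\|\Delta_j\|^2$. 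We then use $\|\Delta_j\|_{\mathcal H_0}^2=G_{jj}=\mathcal O(n_0^{-2\gamma})$ from Assumption~\ref{assump_transf}, as already noted in (\ref{eq_G_diag_rate}). The quadratic term is at most $\tfrac12(\|e_i\|^2+\|e_j\|^2)$. Taking expectation over $D_0,\dots,D_K$ then gives $\mathbb E|\widetilde G_{ij}-G_{ij}|\lesssim n_0^{-2\gamma}+\varepsilon_G^2$. The constants are uniform in $i,j$ because both the $G_{jj}$ bound and $\max_k r_k$ are uniform.

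A sharper bound $\sqrt{G_{jj}}\,\varepsilon_G$ would follow from $\mathbb E\|e_i\|\le(\mathbb E\|e_i\|^2)^{1/2}$. Since the statement only asks for $n_0^{-2\gamma}+\varepsilon_G^2$, Young's inequality is the simplest route.

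For convergence to zero, the term $n_0^{-2\gamma}\to0$ since $\gamma>0$. Under (\ref{eq_neff_sandwich_condition}) and (\ref{eq_source_sample_growth_condition}), Lemma~\ref{lem_score_error_G} gives $\varepsilon_G^2=o(1)$.

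The main subtlety is a measurability point. $\widehat s_w$ and $\widehat s_k$ are random, but $\|\cdot\|_{\mathcal H_0}$ integrates only over fresh $(X_t,Z)\sim p_{0,t}$. So every inequality above holds pointwise in the training data before we take expectations. We also need each $r_k$ to be measured in $\mathcal H_0$, which Lemma~\ref{lem_score_error_G} already arranges through (\ref{eq_Hk_to_H0_G}). Beyond this, the argument is routine.
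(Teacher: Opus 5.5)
Your proposal is correct and follows the paper's proof. Both use the decomposition $\widehat\Delta_{k,t}=\Delta_{k,t}+u_k$ with $u_k=(\widehat s_k-s_k^*)-(\widehat s_w-s_0^*)$, the same three-term bilinear expansion, and the bounds $\|\Delta_{k,t}\|_{\mathcal H_0}^2=G_{kk}=\mathcal O(n_0^{-2\gamma})$ and $\mathbb E\|u_k\|_{\mathcal H_0}^2\le 2r_w+2r_k\lesssim\varepsilon_G^2$. The only cosmetic difference is that you apply Young's inequality pointwise, whereas the paper first applies Cauchy--Schwarz in expectation to get $n_0^{-\gamma}\varepsilon_G$ (your ``sharper'' bound) and then uses AM--GM.
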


\begin{proof}
Let
\[
    e_0
    :=
    \widehat s_w-s_0^*,
    \qquad
    e_k
    :=
    \widehat s_k-s_k^*,
    \qquad
    u_k
    :=
    e_k-e_0.
\]
By the definitions of $\Delta_{k,t}$ and $\widehat\Delta_{k,t}$ in (\ref{eq_delta_k_t}) and (\ref{eq_G_hat_delta}),
\begin{align}
    \widehat\Delta_{k,t}
    &=
    \widehat s_k-\widehat s_w
    \notag\\
    &=
    (s_k^*+e_k)-(s_0^*+e_0)
    \notag\\
    &=
    \Delta_{k,t}+u_k.
    \label{eq_hat_delta_decomp_G}
\end{align}
Therefore,
\begin{align}
    \widetilde G_{ij}-G_{ij}
    &=
    \left\langle
        \Delta_{i,t}+u_i,
        \Delta_{j,t}+u_j
    \right\rangle_{\mathcal H_0}
    -
    \left\langle
        \Delta_{i,t},
        \Delta_{j,t}
    \right\rangle_{\mathcal H_0}
    \notag\\
    &=
    \langle
        u_i,\Delta_{j,t}
    \rangle_{\mathcal H_0}
    +
    \langle
        \Delta_{i,t},u_j
    \rangle_{\mathcal H_0}
    +
    \langle
        u_i,u_j
    \rangle_{\mathcal H_0}.
    \label{eq_G_plugin_decomp}
\end{align}

By Assumption~\ref{assump_transf},
\begin{equation}
    \|\Delta_{k,t}\|_{\mathcal H_0}
    =
    \sqrt{G_{kk}}
    =
    \mathcal O(n_0^{-\gamma}).
    \label{eq_delta_G_rate}
\end{equation}
Moreover,
\begin{align}
    &\mathbb E_{D_0,\ldots,D_K}
    \left[
        \|u_k\|_{\mathcal H_0}^2
    \right]
    \notag\\
    &\qquad=
    \mathbb E_{D_0,\ldots,D_K}
    \left[
        \|e_k-e_0\|_{\mathcal H_0}^2
    \right]
    \notag\\
    &\qquad\leq
    2
    \mathbb E_{D_0,\ldots,D_K}
    \left[
        \|e_0\|_{\mathcal H_0}^2
    \right]
    +
    2
    \mathbb E_{D_0,\ldots,D_K}
    \left[
        \|e_k\|_{\mathcal H_0}^2
    \right]
    \notag\\
    &\qquad=
    2r_w+2r_k
    \lesssim
    \varepsilon_G^2.
    \label{eq_uk_G_rate}
\end{align}

Applying the Cauchy--Schwarz inequality to the first term in
(\ref{eq_G_plugin_decomp}) gives
\begin{align}
    &\mathbb E_{D_0,\ldots,D_K}
    \left[
        \left|
        \left\langle
            u_i,\Delta_{j,t}
        \right\rangle_{\mathcal H_0}
        \right|
    \right]
    \notag\\
    &\qquad\leq
    \|\Delta_{j,t}\|_{\mathcal H_0}
    \left(
        \mathbb E_{D_0,\ldots,D_K}
        \left[
            \|u_i\|_{\mathcal H_0}^2
        \right]
    \right)^{1/2}
    \notag\\
    &\qquad \lesssim
    n_0^{-\gamma}\varepsilon_G \notag \\
    &\qquad \lesssim n_0^{-2\gamma} + \varepsilon_G^2
    \label{eq_G_plugin_cross_1}
\end{align}
Similarly,
\begin{equation}
    \mathbb E_{D_0,\ldots,D_K}
    \left[
        \left|
        \left\langle
            \Delta_{i,t},u_j
        \right\rangle_{\mathcal H_0}
        \right|
    \right]
    \lesssim
    n_0^{-\gamma}\varepsilon_G \lesssim n_0^{-2\gamma} + \varepsilon_G^2.
    \label{eq_G_plugin_cross_2}
\end{equation}
For the last term, another application of the Cauchy--Schwarz
inequality yields
\begin{align}
    &\mathbb E_{D_0,\ldots,D_K}
    \left[
        \left|
        \left\langle
            u_i,u_j
        \right\rangle_{\mathcal H_0}
        \right|
    \right]
    \notag\\
    &\qquad\leq
    \left(
        \mathbb E_{D_0,\ldots,D_K}
        \left[
            \|u_i\|_{\mathcal H_0}^2
        \right]
    \right)^{1/2}
    \left(
        \mathbb E_{D_0,\ldots,D_K}
        \left[
            \|u_j\|_{\mathcal H_0}^2
        \right]
    \right)^{1/2}
    \notag\\
    &\qquad\lesssim
    \varepsilon_G^2.
    \label{eq_G_plugin_cross_3}
\end{align}
Combining
(\ref{eq_G_plugin_cross_1})--(\ref{eq_G_plugin_cross_3})
proves~(\ref{eq_G_plugin_error}).
Under the feasible growth regime
(\ref{eq_neff_sandwich_condition}),
$\varepsilon_G\to0$, and hence the right-hand side of
(\ref{eq_G_plugin_error}) converges to zero.

\end{proof}

It remains to control the empirical approximation error
$\widehat G_{ij}-\widetilde G_{ij}$.
For fixed trained score networks, define
\begin{align}
    H_{ij}(x,z)
    &:=
    \frac{1}{T-t_0}
    \int_{t_0}^T
    \mathbb E_{
        X_t\sim
        \mathcal N(a_tx,\sigma_t^2I_{d_x})
    }
    \Big[
        \big\langle
            \widehat\Delta_{i,t}(X_t,z),
            \widehat\Delta_{j,t}(X_t,z)
        \big\rangle
    \Big]
    dt.
    \label{eq_G_Hij_def}
\end{align}
Recall (\ref{eq_G_hat_delta}) that
$\widehat\Delta_{k,t}=\widehat s_k-\widehat s_w$.
Then,
\begin{equation}
    \widehat G_{ij}
    =
    \frac{1}{n_{\mathrm{val}}}
    \sum_{\ell=1}^{n_{\mathrm{val}}}
    H_{ij}
    \left(
        x_{0,\ell}^{\mathrm{val}},
        z_{0,\ell}^{\mathrm{val}}
    \right),
    \qquad
    \widetilde G_{ij}
    =
    \mathbb E_{(X,Z)\sim P_0}
    \left[
        H_{ij}(X,Z)
    \right].
    \label{eq_G_Hij_emp_pop}
\end{equation}

\begin{lem}[Empirical approximation error of $\widehat G$]
\label{lem_G_empirical_error}

Suppose that the holdout sample $D_0^{\mathrm{val}}$ is independent of
$D_0,\ldots,D_K$, and that
$\widehat s_w,\widehat s_1,\ldots,\widehat s_K$ belong to the network
class~(\ref{eq_relu_network_class}). Define
\begin{equation}
    M_G
    :=
    \frac{4d_x}{T-t_0}
    \int_{t_0}^T
    M_t^2\,dt.
    \label{eq_MG_def}
\end{equation}
Then, uniformly over $i,j=1,\ldots,K$,
\begin{equation}
    \mathbb E_{
        D_0,\ldots,D_K,D_0^{\mathrm{val}}
    }
    \left[
        \left|
            \widehat G_{ij}-\widetilde G_{ij}
        \right|
    \right]
    \lesssim
    \frac{M_G}{\sqrt{n_{\mathrm{val}}}}.
    \label{eq_G_empirical_error}
\end{equation}
Under the network configuration used in Lemma \ref{lem_uniform_loss_bound_unbounded} Proposition C.4 of \citep{Fu2024UnveilCD}, and by the choose of $ \ T$ in Theorem \ref{thm_approx_err_of_diff}, we have 
\begin{equation}
    M_G
    =
    \mathcal O(\log N).
    \label{eq_MG_rate}
\end{equation}
Consequently,
\begin{equation}
    \mathbb E_{
        D_0,\ldots,D_K,D_0^{\mathrm{val}}
    }
    \left[
        \left|
            \widehat G_{ij}-\widetilde G_{ij}
        \right|
    \right]
    \lesssim
    \frac{\log N}{\sqrt{n_{\mathrm{val}}}}.
    \label{eq_G_empirical_error_rate}
\end{equation}
\end{lem}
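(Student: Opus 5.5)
Because the holdout sample is independent of $D_0,\ldots,D_K$, I will condition on the training data $D=(D_0,\ldots,D_K)$. Then the networks $\widehat s_w,\widehat s_1,\ldots,\widehat s_K$ are fixed, and $H_{ij}$ in (\ref{eq_G_Hij_def}) is a deterministic function. By (\ref{eq_G_Hij_emp_pop}), $\widehat G_{ij}$ is an average of $n_{\mathrm{val}}$ i.i.d.\ copies of $H_{ij}(X,Z)$ with $(X,Z)\sim P_0$, and its conditional mean is exactly $\widetilde G_{ij}$. Jensen's inequality therefore gives
\begin{equation*}
\mathbb E_{D_0^{\mathrm{val}}}\bigl[|\widehat G_{ij}-\widetilde G_{ij}|\,\big|\,D\bigr]
\leq
\Bigl(\operatorname{Var}_{P_0}(H_{ij})/n_{\mathrm{val}}\Bigr)^{1/2}
\leq
\|H_{ij}\|_\infty/\sqrt{n_{\mathrm{val}}}.
\end{equation*}
Once $\|H_{ij}\|_\infty\le M_G$ is established uniformly in $i,j$ and in $D$, taking the outer expectation over $D$ yields (\ref{eq_G_empirical_error}).

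The main step is the sup-norm bound on $H_{ij}$. By the class definition (\ref{eq_relu_network_class}), every network satisfies $\|s(x,z,t)\|_\infty\le M_t$, which gives $\|s(x,z,t)\|_2\le\sqrt{d_x}M_t$. The triangle inequality then yields $\|\widehat\Delta_{k,t}(x,z)\|_2\le 2\sqrt{d_x}M_t$ pointwise. By Cauchy--Schwarz, the integrand satisfies $|\langle\widehat\Delta_{i,t},\widehat\Delta_{j,t}\rangle|\le 4d_xM_t^2$ for every $X_t$. This bound survives the Gaussian expectation and the time average, so $|H_{ij}(x,z)|\le \frac{4d_x}{T-t_0}\int_{t_0}^TM_t^2\,dt=M_G$ uniformly. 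The bound holds for every sample realization because it uses only membership in $\mathcal F$.

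For the rate (\ref{eq_MG_rate}), I will take $M_t\asymp\sqrt{\log N}/\sigma_t$ as in Lemma~\ref{lem_conditional_score_approximation} and (\ref{eq_Mell_scaling_mixture_score}). This gives $\int_{t_0}^TM_t^2\,dt\asymp \log N\cdot I_\sigma$, where $I_\sigma=\int_{t_0}^T\sigma_t^{-2}\,dt$. By (\ref{eq_sigma_integral_order}), $I_\sigma/(T-t_0)=\mathcal O(1)$ under $t_0=N^{-C_\sigma}$ and $T=C_\alpha\log N$, so $M_G=\mathcal O(\log N)$, and (\ref{eq_G_empirical_error_rate}) follows.

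The only delicate point is ensuring that the time-normalization factor $1/(T-t_0)$ cancels the $\log N$ growth of $I_\sigma$. Without that cancellation one would get $\log^2N$; the computation in (\ref{eq_sigma_integral_order}) handles it. A second point to check is that the pretrained source networks $\widehat s_k$ lie in the same class with the same envelope $M_t$, which is assumed in the statement.
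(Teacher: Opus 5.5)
Your proposal is correct and follows the paper's proof essentially step by step. You condition on the training data, bound $|H_{ij}|\le M_G$ uniformly via the network envelope $M_t$ and Cauchy--Schwarz, and control the i.i.d.\ average through its variance; for the rate, you use $M_t\asymp\sqrt{\log N}/\sigma_t$ and $T-t_0\asymp\log N$ to get $M_G=\mathcal O(\log N)$, which is the same computation the paper writes via $M_\ell$.
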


\begin{proof}
Condition on the training datasets
$D_0,\ldots,D_K$.
Then the trained score networks, and hence $H_{ij}$, are fixed.
Since the holdout observations are i.i.d.\ from $P_0$ and independent
of $D_0,\ldots,D_K$, we have
\begin{equation}
    \mathbb E_{
        D_0^{\mathrm{val}}
    }
    \left[
        \widehat G_{ij}
        \,\middle|\,
        D_0,\ldots,D_K
    \right]
    =
    \widetilde G_{ij}.
    \label{eq_G_conditional_unbiased}
\end{equation}

By the definition of the network class
(\ref{eq_relu_network_class}),
\[
    \sup_{x,z}
    \|\widehat s_k(x,z,t)\|_\infty
    \leq
    M_t,
    \qquad
    k=1,\ldots,K,
\]
and the same bound holds for $\widehat s_w$.
Therefore,
\begin{equation}
    \|
        \widehat\Delta_{k,t}(x,z)
    \|_2
    =
    \|
        \widehat s_k(x,z,t)
        -
        \widehat s_w(x,z,t)
    \|_2
    \leq
    2\sqrt{d_x}M_t.
    \label{eq_hat_delta_uniform_bound}
\end{equation}
It follows from the Cauchy--Schwarz inequality that
\begin{align}
    |H_{ij}(x,z)|
    &\leq
    \frac{1}{T-t_0}
    \int_{t_0}^T
    \mathbb E
    \left[
        \|
            \widehat\Delta_{i,t}(X_t,z)
        \|_2
        \|
            \widehat\Delta_{j,t}(X_t,z)
        \|_2
    \right]
    dt
    \notag\\
    &\leq
    \frac{4d_x}{T-t_0}
    \int_{t_0}^T
    M_t^2\,dt
    =
    M_G.
    \label{eq_Hij_bound}
\end{align}
Thus, conditionally on $D_0,\ldots,D_K$,
\begin{equation}
    \operatorname{Var}_{(X,Z)\sim P_0}
    \left[
        H_{ij}(X,Z)
        \,\middle|\,
        D_0,\ldots,D_K
    \right]
    \leq
    M_G^2.
\end{equation}
Since $\widehat G_{ij}$ is the empirical average of
$n_{\mathrm{val}}$ independent copies of $H_{ij}(X,Z)$,
\begin{align}
    \mathbb E_{
        D_0^{\mathrm{val}}
    }
    \left[
        (\widehat G_{ij}-\widetilde G_{ij})^2
        \,\middle|\,
        D_0,\ldots,D_K
    \right] 
    &=
    \operatorname{Var}
    \left(
        \widehat G_{ij}
        \,\middle|\,
        D_0,\ldots,D_K
    \right) \notag \\
    & \leq
    \frac{M_G^2}{n_{\mathrm{val}}}.
    \label{eq_G_empirical_conditional_variance}
\end{align}
Hence, by the Cauchy--Schwarz inequality,
\begin{align}
    \mathbb E_{
        D_0^{\mathrm{val}}
    }
    \left[
        \left|
            \widehat G_{ij}-\widetilde G_{ij}
        \right|
        \,\middle|\,
        D_0,\ldots,D_K
    \right] &\leq
    \left\{
        \mathbb E_{
            D_0^{\mathrm{val}}
        }
        \left[
            (\widehat G_{ij}-\widetilde G_{ij})^2
            \,\middle|\,
            D_0,\ldots,D_K
        \right]
    \right\}^{1/2} \notag \\
    & \leq
    \frac{M_G}{\sqrt{n_{\mathrm{val}}}}.
\end{align}
Taking expectation over $D_0,\ldots,D_K$ proves
(\ref{eq_G_empirical_error}).
Finally, by (\ref{eq_Mell_scaling_mixture_score}),
\[
    M_{\ell}
    =
    \int_{t_0}^{T}
    \frac{M_t^2}{\log N}\,dt
    =
    \mathcal O(\log N).
\]
Therefore,
\begin{align}
    M_G
    &=
    \frac{4d_x}{T-t_0}
    \int_{t_0}^{T}
    M_t^2\,dt
    \notag\\
    &=
    \frac{4d_x\log N}{T-t_0}
    M_{\ell}.
\end{align}
Since 
$T-t_0\asymp\log N$ shown in (\ref{eq_time_interval_order}), we obtain
\begin{equation}
    M_G
    =
    \mathcal O(\log N),
\end{equation}
which completes the proof.
\end{proof}

Combining the population plug-in error with the empirical
approximation error yields the following estimation bound for $G$.

\begin{thm}[Estimation error of the source discrepancy matrix]
\label{thm_G_estimation_consistency}

Under the conditions of
Lemmas~\ref{lem_score_error_G},
\ref{lem_G_plugin_consistency}, and
\ref{lem_G_empirical_error}, uniformly over
$i,j=1,\ldots,K$,
\begin{align}
    \mathbb E_{
        D_0,\ldots,D_K,D_0^{\mathrm{val}}
    }
    \left[
        \left|
            \widehat G_{ij}-G_{ij}
        \right|
    \right] \lesssim
    n_0^{-2\gamma}
    +
    \varepsilon_G^2
    +
    \frac{\log N}{\sqrt{n_{\mathrm{val}}}}.
    \label{eq_G_final_error}
\end{align}

Furthermore,under the feasible regime
(\ref{eq_neff_sandwich_condition}) and (\ref{eq_source_sample_growth_condition}), provided that
\begin{equation}
    \frac{\log N }{\sqrt{n_{\mathrm{val}}}} \asymp \frac{\log N_{\mathrm{eff}} }{\sqrt{n_{\mathrm{val}}}}
    \to0,
    \label{eq_source_score_G_vanish}
\end{equation}
we have
\begin{equation}
    \mathbb E_{
        D_0,\ldots,D_K,D_0^{\mathrm{val}}
    }
    \left[
        \left|
            \widehat G_{ij}-G_{ij}
        \right|
    \right]
    \to0.
    \label{eq_G_final_convergence}
\end{equation}

\end{thm}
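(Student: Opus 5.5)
The plan is to apply the triangle inequality through the population plug-in $\widetilde G_{ij}$:
\[
\bigl|\widehat G_{ij}-G_{ij}\bigr|\leq \bigl|\widehat G_{ij}-\widetilde G_{ij}\bigr|+\bigl|\widetilde G_{ij}-G_{ij}\bigr|,
\]
and then take expectations over $D_0,\ldots,D_K,D_0^{\mathrm{val}}$. The two pieces are already handled by earlier results.

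For the second piece, $\widetilde G_{ij}$ depends only on the training data $D_0,\ldots,D_K$ and not on $D_0^{\mathrm{val}}$. Hence $\mathbb E_{D_0,\ldots,D_K,D_0^{\mathrm{val}}}|\widetilde G_{ij}-G_{ij}|=\mathbb E_{D_0,\ldots,D_K}|\widetilde G_{ij}-G_{ij}|$. By Lemma~\ref{lem_G_plugin_consistency}, this is $\lesssim n_0^{-2\gamma}+\varepsilon_G^2$, uniformly in $(i,j)$.

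For the first piece, Lemma~\ref{lem_G_empirical_error} gives $\lesssim M_G/\sqrt{n_{\mathrm{val}}}\lesssim \log N/\sqrt{n_{\mathrm{val}}}$ uniformly in $(i,j)$. It relies on the independence of the holdout sample and conditions on the trained networks. Adding the two bounds gives (\ref{eq_G_final_error}); the hidden constants do not depend on $(i,j)$, so the bound is uniform.

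For the convergence statement, Lemma~\ref{lem_score_error_G} shows that $\varepsilon_G^2\to0$ under (\ref{eq_neff_sandwich_condition}) and (\ref{eq_source_sample_growth_condition}). The term $n_0^{-2\gamma}\to0$ since $\gamma>0$ and $n_0\to\infty$. For the last term I would verify $\log N\asymp\log N_{\mathrm{eff}}$ from the choice $N\asymp N_{\mathrm{eff}}^{(d_x+d_z)/(d_x+d_z+2\beta)}$ in (\ref{eq_optimal_network_complexity}); this holds because the exponent is a fixed positive constant. With that, assumption (\ref{eq_source_score_G_vanish}) makes the last term vanish.

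The only step requiring care is this equivalence $\log N\asymp\log N_{\mathrm{eff}}$ and making sure the same $N$ (hence the same $t_0$, $T$ and $M_t$) is used consistently in both lemmas. Beyond that, the argument is just bookkeeping of the already-established bounds.
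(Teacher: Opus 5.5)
Your proposal is correct and follows the paper's proof. The paper uses the same triangle inequality through the population plug-in $\widetilde G_{ij}$, and it bounds the two pieces with Lemmas~\ref{lem_G_empirical_error} and~\ref{lem_G_plugin_consistency}. For convergence it uses Lemma~\ref{lem_score_error_G}, $n_0^{-2\gamma}\to0$, and hypothesis (\ref{eq_source_score_G_vanish}), with $\log N\asymp\log N_{\mathrm{eff}}$ coming from (\ref{eq_optimal_network_complexity}). Your remark that $\widetilde G_{ij}$ does not depend on $D_0^{\mathrm{val}}$, so its expectation reduces to one over $D_0,\ldots,D_K$, makes explicit a step the paper uses only implicitly.
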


\begin{proof}
By the triangle inequality,
\begin{align}
    \mathbb E_{
        D_0,\ldots,D_K,D_0^{\mathrm{val}}
    }
    \left[
        \left|
            \widehat G_{ij}-G_{ij}
        \right|
    \right] &\leq
    \mathbb E_{
        D_0,\ldots,D_K,D_0^{\mathrm{val}}
    }
    \left[
        \left|
            \widehat G_{ij}-\widetilde G_{ij}
        \right|
    \right] \notag \\
    & \quad +
    \mathbb E_{
        D_0,\ldots,D_K
    }
    \left[
        \left|
            \widetilde G_{ij}-G_{ij}
        \right|
    \right].
\end{align}
Applying Lemmas~\ref{lem_G_empirical_error} and
\ref{lem_G_plugin_consistency} gives
\begin{align}
    \mathbb E_{
        D_0,\ldots,D_K,D_0^{\mathrm{val}}
    }
    \left[
        \left|
            \widehat G_{ij}-G_{ij}
        \right|
    \right] &\lesssim
    n_0^{-2\gamma} 
    +
    \varepsilon_G^2 +
    \frac{\log N}{\sqrt{n_{\mathrm{val}}}},
\end{align}
which proves~(\ref{eq_G_final_error}).

By Lemma~\ref{lem_score_error_G},
$\varepsilon_G^2\to0$ under the feasible regime
(\ref{eq_neff_sandwich_condition}) and (\ref{eq_source_sample_growth_condition}), together with
(\ref{eq_source_score_G_vanish}).
Since $n_0^{-2\gamma}\to0$, we further have
\[
    n_0^{-2\gamma} 
    +
    \varepsilon_G^2
    \to0. \quad \text{and} \quad \frac{\log N }{\sqrt{n_{\mathrm{val}}}} \asymp \frac{\log N}{\sqrt{n_{\mathrm{val}}}}
    \to0
\]
Therefore, every term on the right-hand side of
(\ref{eq_G_final_error}) converges to zero, which proves
(\ref{eq_G_final_convergence}). We can further take $N
    \asymp
    N_{\mathrm{eff}}^{
        \frac{d_x+d_z}
        {d_x+d_z+2\beta}
    }$  as specified in (\ref{eq_optimal_network_complexity}). Finally, we have 
\begin{align}
    \mathbb E_{
        D_0,\ldots,D_K,D_0^{\mathrm{val}}
    }
    \left[
        \left|
            \widehat G_{ij}-G_{ij}
        \right|
    \right] \lesssim
    n_0^{-2\gamma} 
    +
    \varepsilon_G^2
    +
    \frac{\log N_{\mathrm{eff}}}{\sqrt{n_{\mathrm{val}}}} \to 0 .
\end{align}
\end{proof}

\end{document}